\documentclass[10pt,a4paper]{article}
\usepackage[utf8]{inputenc}

\usepackage[a4paper,top=3cm,bottom=2cm,left=2.5cm,right=2.5cm,marginparwidth=1.75cm]{geometry}

\usepackage{amsmath}
\usepackage{amsthm}
\usepackage{amssymb}
\usepackage{natbib}
\setlength{\bibsep}{2.0pt}
\usepackage{hyperref}

\usepackage{algorithm}
\usepackage{algorithmic}

\usepackage{thmtools}
\usepackage{thm-restate}

\usepackage{subfigure}
\usepackage{enumitem}

\usepackage[usenames,dvipsnames]{xcolor}
\usepackage{tcolorbox}

\usepackage{xspace}
\DeclareRobustCommand{\eg}{e.g.,\@\xspace}
\DeclareRobustCommand{\ie}{i.e.,\@\xspace}

\DeclareRobustCommand{\wrt}{w.r.t.\@\xspace}

% New commands
\def\eqdef{:=}

\def\Regret{\mathrm{Reg}}

\newtheorem{proposition}{Proposition}
\newtheorem{corollary}[proposition]{Corollary}
\newtheorem{lemma}[proposition]{Lemma}
\newtheorem{theorem}[proposition]{Theorem}

\newtheorem{remark}{Remark}
\newtheorem{assumption}{Assumption}

\usepackage{bbm}

\newcommand{\wt}[1]{\widetilde{#1}}
\newcommand{\wh}[1]{\widehat{#1}}
\newcommand{\wb}[1]{\overline{#1}}
\newcommand{\Olog}{\wt{\mathcal{O}}}

\usepackage{mathtools}
\DeclarePairedDelimiter\br{(}{)}% ( )
\DeclarePairedDelimiter\brs{[}{]}% [ ]
\DeclarePairedDelimiter\brc{\{}{\}}% { }

\DeclarePairedDelimiter\abs{\lvert}{\rvert}% | |
\DeclarePairedDelimiter\norm{\lVert}{\rVert}% || ||
\DeclarePairedDelimiter\inner{\langle}{\rangle}% < >
% | |
% || ||

% < 

\newcommand{\E}{\mathbb{E}}

\newcommand{\F}{\mathcal{F}}
\newcommand{\N}{\mathcal{N}}
\newcommand{\A}{{\bf A}}
\newcommand{\ind}{\mathbbm{1}}

\newcommand{\uv}{{\bf u}}
\newcommand{\tv}{{\bf t}}
\newcommand{\xv}{{\bf x}}

\newcommand{\bv}{{\bf b}}

\newcommand{\gv}{{\bf g}}
\newcommand{\fopt}{f_{\mathrm{opt}}}
\newcommand{\qopt}{q_{\mathrm{opt}}}

\DeclareMathOperator{\aset}{\mathcal{A}}
\DeclareMathOperator{\sset}{\mathcal{S}}

\DeclareMathOperator*{\argmin}{\arg\,\min}

\def\filt{\mathcal{F}_{k-1}}

\usepackage[colorinlistoftodos]{todonotes}

%%todo by Matteo

\DeclareRobustCommand{\cucrl}{\texttt{OptCMDP}\@\xspace}
\DeclareRobustCommand{\cucbvi}{\texttt{OptCMDP-bonus}\@\xspace}
\DeclareRobustCommand{\dualcmdp}{\texttt{OptDual-CMDP}\@\xspace}
\DeclareRobustCommand{\primaldualcmdp}{\texttt{OptPrimalDual-CMDP}\@\xspace}

\DeclareRobustCommand{\ucrl}{\texttt{UCRL2}\@\xspace}

\newcommand*\dkl[2]{d_{KL}(#1||#2)}
\newcommand*\bregman[2]{B_\omega\left(#1,#2\right)}

\title{Exploration-Exploitation in Constrained MDPs}
\author{Yonathan Efroni$^1$ \and Shie Mannor$^1$ \and Matteo Pirotta$^2$}
\date{%
    $^1$Technion, Israel, $^2$Facebook AI Research\\[2ex]%
    \today
}
\begin{document}

\maketitle

\begin{abstract}
       In many sequential decision-making problems, the goal is to optimize a utility function while satisfying a set of constraints on different utilities. This learning problem is formalized through Constrained Markov Decision Processes (CMDPs).
        In this paper, we investigate the exploration-exploitation dilemma in CMDPs.
        While learning in an unknown CMDP, an agent should trade-off exploration to discover new information about the MDP, and exploitation of the current knowledge to maximize the reward while satisfying the constraints. 
        While the agent will eventually learn a good or optimal policy, we do not want the agent to violate the constraints too often during the learning process.
        In this work, we analyze two approaches for learning in CMDPs. The first approach leverages the linear formulation of CMDP to perform optimistic planning at each episode. The second approach leverages the dual formulation (or saddle-point formulation) of CMDP to perform incremental, optimistic updates of the primal and dual variables. We show that both achieves sublinear regret w.r.t.\ the main utility while having a sublinear regret on the constraint violations. That being said, we highlight a crucial difference between the two approaches; the linear programming approach results in stronger guarantees than in the dual formulation based approach. 
\end{abstract}

{\small
\tableofcontents
}

\section{Introduction}
Markov Decision Processes (MDPs) have been successfully used to model several applications, including video games, robotics, recommender systems and many more.
However, MDPs do not take into account additional constrains that can affect the optimal policy and the learning process.
For example, while driving, we want to reach our destination but we want to avoid to go off-road, overcome the speed limits, collide with other cars~\citep{garcia2015comprehensive}.
Constrained MDPs~\citep{altman1999constrained} extend MDPs to handle constraints on the long term performance of the policy.
A learning agent in a CMDP has to maximize the cumulative reward while satisfying all the constraints.
Clearly, the optimal solution of a CMDP is different than the one of an MDP when at least one constraint is active. Then, the optimal policy, among the set of policies which satisfies the constraint, is stochastic.

In this paper, we focus on the online learning problem of CMDPs.
While interacting with an \emph{unknown} MDP, the agent has to trade-off exploration to gather information about the system and exploration to maximize the cumulative reward. Performing such exploration in a CMDP may be \emph{unsafe} since may lead to numerous violations of the constraints. Since the constraints depend on the long term performance of the agent and the CMDP is unknown, the agent cannot exactly evaluate the constraints. It can only exploit the current information to build an estimate of the constraints.
The objective is thus to design an algorithm with a small number of violations of the constraints.
% We evaluate the performance of the agent in terms of regret \wrt the main objective and \wrt the constraint violations.
% These terms account for both convergence to the optimal policy and cumulative cost for violations of the constraints.

\paragraph{Objective and Contributions.} The objective of this technical report is to provide an \textbf{extensive analysis} of exploration strategies for tabular constrained MDPs with finite-horizon cost.
Similar to~\citep{agrawal2019bandits}, we allow the agent to violate the constraints over the learning process but we require the cumulative cost of constraint violations to be small (\ie sublinear).
Opposite to~\citep{Zheng2020constrained}, we consider the CMDP to be unknown, \ie the agent does not know the transition kernel, the reward function and the constraints.

The performance of the learning agent is measured through the regret, that accounts for the difference in executing the optimal policy and the learning agent.
We define two regrets: \emph{i)} the regret \wrt to the main objective (as in standard MDP), \emph{ii)} the regret \wrt the constraint violations.
These terms account for both convergence to the optimal policy and cumulative cost for violations of the constraints.
% The latter accounts for the cumulative cost of violating the constraints. 
We introduce and  analyze the following exploration strategies:
\begin{description}
        \item[\cucrl] leverages the ideas of \ucrl~\citep{jaksch2010near}. At each episodes, it builds a set of plausible CMDPs compatible with the observed samples, and plays the optimal policy of the CMDP with the lowest cost (\ie optimistic CMDP). To solve this planning problem, we introduce an extended linear programming (LP) problem in the space of occupancy measures. The important property is that there always exists a feasible solution of this extended LP. 
        \item[\cucbvi] merges the uncertainties about costs and transitions used by \cucrl into an exploration bonus. As a consequence, \cucbvi solves a single (optimistic) CMDP rather than planning in the space of plausible CMDPs. This leads to a more computationally efficient algorithm. In fact, this planning problem can be solved through an LP with $O(SAH)$ constraints and decision variables, a factor $O(S)$ smaller than the LP solved by \cucrl.
        \item[\dualcmdp] leverages the saddle-point formulation of constrained MDP~\citep[\eg][]{altman1999constrained}. It solves this problem using an optimistic version of the dual projected sub-gradient algorithm~(e.g., \citealt{beck2017first}).
                At each episode, \dualcmdp solves an optimistic MDP defined using the estimated Lagrangian multiplier.
                Then, it uses the computed solution to update the Lagrange multipliers via projected sub-gradient.
                The main advantage of this algorithm needs to solve a simple optimistic planning problem for MDPs (rather than for CMDPs).
        \item[\primaldualcmdp] exploits a primal-dual algorithm to solve the saddle-point problem associated to a CMDP. It performs incremental updates both on the primal and dual variables. It uses mirror descent to update the Q-function (thus the policy) and projected subgradient descent to update the Lagrange multipliers. Similarly to \cucbvi, this algorithm exploits an exploration bonus for both cost and constraint costs. This allows to use a simple dynamic programming approach to compute the Q-functions (no need to solve a constrained optimization problem).
\end{description}

For all the proposed algorithms, we provide an upper-bound to the regret and the cumulative constraint violations (see Tab.~\ref{tab:tabbounds}).
While the incremental algorithms (\dualcmdp and \primaldualcmdp) may be more amenable for practical applications, they present limitations from a theoretical perspective.
In fact, we were able to prove weaker guarantees for the Lagrangian approaches compared to UCRL-like algorithms (\ie \cucrl and \cucbvi).
While for UCRL-like algorithms we can bound the sum of positive errors, for Lagrangian algorithms we were able to bound only the cumulative (signed) error. This weaker term allows for ``cancellation of errors'' (see discussion in Sec.~\ref{sec:learningobj}). Whether it is possible to provide stronger guarantees is left as an open question.
Despite this, we think that the analysis of Lagrangian approaches is important since it is at the core of many practical algorithms.
For example, the Lagrangian formulation of CMDPs has been used in~\citep{Tessler2019rewardcpo,Paternain2019zeroduality}, but never analyzed from a regret perspective.

%\paragraph{Contributions.} %In this paper, we focus on finite-horizon problems with constraints. 
%We introduce \cucrl, an algorithm for efficient exploration in CMDP based on UCRL2~\citep{jaksch2010near}.
%\cucrl builds an estimate of the unknown CMDP and, at each iteration, computes an optimistic policy by solving a linear programming (LP) problem in the dual space.
%We show that \cucrl suffers a regret that is at most $\wt{O}()$ both \wrt the main objective and $\wt{O}()$ \wrt the cumulative cost of constraint violations.
%We show how to extend this algorithm to use exploration bonus rather than perform planning in the space of plausible CMDPs.
%Finally, we present \dualcmdp, an incremental algorithm based on dual algorithm.
%While incremental algorithms have been used to solve CMDPs~\citep[\eg][]{Tessler2019rewardcpo,Paternain2019zeroduality}, we provide the first regret analysis for this family of algorithms in CMDPs.
%\emph{
%To conclude, this technical report has the objective of studying the application of regret minimization techniques in CMDPs and provide insights on the effectiveness of the approaches available in the literature.
%}

\begin{table*}
\begin{center}
        \small
\begin{tabular}{|c|| c | c |  c | }
        \hline
   Algorithm &Optimality Regret & Constraint Regret  \\ \hline \hline
   \cucrl& $\Regret_+\leq \Olog\br*{\sqrt{S\mathcal{N}H^4K}}$ & $\Regret_+\leq\Olog\br*{\sqrt{S\mathcal{N}H^4K}}$ \\ 
   \hline
   \cucbvi & $\Regret_+\leq\Olog\br*{\sqrt{S\mathcal{N}H^4K}}$ & $\Regret_+\leq\Olog\br*{\sqrt{S\mathcal{N}H^4K}}$\\  
   \hline
   \dualcmdp & $ \Regret\leq\Olog\br*{\sqrt{(S\mathcal{N}H^2+\rho^2I)H^2K}}$ & $\Regret\leq\Olog\br*{(1+\frac{1}{\rho})\sqrt{IS\mathcal{N}H^4K}}$ \\ 
   \hline
   \primaldualcmdp & $\Regret\leq\Olog\br*{\sqrt{(S\mathcal{N}H^2+\rho^2I^2H^2)H^2K}}$ & $\Regret\leq\Olog\br*{(1+\frac{1}{\rho})\sqrt{IS\mathcal{N}H^4K} +I\sqrt{H^4K}}$\\
\hline
\end{tabular}
\end{center}
\caption{Summary of the regret bounds obtained in this work. Algorithms \cucrl, \cucbvi, \dualcmdp, \primaldualcmdp are formulated and analyzed in sections~\ref{sec: paper curcl},~\ref{sec: paper curcl bonus},~\ref{sec: paper dual cmdp},~\ref{sec: paper primal dual cmdp}, respectively. The constant term, which is omitted from the table, of \cucbvi is significantly worse than the one of \cucrl. Notice that different types of regrets are bounded (see Section~\ref{sec:setup} for definitions).}
\label{tab:tabbounds}
\end{table*}

% \todo[inline]{Think it is better to say it is a dual approach. We can also analyze a primal-dual approach (probably with the same analysis.. on the other hand, we'll have the same guarantees, so I don't see a real reason to do so. A nice thing we can do is to consider mdp with corruption / adversarial and give guarantees for primal-dual approach )}
% \todom{updated}

% \todo[inline]{Futhermore, I'm not aware for an algorithm in the spirit of the dual approach in the context of bandit literature as well.}

\subsection{Related Work}
The problem of online learning under constraints (with guarantees) have been analyzed both in bandits and in RL.
Conservative exploration focuses on the problem of learning an optimal policy while satisfying a constrained \wrt to a predefined baseline policy.
This problem can be seen as a specific instance of CMDPs where the constraint is that the policy should perform (in the long run) better than a predefined baseline policy. 
Conservative exploration has been analyzed both in bandits~\citep{Wu2016conservative,Kazerouni2017conservative,garcelon2020conservativebandits} and in RL~\citep{garcelon2020conservativerl}.
All these algorithms are able to guarantee that the performance of the learning agent is at least as good as the one of the baseline policy with high probability at any time.\footnote{To guarantee this the allow the performance of the learning agent to be $\alpha$-away from the baseline performance.}
While they enjoy strong theoretical guarantees, they performs poorly in practice since are too conservative.
In fact, the idea of these algorithms is to build budget (\eg by playing the baseline policy) in order to be able to take standard exploratory actions.
Concurrently to this paper,~\citep{Zheng2020constrained} has extended conservative exploration to CMDP with average reward objective. They assume that the transition functions are known, but the rewards and costs (\ie the constraints) are unknown. The goal is thus to guarantee that, \emph{at any time}, the policy executed by the agent satisfies the constraints with high probability.
These requirement poses several limitations. Similarly to~\citep{garcelon2020conservativerl}, they need to assume that the MDP is ergodic and that the initial policy is safe (\ie satisfies the constraints). Furthermore, despite the theoretical guarantees, this approach is not practical due to these strong requirements/assumptions.
\citet{agrawal2019bandits} studied the exploration problem for bandits under constraints as well as bandits with knapsack constraints~\citep{badanidiyuru2013bandits}. Algorithms \cucrl and \cucbvi can be understood as generalizing their bandit setting to an CMDP setting. That being said, in the following we derive regret bounds on a stronger type of regret relatively to~\citet{agrawal2019bandits} (see Remark~\ref{remark: two types of regret}).

%In the introduction we have mainly mentioned approaches with theoretical guarantees.
There are several approaches in the literature that have focused on (approximately) solving CMDPs.
These methods are mainly based on Lagrangian-formulation~\citep{bhatnagar2012online,Chow2017risk,Tessler2019rewardcpo,Paternain2019zeroduality} or constrained optimization~\citep{Achiam2017cpo}.
Lagrangian-based methods formulate the CMDP optimization problem as a saddle-point problem and optimize it using primal-dual algorithms.
While these algorithms may eventually converge to the true policy, they have no guarantees on the policies recovered during the learning process.
Constrained Policy Optimization (CPO)~\citep{Achiam2017cpo} leverages the intuition behind conservative approaches~\citep[\eg][]{kakade2002approximately} to force the policy to improve overtime. This is a practical implementation of conservative exploration where the baseline policy is updated at each iteration.

Another way to solve CMDPs and guarantee safety during learning is through Lyapunov functions~\citep{Chow2018lyapunov,chow2019lyapunovcont}.
Despite the fact that some of these algorithms are approximately safe over the learning process, analysing the convergence is challenging and the regret analysis is lacking.
Other approaches use Gaussian processes to model the dynamics and/or the value function~\citep{Berkenkamp2017safe,Wachi2018safe,Koller2018learning,Cheng2019end} in order to be able to estimate the constraints and (approximately) guarantee safety over learning.

A related approach is the literature about budget learning in bandits~\citep[\eg][]{Ding2013mabsbudget,Combes2015budget}.
In this setting, the agent is provided with a budget (known and fix in advance) and the learning process is stopped as soon as the budget is consumed.
The goal is to learn how to efficiently handle the budget in order to maximize the cumulative reward.
A widely studied case of budget bandit is bandit with knapsack~\citep[\eg][]{Agrawal2014knapsacks,Badanidiyuru2018knapsacks}.
In our setting, we do not have a ``real'' concept of budget and the length of the learning process does not depend on the total cost of constraint violations.
This paper is also related to learning with fairness constraints~\citep[\eg][]{Joseph2016banditfair}. Similarly to conservative exploration, fairness constraints can be sometimes formulated as a specific instance of CMDPs.

\section{Preliminaries}\label{sec:setup}
We start introducing finite-horizon Markov Decision Processes (MDPs) and their constrained version.
We define $[N] \eqdef \brc*{1,\ldots,N},\;$ for all $N\in \mathbb{N}$.

\subsection{Finite-Horizon Constrained MDPs}

\paragraph{Finite Horizon MDPs.}
We consider finite-horizon MDPs with time-dependent dynamics~\citep{puterman1994markov}. A finite-horizon constraint MDP is defined by the tuple $\mathcal{M} = \br*{\mathcal{S},\mathcal{A}, c, p, s_1,H}$, where $\mathcal{S}$ and $\mathcal{A}$ are the state and action spaces with cardinalities $S$ and $A$, respectively. The non-stationary immediate cost for taking an action $a$ at state $s$ is a random variable $C_h(s,a)\in\brs*{0,1}$ with expectation $\E C_h(s,a)=c_h(s,a)$. The transition probability is $p_h(s'\mid s,a)$, the probability of transitioning to state $s'$ upon taking action $a$ at state $s$ at time-step $h$. The initial state in each episode is chosen to be the same state $s_1$ and $H\in \mathbb{N}$ is the {\em horizon}. Furthermore, $\N\eqdef \max_{s,a,h}\left| \brc*{s':p_h(s'\mid s,a)>0} \right|$ is the maximum number of non-zero transition probabilities across the entire state-action pairs.  

A Markov non-stationary randomized policy $\pi = (\pi_1, \pi_2, \ldots, \pi_H) \in \Pi^{\text{MR}}$ where $\pi_i: \mathcal{S} \rightarrow \Delta_A$ maps states to probabilities $\Delta_A$ on the action set $\mathcal{A}$. We denote by $a_h \sim \pi(s_h,h) := \pi_h(s_h)$, the action taken at time $h$ at state $s_h$ according to a policy $\pi$. 
For any $h \in [H]$ and $(s,a) \in \sset \times \aset$, the state-action value function of a non-stationary policy $\pi = (\pi_1, \ldots, \pi_H)$ is defined as 
\[
        Q^{\pi}_{h}(s,a) = c_{h}(s,a) + \mathbb{E} \left[ \sum_{l = h+1}^{H} c_{l}(s_{l}, a_l)\mid s_h=s,a_h=a, \pi, p \right]
\]
where the expectation is over the environment and policy randomness.
The value function is $V^{\pi}_h(s) = \sum_{a} \pi_h(a|s) Q^{\pi}_h(s, a)$.
%The quality of a policy $\pi$ from state $s$ at time $h$ is measured by its value function, which is defined as
%%
%\begin{align*}
%        V_{h}^\pi(s;p,c) \eqdef \E\brs*{\sum_{h'=h}^H c_{h'}\br*{s_{h'},a_{h'}}\mid s_h=s, p,\pi},
%\end{align*}
%%
%where the expectation is over the environment's and policy's randomness, and $p,c$ correspond to the probability transition and instantaneous cost of the MDP $\mathcal{M}$. 
Since the horizon is finite, under some regularity conditions, \citep{shreve1978alternative}, there always exists an optimal Markov non-stationary deterministic policy $\pi^\star$ whose value and action-value functions are defined as $V^\star_h(s) := V^{\pi^\star}_h(s) = \sup_{\pi} V^{\pi}_h(s)$ and $Q^\star_h(s,a) := Q^{\pi^\star}_h(s,a) = \sup_{\pi} Q^{\pi}_h(s,a)$. 
The Bellman principle of optimality (or Bellman optimality equation) allows to efficiently compute the optimal solution of an MDP using backward induction:
\begin{equation}\label{eq:optimality}
        V^\star_h(s) = \min_{a \in \mathcal{A}} \left\{ c_h(s,a) + \mathbb{E}_{s' \sim p_h(\cdot|s,a)} [V^\star_{h+1}(s')] \right\}, \quad
        Q^\star_h(s,a) = c_h(s,a) + \mathbb{E}_{s' \sim p_h(\cdot|s,a)} [V^\star_{h+1}(s')]
\end{equation}
where $V^\star_{H+1}(s) := 0$ for any $s \in \mathcal{S}$ and $V^\star_h(s) = \min_a Q_h^\star(s,a)$, for all $s \in \sset$.
The optimal policy $\pi^\star_h$ is thus greedy \wrt $V^\star_h$~\citep[\eg][]{puterman1994markov}.
Notice that by boundedness of the cost, for any $h$ and $(s,a)$, all functions $Q^\pi_h$, $V^\pi_h$, $Q^\star_h$, $V^\star_h$ are bounded in $[0, H-h+1]$. 

We can reformulate the optimization problem by using the \emph{occupancy measure}~\citep[\eg][]{puterman1994markov,altman1999constrained}.
The occupancy measure $q^\pi$ of a policy $\pi$ is defined as the set of distributions generated by executing the policy $\pi$ in the finite-horizon MDP $\mathcal{M}$~\citep[\eg][]{Zimin2013online}:
\begin{align*}
    q^\pi_h(s,a;p) \eqdef \E\brs*{\ind\brc*{s_h=s,a_h=a}\mid s_1= s_1,p,\pi} = \Pr\brc*{s_h=s,a_h=a\mid s_1=s_1,p, \pi}.
\end{align*}
For ease of notation, we define the matrix notation $q^\pi(p)\in \mathbb{R}^{HSA}$ where its $(s,a,h)$ element is given by $q^\pi_h(s,a;p)$.
This implies the following relation between the occupancy measure and the value of a policy:
\begin{align}
    V_1^\pi(s_1;p,c) = \sum_{h,s,a}q^{\pi}_h(s,a;p)c_h(s,a) \eqdef c^T q^\pi(p). \label{eq:prelim_v_to_sa_dist}
\end{align}
where $c \in  \mathbb{R}^{HSA}$ such that element  $(s,a,h)$ element is given by $c_h(s,a)$.
\begin{proof}
The value function $ V_1^\pi(s_1;p,c)$ is given by the following equivalent relations.
\begin{align*}
     &\mathbb{E}\left[ \sum_{h =1}^{H} c_{h}(s_{h}, a_h)\mid s_1=s_1, \pi, p \right] =\sum_{h =1}^{H}\mathbb{E}\left[  c_{h}(s_{h}, a_h)\mid s_1=s_1, \pi, p \right]\\
     &=\sum_{h =1}^{H} \sum_{s,a} c_{h}(s, a)\Pr\brc*{s_h=s,a_h=a\mid s_1=s_1,p,\pi}\\
     &\sum_{h =1}^{H} \sum_{s,a} c_{h}(s, a)q^\pi_h(s,a;p) = c^Tq^\pi(p),
\end{align*}
where the first relation holds by linearity of expectation.
\end{proof}

\paragraph{Finite Horizon Constraint  MDPs.} A constraint MDP~\citep{altman1999constrained} is an MDP supplied with a set of $I$ constraints  $\brc*{d_i, \alpha_i}_{i=1}^I$, where $d_i\in \mathbb{R}^{SAH}$ and $\alpha_i\in [0,H]$. The immediate $i^{th}$ constraint when taking an action $a$ from state $s$ at time-step $h$ is random variable $D_i(s,a)\in[0,1]$ with expectation $\E[D_{i,h}(s,a)] = d_{i,h}(s,a)$. The expected cost of the $i^{th}$ constraint violation from state $s$ at time-step $h$ is defined as
\begin{align*}
        V_{h}^\pi(s;p,d_i) \eqdef \E\brs*{\sum_{h'=h}^H d_{i,h'}\br*{s_{h'}, a_{h'}}\mid s_h=s, p, \pi}.
\end{align*}
Similarly to~\eqref{eq:prelim_v_to_sa_dist}, we can rewrite the constraint in terms of occupancy measure:
% \begin{align*}
    $V_{h}^\pi(s;p,d_i) = d_i^T q^\pi(p)$.
% \end{align*}
    Notice that by boundedness of the constraint cost, for any $h,i$ and $(s,a)$, all functions $Q^\pi_h(s,a;d_i,p)$, $V^\pi_h(s;d_i,p)$, $Q^\star_h(s,a;d_i,p)$, $V^\star_h(s,;d_i,p)$ are bounded in $[0, H-h+1]$. 
The objective of a CMDP is to find a policy minimizing the cost while satisfying all the constraints. Formally,
\begin{equation} \label{eq:cop}
\begin{aligned}
        \pi^\star \in &~\argmin_{\pi\in \Pi^{\text{MR}}} c^T q^\pi(p)\\
                      &\text{s.t.}\; D q^\pi(p) \leq \alpha, 
\end{aligned}
\end{equation}
where $D\in \mathbb{R}^{I\times SAH}$ and $\alpha\in \mathbb{R}^I$ such that
\begin{align*}
    D=\begin{bmatrix}
    d_1^T\\
    \vdots\\
    d_I^T
   \end{bmatrix},\quad \alpha = \begin{bmatrix}
    \alpha_1\\
    \vdots\\
    \alpha_I
   \end{bmatrix},
\end{align*}
% Since $D_i(s,a) \in [0,1]$, we also consider $\alpha_i \in [0,1]$.
The optimal value is the value of $\pi^\star$ from the initial state, i.e., $V^\star_1(s_1) \eqdef V^{\pi^\star}_1(s_1; p,c)$.
\begin{assumption}[Feasibility]\label{assum: feasability}
        The unknown CMDP is feasible, \ie there exists an unknown policy $\pi \in \Pi^{\text{MR}}$ which satisfies the constraints. Thus, an optimal policy exists as well.
\end{assumption}

It is important to stress that the optimal policy of a CMDP may be stochastic~\citep[\eg][]{altman1999constrained}, \ie may not exist an optimal deterministic policy.
In fact, due to the constraints, the Bellman optimality principle, see Eq.~\ref{eq:optimality}, may not hold anymore. This means that we cannot leverage backward induction and the greedy operator. \citet{altman1999constrained} showed that it is possible to compute the optimal policy of a constrained problem by using linear programming. We will review this approach in Sec.~\ref{sec:linearprogramming}.

% We would like to maximize the value $V_{1}^\pi(s;c)$ while satisfying $V_{1}^\pi(s;d_i)\leq \alpha_i$. In next section we formulate this optimization problem in more detail.
% Furthermore, we use the following definitions. We denote
% \begin{align*}
%     \E\brs*{I(s_h=s,a_h=a)\mid s_1=s,\pi_k} = P(s_h=s,a_h=a\mid P,\pi_k,s_1=s) = w^k_h(s,a),
% \end{align*}
% as the state-action frequency at the $k^{th}$ episode while following $\pi_k$.

\subsection{The Learning Problem.} \label{sec:learningobj}
We consider an agent which repeatedly interacts with a CMDP in a sequence of $K$ episodes of fixed length $H$ by playing a non-stationary policy $\pi_k = (\pi_{1k}, \ldots, \pi_{Hk})$ where $\pi_{hk} : \mathcal{S} \rightarrow \Delta_A$. Each episode $k$ starts from the fixed initial state $s_1^k=s_1$. 
The learning agent does not know the transition or reward functions, and it relies on the samples (\ie trajectories) observed over episodes to improve its performance over time.
% We denote by $\brc*{s_h^k,a_h^k,C_h(s_h^k,a_h^k),\brc*{D_{i,h}(s_h^k,a_h^k)}}_{h=1}^H$ the trajectory at episode $k$.

The performance of the agent is measured using multiple objectives: \emph{i)} the regret  relatively to the value of the best policy, and \emph{ii)} the amount of constraint violations. In sections~\ref{sec: paper curcl} and~\ref{sec: paper curcl bonus} we analyze algorithms with guarantees on the following type of regrets
\begin{align}
    &\Regret_+(K;c)= \sum_{k=1}^{K} \brs*{V_1^{\pi_k}(s_1;p,c) - V_1^\star(s_1)}_+ \label{eq: regret def c +}\\
    &\Regret_+(K;d)= \max_{i\in [I]} \sum_{k=1}^{K} \brs*{V_1^{\pi_k}(s_1;p,d_i) - \alpha_i}_+,  \label{eq: regret def d +}
\end{align}
where $[x]_+ := \max\{0, x\}$. The term $\Regret_+(K;d)$ represents the maximum cumulative cost for \emph{violations} of the constraints.

We later continue and analyze algorithms with reduced computational complexity in sections~\ref{sec: paper dual cmdp} and~\ref{sec: paper primal dual cmdp}. For these algorithms, we supply regret guarantees for all $K'\in[K]$ with respect to a weaker measure of regrets defined as follows.  
\begin{align}
    &\Regret(K;c)= \sum_{k=1}^{K} V_1^{\pi_k}(s_1;p,c) - V_1^\star(s_1) \label{eq: regret def c}\\
    &\Regret(K;d)= \max_{i\in [I]} \brs*{\sum_{k=1}^{K} V_1^{\pi_k}(s_1;p,d_i) - \alpha_i}. \label{eq: regret def d max}
\end{align}
\begin{remark}\label{remark: two types of regret}
Note that in our setting, the immediate regret $V_1^{\pi_k}(s_1;p,c) - V_1^\star(s_1)$ might be negative since policy $\pi_k$ might violate the constraints.
For this reason, bounding the regret as~$\Regret_+(K;c)$ is stronger than bounding~$\Regret_+(K;c)$ in the sense that the a bound on the first implies a bound on the latter; but not vice-versa. 

Similar relation  holds between the two definitions of the constraint violations types of regret; a bound on~$\Regret_+(K;d)$ implies a bound on $\Regret(K;d)$, but the opposite does not holds. In words, a bound on the first implies a bound on the absolute sum of constraint violations where the latter bounds the cumulative constraint violations, and, thus, allows for ``error cancellations''. 
% Nevertheless, a bound on $\Regret_\infty(K;d)$ is stronger then a bound on
% \begin{align*}
    % \max_{i\in [I]}\br*{ \sum_{k=1}^{K} V_1^{\pi_k}(s_1;p,d_i) - \alpha_i}.
% \end{align*}

\end{remark}

\subsection{Linear Programming for CMDPs} \label{sec:linearprogramming}
In Sec.~\ref{sec:setup}, we have seen that the cost criteria can be expressed as the expectation of the immediate cost \wrt to the occupancy measure. The convexity and compactness of this space is essential for the analysis of constrained MDPs. We refer the reader to~\citep[][Chap. 3 and 4]{altman1999constrained} for an analysis in infinite horizon problems.

%In Sec.~\ref{sec:setup}, we mentioned that may not exist an optimal deterministic policy for a CMDP. Furthermore, it is easy to see that the optimal solution of a CMDP may not satisfy the optimal Bellman equation (see Eq.~\ref{eq:optimality}). In fact, the optimal Bellman equation does not take into account the constraints.
%As a consequence, we cannot directly use backward induction~\citep[][Chap. 4.5]{puterman1994markov} to compute the optimal solution of a CMDP.
%However, \citet{altman1999constrained} showed that it is possible to use linear programming (LP) to solve the constrained problem.
%We extend this approach to finite-horizon problems.

We start stating two basic properties of an occupancy measure $q$. In this section, we remove the dependence on the model $p$ to ease the notation.
It is easy to see that the occupancy measure of any policy $\pi$ satisfies~\citep[\eg][]{Zimin2013online,bhattacharya2017linear}:
\begin{equation}\label{eq:occupancy_space}
        \begin{aligned}
                \sum_{a} q^\pi_h(s,a) &= \sum_{s', a'} p_{h-1}(s|s',a') q^\pi_{h-1}(s',a') && \forall s \in \mathcal{S} \\
                % \sum_{s,a} q^\pi_h(s,a) &= 1\\
                q_h^\pi(s,a) &\geq 0 && \forall s,a
        \end{aligned}
\end{equation}
for all $h \in [H] \setminus \{1\}$.
For $h=1$ and an initial state distribution $\mu$, we have that
\[
        q^\pi_1(s, a) = \pi_1(a|s) \cdot \mu(s) \qquad \forall s,a
\]
Notice that $\sum_{s,a} q^\pi_1(s,a) = 1$. As a consequence, by summing the first constraint in~\eqref{eq:occupancy_space} over $s$ we have that $\sum_{s,a} q_h^\pi(s,a) = 1$, for all $h \in [H]$. Thus the $q^\pi$ satisfying the constraints are probability measures. 
We denote by $\Delta^\mu(\mathcal{M})$ the space of occupancy measures.

Since the set $\Delta^\mu(\mathcal{M})$ can be described by a set of affine constraints, we can state the following property.
Please refer to~\citep[\eg][]{puterman1994markov,altman1999constrained,Mannor2005empirical} for more details.
\begin{proposition} \label{prop: convexity of state action frequencey}
        The set $\Delta^\mu(\mathcal{M})$ of occupancy measure is convex.
\end{proposition}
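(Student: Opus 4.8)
The plan is to show convexity directly from the definition of $\Delta^\mu(\mathcal{M})$ as the solution set of a system of linear (affine) constraints, using the elementary fact that any set defined by linear equalities and inequalities is convex. The key observation, already established in the excerpt, is that $\Delta^\mu(\mathcal{M})$ is exactly the set of vectors $q \in \mathbb{R}^{HSA}$ satisfying the flow-conservation equalities in~\eqref{eq:occupancy_space} (namely $\sum_a q_h(s,a) = \sum_{s',a'} p_{h-1}(s|s',a') q_{h-1}(s',a')$ for all $h \in [H]\setminus\{1\}$ and all $s$), the initial-time constraint $\sum_{s,a} q_1(s,a) = 1$ (together with the marginal condition at $h=1$), and the nonnegativity constraints $q_h(s,a) \geq 0$. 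Each of these is an affine constraint in the entries of $q$.

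First I would fix two occupancy measures $q, q' \in \Delta^\mu(\mathcal{M})$ and a scalar $\lambda \in [0,1]$, and set $\bar{q} = \lambda q + (1-\lambda) q'$; the goal is to verify that $\bar{q}$ again satisfies every constraint in~\eqref{eq:occupancy_space}. For the nonnegativity constraints this is immediate, since a convex combination of nonnegative numbers is nonnegative. For each flow-conservation equality I would use linearity: writing the constraint as $L_{h,s}(q) = 0$ for an appropriate affine (in fact linear) functional $L_{h,s}$, we have $L_{h,s}(\bar{q}) = \lambda L_{h,s}(q) + (1-\lambda) L_{h,s}(q') = 0$ because both $q$ and $q'$ satisfy it. The same argument handles the normalization $\sum_{s,a}\bar{q}_1(s,a)=1$, since this is a convex combination of two quantities each equal to $1$. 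Hence $\bar{q} \in \Delta^\mu(\mathcal{M})$, which is precisely the definition of convexity.

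I do not anticipate a genuine obstacle here: the result is a routine consequence of the fact that the intersection of half-spaces and hyperplanes (a polyhedron) is convex. The only point requiring mild care is to be explicit that $\Delta^\mu(\mathcal{M})$ coincides with the feasible region of the affine system~\eqref{eq:occupancy_space}—that is, that we are characterizing the occupancy-measure set by these constraints rather than by the existential definition ``$q = q^\pi$ for some $\pi$.'' This equivalence is exactly what the surrounding discussion asserts (every $q^\pi$ satisfies the constraints, and conversely any feasible $q$ induces a policy via $\pi_h(a|s) \propto q_h(s,a)$), so I would simply invoke it and then apply the linearity argument above. Consequently, the proof reduces to the one-line observation that convex combinations preserve each affine constraint.
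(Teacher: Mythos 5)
Your proof is correct and follows essentially the same route as the paper: the paper justifies the proposition by noting that $\Delta^\mu(\mathcal{M})$ is described by the affine constraints in~\eqref{eq:occupancy_space} (deferring details to the cited references), which is exactly your polyhedron argument spelled out. Your added care about identifying the existential definition of the occupancy-measure set with the feasible region of the affine system is a reasonable elaboration of what the paper's surrounding discussion already asserts.
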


An important consequence of the linearity of the cost criteria and of the structure of $\Delta(\mathcal{M})$ is that the original control problem can be reduced to a Linear Program (LP) where the optimization variables are measures.
Furthermore, optimal solutions of the LP define the optimal Markov policy through the occupancy measure.
In fact, a policy $\pi^q$ generates an occupancy measure $q \in \Delta(\mathcal{M})$ if 
\[
        \pi^q_h(a|s) = \frac{q_h(s,a)}{\sum_b q_h(s,b)}, \qquad  \forall (s,a, h) \in \mathcal{S} \times \mathcal{A} \times [H].
\]

The constrained problem~\eqref{eq:cop} is equivalent to the LP:
\begin{align*}
        \min_{q}      &\sum_{s,a,h} q_h(s,a) c_h(s,a)\\
        \text{s.t.} 
                      & \sum_{s,a,h} q_h(s,a) d_{i,h}(s,a) \leq \alpha_i && \forall i \in [I]\\ 
                      &\sum_a q_h(s,a) = \sum_{s',a'} p_{h-1}(s|s',a') q_{h-1}(s',a') && \forall h \in [H] \setminus \{1\}\\
                    % &\sum_{s,a} q_h(s,a) = 1 && \forall h \in [H] \setminus \{1\}\\
                      &\sum_{a} q_1(s,a) = \mu(s) && \forall s \in \mathcal{S}\\
                      & q_h(s,a) \geq 0 && \forall (s,a, h) \in \mathcal{S} \times \mathcal{A} \times [H]
\end{align*}
The constraint $\sum_{s,a} q_h(s,a) = 1$ is redundant.

\subsection{Notations and Definitions.} 
%We define $[N] \eqdef \brc*{1,\ldots,N},\;$ for all $N\in \mathbb{N}$, and 
Throughout the paper, we use $t\in\brs*{H}$ and $k\in\brs*{K}$ to denote time-step inside an episode and the index of an episode, respectively. The filtration $\mathcal{F}_k$ includes all events (states, actions, and costs) until the end of the $k$-th episode, including the initial state of the $k+1$ episode. 
We denote by $n_h^k(s,a)$, the number of times that the agent has visited state-action pair $(s,a)$ at the $h$-th step, and by $\wb{X}_k$, the empirical average of a random variable $X$. Both quantities are based on experience gathered until the end of the $k^{th}$ episode and are $\F_k$ measurable. Since $\pi_k$ is $\filt$ measurable, so is $q_{h}^{\pi_k}(s,a;p)$. Furthermore, from this definition we have that for any $X$ which is $\mathcal{F}_{k-1}$ measureable
\begin{align*}
    \E[X(s^k_h,a^k_h)\mid \mathcal{F}_{k-1}] = \sum_{s,a} q^{\pi_k}_h(s,a;p) X(s,a).
\end{align*}

We use $\wt O(X)$ to refer to a quantity that depends on $X$ up to a poly-log expression of a quantity at most polynomial in $S,A,K,H$ and $\delta^{-1}$.
Similarly, $\lesssim$ represents $\leq$ up to numerical constans or poly-log factors. We define $X\vee Y\triangleq \max\brc*{X,Y}$.

\section{Upper Confidence Bounds for CMDPs}\label{sec: paper curcl}

\begin{algorithm}[t]
        \caption{\cucrl}\label{alg: optimistic model CMDP }
\begin{algorithmic}
\STATE {\bf Require:} $\delta\in(0,1)$
\STATE {\bf Initialize:} $n_h^0(s,a)=0$, $\wb{p}_h^0(s'\mid s,a) =1/S$ and $\wb c_h^0(s,a) =0$
\FOR{$k=1,...,K$}
    \STATE Define $\wt c^k$ and $\wt d^k$ as in~\eqref{eq:lower.bound.cd}
    % \STATE {\color{gray} \# Update Policy}
    \STATE Compute the solution of~\eqref{eq:ucrl_optcmd_policy} through the extended LP

    %\[
    %        \pi_k, \wt p_k = \arg\min_{\pi\in \Delta_A^S,p' \in \mathcal{P}_{k}} \brc*{\wt c_{k}^T q^\pi(p') \mid \wt D_{k} q^\pi(p') \leq  \alpha}
    %\]
    \STATE Execute $\pi_k$ and collect a trajectory $(s_h^k, a_h^k, c_h^k, \{d_{i,h}^k\}_i)$ for $h \in [H]$
    % \STATE Rollout a trajectory by acting $\pi_k$
    \STATE  Update counters and empirical model (\ie $n^k,\wb c^k, \wb d^k, \wb p^k$) as in~\eqref{eq:empirica_model}
\ENDFOR
\end{algorithmic}
\end{algorithm}

We start by considering a natural adaptation of \ucrl~\citep{jaksch2010near} to the setting of CMDPs which we call \cucrl~(see Algorithm~\ref{alg: optimistic model CMDP }).

Let $n_h^{k-1}(s,a) = \sum_{k'=1}^{k-1} \ind \br*{s_h^{k'} = s,a_h^{k'} = a}$ denote the number of times a pair $(s,a)$ was observed before episode $k$.
At each episode, \cucrl estimates the transition model, cost function and constraint cost function by their empirical average:
\begin{equation}
        \label{eq:empirica_model}
\begin{aligned}
    \wb p_h^{k-1}(s' \mid s,a)
    & =
    \frac{\sum_{k'=1}^{k-1} \ind \br*{s_h^{k'} = s,a_h^{k'} = a,s_{h+1}^{k'} = s'}}{n^{k-1}_h(s,a)\vee 1}
    \\
    \wb c_{h}^{k-1}(s,a)
    & =
    \frac{\sum_{k'=1}^{k-1} c_h^{k'} \cdot \ind \br*{s_h^{k'} = s,a_h^{k'} = a}}{n^{k-1}_h(s,a)\vee 1},\\
    \forall i\in [I], \qquad \wb d_{i,h}^{k-1}(s,a)
    & =
    \frac{\sum_{k'=1}^{k-1} d_{i,h}^{k'} \cdot \ind \br*{s_h^{k'} = s,a_h^{k'} = a}}{n^{k-1}_h(s,a) \vee 1}.
\end{aligned}
\end{equation}

Following the approach of \emph{optimism-in-the-face-of-uncertainty} we would like to act with an optimistic policy. To this end, we generalize the notion of optimism from the bandit setup presented in~\citep{agrawal2019bandits} to the RL setting. Specifically, we would like for our algorithm to satisfy the following demands:
\begin{enumerate}[label={(\alph*)}]
    \item \emph{Feasibility of $\pi^*$ for all episodes.} The optimal policy $\pi^*$ should be contained in the feasible set in every episode. 
    \item \emph{Value optimism.} The value of every policy should be optimistic relatively to its true value, $ V^\pi_1(s_1; \wt c_k,\wt p_k)\leq V^\pi_1(s_1;c,p)$ where $\wt c_k,\wt p_k$ are the optimistic cost and model by which the algorithm calculates the value of a policy.
\end{enumerate}
Indeed, optimizing over a set which satisfy \emph{(a)}  while satisfying \emph{(b)} results in an optimistic estimate of~$V^\star_1(s_1)$. 

Similar to \ucrl, at the beginning of each episode $k$, \cucrl constructs confidence intervals for the costs and the dynamics of the CMDP.
Formally, for any $(s,a) \in \mathcal{S} \times \mathcal{A}$ we define
\begin{align}
        \label{eq:ci_transitions}
        B^p_{h,k}(s,a) &= \Big\{ \wt p(\cdot|s,a) \in \Delta_S : \forall s' \in \mathcal{S},~ |\wt p(\cdot|s,a) - \wb p_h^{k-1}(\cdot|s,a)| \leq \beta^p_{h,k}(s,a,s') \Big\},\\
        \notag
        B^c_{h,k}(s,a) &= \Big[\wb c_h^{k-1}(s,a) - \beta^c_{h,k}(s,a), \wb c_h^{k-1}(s,a) + \beta^c_{h,k}(s,a)\Big],\\
        \notag
        B^d_{i,h,k}(s,a) &= \Big[\wb d_{i,h}^{k-1}(s,a) - \beta^d_{i,h,k}(s,a), \wb d_{i,h}^{k-1}(s,a) + \beta^d_{i,h,k}(s,a)\Big],
\end{align}
where the size of the confidence intervals is built using empirical Bernstein inequality~\citep[\eg][]{Audibert2007tuning,maurer2009empirical} for the transitions and Hoeffding inequality for the costs:
\begin{equation}
        \label{eq:betas}
\begin{aligned}
        \beta^p_{h,k}(s,a,s') &\lesssim \sqrt{\frac{\text{Var}\big(\wb{p}_h^{k-1}(s'|s,a)\big)}{n_h^{k-1}(s,a)\vee 1}} + \frac{1}{n_h^{k-1}(s,a)\vee 1}\\
        \beta^c_{h,k} = \beta^d_{i,h,k} &\lesssim \sqrt{\frac{1}{n_h^{k-1}(s,a)\vee 1}}
\end{aligned}
\end{equation}
where $\text{Var}\big(\wb{p}_h^{k-1}(s'|s,a)\big) = \wb{p}_h^{k-1}(s'|s,a) \cdot (1-\wb{p}_h^{k-1}(s'|s,a))$~\citep[\eg][]{Dann2015samplecomplexity}.
The set of plausible CMDPs associated with the confidence intervals is then
$\mathcal{M}_k = \{ M=(\mathcal{S},\mathcal{A}, \wt{c},\wt{d}, \wt{p})~:~ \wt c_h(s,a) \in B^c_{h,k}(s,a), \wt d_{i,h} \in B^d_{i,h,k}(s,a), \wt p_h(\cdot|s,a) \in B^p_{h,k}(s,a)\}$.
Once $\mathcal{M}_k$ been computed, \cucrl finds a solution to the optimization problem
\begin{equation}\label{eq:ucrl_cmdp_nomind}
\begin{aligned}
        (M_k, \pi_k) = \argmin_{(\wt{c}, \wt{d}_i, \wt{p}) \in \mathcal{M}_k,\; \pi \in \Pi^{\text{MR}} } &\quad 
\sum_{h,s,a}\wt{c}_{h}^k(s,a) q^\pi_h(s,a;\wt p)\\
        %\wt{c}^\top q^\pi(\wt p)\\
        \text{s.t.}\qquad&\quad  %\wt{d}_{i}^\top q^\pi(\wt p) \leq  \alpha_i,\qquad \forall i \in [H] 
        \sum_{h,s,a}  \wt{d}_{i,h}(s,a) q^\pi_h(s,a;\wt p) \leq  \alpha_i,\qquad \forall i \in [H] 
\end{aligned}
\end{equation}
While this problem is well-defined and feasible, we can simplify it and avoid to optimize over the sets $B_k^c$ and $B_k^d$.
We define 
\begin{equation}\label{eq:lower.bound.cd}
        \wt c^k_h(s,a) = \wb c_h^{k-1}(s,a) - \beta^c_{h,k}(s,a) \quad \text{ and } \quad \wt d_{i,h}^k(s,a) = \wb{d}_{i,h}^{k-1}(s,a) - \beta^d_{i,h,k}(s,a)
\end{equation}
to be the lower confidence bounds on the costs. Then, we can solve the following optimization problem
\begin{equation}\label{eq:ucrl_optcmd_policy}
\begin{aligned}
        \min_{\wt{p} \in B^p_k,\; \pi \in \Pi^{\text{MR}} } &\quad \sum_{h,s,a}\wt{c}_{h}^k(s,a) q^\pi_h(s,a;\wt p)\\
        \text{s.t.}\qquad&\quad  
        % \min_{\wt{d}_i \in B^d_{i,k}} \wt{d}_{i}^\top q^\pi(\wt p) \leq  \alpha_i,\qquad \forall i \in [H] 
        \sum_{h,s,a} \wt{d}_{i,h}^k(s,a) q^\pi_h(s,a;\wt p) \leq  \alpha_i,\qquad \forall i \in [H] 
\end{aligned}
\end{equation}
Consider a feasible solution $M' = (\sset, \aset, c', d', p')$ and $\pi'$ of problem~\eqref{eq:ucrl_cmdp_nomind}.
We can replace $c'$ with $c_k$ and $d'$ with $d_k$ as in~\eqref{eq:lower.bound.cd} and still have a feasible solution.
This holds since $c' \geq c_k$ and $d' \geq d_k$ componentwise.

% We remind that $\wt c_k^\top  q^{\pi_k}(\wt p_k) = V^{\pi_k}_1(s_1;\wt c_k, \wt p_k)$ and  $(\wt D_k q^{\pi_k}(\wt p_k))_i =  V^{\pi_k}_1(s_1;\wt d_{i,k}, \wt p_k),$ are the values on an MDP with transition $\wt p_k$ with optimistic cost $\wt c_k$ and constraint cost $\wt d_{i,k}$.
We can now state some property of~\eqref{eq:ucrl_optcmd_policy}.
\begin{proposition}\label{prop:ucrl_cmdp_opt}
        The optimization problem~\eqref{eq:ucrl_optcmd_policy} is \emph{feasible}.
        Denote by $\pi_k$ the policy recovered solving~\eqref{eq:ucrl_optcmd_policy} and by $\wt{M}_k = (\sset,\aset, \wt c_k, \wt d_k, \wt p_k)$ the associated CMDP. Then, policy $\pi_k$ is \emph{optimismtic}, \ie 
        \[
                V^{\pi_k}_1(s_1; \wt c_k, \wt p_k) := \wt{c}^\top_k q^{\pi_k}(\wt p_k) \leq c^\top q^{\pi^\star}(p) := V^{\star}_1(s_1; c, p)
        \]
        %Moreover,
        %\[
        %        \wt c^k_h(s,a) = \wb c_h^{k-1}(s,a) - \beta^c_{h,k}(s,a) \quad \text{ and } \quad \wt d_{i,h}^k(s,a) = \wb{d}_{i,h}^k(s,a) - \beta^d_{i,h,k}(s,a)
        %\]
\end{proposition}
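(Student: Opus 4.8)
The plan is to reduce both claims to a standard optimism-under-uncertainty argument, conditioned on the high-probability ``good event'' on which the confidence intervals of~\eqref{eq:ci_transitions} simultaneously capture the true model. Let $G$ denote the event on which, for every $(s,a,h)$ and every episode $k \in [K]$, the true transition satisfies $p_h(\cdot|s,a) \in B^p_{h,k}(s,a)$, and the lower confidence bounds underestimate the true costs componentwise, i.e.\ $\wt c_h^k(s,a) \leq c_h(s,a)$ and $\wt d_{i,h}^k(s,a) \leq d_{i,h}(s,a)$ for all $i \in [I]$. These inclusions are exactly what the empirical-Bernstein radius $\beta^p$ and the Hoeffding radii $\beta^c = \beta^d$ of~\eqref{eq:betas} are designed to guarantee, so I would first invoke the concentration bounds to assert $\Pr(G) \geq 1-\delta$ and argue everything below on $G$.

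For \textbf{feasibility}, the natural witness is the pair $(p, \pi^\star)$ consisting of the \emph{true} model together with an optimal policy of the true CMDP. Since $p \in B^p_k$ on $G$, this pair is admissible for~\eqref{eq:ucrl_optcmd_policy}, so it remains only to verify the $I$ constraints. Using the occupancy-measure identity $V_1^{\pi}(s_1; p, d_i) = d_i^\top q^\pi(p)$ together with $\wt d_i^k \leq d_i$ componentwise and $q^{\pi^\star}(p) \geq 0$, I would bound
\[
    \sum_{h,s,a} \wt d_{i,h}^k(s,a)\, q_h^{\pi^\star}(s,a;p)
    \;\leq\;
    \sum_{h,s,a} d_{i,h}(s,a)\, q_h^{\pi^\star}(s,a;p)
    \;=\; V_1^{\pi^\star}(s_1; p, d_i)
    \;\leq\; \alpha_i,
\]
where the last inequality is the feasibility of $\pi^\star$ for the true CMDP (Assumption~\ref{assum: feasability}). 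Hence $(p,\pi^\star)$ satisfies every constraint, so the feasible set of~\eqref{eq:ucrl_optcmd_policy} is nonempty.

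For \textbf{optimism}, I would combine optimality of $(\wt p_k, \pi_k)$ with the lower-bound property of $\wt c_k$. Because $(\wt p_k, \pi_k)$ \emph{minimizes} the objective over the feasible set, and $(p, \pi^\star)$ is a feasible point by the previous paragraph, the minimum value cannot exceed the objective evaluated at the witness, giving $\wt c_k^\top q^{\pi_k}(\wt p_k) \leq \wt c_k^\top q^{\pi^\star}(p)$. Applying $\wt c_k \leq c$ componentwise (valid on $G$) and $q^{\pi^\star}(p) \geq 0$ then yields $\wt c_k^\top q^{\pi^\star}(p) \leq c^\top q^{\pi^\star}(p) = V_1^\star(s_1; c, p)$. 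Chaining these two inequalities gives the claim $V_1^{\pi_k}(s_1; \wt c_k, \wt p_k) = \wt c_k^\top q^{\pi_k}(\wt p_k) \leq V_1^\star(s_1; c, p)$.

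The one genuinely delicate ingredient is the concentration step underpinning $G$: the transition set relies on an \emph{empirical-Bernstein} inequality that must hold uniformly over all $(s,a,s',h)$ and all $k \in [K]$, which requires a union bound and some care with the $n_h^{k-1}(s,a)\vee 1$ normalization for unvisited pairs. Once $G$ is secured, the remainder is just the two-line optimism sandwich above; the only convention worth flagging is that optimism is phrased for \emph{cost minimization} (via \emph{lower} confidence bounds on cost), so all inequalities point in the cost-decreasing direction rather than the more familiar reward-increasing one.
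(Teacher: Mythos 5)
Your proposal is correct and follows essentially the same route as the paper: condition on the good event, take the true transition model $p \in B^p_k$ as the transition witness together with $\pi^\star$, use the componentwise bounds $\wt c_k \leq c$, $\wt d^k_i \leq d_i$ and nonnegativity of the occupancy measure to get feasibility and optimism. The paper merely packages the same ideas slightly differently (an optimism lemma for arbitrary policies, feasibility via the set inclusion $\Pi_D \subseteq \Pi^k_D$, and a corollary chaining minima over nested feasible sets), whereas you evaluate the objective directly at the feasible witness $(p,\pi^\star)$; the content is identical.
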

\begin{proof}
        The proof of optimism is reported in Lem.~\ref{lemma:ucrl_lp_optimism} and the feasibility is proven in Lem.~\ref{lemma:ucrl_optcmdp_pi_star_is_feasible}.
\end{proof}

\paragraph{The extended LP problem.}
Problem~\eqref{eq:ucrl_optcmd_policy} is similar to~\eqref{eq:cop}, the crucial difference is that the true costs and dynamics are unknown.
Since we cannot directly optimize this problem, 
% We leverage the definition of the confidence intervals, in particular the Bernstein structure, to recast problem~\eqref{eq:ucrl_optcmd_policy} as an LP.
we propose to rewrite~\eqref{eq:ucrl_optcmd_policy} as an extended LP problem by considering the state-action-state occupancy measure $z^\pi(s,a,s';p)$ defined as $z_h^\pi(s,a,s';p) = p_{h}(s'|s,a) q^\pi_h(s,a;p)$.
We leverage the Bernstein structure of $B^p_{h,k}$ (see Eq.~\ref{eq:ci_transitions}) to formulate the extended LP over variable $z$: 
\begin{align*}
        \min_{z}      &\sum_{h,s,a,s'} z_h(s,a,s') c_h(s,a)\\
        \text{s.t.} 
                      & \sum_{h, s,a,s'} z_h(s,a,s') d_{i,h}(s,a) \leq \alpha_i && \forall i \in [I]\\ 
                      &\sum_{a,s'} z_h(s,a,s') = \sum_{s',a'} z_{h-1}(s',a',s) && \forall h \in [H] \setminus \{1\}\\
                    % &\sum_{s,a} q_h(s,a) = 1 && \forall h \in [H] \setminus \{1\}\\
                      &\sum_{a,s'} z_1(s,a,s') = \mu(s) && \forall s \in \mathcal{S}\\
                      & z_h(s,a,s') \geq 0 && \forall (s,a,s', h) \in \mathcal{S} \times \mathcal{A} \times \sset \times [H]\\
                      &z_h(s,a,s') - \Big( \wb p_h^{k-1}(s'|s,a) + \beta^p_{h,k}(s,a,s') \Big) \sum_{y}z_h(s,a,y) \leq 0  &&\forall (s,a,s', h) \in \mathcal{S} \times \mathcal{A} \times \sset \times [H]\\
                      &-z_h(s,a,s') + \Big( \wb p_h^{k-1}(s'|s,a) - \beta^p_{h,k}(s,a,s') \Big) \sum_{y}z_h(s,a,y) \leq 0  &&\forall (s,a,s', h) \in \mathcal{S} \times \mathcal{A} \times \sset \times [H]
\end{align*}
This LP has $O(S^2HA)$ constraints and $O(S^2HA)$ decision variables.
Such an approach was also used in~\cite{jin2019learning} in a different context. Notice that $B^p_k$ can be chosen by using different concentration inequalities, \eg $L_1$ concentration inequality for probability distributions. \citet{rosenberg2019online} showed that even in that case we can formulate an extended LP.

Once we have computed $z$, we can recover the policy and the transitions as
\begin{align*}
        \wt{p}_{h}^k(s'|s,a) = \frac{z(s,a,s')}{\sum_{y} z(s,a,y)} \quad \text{ and } \quad \pi_k(a|s) = \frac{\sum_{s'} z(s,a,s')}{\sum_{b,s'} z(s,b,s')} 
\end{align*}

sProposition~\ref{prop:ucrl_cmdp_opt} shows that \emph{(a)} and \emph{(b)} are satisfied and the solution is optimistic. This allows us to provide the following guarantees.
\begin{tcolorbox}[boxrule=0pt, arc=0pt,boxsep=0pt, left=5pt,right=5pt,top=5pt,bottom=5pt]
\begin{restatable}[Regret Bounds for \cucrl]{theorem}{TheoremUCRLOptLP}\label{theorem: optimistic model constraint RL}
Fix $\delta\in (0,1)$. With probability at least $1-\delta$ for any $K'\in[K]$ the following regret bounds hold
\begin{align*}
    &\Regret_+(K';c)  \leq    \Olog \Big(\sqrt{S\mathcal{N}H^4K} + (\sqrt{\mathcal{N}}+H)H^2SA \Big),\\
    &\Regret_+(K';d)  \leq     \Olog \Big( \sqrt{S\mathcal{N}H^4K} + (\sqrt{\mathcal{N}}+H)H^2SA \Big).
\end{align*}
\end{restatable}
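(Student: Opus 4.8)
The plan is to bound both positive regrets by a single trajectory-weighted sum of confidence widths, and then to control that sum with counting and concentration arguments. First I would fix the good event $\mathcal G$ on which every confidence interval in~\eqref{eq:ci_transitions} holds simultaneously, for all $k\le K$, all $(s,a,s',h)$ and all $i\in[I]$; the empirical-Bernstein and Hoeffding bounds behind~\eqref{eq:betas} together with a union bound give $\Pr(\mathcal G)\ge 1-\delta$, and all subsequent steps are carried out on $\mathcal G$, where in particular $0\le c_h(s,a)-\wt c^k_h(s,a)\le 2\beta^c_{h,k}(s,a)$ and the analogous two-sided bound holds for $\wt d^k_i$.

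For the cost regret, write $\Delta_k:=V_1^{\pi_k}(s_1;p,c)-V_1^{\pi_k}(s_1;\wt c_k,\wt p_k)$. Optimism (Proposition~\ref{prop:ucrl_cmdp_opt}) gives $V_1^{\pi_k}(s_1;\wt c_k,\wt p_k)\le V_1^\star(s_1)$, so $\brs*{V_1^{\pi_k}(s_1;p,c)-V_1^\star(s_1)}_+\le\brs*{\Delta_k}_+$. I would expand $\Delta_k$ with the value-difference (simulation) lemma,
\begin{equation*}
\Delta_k=\sum_{h,s,a}q^{\pi_k}_h(s,a;p)\Big[\big(c_h(s,a)-\wt c^k_h(s,a)\big)+\sum_{s'}\big(p_h(s'\mid s,a)-\wt p^k_h(s'\mid s,a)\big)V^{\pi_k}_{h+1}(s';\wt c_k,\wt p_k)\Big],
\end{equation*}
and bound each term on $\mathcal G$ using the cost interval above together with $\wt p^k_h(\cdot\mid s,a)\in B^p_{h,k}(s,a)$ and $V^{\pi_k}_{h+1}(\cdot;\wt c_k,\wt p_k)\in[0,H]$. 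This yields $\brs*{\Delta_k}_+\le\Delta_k^{\mathrm w}$, where $\Delta_k^{\mathrm w}:=\sum_{h,s,a}q^{\pi_k}_h(s,a;p)\big(2\beta^c_{h,k}(s,a)+2H\sum_{s'}\beta^p_{h,k}(s,a,s')\big)\ge 0$, so that $\Regret_+(K';c)\le\sum_{k\le K'}\Delta_k^{\mathrm w}$.

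The constraint regret is identical in form. Feasibility of $\pi_k$ in the optimistic CMDP gives $(\wt d^k_i)^{\top}q^{\pi_k}(\wt p_k)\le\alpha_i$, hence $V_1^{\pi_k}(s_1;p,d_i)-\alpha_i\le d_i^{\top}q^{\pi_k}(p)-(\wt d^k_i)^{\top}q^{\pi_k}(\wt p_k)$, and the same value-difference expansion bounds its positive part by $\Delta_k^{\mathrm w}$ with $\beta^c_{h,k}$ replaced by $\beta^d_{i,h,k}$. Since $\beta^d_{i,h,k}=\beta^c_{h,k}$ and the transition part of $\Delta_k^{\mathrm w}$ does not depend on $i$, taking $\max_{i\in[I]}$ only costs the $\log I$ factor already present in the union bound defining $\mathcal G$, which is absorbed into $\Olog(\cdot)$; thus $\Regret_+(K';d)$ obeys the same bound as $\Regret_+(K';c)$.

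It remains to bound $\sum_{k\le K'}\Delta_k^{\mathrm w}$, which I would do in three steps. (i) Using $q^{\pi_k}_h(s,a;p)=\E[\ind\{s^k_h=s,a^k_h=a\}\mid\filt]$, replace each occupancy-weighted width by its value at the realized pair $(s^k_h,a^k_h)$ up to an additive martingale remainder controlled by Azuma's inequality. (ii) Convert the resulting empirical sums $\sum_{k,h}\beta_{h,k}(s^k_h,a^k_h)$ into visitation-count sums via the pigeonhole bound $\sum_{k}(n^{k-1}_h(s^k_h,a^k_h)\vee 1)^{-1/2}\lesssim\sqrt{SAK}$ (then summed over $h$), which handles the Hoeffding cost and constraint terms. (iii) For the Bernstein transition terms, use $V^{\pi_k}_{h+1}(\cdot;\wt c_k,\wt p_k)\le H$ and apply Cauchy--Schwarz over the at most $\mathcal N$ reachable successors to obtain $\sum_{s'}\beta^p_{h,k}(s,a,s')\lesssim\sqrt{\mathcal N/(n^{k-1}_h(s,a)\vee 1)}+\mathcal N/(n^{k-1}_h(s,a)\vee 1)$; combined with the counting bound of step (ii), the first piece produces the leading $\sqrt{S\mathcal N H^4 K}$ term, while the $1/n$ remainders together with the Azuma corrections of step (i) collect into the additive $(\sqrt{\mathcal N}+H)H^2SA$ term. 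The main obstacle is step (iii): aggregating the Bernstein widths across episodes while relating the empirical transition variance and support to the true support of size $\mathcal N$, and passing from the occupancy weighting under the \emph{unknown} $p$ (with the value function evaluated under the \emph{estimated} model $\wt p_k$) to empirical counts through the martingale argument — keeping the resulting cross-terms confined to the $1/n$ scale is exactly what prevents the leading rate from degrading beyond $\sqrt K$.
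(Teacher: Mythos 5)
Your reduction step reproduces the paper's argument faithfully: the same good event for the confidence intervals, optimism (Cor.~\ref{corollary: ucrl constraint problem optimism}) to eliminate $V_1^\star$, feasibility of the optimistic LP to handle the $-\alpha_i$ part of the constraint regret, and the value-difference expansion with Cauchy--Schwarz over the at most $\mathcal{N}$ reachable successors (this is the content of Lem.~\ref{lemma: on policy errors optimistic model}). The genuine gap is in your concentration step (i), and it is precisely the obstacle you name at the end without resolving. Applying plain Azuma to $\sum_k\big(\E[W_k\mid\filt]-W_k\big)$, where $W_k$ is the realized per-episode width sum, requires a uniform bound on the martingale differences. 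But the Bernstein width $\beta^p_{h,k}(s,a,s')$ carries the additive term $\Theta\big(1/(n^{k-1}_h(s,a)\vee 1)\big)$ for \emph{every} $s'\in\mathcal{S}$, not only for the $\mathcal{N}$ truly reachable successors, so at poorly visited pairs $\sum_{s'}\beta^p_{h,k}(s,a,s')\simeq\sqrt{\mathcal{N}}+S$ and hence $\sup_k W_k=\Olog\big(H^2(\sqrt{\mathcal{N}}+S)\big)$. Azuma then contributes a deviation of order $\Olog\big(H^2(\sqrt{\mathcal{N}}+S)\sqrt{K}\big)$. The $SH^2\sqrt{K}$ piece is not absorbed anywhere: since $\mathcal{N}\le S$, it strictly dominates the claimed leading term $\sqrt{S\mathcal{N}H^4K}$ whenever $\mathcal{N}<S$, and it certainly cannot sit inside the $K$-independent constant $(\sqrt{\mathcal{N}}+H)H^2SA$ as your step (iii) asserts (any Azuma remainder grows like $\sqrt{K}$).

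The paper avoids this entirely by never passing to realized counts through Azuma: its good event additionally contains the count event $F^N=\brc*{n^{k-1}_h(s,a)\ge\tfrac12\sum_{j<k}q^{\pi_j}_h(s,a;p)-H\ln(SAH/\delta')}$ (absent from your good event), and then Lem.~\ref{lemma: bounding on trajectory visitation} and its $1/n$ counterpart bound the sums of \emph{conditional expectations} $\sum_{k,h}\E[(n^{k-1}_h(s^k_h,a^k_h)\vee1)^{-1/2}\mid\filt]$ and $\sum_{k,h}\E[(n^{k-1}_h(s^k_h,a^k_h)\vee1)^{-1}\mid\filt]$ directly, so the $S/n$ remainders stay at the $1/n$ scale and end up in the additive constant. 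Your route can be repaired: split $W_k$ into its $\sqrt{1/n}$ part, whose differences are $\Olog(H^2\sqrt{\mathcal{N}})$ so that Azuma costs only $\Olog(\sqrt{\mathcal{N}H^4K})$ (dominated by the leading term, though still not constant in $K$), and its $1/n$ part, for which you need a multiplicative Freedman-type inequality $\sum_k\E[X_k\mid\filt]\lesssim\sum_k X_k+\Olog(\sup_k X_k)$ combined with the realized pigeonhole bound $\sum_{k,h}(n^{k-1}_h(s^k_h,a^k_h)\vee1)^{-1}\le\Olog(SAH)$ --- or simply adopt the paper's $F^N$-based lemmas. A minor further correction: the optimistic value is not in $[0,H]$; since $\wt c^k_h$ can be negative one only gets $\abs{V^{\pi_k}_{h}(\cdot;\wt c_k,\wt p_k)}\lesssim H$ up to logarithmic factors, which is all the argument actually needs.
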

\end{tcolorbox}

\section{Exploration Bonus for CMDPs}\label{sec: paper curcl bonus}
\cucrl is an efficient algorithm for exploration in constrained MDPs. An obvious shortcoming of \cucrl is its high computational complexity due to the solution of the extended LP with $O(S^2HA)$ constraints and decision variables.
% In previous section, we introduced \cucrl which uses an optimistic model for dealing with the exploration exploitation tradeoff in the presence of constraints in RL. An obvious shortcoming of UCRL-OptCMDP is its high episodic computational complexity. In each episode, constraint MDP with additional $O(S^2HA)$ constraint should be solved. 
In this section, we present a bonus-based algorithm for exploration in CMDPs that we call \cucbvi. This algorithm can be seen as a generalization of UCBVI~\citep{azar2017minimax} to constrained MDPs. The main advantage of \cucbvi is that it requires to solve a single CMDP. To this extent, it has to solve an LP problem with $O(SAH)$ constraints and decision variables.

At each episode $k$, \cucbvi builds an optimistic CMDP  $M_k := (\sset, \aset, \wt{c}^k, \wt{d}^k, \wb{p}^{k-1})$ where
\begin{equation}\label{eq:bonusalg_cd}
        \wt{c}_h^k(s,a) = \wb{c}_h^k(s,a) - b_h^k(s,a) \quad \text{ and } \quad  \wt{d}_{i,h}^k(s,a) = \wb{d}_{i,h}^k(s,a) - b_h^k(s,a),
\end{equation}
while $\wb c^k$, $\wb d^k$ and $\wb p^k$ are the empirical estimates defined in~\eqref{eq:empirica_model}.
The term $b_h^k$ integrates the uncertainties about costs and transitions into a single \emph{exploration bonus}. Formally,
\begin{equation}\label{eq:bonusalg_bonus}
        b_{h}^k(s,a) \simeq 
        \beta^r_{h,k}(s,a) + H \sum_{s'} \beta_{h,k}^p(s,a,s')
        %(SH + 1) \frac{1}{n_h^{k-1}(s,a)\vee 1} + H \sum_{s'}\sqrt{\frac{\text{Var}\Big( \wb{p}_{h}^{k-1}(s'|s,a) \Big)}{n_h^{k-1}(s,a)\vee 1}} 
\end{equation}
where $\beta^r$ and $\beta^p$ are defined as in~\eqref{eq:betas}. 
Then, \cucbvi solves the following optimization problem
\begin{equation}\label{eq:bonusalg_optprob}
\begin{aligned}
        \min_{\pi \in \Pi^{\text{MR}} } &\quad \sum_{h,s,a}\wt{c}_{h}^k(s,a) q^\pi_h(s,a;\wb p^{k-1})\\
        \text{s.t.}\qquad&\quad  
        % \min_{\wt{d}_i \in B^d_{i,k}} \wt{d}_{i}^\top q^\pi(\wt p) \leq  \alpha_i,\qquad \forall i \in [H] 
        \sum_{h,s,a} \wt{d}_{i,h}^k(s,a) q^\pi_h(s,a;\wb{p}^{k-1}) \leq  \alpha_i,\qquad \forall i \in [H] 
\end{aligned}
\end{equation}
%\[
%        \pi_k \in \arg\min_{\pi\in \Delta_A^S} \brc*{\wt{c}_k^\top q^\pi(\wb{p}^{k-1}) \mid \wt D_k q^\pi(\wb p^{k-1}) \leq  \alpha}
%\]
This problem can be solved using the LP described in Sec.~\ref{sec:linearprogramming}.
In App.~\ref{supp optimism cucrl}, we show that $\pi_k$ is an optimistic policy, \ie $V^{\pi_k}_1(s_1; \wt{c}^k, \wb{p}^k) \leq V_1^\star(s_1)$.

% In this section, we formulate a bonus-based algorithm for solving CMDPs which we call XXX. Instead of using an optimistic model, XXX solves an `empirical' CMDP with optimistic cost w.r.t. the empirical transition model. Thus, following this approach, the optimization problem solved in each episode has $O(S^2HA)$ less constraints. 

\begin{algorithm}[t]
        \caption{\cucbvi}
        \label{alg: bonus based optimism CMDP }
\begin{algorithmic}
\STATE {\bf Require:} $\delta\in(0,1)$
\STATE {\bf Initialize:} $n_h^0(s,a)=0$, $\wb{p}_h^0(s'\mid s,a) =1/S$ and $\wb c_h^0(s,a) =0$
\FOR{$k=1,...,K$}
    \STATE Compute exploration bonus $b_h^k$ as in~\eqref{eq:bonusalg_bonus}
    \STATE Define $\wt c^k$ and $\wt d^k$ as in~\eqref{eq:bonusalg_cd}
    % \STATE {\color{gray} \# Update Policy}
    \STATE Compute the solution of~\eqref{eq:bonusalg_optprob} through LP
    %\[
    %        \pi_k, \wt p_k = \arg\min_{\pi\in \Delta_A^S,p' \in \mathcal{P}_{k}} \brc*{\wt c_{k}^T q^\pi(p') \mid \wt D_{k} q^\pi(p') \leq  \alpha}
    %\]
    \STATE Execute $\pi_k$ and collect a trajectory $(s_h^k, a_h^k, c_h^k, \{d_{i,h}^k\}_i)$ for $h \in [H]$
    % \STATE Rollout a trajectory by acting $\pi_k$
    \STATE  Update counters and empirical model (\ie $n^k,\wb c^k, \wb d^k, \wb p^k$) as in~\eqref{eq:empirica_model}
\ENDFOR
\end{algorithmic}
\end{algorithm}

%\begin{align*}
%    &\wt c^k_h(s,a) = \wb c_h^{k-1}(s,a) - b_{h,k-1}(s,a) - b_{h,k-1}^p(s,a),\\
%    &\wt d^k_h(s,a) =  \wb d_h^{k-1}(s,a) -b_{h,k-1}(s,a) - b_{h,k-1}^p(s,a),
%\end{align*}
%where $ b_{h,k}(s,a)  \simeq \sqrt{\frac{1}{n_h^{k}(s,a)\vee 1}}, b_{h,k-1}^p(s,a) \simeq H\sum_{s'}\sqrt{\frac{\wb p_h^k(s'\mid s,a)}{n_h^k(s,a)\vee 1}} + \frac{SH}{(n_h^k(s,a)-1\vee 1)}$, up to constant and logarithmic terms.

\begin{tcolorbox}[boxrule=0pt, arc=0pt,boxsep=0pt, left=5pt,right=5pt,top=5pt,bottom=5pt]
\begin{restatable}[Regret Bounds for \cucbvi]{theorem}{TheoremBonusOptLP}\label{theorem: bonus based opt constraint RL}
Fix $\delta\in (0,1)$. With probability at least $1-\delta$ for any $K'\in[K]$ the following regret bounds hold
\begin{align*}
    &\Regret_+(K';c)  \leq    \Olog\br*{\sqrt{S\mathcal{N}H^4K} + S^2 H^{4}A(\mathcal{N}H+S)},\\
    &\Regret_+(K';d)  \leq     \Olog\br*{\sqrt{S\mathcal{N}H^4K} + S^2 H^{4}A(\mathcal{N}H+S)}.
\end{align*}
\end{restatable}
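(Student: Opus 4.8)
The proof follows the optimism-based template of UCBVI, adapted so that the \emph{single} bonus $b_h^k$ of~\eqref{eq:bonusalg_bonus} simultaneously (i) certifies optimism, making $V^\star_1(s_1)$ over-estimated for the objective and keeping $\pi^\star$ feasible for every constraint, and (ii) upper-bounds the per-episode estimation error. All statements are made on a single good event $\mathcal{G}$ on which the empirical-Bernstein and Hoeffding confidence intervals of~\eqref{eq:ci_transitions}--\eqref{eq:betas} and the relevant martingale deviations hold simultaneously for every $k\le K$; a union bound gives $\Pr(\mathcal{G})\ge 1-\delta$, and working on $\mathcal{G}$ is what yields the ``for any $K'\in[K]$'' guarantee. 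On $\mathcal{G}$ the optimism lemma of App.~\ref{supp optimism cucrl} gives $(\wt c^k)^\top q^{\pi_k}(\wb p^{k-1})=V_1^{\pi_k}(s_1;\wt c^k,\wb p^{k-1})\le V_1^\star(s_1)$, while feasibility of $\pi_k$ in~\eqref{eq:bonusalg_optprob} gives $(\wt d_i^k)^\top q^{\pi_k}(\wb p^{k-1})\le\alpha_i$ for all $i$. Hence both $\brs*{V_1^{\pi_k}(s_1;p,c)-V_1^\star(s_1)}_+$ and $\brs*{V_1^{\pi_k}(s_1;p,d_i)-\alpha_i}_+$ are dominated by the same on-policy gap $g^k(f):=f^\top q^{\pi_k}(p)-(\wt f^k)^\top q^{\pi_k}(\wb p^{k-1})$ with $f\in\{c,d_1,\dots,d_I\}$; it then suffices to bound $\sum_{k\le K'} g^k(f)$ and take a maximum over $i$ for the constraint regret.

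\textbf{Bounding the gap by the bonus.} I split $g^k(f)$ into a cost part and a transition part,
\[
g^k(f)=\underbrace{\br*{f-\wt f^k}^\top q^{\pi_k}(\wb p^{k-1})}_{\text{cost error}}+\underbrace{f^\top\br*{q^{\pi_k}(p)-q^{\pi_k}(\wb p^{k-1})}}_{\text{transition error}} .
\]
Since $\wt f^k=\wb f^{k}-b^k$ componentwise and $b_h^k\ge\beta^c_{h,k}$, on $\mathcal{G}$ one has $f-\wt f^k\le \beta^c_{h,k}+b_h^k\le 2b_h^k$ componentwise, so the cost error is at most $2\,(b^k)^\top q^{\pi_k}(\wb p^{k-1})$. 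For the transition error I apply the value-difference (simulation) lemma,
\[
f^\top\br*{q^{\pi_k}(p)-q^{\pi_k}(\wb p^{k-1})}=\sum_{h,s,a}q_h^{\pi_k}(s,a;\wb p^{k-1})\br*{p_h(\cdot|s,a)-\wb p_h^{k-1}(\cdot|s,a)}^\top V_{h+1}^{\pi_k}(\cdot;p,f),
\]
and bound it, using $\abs*{V_{h+1}^{\pi_k}}\le H$ and $\abs*{p-\wb p^{k-1}}\le\beta^p$ on $\mathcal{G}$, by $\sum_{h,s,a}q_h^{\pi_k}(s,a;\wb p^{k-1})\,H\sum_{s'}\beta^p_{h,k}(s,a,s')$ --- exactly the transition term baked into $b_h^k$. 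Together this gives $g^k(f)\lesssim (b^k)^\top q^{\pi_k}(\wb p^{k-1})$, uniformly in $f$.

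\textbf{From expected occupancy to counts, and pigeonhole.} The bound carries the occupancy $q^{\pi_k}(\wb p^{k-1})$ under the \emph{empirical} model, whereas the visitation counts $n_h^k$ concentrate around $q^{\pi_k}(p)$. I first replace $q^{\pi_k}(\wb p^{k-1})$ by $q^{\pi_k}(p)$ at the price of a second simulation-lemma term, which is a product of a model error and the already small bonus $b^k$ and is therefore lower order. Next I replace $q_h^{\pi_k}(s,a;p)$ by the realized indicator $\ind\{s_h^k=s,a_h^k=a\}$, whose gap is a bounded martingale-difference sequence; an Azuma inequality, valid simultaneously for all $K'\le K$, controls the sum. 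What remains is $\sum_{k\le K'}\sum_h b_h^k(s_h^k,a_h^k)$. Using $\sum_{s'}\beta^p_{h,k}\lesssim\sqrt{\N/(n_h^{k-1}\vee1)}+\N/(n_h^{k-1}\vee1)$ (Cauchy--Schwarz over the at most $\N$ nonzero next-states) together with $\beta^c_{h,k}\lesssim(n_h^{k-1}\vee1)^{-1/2}$, the dominant contribution is $H\sqrt{\N}\sum_h\sum_{k}(n_h^{k-1}(s_h^k,a_h^k)\vee1)^{-1/2}$; the standard counting bound then produces the leading term $\Olog(\sqrt{S\N H^4K})$, while the $1/n$ pieces, the martingale slack and the occupancy-swap correction aggregate into the additive $\Olog(S^2H^4A(\N H+S))$. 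The constraint regret is identical after taking the maximum over $i$, because $b^k$ is common to $c$ and to every $d_i$.

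\textbf{Main obstacle.} The delicate step is the occupancy reconciliation in the third paragraph: the bonus analysis naturally produces $q^{\pi_k}(\wb p^{k-1})$, but concentration is driven by true-model visits, and one must show the swap correction is genuinely lower order rather than of the same order as the main term. The two secondary difficulties are (a) establishing optimism from the empirical model alone --- no search over plausible models as in \cucrl --- which is precisely why the $H\sum_{s'}\beta^p$ piece must be present in $b_h^k$; and (b) making every concentration and martingale statement hold uniformly over all prefixes $K'\le K$, for which anytime confidence sets and a maximal Azuma inequality are needed.
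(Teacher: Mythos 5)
Your skeleton matches the paper's: condition on a good event, use optimism ($V_1^{\pi_k}(s_1;\wt c^k,\wb p^{k-1})\le V_1^\star(s_1)$, Cor.~\ref{corollary: bonus optimsm constraint problem optimism}) together with feasibility of the per-episode LP to drop the $[\cdot]_+$, and reduce both regrets to the on-policy estimation error $\sum_k V_1^{\pi_k}(s_1;l,p)-V_1^{\pi_k}(s_1;\wt l^k,\wb p^{k-1})$ for $l\in\brc*{c,d_1,\dots,d_I}$. The genuine gap is in how you bound that error. Your decomposition produces the swap term $\sum_k (b^k)^\top\br*{q^{\pi_k}(\wb p^{k-1})-q^{\pi_k}(p)}$, which you dismiss as ``a product of a model error and the already small bonus and therefore lower order.'' This is exactly the term that is \emph{not} lower order under any naive product argument, and it is the crux of the whole theorem. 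Writing it with the value-difference lemma, it equals $\sum_{k,h}\E\brs*{(\wb p^{k-1}_h-p_h)(\cdot\mid s_h,a_h)^\top W^k_{h+1}\mid s_1,\pi_k,p}$, where $W^k$ is the value function of the cost $b^k$ under $\wb p^{k-1}$. The model error at $(s_h,a_h)$ is $O(\sqrt{\N/n_h^{k-1}(s_h,a_h)})$ in $\ell_1$, but $W^k_{h+1}$ aggregates bonuses at \emph{other, future} state-action pairs whose counts are unrelated to $n_h^{k-1}(s_h,a_h)$; since the bonus at a poorly visited pair is $\Theta(HS)$ (this is why the optimistic value lies in $[-\sqrt{S}H^2,H]$, cf.\ Remark~\ref{remark: worst performance of bonus curcl}, and why the appendix uses $|\wt V^{\pi_k}-V^{\pi_k}|\lesssim SH^2$), the crude bound is $\sum_{k,h}\E[\sqrt{\N/n}]\cdot H^2S \approx H^3S^{3/2}\sqrt{\N AK}$, which is \emph{larger} than the leading term $\sqrt{S\N H^4K}$, not smaller. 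The two ``small'' factors are not evaluated at the same state-action pair, so the $1/n$ cancellation you are implicitly invoking is unavailable.

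The paper has to work hard at precisely this point: the analogous cross term (term $(iii)$ in Lem.~\ref{lemma: on policy errors bonus based optimistic}, namely $\E[|(p-\wb p^{k-1})(\wt V^{\pi_k}-V^{\pi_k})|]$) is handled by Lem.~\ref{lemma: lower order bonus lp}, which combines (a) Cauchy--Schwarz to decouple the model error from the value gap, (b) the optimism inequality $V^{\pi_k}-\wt V^{\pi_k}\ge 0$ so that $(\wt V^{\pi_k}-V^{\pi_k})^2\lesssim SH^2\,(V^{\pi_k}-\wt V^{\pi_k})$, (c) the recursion of Lem.~\ref{lemma: future differences to initial difference} pushing future value gaps back to the episode-initial gap, and (d) two applications of the self-bounding Lem.~\ref{lemma: self bounding property}. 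That machinery is exactly what produces the additive constant $S^2H^4A(\N H+S)$ in the statement you are proving, and is why \cucbvi has a worse constant than \cucrl. You correctly flag the occupancy reconciliation as the delicate step, but flagging it and asserting the answer is not proving it; as written, your argument does not establish the claimed bound. A secondary, related flaw: your Azuma step replacing $q_h^{\pi_k}(\cdot;p)$ by realized indicators has martingale increments of range $O(H\max_{s,a}b_h^k)=O(H^2S)$, again because the bonus is not bounded by $H$; the paper avoids this entirely by keeping all quantities as conditional expectations and using the count-concentration event $F^N$ with Lem.~\ref{lemma: bounding on trajectory visitation}.
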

\end{tcolorbox}
The regret bounds of \cucbvi include the same $\Olog\br*{\sqrt{S\mathcal{N}H^4K}}$ term as of \cucrl. However, the constant term in the regret bounds of \cucbvi has worst dependence w.r.t.\ $S,H,\mathcal{N}$. This suggests that in the limit of large state space the bonus-based approach for CMDPs have worse performance relatively to the optimistic model approach.
\begin{remark}\label{remark: worst performance of bonus curcl}
The origin of the worst regret bound comes from the larger bonus term~\eqref{eq:bonusalg_bonus} we need to add to compensate on the lack of knowledge of the transition model. This bonus term, allows us to replace the optimistic planning w.r.t.\ a set of transition models (as in \cucrl) by using the empirical transition model. However, it leads to a value function which is not bounded within $[0,H]$ but within $[-\sqrt{S}H^2,H]$.
To circumvent this problem, a truncated Bellman operator has been used~\citep[\eg][]{azar2017minimax,dann2017unifying}.
The value of a policy $\pi$ is thus defined as:
\begin{align*}
    &Q_h^{\pi}(s,a;\tilde c_k, \bar p_{k-1})=\max\brc*{0,\tilde c_h^k(s,a) + \bar p_h^{k-1}(\cdot\mid s,a)V^{\pi}_{h+1}(\cdot; \tilde c_k, \bar p_{k-1})}\\
    &V_h^{\pi}(s;\tilde c_k, \bar p_{k-1}) = \inner{Q_h^{\pi}(s, \cdot ;\tilde c_k, \bar p_{k-1}),\pi_h(\cdot\mid s)}.
\end{align*}
However, plugging this idea into the CMDP problem (Sec.~\ref{sec:linearprogramming}) is not simple. In particular, it is not clear how to enforce truncation in the space of occupancy measures. Thus, reduction to LP seems problematic to obtain.
At the same time, using dynamic programming to solve CMDP is problematic due to the presence of constraints (and the lack of Bellman optimality principle).
%However, plugging this into the CMDP problem (section~\ref{sec:linearprogramming}) results in an optimization problem which is not linear in the state-action frequency $q^\pi(p;\tilde c_k, \bar p_{k-1})$ and appears to be non-convex. Thus, reduction to LP seems problematic to obtain. Furthermore, using dynamic programming to solve this problem~\citet{azar2017minimax,dann2017unifying} is problematic due to the presence of constraints. 
We leave it for future work to devise a polynomial algorithm to solve this problem, or establishing it is a ``hard-problem'' to solve. If solved, it would result in an algorithm with similar performance to that of \cucrl (up to polylog and constant factors).
\end{remark}

\section{Optimistic Dual and Primal-Dual Approaches for CMDPs}

In previous sections, we analyzed algorithms which require access to a solver of an LP with at least $\Omega(SHA)$ decision variables and constraints. In the limit of large state space, solving such linear program is expected to be prohibitively expensive in terms of computational cost. Furthermore, most of the practically used RL algorithms~\citep[\eg][]{Achiam2017cpo,Tessler2019rewardcpo} are motivated by the Lagrangian formulation of CMDPs.

Motivated by the need to reduce the computational cost, we follow the Lagrangian approach to CMDPs in which the dual problem to CMDP~\eqref{eq:cop} is being solved. Introducing Lagrange multipliers $\lambda\in \mathbb{R}^I_+$,  the dual problem to \eqref{eq:cop} is given by
\begin{align}
    L^* =  \max_{\lambda\in \mathbb{R}_+^I} \min_{\pi\in \Delta_A^S}\brc*{ c^T q^{\pi}(p) +\lambda^T(Dq^{\pi}(p)-\alpha)}\label{eq: cmdp lagrangina formulation}
\end{align}

With this in mind, a natural way to solve a CMDP is to use a dual sub-gradient algorithm~\citep[see \eg][]{beck2017first} or a primal-dual gradient algorithm. Viewing the problem in this manner, a CMDP can be solved by playing a game between two-player; the agent $\pi$ and the Lagrange multiplier~$\lambda$. This process is expected to converge to the Nash equilibrium with value $L^*$. Furthermore, strong duality is known to hold for CMDP~\cite[\eg][]{altman1999constrained} and thus the expected value of this game is expected to converge to $L^* = V^*_1(s_1)$. This general approach is also followed in the line of works on online learning with long-term constraints~\citep[\eg][]{mahdavi2012trading,yu2017online}. There, the problem does not have a decision horizon $H$ nor state space as in our case. 

As the environment is unknown, and the agents gathers its experience based on samples, the algorithm should use an exploration mechanism with care.  To handle the exploration, we use the optimism approach. In the following sections, we formulate and establish regret bounds for optimistic dual and primal-dual approaches to solve a CMDP. These algorithms are computationally easier than the algorithms of previous sections.  Unfortunately, the regret bounds obtained in this section are weaker. We establish bounds on $\Regret(K;c)$ (resp. $\Regret(K;d)$) instead of $\Regret_+(K;c)$ (resp. $\Regret_+(K;d)$) as in previous section (see Sec.~\ref{sec:learningobj} for details).

\subsection{Optimistic Dual Algorithm for CMDPs}\label{sec: paper dual cmdp}

We start by describing the optimistic dual approach for CMDPs.~\dualcmdp is based upon the dual projected sub-gradient algorithm (e.g.,~\cite{beck2017first}). It can also be interpreted through the lens of online learning. In this sense, we can interpret~\dualcmdp as solving a two-player game in a decentralized manner where the first player (the agent, $\pi$) applies ``be-the-leader'' algorithm, and the second player (the Lagrange multiplier, $\lambda$) uses projected gradient-descent.

Algorithm \dualcmdp (see Alg.~\ref{alg: dual optimistic model cmdp}) acts by performing two stages in each iteration. At the first stage it solves the following optimistic problem:
\[
        \pi_k, \wt p_k \in \argmin_{\pi\in \Pi^{\text{MR}},\; p'\in B^p_{k} }  (\wt c_{k} +\wt D_{k}^T\lambda_{k} )^\top q^{\pi}(p') -\lambda_{k}^T\alpha
\]
where $\wt c_k,\wt d_{k,i}$ and $B_k^p$ are the same as in Sec.~\ref{sec: paper curcl} (refer to~\eqref{eq:ci_transitions} and~\eqref{eq:lower.bound.cd}). 
This problem corresponds to finding the optimal policy (denoted $\pi_k$) of the following extended MDP $\mathcal{M}_k = \{ M= (\sset, \aset, r^+, p^+) \,:\, r^+_h(s,a) = \wt c^k_h(s,a) + \sum_{i} (d^k_{i,h}(s,a) - \alpha_i) \lambda^k_i, p^+_h(\cdot|s,a) \in B^p_{h,k}(s,a)\}$.
Since this is an extended MDP and not a CMDP, we can use standard dynamic programming techniques.
One possibility is to use the extended LP similar to the one introduced in Sec.~\ref{sec: paper curcl}.
Otherwise, we can use backward induction to compute $Q_k$
\[
        Q_h^k(s,a) = r^+_h(s,a) + \min_{p' \in B_{h,k}^p(s,a)} \sum_{s'} p'(s'|s,a) \min_{a'} Q^k_{h+1}(s',a')
\]
with $Q^k_{H+1}(s,a) = 0$ for all $s,a$. Then, $\pi^k_h(s) \in \argmin_a Q^k_h(s,a)$. To compute $q^{\pi_k}_h(s,a)$ we can use Alg. 3 in~\citep{jin2019learning}.

At the second stage, \dualcmdp updates the Lagrange multipliers proportionally to the violation of the ``optimistic'' constraints: $\lambda_{k+1} = \brs*{\lambda_{k} + \frac{1}{t_\lambda} (\wt D_{k} q^{\pi_k}(\wt p_k)-\alpha)}_+$.

The following assumption is standard for the analysis of dual projected sub-gradient method which we make as well. This assumption is quite mild and demands a policy which satisfy the constraint with equality exists. For example, a policy with zero constraint-cost (from state $s_1$) exists this assumption hold. 
\begin{assumption}[Slater Point]\label{assum: slater point} We assume there exists an unknown policy $\wb \pi$ for which $d_i^T q^{\wb \pi}(p)<\alpha_i$ for all the constraints $i\in [I]$. Set
\begin{align*}
    \rho = \frac{c^T q^{\wb \pi}(p) - c^T q^{\pi^*}(p)}{\min_{i=1,..,I} \br*{\alpha_i-d_i^T q^{\wb \pi}(p)}}.
\end{align*}
\end{assumption}

\begin{algorithm}[t]
\caption{ %Dual Optimistic Constraint-MDPs 
\dualcmdp}\label{alg: dual optimistic model cmdp}
\begin{algorithmic}
\REQUIRE $t_\lambda = \sqrt{\frac{H^2IK}{\rho^2}}$, $\lambda_1 \in \mathbb{R^I}, \lambda_1= 0$, Counters, empirical averages
\FOR{$k=1,...,K$}
    \STATE {\color{gray} \# Update Policy}
    \STATE $
        \pi_k, \wt p_k \in \argmin_{\pi\in \Pi^{\text{MR}},\; p'\in B^p_{k} }  (\wt c_{k} +\wt D_{k}^T\lambda_{k} )^\top q^{\pi}(p') -\lambda_{k}^T\alpha
        $
    \STATE {\color{gray} \# Update Dual Parameters}
    \STATE $\lambda_{k+1} = \brs*{\lambda_{k} + \frac{1}{t_\lambda} (\wt D_{k-1} q^{\pi_k}(\wt p_k)-\alpha)}_+$
    \STATE Execute $\pi_k$ and collect a trajectory $(s_h^k, a_h^k, c_h^k, \{d_{i,h}^k\}_i)$ for $h \in [H]$
    \STATE  Update counters and empirical model (\ie $n^k,\wb c^k, \wb d^k, \wb p^k$) as in~\eqref{eq:empirica_model}
\ENDFOR
\end{algorithmic}
\end{algorithm}
% \todo[inline]{discuss the meaning of it. E.g., if the exists a policy with $0$ constraint cost the $\rho\leq \frac{H}{\min_i\alpha_i}$.}

The following theorem establishes guarantees for both the performance and the total constraint violation (see App.~\ref{supp: dual optimistic model full proof} for the proof).
\begin{tcolorbox}[boxrule=0pt, arc=0pt,boxsep=0pt, left=5pt,right=5pt,top=5pt,bottom=5pt]
\begin{restatable}[Regret Bounds for \dualcmdp]{theorem}{theoremDualOptimisticModel}\label{theorem: dual optimistic CMDP}
For any $K'\in[K]$ the regrets the following bounds hold 
\begin{align*}
    &\Regret(K';c) \leq    \Olog\br*{\sqrt{S\mathcal{N}H^4K} +\rho\sqrt{H^2IK}  + (\sqrt{\mathcal{N}}+H)H^2SA }\\
    &\Regret(K';d) \leq \Olog\br*{((1+\frac{1}{\rho})\br*{\sqrt{IS\mathcal{N}H^4K} +  (\sqrt{\mathcal{N}}+H)\sqrt{I}H^2SA}}.
\end{align*}
\end{restatable}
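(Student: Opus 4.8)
The plan is to prove everything on a single high-probability event $\mathcal{G}$ on which all the confidence intervals of Section~\ref{sec: paper curcl} hold simultaneously for every episode; this event has probability at least $1-\delta$ by empirical Bernstein (for the transitions) and Hoeffding (for the costs and constraint costs) together with a union bound over $(s,a,h,k,i)$. Since the confidence sets are anytime-valid, all the bounds below hold simultaneously for every prefix length $K'\in[K]$. On $\mathcal{G}$ two facts drive the analysis. First, an \emph{optimism} statement analogous to Prop.~\ref{prop:ucrl_cmdp_opt}: because $p\in B^p_{k}$, because $\wt c_k\le c$ and $\wt d_{i,k}\le d_i$ componentwise, and because $\lambda_k\ge 0$ while $\pi^\star$ is feasible, evaluating the inner minimization at $(\pi^\star,p)$ gives
\[
\wt L_k(\pi_k,\lambda_k):=(\wt c_k+\wt D_k^\top\lambda_k)^\top q^{\pi_k}(\wt p_k)-\lambda_k^\top\alpha \;\le\; c^\top q^{\pi^\star}(p)=V_1^\star(s_1).
\]
Second, the standard UCRL value-difference bookkeeping (exactly as in the proof of Theorem~\ref{theorem: optimistic model constraint RL}) controls the per-signal estimation errors $\mathrm{Err}_c:=\sum_k\big(c^\top q^{\pi_k}(p)-\wt c_k^\top q^{\pi_k}(\wt p_k)\big)$ and $\mathrm{Err}_{d_i}:=\sum_k\big(d_i^\top q^{\pi_k}(p)-\wt d_{i,k}^\top q^{\pi_k}(\wt p_k)\big)$; since both $p$ and $\wt p_k$ lie in $B^p_k$ and the reward/constraint signals are bounded in $[0,H]$, each is bounded in absolute value by $\Olog\big(\sqrt{S\mathcal{N}H^4K}+(\sqrt{\mathcal{N}}+H)H^2SA\big)$.

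Next I would set up the dual (online gradient ascent) regret inequality for the multiplier updates. Writing $\wt g_k:=\wt D_k q^{\pi_k}(\wt p_k)-\alpha$ for the optimistic constraint slack and $\eta:=1/t_\lambda$, nonexpansiveness of the projection onto $\mathbb{R}^I_+$ gives, for every $\lambda\ge 0$, after telescoping
\[
\sum_{k=1}^{K'}\wt g_k^\top(\lambda-\lambda_k)\;\le\;\frac{\|\lambda\|^2}{2\eta}+\frac{\eta}{2}\sum_{k=1}^{K'}\|\wt g_k\|^2,\qquad \|\wt g_k\|^2\le IH^2 .
\]
For the \textbf{optimality regret} I take $\lambda=0$: combining this with $\wt c_k^\top q^{\pi_k}(\wt p_k)\le V_1^\star(s_1)-\lambda_k^\top\wt g_k$ (optimism) and adding back $\mathrm{Err}_c$ yields $\Regret(K';c)\le \mathrm{Err}_c+\tfrac{\eta}{2}\sum_k\|\wt g_k\|^2$. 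With the prescribed $t_\lambda=\sqrt{H^2IK}/\rho$ the second term is of order $\rho\sqrt{H^2IK}$, which together with the bound on $\mathrm{Err}_c$ reproduces the stated optimality bound.

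The \textbf{constraint regret} is the crux. Let $R_d:=\Regret(K';d)$, let $i^\star$ attain the maximum, and assume $R_d>0$. Applying the dual inequality with comparator $\lambda=u\,e_{i^\star}$ and the summed optimism bound $\sum_k\wt c_k^\top q^{\pi_k}(\wt p_k)+\sum_k\lambda_k^\top\wt g_k\le K'V_1^\star(s_1)$, then converting optimistic to true quantities via $\mathrm{Err}_c$ and $\mathrm{Err}_{d_{i^\star}}$, gives
\[
u\,R_d\;\le\;-\Regret(K';c)+\mathrm{Err}_c+u\,\mathrm{Err}_{d_{i^\star}}+\frac{u^2}{2\eta}+\frac{\eta}{2}\sum_k\|\wt g_k\|^2 .
\]
To close it I would invoke the Slater point $\wb\pi$ (Assumption~\ref{assum: slater point}): by convexity of the occupancy polytope (Prop.~\ref{prop: convexity of state action frequencey}), mixing $\tfrac1{K'}\sum_k q^{\pi_k}(p)$ with $q^{\wb\pi}(p)$ in proportion $\theta=\tfrac{\bar v}{\bar v+\gamma}$, where $\gamma=\min_i(\alpha_i-d_i^\top q^{\wb\pi}(p))$ and $\bar v=R_d/K'$, produces a \emph{feasible} occupancy measure whose cost is therefore at least $V_1^\star(s_1)$; rearranging yields the key lower bound $-\Regret(K';c)\le \rho\,R_d$, i.e. the cost regret cannot be too negative unless the constraints are badly violated. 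Substituting this and choosing $u=2\rho$ makes the coefficient of $R_d$ equal to $\rho$, and dividing through gives $R_d\le 2\,\mathrm{Err}_{d_{i^\star}}+\tfrac1\rho\mathrm{Err}_c+O(H\sqrt{IK})$, which is of the order claimed in the theorem.

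The step I expect to be the main obstacle is exactly this coupling between the two regrets through the Slater condition: the constraint-violation bound cannot be obtained in isolation, because turning the telescoped dual inequality into a genuine bound on $R_d$ (equivalently, controlling the growth of the multipliers $\lambda_k$) requires a lower bound on the cost regret, which is only available through the mixing argument and Assumption~\ref{assum: slater point}. Two secondary points need care: (i) the estimation errors must come from separate value-difference arguments for $c$ and for each $d_i$, each with value range $[0,H]$, rather than for the combined Lagrangian reward $\wt c_k+\wt D_k^\top\lambda_k$ whose range would otherwise scale with $\|\lambda_k\|$; and (ii) every inequality must be kept uniform in $K'$, which is automatic since both the telescoping and the projection nonexpansiveness hold verbatim for each prefix.
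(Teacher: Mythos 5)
Your proposal is correct, and its skeleton coincides with the paper's: the same good event, the same dual-optimism inequality $\wt f_k + \lambda_k^\top \wt g_k \le \fopt$ (Lem.~\ref{lemma: incremental constraint to optimality}), the same projected-subgradient recursion via non-expansiveness of the projection onto $\mathbb{R}_+^I$ (Lem.~\ref{lemma: incremental alg recusrion relation}), the same on-policy estimation-error bounds (Lem.~\ref{lemma: on policy errors optimistic model}), and the same comparator choices ($\lambda=0$ for the cost regret, $\lambda$ supported on a single coordinate for the constraint regret). Where you genuinely diverge is the crux step that decouples the two regrets. The paper averages the occupancy measures into a single policy $\pi_{K'}$ (Prop.~\ref{prop: convexity of state action frequencey}) and then invokes general constrained-convex-optimization results (Thm.~\ref{theorem: thoeorem constraint bound beck} and Cor.~\ref{corollary: bound on a dual variable} in App.~\ref{supp: convex optimization review}), which rest on the subdifferential characterization $-\lambda^\star \in \partial v(\mathbf{0},\mathbf{0})$ of the dual optimal solution together with a Slater-point bound on $\norm{\lambda^\star}_1$. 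You instead prove the needed one-sided coupling $-\Regret(K';c) \le \rho\,\Regret(K';d)$ directly: mix the averaged occupancy measure with the Slater occupancy measure $q^{\wb\pi}(p)$ in proportion $\theta = \bar v/(\bar v+\gamma)$, observe the mixture is feasible and hence has cost at least $\fopt$, and rearrange; this reproduces, in elementary form, exactly the inequality the paper extracts from Beck's theorems (both ultimately say the cost regret cannot be more negative than $\rho$ times the constraint violation). Closing with the self-referential inequality at $u=2\rho$ is sound, since the $R_d\le 0$ case is trivial. Your route is self-contained (it bypasses the convex-duality appendix entirely) and even gives a marginally sharper $I$-dependence: by specializing the comparator to coordinate $i^\star$ \emph{before} converting optimistic to true quantities, only $\mathrm{Err}_{d_{i^\star}}$ and $\mathrm{Err}_c/\rho$ appear, instead of the $\sqrt{I}$ factor produced by the paper's Cauchy--Schwarz over all constraints; the residual $O(H\sqrt{IK})$ term from the step size is dominated by the stated bound. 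One cosmetic caveat: $\norm{\wt g_k}^2$ is only $\lesssim IH^2$ up to logarithmic factors (the optimistic costs can be slightly negative, as the paper notes with $\wt g_{k,i}\in[-L^c_\delta H, H]$), but this is absorbed into the $\Olog$ notation.
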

\end{tcolorbox}

See that the regret bounded in Theorem~\ref{theorem: dual optimistic CMDP} is $\Regret$ and not $\Regret_+$ as in Sec.~\ref{sec: paper curcl} and \ref{sec: paper curcl bonus}. This difference in types of regret, as we believe, is not an artifact of the analysis. It can be directly attributed to bounds from convex analysis~\citep{beck2017first}. Meaning, establishing a guarantee on $\Regret_+$, instead on $\Regret$,  for \dualcmdp requires to improve convergence guarantees of dual projected gradient-descent. 

Finally, we think that it may be possible to use exploration bonus instead of solving the extended problem. However, we leave this point for future work.
%\begin{remark}[Bonus Based Approach to \dualcmdp] 
%Instead of solving an extended MDP in each episode, a desirable alternative would be to solve an MDP by using optimistic costs and the empirical transition probability. To avoid values which are not in the range of $[0,H]$ as in \cucbvi (as in \cucbvi, it will lead to worse constant term in the regret) the following optimization problem should be solved
%\begin{align*}
%    \arg\min_{\pi\in \Delta_A^S}\br*{ V^{\pi}_1(s_1;\tilde c_k,\bar p_k) + \sum_{i=1}^I \lambda_{k,i} V^{\pi}_1(s_1;\tilde d_{k,i},\bar p_k)}.
%\end{align*}
%where the value is defined with a truncated recursion, instead of the usual Bellman recursion, $$Q^{\pi}_h(s,a;\tilde c_k,\bar p_k) = \max\brc*{0, \tilde c_h^k(s,a) + \bar p_{h}^{k-1}(\cdot \mid s,a)V^{\pi}(\cdot ; \tilde c_k,\bar p_k)},$$
%and $V^{\pi}(s ; \tilde c_k,\bar p_k) = \inner{Q^{\pi}_h(s,a;\tilde c_k,\bar p_k),\pi(\cdot;s)}.$ This problem cannot straightforwardly solved using Value Iteration due to the non-linearity of the $\max$ operator. We leave it for future work to establish whether this problem is hard or it can be solved in polynomial time. 
%\end{remark}

\subsection{Optimistic Primal Dual approach for CMDPs}\label{sec: paper primal dual cmdp} 
In this section, we formulate and analyze \primaldualcmdp (Algorithm~\ref{alg: primal dual optimistic model cmdp}). This algorithm performs incremental, optimistic updates of both primal and dual variables. Optimism is achieved by using \emph{exploration bonuses} (refer to Sec.~\ref{sec: paper curcl bonus}).

Instead of solving an extended MDP as \dualcmdp, \primaldualcmdp evaluates the $Q$-functions of both the cost and constraint cost w.r.t.\ the current policy $\pi_k$ by using the optimistic costs $\wt c_k,\wt d_{k,i}$ and the empirical transition model $\bar p_{k}$. Note that the optimistic cost and constraint costs are obtained using the exploration bonus $b_h^k(s,a)$ defined in Eq.~\ref{eq:bonusalg_cd} (see also Eq.~\ref{eq:ucrl_optcmd_policy}).
Then, it applies a Mirror Descent (MD)~\citep{beck2003mirror} update on the weighted $Q$-function
\begin{align*}
    Q^{k}_h(s,a) = Q^{\pi_k}_h(s,a;\wt c_k, \wb{p}_{k-1}) + \sum_{i=1}^I\lambda_{k,i}Q^{\pi_k}_h(s,a;\wt d_{k,i}, \wb{p}_{k-1}),
\end{align*}
and updates the dual variables, \ie the Lagrange multipliers $\lambda$, by a projected gradient step. Since we optimize over the simplex and choose the Bregman distance to be the KL-divergence, the update rule of MD has a close solution (see the policy update step in Alg.~\ref{alg: primal dual optimistic model cmdp}).

Importantly, in the policy evaluation stage \primaldualcmdp uses a \emph{truncated policy evaluation}, which prevents the value function to be negative (see Algorithm~\ref{alg: truncated policy evaluation}). This allows us to avoid the problems experienced in~\cucbvi when such truncation is not being performed.

Furthermore, differently then in~\dualcmdp, in \primaldualcmdp we project the dual parameter to be within the set $\Lambda_\rho\eqdef \brc*{\lambda: 0\leq \lambda \rho {\bf 1}}$. Such projection can be done efficiently. We remark that such an approach was also applied in~\citep{nedic2009subgradient} for  convex-concave saddle-points problems. The reason for restricting the set of Lagrange multipliers to $\Lambda_\rho$ for our needs is to keep $Q^k$ bounded (if a component of $\lambda_k$ diverges then $Q^k$ might diverge). On the other hand, we wish to keep the set sufficiently big- otherwise, we cannot supply  guarantees on the constraint violations. The set $\Lambda_\rho$ is sufficient to meet both these needs. We remark that projecting on $\Lambda_{\rho'}$ with $\rho'\geq \rho$ would also lead to convergence guarantees by applying similar proof techniques.

The computational complexity of \primaldualcmdp amounts to estimate the state-action value functions $Q^{\pi_k}_h(s,a;\wt c_k, \wb{p}_{k-1})$,  $Q^{\pi_k}_h(s,a;\wt d_{k,i}, \wb{p}_{k-1})$ instead of solving an extended MDP as in~\dualcmdp. However, as the following theorem establishes, the reduced computational cost comes with a worse regrets guarantees. As for~\dualcmdp we assume a slater point exists (see Assumption~\ref{assum: slater point}).

\begin{algorithm}[tb]
\caption{\primaldualcmdp}\label{alg: primal dual optimistic model cmdp}
\begin{algorithmic}
\REQUIRE $t_\lambda = \sqrt{\frac{H^2IK}{\rho^2}},t_K=\sqrt{\frac{2\log A }{(H^2(1+I\rho)^2 K)}}$, $\lambda_1 \in \mathbb{R^I}, \lambda_1= 0$, Counters, empirical averages
\FOR{$k=1,...,K$}
    \STATE Compute exploration bonus $b_h^k$ as in~\eqref{eq:bonusalg_bonus}
    \STATE Define $\wt c^k$ and $\wt d^k$ as in~\eqref{eq:bonusalg_cd}
    \STATE {\color{gray} \# Policy Evaluation}
    \STATE $\brc*{Q^{\pi_k}_h(s,a;\wt c_k, \wb{p}_{k-1})}_{s,a,h} =\text{Trun. Policy Evaluation}(\wt c_k, \wb p_{k-1},\pi_k)$
    \STATE $\forall i\in[I],\ \brc*{Q^{\pi_k}_h(s,a; \wt d_{i,k}, \wb{p}_{k-1})}_{s,a,h} =\text{Trun. Policy Evaluation}(\wt d_{i}^{k}, \wb p_{k-1},\pi_k)$
    \STATE {\color{gray} \# Policy Update}
        \FOR{$\forall h,s,a \in  [H] \times \sset\times \aset$}
        \STATE $ Q^{k}_h(s,a) = Q^{\pi_k}_h(s,a;\wt c_k, \wb{p}_{k-1}) + \sum_{i=1}^I\lambda_{k,i}Q^{\pi_k}_h(s,a;\wt d_{k,i}, \wb{p}_{k-1})$
        \STATE $\pi_h^{k+1}(a|s) \! = \frac{\pi_h^k(a\mid s) \exp\br*{- t_K Q^{k}_h(s,a)}} {\sum_{a'} \pi_h^k(a'\mid s) \exp\br*{- t_K Q^{k}_h(s,a')}} $
    \ENDFOR
    \STATE {\color{gray} \# Update Dual Parameters}
    \STATE $\lambda_{k+1} = \max\brc*{ \lambda_{k} + \frac{1}{t_\lambda} (\wt D_{k-1} q^{\pi_k}(\wt p_k)-\alpha),0}$
    \STATE $\lambda_{k+1} = \min\brc*{\lambda_{k+1}, \rho {\bf 1}}$
    \STATE Execute $\pi_k$ and collect a trajectory $(s_h^k, a_h^k, c_h^k, \{d_{i,h}^k\}_i)$ for $h \in [H]$
    \STATE  Update counters and empirical model (\ie $n^k,\wb c^k, \wb d^k, \wb p^k$) as in~\eqref{eq:empirica_model}
    % \STATE Rollout a trajectory by acting $\pi_k$
    % \STATE  Update counters and empirical model, $n_k,\wb c^k,\wb p^k$
\ENDFOR
\end{algorithmic}
\end{algorithm}

\begin{algorithm}[tb]
\caption{Truncated Policy Evaluation}\label{alg: truncated policy evaluation}
\begin{algorithmic}
\REQUIRE $\forall s,a,s',h,\ \wh l_h(s,a), \wh p_h(s'\mid s,a), \pi_h(a\mid s)$
    \STATE $\forall s \in \sset,\  V^{\pi}_{H+1}(s) = 0$
    \FOR{ $\forall h = H,..,1$}
        \FOR{$\forall s,a \in  \sset\times \aset $}
             \STATE $\wh Q^{\pi}_h(s,a; \wh l,\wh p) = \max\brc*{\wh l_h(s,a) +\wh  p_h(\cdot |s,a)\wh V^{\pi}_{h+1}(\cdot ; \wh l, \wh{p}),0}$
        \ENDFOR
        \FOR{$\forall s \in  \sset$}
            \STATE $\wh V^{\pi}_{h}(s;\wh l, \wh{p}) = \inner{\wh Q^{\pi}_h(s,\cdot; \wh l,\wh p),\pi_h(\cdot \mid s)}$
        \ENDFOR
    \ENDFOR       
    \RETURN $\brc*{\wh Q_h^\pi(s,a)}_{h,s,a}$
\end{algorithmic}
\end{algorithm}

The following theorem establishes guarantees for both the performance and the total constraint violation (see App.~\ref{supp: primal dual bonus optimistic model full proof} for the proof).
\begin{tcolorbox}[boxrule=0pt, arc=0pt,boxsep=0pt, left=5pt,right=5pt,top=5pt,bottom=5pt]
\begin{restatable}[Regret Bounds for \primaldualcmdp]{theorem}{theoremPrimalDualOptimisticModel}\label{theorem: primal dual optimistic CMDP}
For any $K'\in[K]$ the regrets the following bounds hold 
\begin{align*}
    &\Regret(K';c) \leq    \Olog\br*{\sqrt{S\mathcal{N}H^4K} + \sqrt{H^4(1+I\rho)^2 K} + (\sqrt{\mathcal{N}}+H)H^2SA }\\
    &\Regret(K';d) \leq \Olog\br*{(1+\frac{1}{\rho})\br*{\sqrt{IS\mathcal{N}H^4K} +  (\sqrt{\mathcal{N}}+H)\sqrt{I}H^2SA}+ I\sqrt{H^4 K}}.
\end{align*}
\end{restatable}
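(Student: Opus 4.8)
The plan is to treat \primaldualcmdp as an online saddle-point game for the optimistic, empirical-model Lagrangian $L_k(\pi,\lambda) := \wt c_k^\top q^{\pi}(\wb p_{k-1}) + \lambda^\top(\wt D_k q^{\pi}(\wb p_{k-1}) - \alpha)$, played by a policy learner running KL-mirror descent and a dual learner running projected subgradient ascent on $\Lambda_\rho$, and to couple the two no-regret guarantees through optimism and the strong-duality identity $L^* = V_1^\star(s_1)$ of~\eqref{eq: cmdp lagrangina formulation}. The first ingredient is an \emph{optimism} lemma: I would show that the truncated policy evaluation of Alg.~\ref{alg: truncated policy evaluation} returns values that are simultaneously (i) one-sided optimistic, $V_1^{\pi}(s_1;\wt c_k,\wb p_{k-1}) \le V_1^{\pi}(s_1;c,p)$ and likewise for each $\wt d_{k,i}$, and (ii) confined to $[0,H]$. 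The truncation is essential for (ii): the bonus in~\eqref{eq:bonusalg_cd} can make $\wt c_h^k$ negative, and without the $\max\{\cdot,0\}$ the value could leave $[0,H]$ (the pathology flagged for \cucbvi in Remark~\ref{remark: worst performance of bonus curcl}). I would prove (i) by backward induction using Bellman monotonicity together with concentration of $\wb p_{k-1},\wb c_k,\wb d_k$ under the bonus events~\eqref{eq:betas}. As a consequence $\|Q^k_h\|_\infty \le H(1+I\rho)$ since $\lambda_k \in \Lambda_\rho$, which controls every mirror-descent term below.

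Next I would decompose the signed regret through the saddle point: for any comparator $(\pi,\lambda)$,
\[
\sum_{k} \br*{L_k(\pi_k,\lambda) - L_k(\pi,\lambda_k)}
= \underbrace{\sum_k \inner*{\wt D_k q^{\pi_k}(\wb p_{k-1})-\alpha,\, \lambda-\lambda_k}}_{\text{dual regret}}
+ \underbrace{\sum_k \br*{L_k(\pi_k,\lambda_k)-L_k(\pi,\lambda_k)}}_{\text{primal regret}}.
\]
The primal regret, rewritten via the performance-difference lemma as $\sum_{k,h}\inner{Q^k_h(s,\cdot),\pi_h^k(\cdot|s)-\pi_h(\cdot|s)}$ (using that $Q^k_h$ is exactly the $Q$-function of the combined cost $\wt c_k+\sum_i\lambda_{k,i}\wt d_{k,i}$), is bounded by the exponential-weights guarantee $\tfrac{H\log A}{t_K}+t_K\sum_k\|Q^k\|_\infty^2$; with $\|Q^k\|_\infty\le H(1+I\rho)$ and $t_K$ tuned as in Alg.~\ref{alg: primal dual optimistic model cmdp} this produces the $\sqrt{H^4(1+I\rho)^2K}$ term. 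The dual regret is bounded by the projected-subgradient guarantee over the box $\Lambda_\rho$, whose squared diameter is $\lesssim \rho^2 I$ and whose subgradients $\wt D_k q^{\pi_k}-\alpha$ have squared norm $\lesssim I H^2$; with step $1/t_\lambda$ this contributes a $\rho$-dependent term that is here dominated by the mirror-descent term.

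I would then transfer from the optimistic/empirical model to the true model. Using optimism to lower-bound $V_1^\star$ and the comparator values, the residual $\sum_k\br*{V_1^{\pi_k}(s_1;c,p)-V_1^{\pi_k}(s_1;\wt c_k,\wb p_{k-1})}$ and its constraint analogue are the standard statistical error, which I would bound exactly as in the proofs of Thms.~\ref{theorem: optimistic model constraint RL}--\ref{theorem: bonus based opt constraint RL}: telescope the value difference, insert the Bernstein bonuses~\eqref{eq:betas}, and apply a counts pigeonhole $\sum_k 1/\sqrt{n_h^{k-1}\vee1}\lesssim\sqrt{SAK}$ with the law of total variance, giving the shared $\Olog\br*{\sqrt{S\mathcal{N}H^4K}}$ leading term and the lower-order $(\sqrt{\mathcal{N}}+H)H^2SA$ term. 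Finally I specialize the comparator: taking $\pi=\pi^\star$, $\lambda=0$ and using feasibility of $\pi^\star$ (so $\lambda_k^\top(Dq^{\pi^\star}-\alpha)\le0$) gives $\Regret(K';c)$; taking $\pi=\pi^\star$ and $\lambda$ pointed toward the most-violated constraint lets the radius $\rho$ convert the accumulated violation into the saddle gap, after which the Slater argument (Assumption~\ref{assum: slater point}) turns the optimality slack into the $(1+\tfrac1\rho)$ prefactor, exactly as in Thm.~\ref{theorem: dual optimistic CMDP}. The extra $I\sqrt{H^4K}$ absent in \dualcmdp arises because \primaldualcmdp evaluates the incrementally-updated policy rather than solving the extended MDP exactly, so the per-constraint mirror-descent drift contributes an additional $I\sqrt{H^4K}$.

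I expect the hardest step to be making optimism and truncation coexist: proving that the truncated evaluation still yields a valid \emph{one-sided} lower bound on the true cost and constraint values while keeping $Q^k$ inside $[0,H(1+I\rho)]$, since truncation breaks the clean telescoping available in the untruncated case. A secondary but intrinsic difficulty is that the saddle-point machinery only controls \emph{signed} regret $\sum_k(\cdots)$ rather than $\sum_k[\cdots]_+$, so error cancellation cannot be excluded; this is precisely why the statement is for $\Regret$ and not $\Regret_+$.
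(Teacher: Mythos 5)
Your plan follows essentially the same route as the paper's proof: a saddle-point decomposition into a primal (mirror-descent) regret and a dual (projected-subgradient) regret over the box $\Lambda_\rho$, the bound $\norm{Q^k_h}_\infty \leq H(1+I\rho)$ feeding the exponential-weights term $\sqrt{H^4(1+I\rho)^2K}$, the transfer from optimistic/empirical values to true values via the same on-policy statistical error bounds as in Theorems~\ref{theorem: optimistic model constraint RL}--\ref{theorem: bonus based opt constraint RL}, and finally the comparator choices $\lambda=0$ and $\lambda=\rho e_i$ together with the occupancy-measure convexity and the Slater-point machinery (Theorem~\ref{theorem: thoeorem constraint bound beck}, Corollary~\ref{corollary: bound on a dual variable}) to extract the two claims. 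Your reading of where the extra $I\sqrt{H^4K}$ term comes from is also correct.

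There is, however, one genuine inaccuracy sitting exactly at the step you yourself flag as the hardest. You assert that the primal regret can be rewritten via the performance-difference lemma ``using that $Q^k_h$ is exactly the $Q$-function of the combined cost $\wt c_k+\sum_i\lambda_{k,i}\wt d_{k,i}$.'' This is false: Algorithm~\ref{alg: truncated policy evaluation} applies the truncation $\max\brc*{\cdot,0}$ \emph{separately} to the cost component and to each constraint component, so their $\lambda_k$-weighted sum $Q^k_h$ is not the (truncated or untruncated) $Q$-function of any single cost, and the clean performance-difference identity you invoke does not hold. The paper circumvents this with two ingredients you would need to supply: (i) the \emph{extended} value difference lemma (Lem.~\ref{lemma: extended value difference}), which is valid for an arbitrary approximation $\wh Q$ and produces, in addition to the mirror-descent term, a residual Bellman-error term evaluated along $(\pi^\star,p)$-trajectories; and (ii) the per-state optimism lemma (Lem.~\ref{lemma: primal dual optimism}), which shows this residual is pointwise non-positive despite the truncation, by decoupling the truncated backups via $\max\brc*{0,a+b}\leq \max\brc*{0,a}+\max\brc*{0,b}$ and then comparing each piece to the corresponding bonus. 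Without (i) and (ii), your primal-regret bound has an uncontrolled error term; with them, your outline becomes the paper's proof. Note also that the paper's primal comparison is between the optimistic Lagrangian value of $\pi_k$ and the \emph{true} Lagrangian value of $\pi^\star$ (Lem.~\ref{lemma: primal dual on policy optimality}), which is precisely the form in which the non-positivity of the residual can be exploited; comparing two optimistic-model quantities, as your decomposition literally does, would additionally force you to control occupancy measures under $\wb p_{k-1}$ rather than under $p$.
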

\end{tcolorbox}

Observe that Theorem~\ref{theorem: primal dual optimistic CMDP} has worst performance relatively to Theorem~\ref{theorem: dual optimistic CMDP} w.r.t. the terms multiplying the $\sqrt{K}$ term. However, its constant term has similar performance to the constant term in Theorem~\ref{theorem: dual optimistic CMDP}.

\section{Conclusions and Summary}

In this work, we formulated and analyzed different algorithms by which safety constraints can be combined in the framework of RL by combining learning in CMDPs. We investigated both UCRL-like approaches (Sec.~\ref{sec: paper curcl} and \ref{sec: paper curcl bonus}) motivated by \ucrl~\citep{jaksch2010near}, as well as, optimistic dual and primal-dual approaches, motivated by practical successes of closely related algorithms~\citep[\eg][]{Achiam2017cpo,Tessler2019rewardcpo}. For all these algorithms, we established regret guarantees for both the performance and constraint violations.

Interestingly, although the dual and primal-dual approaches are nowadays more practically acceptable, we uncovered an important deficiency of these methods; these have `weaker' performance guarantees ($\Regret$) relatively to UCRL-like algorithms ($\Regret_+$). This fact highlights an important practical message if an algorithm designer is interested in good performance w.r.t. $\Regret_+$. Furthermore, the primal-dual algorithm (section~\ref{sec: paper primal dual cmdp}), which is computationally easier, has worse performance relatively to the optimistic dual algorithm (section~\ref{sec: paper dual cmdp}). In light of these observations, we believe an important future venue is to further study the computational-performance tradeoff in safe RL. This would allow algorithm designers better understanding into the types of guarantees that can be obtained when using different types of safe RL algorithms.

% \paragraph{Regret Criteria are Different.}

% \paragraph{Advantage of Optimistic Model Approach?}

\bibliographystyle{plainnat}
\bibliography{citations}

\appendix

\section{Optimistic Algorithm based on Bounded Parameter CMDPs} \label{supp: cucrl}

In this section, we establish regret guarantees for \cucrl (Alg.~\ref{alg: optimistic model CMDP }).
%by proving Theorem~\ref{theorem: optimistic model constraint RL}. Furthermore, we prove that the optimization problem posed in Algorithm~\ref{alg: optimistic model CMDP } can be solved by solving an LP with $O(S^2HA)$ constraints and $O(S^2HA)$ decision variables.
%
As a first step, we recall the algorithm and we formally states the confidence intervals.
The empirical transition model, cost function and constraint cost functions are defined as in~\eqref{eq:empirica_model}.
We recall that \cucrl constructs confidence intervals for the costs and the dynamics of the CMDP.
Formally, for any $(s,a) \in \mathcal{S} \times \mathcal{A}$ we define
\begin{align}
        \label{eq:app_ci_transitions}
        B^p_{h,k}(s,a) &= \Big\{ \wt p(\cdot|s,a) \in \Delta_S : \forall s' \in \mathcal{S},~ |\wt p(\cdot|s,a) - \wb p_h^{k-1}(\cdot|s,a)| \leq \beta^p_{h,k}(s,a,s') \Big\},\\
        \notag
        B^c_{h,k}(s,a) &= \Big[\wb c_h^{k-1}(s,a) - \beta^c_{h,k}(s,a), \wb c_h^{k-1}(s,a) + \beta^c_{h,k}(s,a)\Big],\\
        \notag
        B^d_{i,h,k}(s,a) &= \Big[\wb d_{i,h}^{k-1}(s,a) - \beta^d_{i,h,k}(s,a), \wb d_{i,h}^{k-1}(s,a) + \beta^d_{i,h,k}(s,a)\Big],
\end{align}
where
\begin{equation}
        \label{eq:supp_betas}
\begin{aligned}
        \beta^p_{h,k}(s,a,s') &:= 2\sqrt{\frac{\text{Var}\big(\wb{p}_h^{k-1}(s'|s,a)\big) L_\delta^p}{n_h^{k-1}(s,a)\vee 1}} + \frac{14/3 L^p_{\delta}}{n_h^{k-1}(s,a)\vee 1}\\
        \beta^c_{h,k} = \beta^d_{i,h,k} &:= \sqrt{\frac{L_\delta}{n_h^{k-1}(s,a)\vee 1}}
\end{aligned}
\end{equation}
with $L_{\delta}^{p} = \ln\br*{\frac{6SAHK}{\delta}}$, $L^c_\delta = 2\ln\br*{\frac{6SAH(I+1)K}{\delta}}$ and $\text{Var}\big(\wb{p}_h^{k-1}(s'|s,a)\big) = \wb{p}_h^{k-1}(s'|s,a) \cdot (1-\wb{p}_h^{k-1}(s'|s,a))$.
The set of plausible CMDPs associated with the confidence intervals is then
$\mathcal{M}_k = \{ M=(\mathcal{S},\mathcal{A}, \wt{c},\wt{d}, \wt{p})~:~ \wt c_h(s,a) \in B^c_{h,k}(s,a), \wt d_{i,h} \in B^d_{i,h,k}(s,a), \wt p_h(\cdot|s,a) \in B^p_{h,k}(s,a)\}$.
In the next section, we define the \emph{good event} under which $M^\star \in \mathcal{M}_k$ w.h.p.

\subsection{Failure Events} \label{supp: ucrl model optimistic good events}

Define the following failure events.

\begin{align*}
    &F_k^p=\brc*{\exists s,a,s',h:\ \abs{ p_h(s' \mid s,a) - \wb p_h^{k-1}(s'\mid s,a)}\geq \beta^p_{h,k}(s,a,s')
    %2\sqrt{\frac{\wb p_h^k(s'\mid s,a))\ln\br*{\frac{2SAHK}{\delta'}}}{n_h^k(s,a)\vee 1}} + \frac{14 \ln\br*{\frac{2SAHK}{\delta'}}}{3(n_h^k(s,a)-1\vee 1)} 
    }\\
    &F^N_k = \brc*{\exists s,a,h: n_h^{k-1}(s,a) \le \frac{1}{2} \sum_{j<k} q_h^{\pi_k}(s,a\mid p)-H\ln\frac{SAH}{\delta'}}\\
    &F^{c}_{k} = \brc*{\exists s,a,h: |\wb{c}^k_h(s,a)- c_h(s,a) |\geq \beta_{h,k}^c(s,a)}\\
    %\sqrt{\frac{2\ln\frac{2SAH(I+1)K}{\delta'}}{n_{k}(s,a) \vee	1}}}\\
    &F^{d}_{k} = \brc*{\exists s,a,h,i\in [I]: |\wb{d}^k_{i,h}(s,a)- d_{i,h}(s,a) |\geq \beta_{i,h,k}^d(s,a) }
    % \sqrt{\frac{2\ln\frac{2SAH(I+1)K}{\delta'}}{n_{k}(s,a) \vee	1}}}.
\end{align*}

Furthermore, the following relations hold by standard arguments.

\begin{itemize}
    \item Let $F^{cd}=\bigcup_{k=1}^K F^c_k \cup F^d_k.$ Then $\Pr\brc*{F^{cd}}\leq \delta'$, by Hoeffding's inequality, and using a union bound argument on all $s,a$, all possible values of $n_{k}(s,a)$, all $i\in[I]$ and $k\in [K]$. Furthermore, for $n(s,a)=0$ the bound holds trivially since $C, D_i\in[0,1]$. 
    % \item Let $F^d=\bigcup_{k=1}^K F^d_k.$ Then $\Pr\brc*{F^d}\leq \delta'$, by Hoeffding's inequality, and using a union bound argument on all $s,a,h,i$, and all possible values of $n_{k}(s,a)$ and $k$. Furthermore, for $n(s,a)=0$ the bound holds trivially since $D_i\in[0,1]$ for all $i\in [I]$. 
    \item Let $F^P=\bigcup_{k=1}^K F^{p}_k$. Using Thm. 4 in~\citep{maurer2009empirical}, for every fixed $s,a,h,k$ and value of $n_h^k(s,a)$, we have that
    \begin{align*}
        \Pr\brc*{\abs{ p_h(s' \mid s,a) - \wb p_h^{k-1}(s'\mid s,a)}\geq  \epsilon_1 }\leq \delta'',
    \end{align*}
    where 
    \begin{align*}
            \epsilon_1 = \sqrt{\frac{2\text{Var}\Big(\wb p_h^{k-1}(s'\mid s,a)\Big)\ln\br*{\frac{2}{\delta''}}}{n_h^{k-1}(s,a)\vee 1}} + \frac{7 \ln\br*{\frac{2}{\delta''}}}{3(n_h^{k-1}(s,a)-1)\vee 1}.
    \end{align*}
    See that for any $n_h^k(s,a)\geq 2$, we use Theorem 4 in~\citep{maurer2009empirical}, and for $n_h^k(s,a)\in \brc*{0,1}$ the bound holds trivially. This also implies that
    \begin{align*}
        \Pr\brc*{\abs{ p_h(s' \mid s,a) - \wb p_h^{k-1}(s'\mid s,a)}\geq  \epsilon_2 }\leq \delta'',
    \end{align*}
    where 
    \begin{align*}
            \epsilon_2 = \sqrt{\frac{2\text{Var}\Big(\wb p_h^{k-1}(s'\mid s,a)\Big)\ln\br*{\frac{2}{\delta''}}}{n_h^{k-1}(s,a)\vee 1}} + \frac{7 \ln\br*{\frac{2}{\delta''}}}{3(n_h^{k-1}(s,a)-1\vee 1)},
    \end{align*}    
    since $\epsilon_1\leq \epsilon_2$. Applying union bound on all $s,a,h$, and all possible values of $n_k(s,a)$ and $k\in [K]$ and set $\delta''=\frac{\delta'}{(SAHK)^2}$ we get that $\Pr\brc*{F^P}\leq \delta'$.  This analysis was also used in~\citep{jin2019learning}.
    \item Let $F^N=\bigcup_{k=1}^K F^N_k.$ Then, $\Pr\brc*{F^N}\leq \delta'$. The proof is given in~\citep[Cor. E.4]{dann2017unifying}.
\end{itemize}

\begin{remark}{Boundness of of immediate cost and constraints cost.} Notice that we assumed that the random variables $C_h(s,a)\in[0,1]$ and $D_{i,h}(s,a)\in[0,1]$ for any $s,a,h$.
\end{remark}

\begin{lemma}[Good event of \cucrl]\label{lemma: ucrl failure events}
Setting $\delta'=\frac{\delta}{3}$ then $\Pr\brc{\wb G}\leq \delta$ where $${\wb G = F^c \bigcup F^d \bigcup F^p\bigcup F^N = F^{cd} \bigcup F^p\bigcup F^N}.$$ When the failure events does not hold we say the algorithm is outside the failure event, or inside the good event $G$ which is the complement of $\wb G$.
\end{lemma}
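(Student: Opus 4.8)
The plan is to obtain the bound through a single union bound over the three groups of failure events, using the per-group probability estimates already assembled in the bullet list preceding the statement. First I would note the decomposition $\wb G = F^{cd}\cup F^P\cup F^N$, with $F^{cd}=\bigcup_k (F^c_k\cup F^d_k)$, $F^P=\bigcup_k F^p_k$ and $F^N=\bigcup_k F^N_k$. Subadditivity of probability then yields $\Pr\brc{\wb G}\leq \Pr\brc{F^{cd}}+\Pr\brc{F^P}+\Pr\brc{F^N}$, so it suffices to control each of the three terms by $\delta'$; setting $\delta'=\delta/3$ gives $\Pr\brc{\wb G}\leq 3\delta'=\delta$, which is the claim.

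Each of the three controlling bounds is exactly the content of the corresponding bullet. For $F^{cd}$ I would invoke Hoeffding's inequality on the empirical means $\wb c^k_h(s,a)$ and $\wb d^k_{i,h}(s,a)$ (whose per-sample terms lie in $[0,1]$) and union-bound over all $(s,a,h)$, all $i\in[I]$, all $k\in[K]$, and all admissible values of the count $n^{k-1}_h(s,a)$, the $n^{k-1}_h(s,a)=0$ case being trivial by boundedness; the constant $L^c_\delta$ is chosen to absorb this union bound, giving $\Pr\brc{F^{cd}}\leq\delta'$. For $F^P$ I would use the empirical Bernstein inequality \citep{maurer2009empirical} coordinatewise on $|p_h(s'|s,a)-\wb p^{k-1}_h(s'|s,a)|$, again union-bounding over $(s,a,s',h,k)$ and all count values, with $\delta''=\delta'/(SAHK)^2$ chosen so that $\Pr\brc{F^P}\leq\delta'$. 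For $F^N$ I would cite the count-concentration bound \citep[Cor.~E.4]{dann2017unifying}, which relates the realized counts to their predictable means $\sum_{j<k}q^{\pi_k}_h(s,a;p)$ and gives $\Pr\brc{F^N}\leq\delta'$.

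Since the three pieces are independent pieces of bookkeeping, combining them is immediate and the only arithmetic is $\delta'+\delta'+\delta'=\delta$. The one place genuine care is needed — the main obstacle, were one to prove the bullets from scratch rather than cite them — is the stratification over the data-dependent visitation count inside each concentration argument: because the confidence radii depend on the random quantity $n^{k-1}_h(s,a)$ rather than on a fixed sample size, neither Hoeffding nor empirical Bernstein applies at a single $n$, and one must union-bound over all possible values of the count. This is precisely what inflates the logarithmic constants $L^c_\delta$, $L^p_\delta$ and dictates the choice of $\delta''$, and getting these constants right so that each group contributes exactly $\delta'$ is the only delicate step.
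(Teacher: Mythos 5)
Your proposal is correct and follows exactly the paper's argument: the paper's lemma is likewise an immediate union bound over the three groups $F^{cd}$, $F^{P}$, $F^{N}$, each controlled at level $\delta'$ via Hoeffding (with union bound over $s,a,h,i,k$ and count values), empirical Bernstein~\citep{maurer2009empirical} with $\delta''=\delta'/(SAHK)^2$, and \citet[Cor.~E.4]{dann2017unifying}, respectively, followed by the arithmetic $3\delta'=\delta$. Your remark about stratifying over the data-dependent counts $n_h^{k-1}(s,a)$ matches the paper's treatment as well, so there is nothing to add.
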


% By taking the union bound on all $k\in[K]$ we get that with high probability the probability the failure event -- the union of all these events -- occurs with probability smaller then $\delta = \frac{\delta'}{4}.$ 

The fact $F^p$ holds conditioning on the good event implies the following result~\citep[\eg][Lem. 8]{jin2019learning}.
\begin{lemma}\label{lemma: probability ball around p}
Conditioned on the basic good event, for all $k,h,s,a,s'$ there exists constants $C_1,C_2>0$ for which we have that 
$$ \abs*{\wb p_h^{k-1}(s' \mid s,a) - p_h(s' \mid s,a)} =  C_1\sqrt{\frac{ p_h(s' \mid s,a)L_{\delta,p}}{ n_h^{k}(s,a) \vee 1}} + \frac{C_2 L_{\delta,p}}{ n_h^{k}(s,a) \vee 1}, $$
where $ L_{\delta,p} = \ln\br*{\frac{6SAHK}{\delta}}.$
\end{lemma}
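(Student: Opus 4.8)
The plan is to convert the empirical-variance confidence radius $\beta^p_{h,k}$ into a bound expressed through the \emph{true} transition probability $p_h(s'\mid s,a)$, via a standard self-bounding argument (the equality in the statement should of course read $\leq$). Throughout I would fix $(s,a,s',h,k)$ and abbreviate $\wb p := \wb p_h^{k-1}(s'\mid s,a)$, $p := p_h(s'\mid s,a)$, $n := n_h^{k-1}(s,a)\vee 1$ and $L := L^p_\delta$. First I would condition on the good event $G$ of Lemma~\ref{lemma: ucrl failure events}; on $G$ the failure event $F^p$ does not occur, so by the definition of $\beta^p_{h,k}$ in~\eqref{eq:supp_betas},
\[
    |\wb p - p| \leq 2\sqrt{\frac{\wb p(1-\wb p)L}{n}} + \frac{(14/3)L}{n} \leq 2\sqrt{\frac{\wb p L}{n}} + \frac{(14/3)L}{n},
\]
where the last inequality uses the crude bound $\wb p(1-\wb p)\leq \wb p$ on the empirical variance.

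The main step is to remove the dependence on the empirical $\wb p$ on the right-hand side and replace it by the true $p$. Writing $b := |\wb p - p|$ and $\epsilon := L/n$, the displayed bound reads $b \leq 2\sqrt{\wb p\,\epsilon} + (14/3)\epsilon$. Since trivially $\wb p \leq p + b$, subadditivity of the square root gives $\sqrt{\wb p\,\epsilon}\leq \sqrt{p\,\epsilon} + \sqrt{b\,\epsilon}$, and the weighted AM-GM inequality $2\sqrt{b\,\epsilon}\leq \tfrac12 b + 2\epsilon$ lets me absorb the term containing $b$ into the left-hand side:
\[
    b \leq 2\sqrt{p\,\epsilon} + \tfrac12 b + 2\epsilon + \tfrac{14}{3}\epsilon .
\]
Rearranging, $\tfrac12 b \leq 2\sqrt{p\,\epsilon} + \tfrac{20}{3}\epsilon$, hence $b \leq 4\sqrt{p\,\epsilon} + \tfrac{40}{3}\epsilon$, i.e.
\[
    |\wb p - p| \leq 4\sqrt{\frac{p\,L}{n}} + \frac{(40/3)\,L}{n},
\]
which is exactly the claimed form with $C_1 = 4$ and $C_2 = 40/3$. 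The harmless index discrepancy between $n_h^{k-1}$ in the confidence set and $n_h^{k}$ in the lemma statement only affects these universal constants.

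I expect the only delicate point to be the self-bounding algebra: one must split $\sqrt{(p+b)\,\epsilon}$ correctly and choose the AM-GM weighting so that the coefficient of $b$ produced on the right is strictly below $1$ (here $\tfrac12$), which is what makes the rearrangement possible; any other weighting either fails to close the recursion or inflates the constants. Everything else—conditioning on $G$, invoking the non-occurrence of $F^p$, and bounding $\wb p(1-\wb p)\leq \wb p$—is a direct substitution, so this is the step I would write out most carefully.
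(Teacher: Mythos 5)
Your proof is correct and is exactly the argument the paper relies on: the paper gives no derivation of its own but simply points to \citep[Lem.~8]{jin2019learning}, whose proof is precisely this self-bounding conversion of the empirical-Bernstein radius $2\sqrt{\wb p(1-\wb p)L/n}+O(L/n)$ into a bound in the true $p_h(s'\mid s,a)$ by writing $\wb p \leq p + b$ and absorbing the $\sqrt{b\,\epsilon}$ term via AM-GM. Your observations that the stated ``$=$'' should be ``$\leq$'' and that the $n_h^{k-1}$ versus $n_h^{k}$ discrepancy only costs a constant factor (since $n_h^{k}\leq n_h^{k-1}+1$) are both accurate, so your write-up correctly fills in the citation the paper uses as its proof.
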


\subsection{Optimism}\label{sup: optimsm ucrl lp}
Recall that $\wt{D}\in \mathbb{R}^{I\times SAH}$ and $\alpha\in \mathbb{R}^I$ such that $\wt{D} = \left[\wt d_1^k, \ldots, \wt d_I^k \right]^\top$ and 
$\alpha = [\alpha_1, \ldots, \alpha_I]^\top$, 
%\begin{align*}
%    D=\begin{bmatrix}
%            \wt{d}_{1,k}^\top\\
%    \vdots\\
%    \wt{d}_{I,k}^\top
%   \end{bmatrix},\quad \alpha = \begin{bmatrix}
%    \alpha_1\\
%    \vdots\\
%    \alpha_I
%   \end{bmatrix},
%\end{align*}
with $\wt d^k$ and $\wt{c}^k$ defined in~\eqref{eq:lower.bound.cd}.

\begin{lemma}[Optimism] \label{lemma:ucrl_lp_optimism}
        Conditioning on the good event, for any $\pi$ there exists a transition model $p'\in B^p_{k}$ for which (i) $\wt D_k q^\pi(p') \leq D q^\pi(p)$, and , (ii) $\wt c^T_k q^\pi(p') \leq c^T q^\pi(p)$.
\end{lemma}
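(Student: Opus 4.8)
The plan is to exhibit a single witness transition model that satisfies both inequalities at once, and the natural candidate is the true model $p$ itself. First I would condition on the good event $G$ of Lemma~\ref{lemma: ucrl failure events}. Under $G$ the failure event $F^p$ does not hold, so for every $(s,a,s',h)$ we have $\abs{p_h(s'\mid s,a) - \wb p_h^{k-1}(s'\mid s,a)} < \beta^p_{h,k}(s,a,s')$; since each $p_h(\cdot\mid s,a)$ is itself a member of $\Delta_S$, this is exactly the statement that $p_h(\cdot\mid s,a)\in B^p_{h,k}(s,a)$, i.e. $p\in B^p_k$. Hence $p$ is an admissible choice for $p'$.

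Next I would use the cost and constraint parts of the good event. Under $G$ the events $F^c$ and $F^d$ do not hold, so $\abs{\wb c_h^{k-1}(s,a) - c_h(s,a)} \leq \beta^c_{h,k}(s,a)$ and likewise $\abs{\wb d_{i,h}^{k-1}(s,a) - d_{i,h}(s,a)} \leq \beta^d_{i,h,k}(s,a)$ for every $i$. Recalling the definitions $\wt c_h^k = \wb c_h^{k-1} - \beta^c_{h,k}$ and $\wt d_{i,h}^k = \wb d_{i,h}^{k-1} - \beta^d_{i,h,k}$ from~\eqref{eq:lower.bound.cd}, these bounds immediately give the componentwise inequalities $\wt c_k \leq c$ and $\wt d_k \leq d$ (equivalently, each row of $\wt D_k$ is dominated by the corresponding row of $D$).

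Finally I would set $p'=p$ and combine the two facts. Because the occupancy measure satisfies $q^\pi_h(s,a;p)\geq 0$ for all $(s,a,h)$, weighting the componentwise inequality $\wt c_k \leq c$ by this nonnegative measure and summing yields $\wt c^T_k q^\pi(p) = \sum_{h,s,a}\wt c_h^k(s,a) q^\pi_h(s,a;p) \leq \sum_{h,s,a} c_h(s,a) q^\pi_h(s,a;p) = c^T q^\pi(p)$, which is claim (ii). Applying the identical monotonicity argument to each constraint row gives $\wt D_k q^\pi(p) \leq D q^\pi(p)$, establishing (i). Note that since $p'=p$ does not depend on $\pi$, the same witness works uniformly over all policies, which is slightly stronger than the stated existence.

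I do not anticipate a genuine obstacle: the entire content is that the true model lies in the Bernstein confidence ball (so it is a legal witness) and that replacing the true costs by their \emph{lower} confidence bounds can only decrease the objective and constraint values when evaluated against a nonnegative occupancy measure. The one point meriting care is that a single $p'$ must serve both (i) and (ii) simultaneously — the crucial simplification is that choosing $p'=p$ makes $q^\pi(p')=q^\pi(p)$, so both comparisons reduce to the componentwise cost inequalities rather than requiring any separate optimization over $B^p_k$ for each claim.
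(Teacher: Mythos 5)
Your proposal is correct and follows essentially the same route as the paper's own proof: condition on the good event so that $p\in B^p_k$ and $\wt c_k\leq c$, $\wt D_k\leq D$ componentwise, then take the witness $p'=p$ and use nonnegativity of $q^\pi(p)$ to pass from componentwise domination to the two inner-product inequalities. The only difference is expository — you spell out why the failure events imply the componentwise bounds, and you note the witness is uniform over $\pi$ — but the argument is the same.
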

\begin{proof}
Conditioning on the good event, the true model $p$ is contained in $B^p_{k}$. Furthermore, conditioned on the good event $\wt D_k \leq D$ and $\wt c_k \leq c$ component-wise. Thus, setting $p'=p\in B^p_k$ we get
\begin{align*}
    &\wt D_k q^\pi(p') = \wt D_k q^\pi(p)  \leq D q^\pi(p)\\
    &\wt c_k^T q^\pi(p') = \wt c_k^T q^\pi(p)  \leq c^T q^\pi(p),
\end{align*}
where we used the fact that $q^\pi(p) \geq 0$ component-wise.
\end{proof}

\begin{lemma}[$\pi^*$ is Feasible Policy.]\label{lemma:ucrl_optcmdp_pi_star_is_feasible} Conditioning on the good event, $\pi^*$ is a feasible policy for any $k\in [K]$, i.e., 
\begin{align*}
    \pi^*\in \brc*{\pi\in \Delta_A^S: \wt D_k q^\pi(p')\leq \alpha, p'\in B^p_k}.
\end{align*}
\end{lemma}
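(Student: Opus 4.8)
The plan is to exhibit, for the fixed optimal policy $\pi^\star$, a transition model $p'\in B^p_k$ such that $\wt D_k q^{\pi^\star}(p')\leq \alpha$ componentwise, which is exactly the claimed feasibility. The natural candidate is the true model $p'=p$, and the verification reduces to comparing the optimistic constraint costs evaluated at $p$ against the constraint budget $\alpha$.

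First I would invoke Lemma~\ref{lemma:ucrl_lp_optimism} (optimism) with the policy $\pi=\pi^\star$. That lemma, conditioned on the good event, produces a model $p'\in B^p_k$ with $\wt D_k q^{\pi^\star}(p')\leq D q^{\pi^\star}(p)$; concretely the proof of that lemma takes $p'=p$ and uses $\wt D_k\leq D$ together with $q^{\pi^\star}(p)\geq 0$. Second, I would use the fact that $\pi^\star$ is the optimal (hence feasible) policy of the true CMDP~\eqref{eq:cop}, which by Assumption~\ref{assum: feasability} exists and satisfies $D q^{\pi^\star}(p)\leq \alpha$. Chaining these two inequalities gives
\[
\wt D_k q^{\pi^\star}(p') \leq D q^{\pi^\star}(p) \leq \alpha,
\]
so $\pi^\star$ lies in the feasible set $\{\pi : \wt D_k q^\pi(p')\leq \alpha,\ p'\in B^p_k\}$ witnessed by $p'=p$, which is precisely the claim.

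There is essentially no hard step here: the entire argument is a two-line chaining of the optimism lemma with the feasibility of $\pi^\star$ in the true CMDP. The only thing requiring care is the conditioning on the good event $G$, under which Lemma~\ref{lemma: ucrl failure events} guarantees $p\in B^p_k$ (so that $p'=p$ is an admissible choice) and also $\wt d^k_{i,h}\leq d_{i,h}$ componentwise (so that the lower-confidence constraint costs underestimate the true ones). I would therefore state explicitly at the outset that we work inside $G$, apply Lemma~\ref{lemma:ucrl_lp_optimism}, and then substitute the true-CMDP feasibility of $\pi^\star$ to close the argument.
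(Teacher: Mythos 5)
Your proof is correct and follows essentially the same route as the paper: both arguments apply Lemma~\ref{lemma:ucrl_lp_optimism} with the witness $p'=p\in B^p_k$ (valid under the good event) to get $\wt D_k q^{\pi^\star}(p)\leq D q^{\pi^\star}(p)$, and then chain with the true-CMDP feasibility $D q^{\pi^\star}(p)\leq\alpha$ from Assumption~\ref{assum: feasability}. The paper merely packages this chaining as a set inclusion $\Pi_D\subseteq\Pi^k_D$, which is a cosmetic difference.
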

\begin{proof}
Denote $\Pi_D = \brc{\pi: D q^\pi(p)\leq \alpha}$ as the set of policies which does not violate the constraint on the true model. Furthermore, let $$\Pi^k_D = \brc{\pi: \wt D_k q^\pi(p')\leq \alpha, p'\in B^p_k}$$ be the set of policies which do not violate the constraint w.r.t.\ all possible models at episode $k$. Observe that $\Pi^k_D$ is the set of feasible policies at episode $k$ for \cucrl. 

Conditioning on the good event, by Lemma~\ref{lemma:ucrl_lp_optimism} $D q^\pi(p)\leq \alpha$ implies that exists $p'\in B^p_k$ such that $\wt D_k q^\pi(p')\leq \alpha$. Thus,
\begin{align}
    \Pi_D \subseteq \Pi^k_D. % \label{eq: constraint optimism}
\end{align}

Since $\pi^\star \in \Pi_D$ it implies that $\pi^\star \in \Pi^k_D.$

\end{proof}

From the two lemmas we arrive to the following important corollary
\begin{corollary}\label{corollary: ucrl constraint problem optimism}
Conditioning on the good event (i) $ V^{\pi_k}_1(s_1;\wt c_k,\wt p_k)\leq V_1^\star(s_1)$, and, (ii) $V^{\pi_k}_1(s_1;\wt c_k,\wt p_k)\leq V_1^{\pi_k}(s_1;c,p)$.
\end{corollary}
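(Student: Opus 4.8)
The plan is to derive both inequalities by combining the two lemmas just established---optimism (Lemma~\ref{lemma:ucrl_lp_optimism}) and feasibility of $\pi^\star$ (Lemma~\ref{lemma:ucrl_optcmdp_pi_star_is_feasible})---with the single structural fact that $(\pi_k,\wt p_k)$ is a \emph{minimizer} of problem~\eqref{eq:ucrl_optcmd_policy}. Throughout I condition on the good event, so that $p\in B^p_k$ and $\wt c_k\leq c$, $\wt d_i^k\leq d_i$ componentwise. I work in the occupancy-measure reformulation, so that every value is an inner product $\wt c_k^\top q^\pi(\cdot)$ or $c^\top q^\pi(\cdot)$, and I repeatedly use nonnegativity of $q^\pi$.

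For part (i) I first exhibit a feasible point of~\eqref{eq:ucrl_optcmd_policy} built from $\pi^\star$ and the \emph{true} model $p$. Since $\pi^\star$ satisfies the true constraints, $Dq^{\pi^\star}(p)\leq\alpha$, and optimism gives $\wt D_k q^{\pi^\star}(p)\leq Dq^{\pi^\star}(p)\leq\alpha$; hence $(\pi^\star,p)$ lies in the feasible region (this is exactly Lemma~\ref{lemma:ucrl_optcmdp_pi_star_is_feasible} with $p'=p$). By minimality of $(\pi_k,\wt p_k)$ over that region, $\wt c_k^\top q^{\pi_k}(\wt p_k)\leq \wt c_k^\top q^{\pi^\star}(p)$, and a second application of optimism (the cost part of Lemma~\ref{lemma:ucrl_lp_optimism} with $p'=p$) yields $\wt c_k^\top q^{\pi^\star}(p)\leq c^\top q^{\pi^\star}(p)=V_1^\star(s_1)$. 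Chaining these two inequalities proves (i).

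For part (ii) I apply the cost part of Lemma~\ref{lemma:ucrl_lp_optimism} to the policy $\pi_k$ itself, with $p'=p$, which gives $\wt c_k^\top q^{\pi_k}(p)\leq c^\top q^{\pi_k}(p)=V_1^{\pi_k}(s_1;c,p)$. It then remains to pass from $\wt c_k^\top q^{\pi_k}(p)$ to $\wt c_k^\top q^{\pi_k}(\wt p_k)$, i.e.\ to argue $\wt c_k^\top q^{\pi_k}(\wt p_k)\leq \wt c_k^\top q^{\pi_k}(p)$. This is where minimality re-enters: among the confidence-set models admissible for the fixed policy $\pi_k$, the solver selects $\wt p_k$ as the one of least optimistic cost, so any other admissible model---in particular $p\in B^p_k$---gives a weakly larger objective.

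The main obstacle is precisely this last step. The model $\wt p_k$ minimizes $\wt c_k^\top q^{\pi_k}(\cdot)$ only over those $p'\in B^p_k$ that keep $\pi_k$ feasible, i.e.\ $\wt D_k q^{\pi_k}(p')\leq\alpha$, whereas the true model $p$ need not satisfy this: optimism only yields $\wt D_k q^{\pi_k}(p)\leq Dq^{\pi_k}(p)$, and $\pi_k$ may well violate the \emph{true} constraints (this is exactly why a constraint regret is incurred). Care is therefore needed to certify that $p$ is an admissible model for $\pi_k$. The clean way to close this is to read $\wt p_k$ as the per-policy optimistic model attached to $\pi_k$ --- the ``value optimism'' property~(b) --- for which the minimization ranges over the whole ball $B^p_k\ni p$ and the inequality $\wt c_k^\top q^{\pi_k}(\wt p_k)\leq \wt c_k^\top q^{\pi_k}(p)$ is immediate. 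I expect resolving this coupling between constraint feasibility and the cost-minimizing choice of $\wt p_k$ to be the only nontrivial point; part~(i) and the remaining bookkeeping are routine given the two lemmas.
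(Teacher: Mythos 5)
Your proof of part (i) is correct and is essentially the paper's own argument: on the good event, $(\pi^\star,p)$ is feasible for \eqref{eq:ucrl_optcmd_policy} (Lem.~\ref{lemma:ucrl_optcmdp_pi_star_is_feasible}), minimality of $(\pi_k,\wt p_k)$ gives $\wt c_k^\top q^{\pi_k}(\wt p_k)\leq \wt c_k^\top q^{\pi^\star}(p)$, and cost optimism ($\wt c_k\leq c$ componentwise, Lem.~\ref{lemma:ucrl_lp_optimism}) gives $\wt c_k^\top q^{\pi^\star}(p)\leq c^\top q^{\pi^\star}(p)=V_1^\star(s_1)$; the paper packages exactly this reasoning as a chain of inequalities between constrained minima.

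For part (ii), the obstacle you identify is genuine, and your proposed repair does not close it. In \eqref{eq:ucrl_optcmd_policy} the model variable is coupled to the constraints: for the fixed output policy $\pi_k$, the model $\wt p_k$ is only known to minimize $\wt c_k^\top q^{\pi_k}(\cdot)$ over the slice $\brc{p'\in B^p_k:\wt D_k q^{\pi_k}(p')\leq\alpha}$, and the true model $p$ need not belong to this slice, precisely because $\pi_k$ may violate the true constraints. Nothing in the algorithm computes a ``per-policy optimistic model over the whole ball $B^p_k$,'' so the inequality $\wt c_k^\top q^{\pi_k}(\wt p_k)\leq\wt c_k^\top q^{\pi_k}(p)$ cannot be extracted from minimality. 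In fact, claim (ii) can fail for a legitimate optimal solution of \eqref{eq:ucrl_optcmd_policy}: take $H=2$, an action $a$ whose true transition from $s_1$ goes deterministically to a successor with cost $0$ and constraint cost $1$, an action $b$ going to a successor with cost $1$ and constraint cost $0$, $\alpha=1/2$, costs known, and a confidence set (consistent with the good event) containing both the true model and the model that splits $a$'s transition equally between the two successors. Then the pair (play $a$ with probability one, equal-split model) is an optimal solution with optimistic value $1/2$, while the true value of that policy is $0$, so (ii) is violated even though (i) holds with equality. Two remarks to put this in perspective: the paper's own proof of this corollary also establishes only (i) --- its display terminates at $V_1^{\pi_k}(s_1;\wt c_k,\wt p_k)$ and never addresses (ii) --- so you have not missed an argument the authors possessed; and in the proof of Thm.~\ref{theorem: optimistic model constraint RL}, claim (ii) is used only to drop the $\brs{\cdot}_+$ operator, a step that can be bypassed by bounding the positive part by the absolute value, which is exactly the quantity Lem.~\ref{lemma: on policy errors optimistic model} controls. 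So your instinct that this is the only nontrivial point is right, but the point is not resolvable as stated; the honest fix is to weaken (ii) or to route the regret proof through absolute differences.
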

\begin{proof}

The following relations hold.
\begin{align*}
    V^*(s_1) = &\min_{\pi\in \Delta_A^S} \brc*{c^T q^\pi(p) \mid  \pi\in \Pi_D}\\
    \geq &\min_{\pi\in \Delta_A^S,p'\in B^p_k} \brc*{c^T q^\pi(p) \mid  \pi\in \Pi^k_D}\\
    =& \min_{\pi\in \Delta_A^S,p'\in B^p_k} \brc*{c^T q \mid  \wt D_k  q^\pi(p') \leq \alpha}\\
    \geq & \min_{\pi\in \Delta_A^S,p'\in B^p_k} \brc*{\wt c^T_k  q^{\pi}(p') \mid  \wt D\wb q^{\pi}(p')\leq \alpha} =  V_1^{\pi_k}(s_1;\wt c_k,\wt p_k).    
\end{align*}
The second relation holds by Lemma~\ref{lemma:ucrl_optcmdp_pi_star_is_feasible} and the forth relation holds by Lemma~\ref{lemma:ucrl_lp_optimism}.

% The second result follows
\end{proof}

\subsection{Proof of Theorem~\ref{theorem: optimistic model constraint RL}} \label{supp: ucrl opt cmdp regret guarantees}

In this section, we establish the following regret bounds for \cucrl (see Alg.~\ref{alg: optimistic model CMDP }). 

\TheoremUCRLOptLP*

% In this analysis we bound two regrets. The first represents the performance and the second the deviation from satisfying the constraint,
% \begin{align*}
%     &Regret^+(K;c) = \sum_{k=1}^K \brs*{V_1^{\pi_k}(s_1;c) -  V_1^*(s_1;c)}_+\\
%     &Regret^+(K;d_i) = \sum_{k=1}^K  \brs*{V^{\pi_k}_1(s_1;d_i) - \alpha}_+.
% \end{align*}
% For simplicity we assume the same initial state is encountered in any episode.

% We show that all these `regrets' are bounded by
% \begin{align*}
%     O(\sqrt{S^2H^4AK} + H^2\sqrt{S}SA)
% \end{align*}

% \begin{remark}
% This regret is stronger than the one analyzed in~\cite{agrawal2019bandits}. There, the authors gave guarantees on
% \begin{align*}
%     &\sum_{k=1}^K V_1^{\pi_k}(s_1;c) -  V_1^*(s_1;c)\\
%     &\sum_{k=1}^K  V^{\pi_k}_1(s_1;d_i) - \alpha.
% \end{align*}
% \end{remark}

\begin{proof}
We start by conditioning on the good event. By Lem.~\ref{lemma: ucrl failure events} it holds with probability at least~$1-\delta.$

We now analyze the regret relatively to the cost $c$. The following relations hold for any $K'\in[K]$.
\begin{align*}
    Regret^+(K';c) &= \sum_k \brs*{V_1^{\pi_k}(s_1;c,p) - V_1^*(s_1;c,p)}_+ \leq \sum_k \brs*{ V_1^{\pi_k}(s_1;c,p) - V_1^{\pi_k}(s_1;\wt c_k, \wt p_k)}_+\\
    &= \sum_k  V_1^{\pi_k}(s_1;c,p) - V_1^{\pi_k}(s_1;\wt c_k, \wt p_k)\\
    % &= \sum_k \E\brs*{\sum_{h=1}^H c_h(s_h,a_h) - \wt c_h(s_h,a_h)  + (p_h- \wt p_{h,k})(\cdot \mid s_h,a_h) \wt{V}^{\pi_k}_{h+1}\mid s_1,\pi_k,P}\\
    &\leq \Olog(\sqrt{S\mathcal{N}H^4K} + (\sqrt{\mathcal{N}}+H)H^2SA).
\end{align*}

The second and third relations hold by optimism, \ie Cor.~\ref{corollary: ucrl constraint problem optimism}.
The forth relation holds by Lem.~\ref{lemma: on policy errors optimistic model}. See that assumptions 1,2,3 of Lem.~\ref{lemma: on policy errors optimistic model} are satisfied conditioning on the good event.

We now turn to prove the regret bound on the constraint violation. For any $i\in[I]$ and $K'\in[K]$ the following relations hold.
\begin{align*}
        \sum_{k=1}^{K'}  \brs*{V^{\pi_k}_1(s_1;d_i,p) - \alpha_i}_+ &= \sum_{k=1}^{K'}  \brs*{\underbrace{V^{\pi_k}_1(s_1;d_i,p) - V^{\pi_k}_1(s_1; \wt{d}^k_i, \wt p^k)}_{\geq 0}+\underbrace{V^{\pi_k}_1(s_1; \wt{d}^k_i, \wt p^k) - \alpha_i }_{\leq 0}}_+ \\
                                                            &\leq \sum_{k=1}^{K'}  V^{\pi_k}_1(s_1;d_i) - V^{\pi_k}_1(s_1; \wt{d}^k_i, \wt p^k) \\
    % & \leq \sum_k \E[\sum_{t=1}^H d_i(s_h,a_h) - \wt d_i(s_h,a_h) + (p-\wt p_{h,k})(\cdot \mid s_h,a_h)\wb V^{\pi_k}_{h+1}(d_i)\mid s_1,\pi_k,P]\\
    &\leq \Olog(\sqrt{S\mathcal{N}H^4K} + (\sqrt{\mathcal{N}}+H)H^2SA).
\end{align*}

The first relation holds since $V^{\pi_k}_1(s_1; \wt{d}^k_i, \wt p^k)\leq \alpha$ as the optimization problem solved in every episode is feasible (see Lem.~\ref{lemma:ucrl_optcmdp_pi_star_is_feasible}).
Furthermore, by optimism $V_1^{\pi_k}(s_1;\wt d_{i,k},\wt p_k)\leq V_1^{\pi_k}(s_1;d_i,p)$ (see the first relation of Lem.~\ref{lemma:ucrl_lp_optimism}). The third relation holds by applying Lem.~\ref{lemma: on policy errors optimistic model}. See that assumptions \emph{(a)}, \emph{(b)} and \emph{(c)} of Lem.~\ref{lemma: on policy errors optimistic model} are satisfied conditioning on the good event (see also Lem.~\ref{lemma: probability ball around p}).
\end{proof}

\section{Optimistic Algorithm based on Exploration Bonus}\label{app:bonusalg}
In this section, we establish regret guarantees for \cucbvi (see Alg.~\ref{alg: bonus based optimism CMDP }).
The main advantage of this algorithm \wrt \cucrl is the computational complexity. While \cucrl requires to solve an extended CMDP through an LP with $O(S^2AH)$ constraints and decision variable, \cucbvi requires to find the solution of a single CMDP by solving an LP with $O(SAH)$ constraints and variables.

% by proving Theorem~\ref{theorem: bonus based opt constraint RL}. The computational complexity of \cucbvi amounts to solving a constraint MDP in each iteration, unlike the computational complexity of \cucrl of previous section. The computational cost of \cucrl (see previous Section~\ref{supp: cucrl}) amounts to solving a constraint MDP with an additional $O(S^2HA)$ constraints.
At each episode $k$, \cucbvi builds an optimistic CMDP  $M_k := (\sset, \aset, \wt{c}^k, \wt{d}^k, \wb{p}^k)$ where
\begin{equation*}
        \wt{c}_h^k(s,a) = \wb{c}_h^k(s,a) - b_h^k(s,a) \quad \text{ and } \quad  \wt{d}_{i,h}^k(s,a) = \wb{d}_{i,h}^k(s,a) - b_h^k(s,a),
\end{equation*}
while $\wb c^k$, $\wb d^k$ and $\wb p^k$ are the empirical estimates defined in~\eqref{eq:empirica_model}.
The exploration bonus $b_h^k$ is defined as 
\begin{equation}\label{eq:app_bonusalg_bonus}
        b_{h}^k(s,a) :=
        \underbrace{\beta^c_{h,k}(s,a)}_{:=b^c_{h,k}(s,a)} + \underbrace{ H \sum_{s'} \beta_{h,k}^p(s,a,s')}_{:= b^p_{h,k}(s,a)}
        %(SH + 1) \frac{1}{n_h^{k-1}(s,a)\vee 1} + H \sum_{s'}\sqrt{\frac{\text{Var}\Big( \wb{p}_{h}^{k-1}(s'|s,a) \Big)}{n_h^{k-1}(s,a)\vee 1}} 
\end{equation}
where $\beta^c$ and $\beta^p$ are defined as in~\eqref{eq:supp_betas}. 

The policy by which we act at episode $k$ is given by solving the following optimization problem
\begin{align*}
    \pi_k, \wt p_k =& \arg\min_{\pi\in \Delta_A^S} \wt c_{k}^T q^\pi(\wb p_{k-1})\\
    &s.t.\quad  \wt D_{k} q^\pi(\wb p_{k-1}) \leq  \alpha
\end{align*}
where $\wt{D} =[\wt d^k_1, \ldots, \wt d_I^k]^\top$ and $\wt d_i^k$ is defined as in~\eqref{eq:bonusalg_cd}.
%Its value is $\wt c_k^T q^{\pi_k} (\wb p_{k-1}) =\wb V^{\pi_k}_1(s_1).$
Solving this problem can be done by solving an LP, much similar to the LP by which a CMDP is solved (Section~\ref{sec:linearprogramming}).

Before supplying the proof of Theorem~\ref{theorem: bonus based opt constraint RL} we formally defining the set of good events which we show holds with high probability. Conditioning on the good, we establish the optimism of \cucbvi and then regret bounds for \cucbvi.

\subsection{Failure Events} \label{supp: bonus based optimistic good events}

We define the same set of good events as for \cucrl (App.~\ref{supp: ucrl model optimistic good events}). We restate this set here for convenience.
\begin{align*}
    &F_k^p=\brc*{\exists s,a,s',h:\ \abs{ p_h(s' \mid s,a) - \wb p_h^{k-1}(s'\mid s,a)}\geq \beta^p_{h,k}(s,a,s') }\\
            %2\sqrt{\frac{\wb p_h^k(s'\mid s,a))\ln\br*{\frac{2SAHK}{\delta'}}}{n_h^k(s,a)\vee 1}} + \frac{14 \ln\br*{\frac{2SAHK}{\delta'}}}{3(n_h^k(s,a)-1\vee 1)} }\\
    &F^N_k = \brc*{\exists s,a,h: n_h^{k-1}(s,a) \le \frac{1}{2} \sum_{j<k} q_h^{\pi_k}(s,a\mid p)-H\ln\frac{SAH}{\delta'}}\\
    &F^{c}_{k} = \brc*{\exists s,a,h: |\wb{c}^k_h(s,a)- c_h(s,a) |\geq \beta^c_{h,k}(s,a) }\\
    %\sqrt{\frac{2\ln\frac{2SAH(I+1)K}{\delta'}}{n_{k}(s,a) \vee	1}}}\\
    &F^{d}_{k} = \brc*{\exists s,a,h,i\in [I]: |\wb{d}^k_{i,h}(s,a)- d_{i,h}(s,a) |\geq \beta^d_{i,h,k}(s,a)}
    % \sqrt{\frac{2\ln\frac{2SAH(I+1)K}{\delta'}}{n_{k}(s,a) \vee	1}}}.
\end{align*}

As in App.~\ref{supp: ucrl model optimistic good events} the union of these events hold with probability greater than $1-\delta$. 

\begin{lemma}[Good event of \cucbvi]\label{lemma: bonus optimsitc cmdp failure events}
Setting $\delta'=\frac{\delta}{3}$ then $\Pr\brc{\wb G}\leq \delta$ where $${\wb G = F^c \bigcup F^d \bigcup F^p\bigcup F^N}.$$ When the failure events does not hold we say the algorithm is outside the failure event, or inside the good event $G$ which is the complement of $\wb G$.
\end{lemma}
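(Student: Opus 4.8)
The plan is to recognize that this lemma is, up to notation, identical to the good-event lemma already established for \cucrl (Lemma~\ref{lemma: ucrl failure events}): the failure events $F^c, F^d, F^p, F^N$ are defined verbatim as in App.~\ref{supp: ucrl model optimistic good events}, and the confidence radii $\beta^c, \beta^d, \beta^p$ in~\eqref{eq:supp_betas} coincide. I would therefore reuse the three per-event probability estimates derived there and recombine them through a single final union bound, allocating the total failure budget $\delta$ evenly across the three sources of error via $\delta' = \delta/3$.

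First I would bound $\Pr\brc*{F^{cd}}$, where $F^{cd} = \bigcup_k F^c_k \cup F^d_k$. Since each immediate cost $C_h(s,a)$ and each constraint cost $D_{i,h}(s,a)$ lies in $[0,1]$, Hoeffding's inequality controls the deviation of the empirical means $\wb c_h^{k-1}$ and $\wb d_{i,h}^{k-1}$ from $c_h$ and $d_{i,h}$ at the radius $\beta^c = \beta^d = \sqrt{L_\delta/(n\vee 1)}$. A union bound over all $(s,a,h)$, over the $I+1$ cost/constraint indices, over all admissible values of the counter $n_h^{k-1}(s,a)$, and over $k \in [K]$ — which is exactly what the logarithmic factor $L^c_\delta = 2\ln\br*{6SAH(I+1)K/\delta}$ absorbs — yields $\Pr\brc*{F^{cd}} \leq \delta'$, with the case $n = 0$ holding trivially by boundedness. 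Next I would bound $\Pr\brc*{F^p}$ by invoking the empirical Bernstein inequality (Thm.~4 of~\citep{maurer2009empirical}) for each fixed $(s,a,s',h,k)$ with $n_h^{k-1}(s,a) \geq 2$ (the cases $n \in \{0,1\}$ being trivial), choosing the per-event level $\delta'' = \delta'/(SAHK)^2$, and taking a union bound over all $(s,a,s',h)$, all counter values, and all $k$, giving $\Pr\brc*{F^p} \leq \delta'$. Finally, $\Pr\brc*{F^N} \leq \delta'$ follows directly from Cor.~E.4 of~\citep{dann2017unifying}, which controls the deviation of the realized visitation counts from their expected cumulative occupancy.

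Combining the three estimates with $\delta' = \delta/3$ and one more union bound gives
\[
        \Pr\brc*{\wb G} \leq \Pr\brc*{F^{cd}} + \Pr\brc*{F^p} + \Pr\brc*{F^N} \leq 3\delta' = \delta,
\]
which is the claim. I do not expect a genuine obstacle here: the argument is a bookkeeping exercise in splitting the failure budget across the three events and verifying that the union-bound cardinalities match the logarithmic terms baked into $L_\delta$ and $L^p_\delta$. The only point that warrants care is the simultaneous control of $F^c$ and $F^d$: one must make sure the union over the $I$ constraint indices (together with the single cost) is correctly reflected in the extra factor $I+1$ inside $L^c_\delta$, so that all $I+1$ empirical estimates concentrate at once within the stated radii.
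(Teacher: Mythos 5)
Your proposal is correct and follows essentially the same route as the paper: the paper's own treatment simply observes that the failure events and confidence radii are identical to those of \cucrl, and re-invokes the three per-event bounds (Hoeffding with a union bound absorbing the $I+1$ factor for $F^c \cup F^d$, empirical Bernstein with $\delta'' = \delta'/(SAHK)^2$ for $F^p$, and Cor.~E.4 of \citet{dann2017unifying} for $F^N$) before combining them with $\delta' = \delta/3$. Your bookkeeping, including the trivial cases $n \in \{0,1\}$ and the role of $L^c_\delta$ and $L^p_\delta$, matches the paper's argument.
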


\begin{lemma}\label{lemma: bonus based probability ball around p}
Conditioned on the basic good event, for all $k,h,s,a,s'$ there exists constants $C_1,C_2>0$ for which we have that 
$$ \abs*{\wb p_h^{k-1}(s' \mid s,a) - p_h(s' \mid s,a)} =  C_1\sqrt{\frac{ p_h(s' \mid s,a) L_{\delta,p}}{ n_h^{k}(s,a) \vee 1}} + \frac{C_2 L_{\delta,p}}{ n_h^{k}(s,a) \vee 1}, $$
where $ L_{\delta,p} = \ln\br*{\frac{6SAHK}{\delta}}.$
\end{lemma}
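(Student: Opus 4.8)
The plan is to observe that the good event of \cucbvi (Lemma~\ref{lemma: bonus optimsitc cmdp failure events}) contains exactly the same transition failure event $F^p$, built with the same empirical-Bernstein radius $\beta^p_{h,k}$ (see~\eqref{eq:supp_betas}), as the good event of \cucrl. Hence, conditioning on the good event, for every tuple $(k,h,s,a,s')$ the complement of $F^p_k$ yields the starting inequality
\[
\abs*{\wb p_h^{k-1}(s'\mid s,a) - p_h(s'\mid s,a)} \leq 2\sqrt{\frac{\VAR\br*{\wb p_h^{k-1}(s'\mid s,a)}\,L^p_\delta}{n_h^{k-1}(s,a)\vee 1}} + \tfrac{14}{3}\frac{L^p_\delta}{n_h^{k-1}(s,a)\vee 1},
\]
which is literally the one appearing in the proof of Lemma~\ref{lemma: probability ball around p}, with $L^p_\delta = L_{\delta,p} = \ln(6SAHK/\delta)$. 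The statement therefore follows by the identical argument, which I sketch below.

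First I would upper bound the empirical variance by the empirical mean, using $\VAR(\wb p) = \wb p(1-\wb p) \leq \wb p$, so that (abbreviating $\wb p := \wb p_h^{k-1}(s'\mid s,a)$, $p := p_h(s'\mid s,a)$ and $n := n_h^{k-1}(s,a)$) the radius is controlled by $2\sqrt{\wb p\, L^p_\delta/(n\vee 1)} + \tfrac{14}{3}L^p_\delta/(n\vee 1)$. The crux is then to convert this bound, which features the \emph{empirical} probability inside the square root, into one featuring the \emph{true} probability $p$, as needed for the later telescoping/pigeonhole sums over state visits to behave well.

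The key step is a self-bounding argument. Writing $x := \abs{\wb p - p}$ and $a := \sqrt{L^p_\delta/(n\vee 1)}$, the inequality reads $x \leq 2a\sqrt{\wb p} + \tfrac{14}{3}a^2$. Since $\wb p \leq p + x$ and $\sqrt{p+x}\leq \sqrt{p} + \sqrt{x}$, this gives $x \leq 2a\sqrt{p} + 2a\sqrt{x} + \tfrac{14}{3}a^2$, a quadratic inequality in $\sqrt{x}$. Solving it for $\sqrt{x}$ and squaring yields $x \leq 4a\sqrt{p} + \tfrac{40}{3}a^2$; substituting back $a = \sqrt{L^p_\delta/(n\vee 1)}$ produces exactly the claimed bound with constants $C_1 = 4$ and $C_2 = 40/3$ (any absolute constants suffice for the downstream analysis).

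The main obstacle is precisely this variance-conversion/self-bounding step: the empirical Bernstein inequality naturally produces the \emph{empirical} variance, and replacing it by the true quantity $p$ cannot be done by direct substitution but requires the quadratic resolution above. Everything else — invoking the good event and bounding the variance by the mean — is immediate, and the whole argument is standard (cf.~\citealt{jin2019learning}).
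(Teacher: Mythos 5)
Your proposal is correct and takes essentially the same route as the paper: the paper states this lemma without proof, relying (as for the identical Lemma~\ref{lemma: probability ball around p} in the \cucrl section) on conditioning on the shared transition failure event $F^p$ and citing \citep[Lem.~8]{jin2019learning}, whose content is exactly your empirical-Bernstein-to-true-probability conversion via the self-bounding quadratic step. Your constants check out (e.g., bounding $2a\sqrt{x}\leq 2a^2 + x/2$ gives $x \leq 4a\sqrt{p} + \tfrac{40}{3}a^2$), and the only mismatch --- your bound carries $n_h^{k-1}(s,a)$ while the statement writes $n_h^{k}(s,a)$ --- is an indexing typo already present in the paper's own statement.
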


\subsection{Optimism}\label{supp optimism cucrl}

\begin{lemma}[Per-State Optimism.]\label{lemma: per state optims bonus} Conditioning on the good event, for any $\pi, s,a,h,k, i\in[I]$ it holds that
\begin{align*}
        \wt c_h(s,a) - c_h(s,a) - \sum_{s'} (p_h-\wb p_{h}^{k-1})(s'\mid s,a)V^\pi_{h+1}(s' ;c,p)\leq 0,
\end{align*}
and
\begin{align*}
        \wt d_h(s_h,a_h) - d_h(s_h,a_h) - \sum_{s'} (p_h-\wb p_{h}^{k-1})(s'\mid s_h,a_h)V^\pi_{h+1}(s' ;d_i,p)\leq 0.
\end{align*}
\end{lemma}
\begin{proof}
For any $s,a,h,k$, conditioning on the good event,
\begin{align}
        \wb c_h(s,a) - c_h(s,a) - b_{h,k}^c(s,a) &\leq  \underbrace{|\wb c_h(s,a) - c_h(s,a)|}_{\leq \beta_{h,k}^c(s,a)} -b_{h,k}^c(s,a) \leq 0
    % &\leq  \sqrt{\frac{2 L_\delta}{ n_k^h(s,a)\vee 1}} - \beta_{h,k}^c(s,a) = 0, 
        \label{eq: per staet optimism relation 1}
\end{align}
by the choice of the bonus $b_{h,k}^c$.

Furthermore, for any $s,a,h,k$
\begin{align}
    &(p_h-\wb p_{h}^{k-1})(\cdot\mid s,a)V^\pi_{h+1}(c) -b^p_{h,k}(s,a) \nonumber \\
    &\leq \sum_{s'}\abs*{(p_h-\wb p_{h}^{k-1})(s'\mid s,a)}\abs*{V^\pi_{h+1}(s';d_i)} -b^p_{h,k}(s,a) \nonumber\\
    &\leq H\sum_{s'}\abs*{(p_h-\wb p_{h}^{k-1})(s'\mid s,a)} -b^p_{h,k}(s,a) \nonumber\\
    &\leq 2H\sum_{s'}\sqrt{\frac{\wb p_h^{k-1}(s' \mid s,a) L_{p,\delta}}{n_h^{k}(s,a) \vee 1}} +H \frac{14L_{p,\delta}}{3 \br*{(n_h^{k}(s,a) - 1) \vee 1}}  -b^p_{h,k}(s,a) \nonumber\\
    & = b^p_{h,k}(s,a) -b^p_{h,k}(s,a) = 0, \label{eq: per staet optimism relation 2}
\end{align}
where the forth relation holds conditioning on the good event, and the fifth relation by the choice of the bonus $b^p_{h,k}(s,a)$.

Combining \eqref{eq: per staet optimism relation 1} and \eqref{eq: per staet optimism relation 2} we get that
\begin{align*}
        \wt c_h(s,a) - c_h(s,a) - (p_h-\wb p_{h}^{k-1})(\cdot\mid s,a)V^\pi_{h+1}(\cdot ;c,p)\leq 0.
\end{align*}

Repeating this analysis while replacing $c,\wt c_k$ with $d_i, \wt d_{i,k}$ we conclude the proof of the lemma. 
\end{proof}

\begin{lemma}[Optimism] \label{lemma: bonus lp optimism}
Conditioning on the good event, for any $\pi,s,h,k,i$ it holds that  (i) $V^{\pi}_h(s;\wt c_k,\wb p_k) \leq V^{\pi}_h(s;c,p)$, and, (ii) $V^{\pi}_h(s;\wt d_{i}^k,\wb p_k) \leq V^{\pi}_h(s;d_i,p)$.
\end{lemma}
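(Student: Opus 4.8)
The plan is to prove both inequalities by backward induction on $h$, using the per-state optimism of Lemma~\ref{lemma: per state optims bonus} as the single-step ingredient. I will detail only part (i); part (ii) follows identically after replacing $c, \wt{c}_k$ by $d_i, \wt{d}_i^k$ throughout. The base case is immediate: at $h = H+1$ both value functions are defined to be zero, so the inequality holds with equality. For the inductive step, I assume $V^\pi_{h+1}(s';\wt{c}_k,\wb{p}_k) \leq V^\pi_{h+1}(s';c,p)$ for every $s'$ and expand both sides via the Bellman recursion. It is worth emphasizing that, unlike the truncated evaluation discussed in Remark~\ref{remark: worst performance of bonus curcl}, \cucbvi uses the \emph{untruncated} evaluation, so the recursion holds with equality and the telescoping below is exact.

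Taking the difference of the two recursions gives
\[
V^\pi_h(s;\wt{c}_k,\wb{p}_k) - V^\pi_h(s;c,p) = \sum_a \pi_h(a|s)\Big[\big(\wt{c}_h^k(s,a) - c_h(s,a)\big) + \sum_{s'}\wb{p}_h^{k-1}(s'|s,a) V^\pi_{h+1}(s';\wt{c}_k,\wb{p}_k) - \sum_{s'} p_h(s'|s,a) V^\pi_{h+1}(s';c,p)\Big].
\]
The key step is to add and subtract $\sum_{s'}\wb{p}_h^{k-1}(s'|s,a) V^\pi_{h+1}(s';c,p)$ inside the bracket, splitting the transition term into two pieces: the term $\sum_{s'}\wb{p}_h^{k-1}(s'|s,a)\big[V^\pi_{h+1}(s';\wt{c}_k,\wb{p}_k) - V^\pi_{h+1}(s';c,p)\big]$, which is $\leq 0$ by the induction hypothesis together with $\wb{p}_h^{k-1}(\cdot|s,a) \geq 0$; and the term $\sum_{s'}(\wb{p}_h^{k-1} - p_h)(s'|s,a) V^\pi_{h+1}(s';c,p)$. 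Grouping this second piece with the cost difference yields exactly
\[
\wt{c}_h^k(s,a) - c_h(s,a) - \sum_{s'}(p_h - \wb{p}_h^{k-1})(s'|s,a) V^\pi_{h+1}(s';c,p) \leq 0,
\]
which is precisely the inequality supplied by Lemma~\ref{lemma: per state optims bonus}. Since $\pi_h(\cdot|s)$ is a probability distribution, the whole expression is a convex combination of non-positive terms, hence non-positive, closing the induction.

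I do not anticipate a genuine obstacle: this is a routine optimistic backward induction, and the heavy lifting (controlling the per-step error by the bonus) has already been done in Lemma~\ref{lemma: per state optims bonus}. The only points requiring care are bookkeeping ones, namely (i) matching the sign conventions so that the transition-error contribution lines up exactly with the $(p_h - \wb{p}_h^{k-1})$ term appearing in the per-state lemma, and (ii) invoking the untruncated Bellman recursion so that the add-and-subtract decomposition is an identity rather than merely an inequality.
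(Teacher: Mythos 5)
Your proof is correct and follows essentially the same route as the paper: the paper applies the value difference lemma (Lem.~\ref{lemma: value difference}) to write $V^{\pi}_1(s_1;\wt c_k,\wb p_{k-1}) - V^{\pi}_1(s_1;c,p)$ as an expectation of exactly the per-step terms $\wt c_h - c_h - (p_h - \wb p_h^{k-1})V^\pi_{h+1}(\cdot;c,p)$, each of which is non-positive by Lem.~\ref{lemma: per state optims bonus}, and your backward induction is just this telescoping carried out inline rather than invoked as a black box. Your side remarks (untruncated Bellman recursion for \cucbvi, and the sign-matching with the per-state lemma) are also accurate.
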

\begin{proof}
For any $k\in [K]$ we have that
\begin{align*}
   & V^{\pi}(s_1;\wt c_k,\wb p_k) - V^{\pi}(s_1;c,p)\\
   & = \E\brs*{ \sum_{h=1}^H \wt c_h(s_h,a_h) - c_h(s_h,a_h) - (p_h-\wb p_{h}^{k-1})(\cdot\mid s_h,a_h)V^\pi_{h+1}(\cdot ;c,p) \Big| s_1,\pi,\wb p_{k-1}}
%    & =\E\brs*{ \sum_{h=1}^H \wb c_h(s_h,a_h) - c_h(s_h,a_h) - b_{h,k}^c(s_h,a_h)
%  \Big | s_1,\pi,\wb p_k}\\   
%    &\quad + \E\brs*{ \sum_{h=1}^H  (p_h-\wb p_{h,k})(\cdot\mid s_h,a_h)V^\pi_{h+1}(\cdot ;c,p) -b^p_{h,k}(s_h,a_h) \Big | s_1,\pi,\wb p_k},
\end{align*}
where we used the value difference lemma (see Lem.~\ref{lemma: value difference}).
% the first relation holds by the value difference lemma (see Lem.~\ref{lemma: value difference}).
Applying the first statement of Lem.~\ref{lemma: per state optims bonus} which hold for any $s,a,h,k$ (conditioning on the good event) we conclude the proof of the first claim.

% For any $s,a,h,k$, conditioning on the good event,
% \begin{align*}
%     \wb d_h(s,a) - d_h(s,a) - b_{h,k}(s,a) &\leq  |\wb d_h(s,a) - d_h(s,a)| - b_{h,k}(s,a)\\
%     &\leq  \sqrt{\frac{2 L_\delta}{ n_k^h(s,a)\vee 1}} - b_{h,k}(s,a) = 0,
% \end{align*}
% by the choice of the bonus $b_{h,k}$.

% Furthermore, for any $s,a,h,k$
% \begin{align*}
%     &(p_h-\wb p_{h,k})(\cdot\mid s,a)V^\pi_{h+1}(d_i) -b^p_{h,k}(s,a)\\
%     &\leq \sum_{s'}\abs*{(p_h-\wb p_{h,k})(s'\mid s,a)}\abs*{V^\pi_{h+1}(s';d_i)} -b^p_{h,k}(s,a)\\
%     &\leq H\sum_{s'}\abs*{(p_h-\wb p_{h,k})(s'\mid s,a)} -b^p_{h,k}(s,a)\\
%     &\leq 2H\sum_{s'}\sqrt{\frac{\wb p_h^{k}(s' \mid s,a) L_{p,\delta}}{n_h^{k}(s,a) \vee 1}} +H \frac{14L_{p,\delta}}{3 \br*{(n_h^{k}(s,a) - 1) \vee 1}}  -b^p_{h,k}(s,a)\\
%     & = b^{pd,k}_h(s,a) -b^p_{h,k}(s,a) = 0,
% \end{align*}
% by the choice of the bonus $b^p_{h,k}(s,a)$.

The second claim follows by the same analysis on the difference $V^{\pi}_h(s;\wt d_{i}^k,\wb p_{k-1}) - V^{\pi}_h(s;d_i,p)$, \ie using the value difference lemma and the second claim in Lem.~\ref{lemma: per state optims bonus}. 
% The second claim follows by the same analysis while replacing $V^\pi(s_1;c,p)$ and $V^\pi(s_1;\wt c_k,\wb p_k)$ by $V^\pi(s_1;d_i,p)$ and $\wb V^\pi(s_1,\wt d_{k,i},\wb p_k)$ for all $i\in[I]$ and applying the second claim of Lemma~\ref{lemma: per state optims bonus}.
\end{proof}

The following lemma shows that the problem solved by \cucbvi is always feasible. This lemma follows the same idea used to prove the feasibility for \cucrl (see Lem.~\ref{lemma:ucrl_optcmdp_pi_star_is_feasible}).
\begin{lemma}[$\pi^\star$ is Feasible Policy.]\label{lemma: bonus based optimism pi star is feasible} Conditioning on the good event, $\pi^\star$ is a feasible policy for any $k\in [K]$, i.e., 
\begin{align*}
        \pi^*\in \brc*{\pi\in \Delta_A^S: \wt D_k q^\pi(\wb p_{k-1})\leq \alpha}.
\end{align*}
%where $\wt{D} =[\wt d^k_1, \ldots, \wt d_I^k]^\top$ and $\wt d_i^k$ is defined as in~\eqref{eq:bonusalg_cd}.
\end{lemma}
\begin{proof}
        Denote $\Pi_D = \brc{\pi: D q^\pi(p)\leq \alpha}$ as the set of policies which does not violate the constraint on the true model. Furthermore, let $$\Pi^k_D = \brc{\pi: \wt D_k q^\pi(\wb p_{k-1})\leq \alpha}$$ be the set of policies which do not violate the constraint w.r.t. all possible models at the $k^{th}$ episode. 

Conditioning on the good event, by Lem.~\ref{lemma: bonus lp optimism} $D q^\pi(p)\leq \alpha$ implies that $\wt D_k q^\pi(\wb p_{k-1})\leq \alpha$. Thus,
\begin{align}
\Pi_D \subseteq \Pi^k_D. % \label{eq: constraint optimism}
\end{align}

Since $\pi^*\in \Pi_D$ it implies that $\pi^*\in \Pi^k_D.$

\end{proof}

From the two lemmas we arrive to the following  corollary as
\begin{corollary}\label{corollary: bonus optimsm constraint problem optimism}
        Conditioning on the good event (i) $ V^{\pi_k}_1(s_1;\wt c_k,\wb p_{k-1})\leq V_1^\star(s_1)$, and, (ii) $ V^{\pi_k}_1(s_1;\wt c_k,\wb p_{k-1})\leq V_1^{\pi_k}(s_1;c,p)$.
\end{corollary}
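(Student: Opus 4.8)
The plan is to reproduce the structure of the proof of Corollary~\ref{corollary: ucrl constraint problem optimism}, now feeding in the two lemmas just established for \cucbvi: the optimism lemma (Lem.~\ref{lemma: bonus lp optimism}) and the feasibility lemma (Lem.~\ref{lemma: bonus based optimism pi star is feasible}). The only structural difference from the \cucrl case is that there is no inner minimization over the transition model, since \cucbvi plans against the single empirical model $\wb p_{k-1}$; this shortens the chain of inequalities. Claim (ii) is immediate: applying part (i) of Lem.~\ref{lemma: bonus lp optimism} with $\pi=\pi_k$, $s=s_1$ and $h=1$ yields $V^{\pi_k}_1(s_1;\wt c_k,\wb p_{k-1}) \leq V^{\pi_k}_1(s_1;c,p)$, which is exactly (ii).

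For claim (i), I would first write the optimal value as a constrained minimization over the true feasible set $\Pi_D = \brc*{\pi : D q^\pi(p) \leq \alpha}$, namely $V_1^\star(s_1) = \min_{\pi \in \Pi_D} c^\top q^\pi(p)$, using the value-to-occupancy identity~\eqref{eq:prelim_v_to_sa_dist}. Then I chain two inequalities, both of which push the value downward. First, invoking Lem.~\ref{lemma: bonus lp optimism}(i) \emph{pointwise over every} $\pi$ lets me replace $c^\top q^\pi(p) = V^\pi_1(s_1;c,p)$ by the smaller optimistic quantity $\wt c_k^\top q^\pi(\wb p_{k-1}) = V^\pi_1(s_1;\wt c_k,\wb p_{k-1})$, so the minimum over $\Pi_D$ can only decrease. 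Second, by Lem.~\ref{lemma: bonus based optimism pi star is feasible} we have $\Pi_D \subseteq \Pi^k_D = \brc*{\pi : \wt D_k q^\pi(\wb p_{k-1}) \leq \alpha}$, so minimizing the same objective over the larger set $\Pi^k_D$ decreases the value again. The resulting minimum is by definition the value attained by $\pi_k$, i.e. $V^{\pi_k}_1(s_1;\wt c_k,\wb p_{k-1})$, giving (i).

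The argument is essentially bookkeeping, so I do not anticipate a genuine obstacle; the single point requiring care is the \emph{direction} of the two inequalities, which must both lower the value (optimism lowers the per-policy objective, enlarging the feasible set lowers the constrained minimum). One must also ensure the optimism lemma is used \emph{uniformly over all feasible policies}, not merely $\pi_k$: it is precisely the bound $V^\pi_1(s_1;\wt c_k,\wb p_{k-1}) \leq V^\pi_1(s_1;c,p)$ valid for every $\pi$ that justifies substituting the optimistic objective \emph{inside} the minimum before the feasible-set inclusion is invoked.
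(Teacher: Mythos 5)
Your proposal is correct and follows essentially the same argument as the paper: claim (ii) is read off directly from Lemma~\ref{lemma: bonus lp optimism}, and claim (i) is obtained by chaining the same two monotone steps (pointwise optimism of the objective via Lemma~\ref{lemma: bonus lp optimism} and enlargement of the feasible set via Lemma~\ref{lemma: bonus based optimism pi star is feasible}), then identifying the resulting minimum with $V^{\pi_k}_1(s_1;\wt c_k,\wb p_{k-1})$ by the definition of $\pi_k$ in~\eqref{eq:bonusalg_optprob}. The only difference is that you apply optimism before the set inclusion while the paper does the reverse; since both operations only decrease the constrained minimum, the order is immaterial.
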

\begin{proof}

The following relations hold.
\begin{align*}
    V^*(s_1) = &\min_{\pi\in \Delta_A^S} \brc*{c^T q^\pi(p) \mid  \pi\in \Pi_D}\\
    \geq &\min_{\pi\in \Delta_A^S} \brc*{c^T q^\pi(p) \mid  \pi\in \Pi^k_D}\\
        =& \min_{\pi\in \Delta_A^S} \brc*{c^T q^\pi(p) \mid  \wt D_k  q^\pi(\wb p_{k-1}) \leq \alpha}\\
        \geq & \min_{\pi\in \Delta_A^S} \brc*{\wt c^T_k  q^{\pi}(\wb p_{k-1}) \mid \wt D_k  q^\pi(\wb p_{k-1}) \leq \alpha} = V_1^{\pi_k}(s_1;\wt c_k,\wb p_{k-1}).    
\end{align*}
The second relation holds by Lem.~\ref{lemma: bonus based optimism pi star is feasible} and the forth relation holds by Lem.~\ref{lemma: bonus lp optimism}.

% The second result follows
\end{proof}

\subsection{Proof of Theorem~\ref{theorem: bonus based opt constraint RL}}

In this section, we establish the following regret bounds for \cucbvi algorithm.

\TheoremBonusOptLP*

Unlike the proof of the \cucbvi algorithm (Thm.~\ref{theorem: optimistic model constraint RL}), the value function is not constraint to be within $[0,H]$ . However, since the bonus is bounded, the estimated value function is bounded in the range of $[-\sqrt{S}H^2,H]$. Although this discrepency, in the following we are able to reach similar dependence in $\sqrt{K}$. The fact the estimated value is bounded in \cucbvi differently then in \cucrl results in worse constant term as Thm.~\ref{theorem: bonus based opt constraint RL} exhibits (see Remark~\ref{remark: worst performance of bonus curcl}).

\begin{proof}
We start by conditioning on the good event. By Lem.~\ref{lemma: ucrl failure events}, it holds with probability at least~$1-\delta.$
We now analyze the regret relatively to the cost $c$. The following relations hold for any $K'\in[K]$:
\begin{align*}
        \Regret_+(K';c) &= \sum_k \brs*{V_1^{\pi_k}(s_1;c,p) - V_1^\star(s_1;c,p)}_+ \leq \sum_k \brs*{ V_1^{\pi_k}(s_1;c,p) - V_1^{\pi_k}(s_1;\wt c_k, \wb p_{k-1})}_+\\
                        &= \sum_k  V_1^{\pi_k}(s_1;c,p) - V_1^{\pi_k}(s_1;\wt c_k, \wb p_{k-1})\\
    % &= \sum_k \E\brs*{\sum_{h=1}^H c_h(s_h,a_h) - \wt c_h(s_h,a_h)  + (p_h- \wt p_{h,k})(\cdot \mid s_h,a_h) \wt{V}^{\pi_k}_{h+1}\mid s_1,\pi_k,P}\\
    &\leq \Olog\br*{\sqrt{S\mathcal{N}H^4K} + S^2 H^{4}A(\mathcal{N}H+S)}.
\end{align*}

The second and third relations hold by optimism, see Cor.~\ref{corollary: bonus optimsm constraint problem optimism}. The forth relation holds by Lem.~\ref{lemma: on policy errors bonus based optimistic}. See that assumptions 1,2,3 of Lem.~\ref{lemma: on policy errors bonus based optimistic} are satisfied conditioning on the good event. Assumption 4 of Lem.~\ref{lemma: on policy errors bonus based optimistic} holds by the optimism of the value estimate (see Lem.~\ref{lemma: bonus lp optimism}). Assumption 5 of Lem.~\ref{lemma: on policy errors bonus based optimistic} holds by Lem.~\ref{lemma: per state optims bonus}.

We now turn to prove the regret bound on the constraint violation. For any $i\in[I]$ and $K'\in[K]$ the following relations hold.
\begin{align*}
    \sum_{k=1}^{K'}  \brs*{V^{\pi_k}_1(s_1;d_i) - \alpha}_+ &= \sum_{k=1}^K  \brs*{\underbrace{V^{\pi_k}_1(s_1;d_i,p) - \wb V^{\pi_k}_1(s_1;d_i)}_{\geq 0}+\underbrace{\wb V^{\pi_k}_1(s_1;d_i) - \alpha }_{\leq 0}}_+ \\
                                                            &\leq \sum_{k=1}^K  V^{\pi_k}_1(s_1;d_i,p) - V^{\pi_k}_1(s_1; \wt d_i^k, \wb p_{k-1})\\ 
    % - \wb V^{\pi_k}_1(s_1;d_i) \\
    % & \leq \sum_k \E[\sum_{t=1}^H d_i(s_h,a_h) - \wt d_i(s_h,a_h) + (p-\wt p_{h,k})(\cdot \mid s_h,a_h)\wb V^{\pi_k}_{h+1}(d_i)\mid s_1,\pi_k,P]\\
    &\leq \Olog\br*{\sqrt{S\mathcal{N}H^4K} + S^2 H^{4}A(\mathcal{N}H+S)}.
\end{align*}

The first relation holds since $V^{\pi_k}_1(s_1; \wt d_i^k, \wb p_{k-1}) \leq \alpha$ as the optimization problem solved in every episode is feasible, see Lem.~\ref{lemma: bonus based optimism pi star is feasible}.
% ; by Lemma~\ref{lemma: bonus based optimism pi star is feasible} at every episode $k$ there exists at least one feasible solution -- the optimal policy $\pi^*$ -- at every episode. 
Furthermore, by optimism $V_1^{\pi_k}(s_1;\wt d_{i}^{k},\wt p_k)\leq V_1^{\pi_k}(s_1;d_i,p)$ (see the first relation of Lem.~\ref{lemma: bonus lp optimism}). The third relation holds by applying Lem.~\ref{lemma: on policy errors bonus based optimistic}. See that assumptions 1,2,3 of Lem.~\ref{lemma: on policy errors bonus based optimistic} are satisfied conditioning on the good event (see also Lem.~\ref{lemma: bonus based probability ball around p}).

\end{proof}

\section{Constraint MDPs Dual Approach}\label{supp: dual optimistic model full proof}
In this section, we establish regret guarantees for \dualcmdp by proving Theorem~\ref{theorem: dual optimistic CMDP}. Unlike both previous sections,  \dualcmdp does not require an LP solver, but repeatedly solves MDPs with uncertainty in their transition model.

Before supplying the proof of Theorem~\ref{theorem: dual optimistic CMDP} we formally define the set of good events which we show holds with high probability. Conditioning on the good, we establish the optimism of \dualcmdp and then regret bounds for \dualcmdp. The regret bound of \dualcmdp relies on results from constraint convex optimization with some minor adaptations which we establish in Appendix~\ref{supp: convex optimization review}.

\subsection{Definitions}

We introduce a notation that will be used across the proves of this section. Following this notation allows us to apply generic results from convex optimization to the problem.

\begin{itemize}
    \item The optimistic and true constraints valuation are denoted by
\begin{align*}
    &\wt g_k = ( \wt D_k q^{\pi_k}(\wt p_k) -\alpha)\\
    &g_k = (D q^{\pi_k}(p) -\alpha).
\end{align*}
\item The optimistic value, true value, and optimal value are denoted by
\begin{align*}
    &\wt f_k = \wt c_k^T q^{\pi_k}(\wt p_k)\\
    & f_k = c^T {q}^{\pi_k}\\
    & f_{opt} = V_1^*(s_1) = c^T {q}^{*}.
\end{align*}

\end{itemize}

\subsection{Failure Events} \label{supp: dual optimistic model good events}

We define the same set of good events as for \dualcmdp (Appendix~\ref{supp: ucrl model optimistic good events}). We restate this set here for convenience.
%\begin{align*}
%    &F_k^p=\brc*{\exists s,a,s',h:\ \abs{ p_h(s' \mid s,a) - \wb p_h^{k}(s'\mid s,a)}\geq 2\sqrt{\frac{\wb p_h^k(s'\mid s,a))\ln\br*{\frac{2SAHK}{\delta'}}}{n_h^k(s,a)\vee 1}} + \frac{14 \ln\br*{\frac{2SAHK}{\delta'}}}{3(n_h^k(s,a)-1\vee 1)} }\\
%    &F^N_k = \brc*{\exists s,a,h: n_h^{k-1}(s,a) \le \frac{1}{2} \sum_{j<k} q_h^{\pi_k}(s,a\mid p)-H\ln\frac{SAH}{\delta'}}\\
%    &F^{c}_{k} = \brc*{\exists s,a,h: |\wb{c}^k_h(s,a)- c_h(s,a) |\geq \sqrt{\frac{2\ln\frac{2SAH(I+1)K}{\delta'}}{n_{k}(s,a) \vee	1}}}\\
%    &F^{d}_{k} = \brc*{\exists s,a,h,i\in [I]: |\wb{d}^k_{i,h}(s,a)- d_{i,h}(s,a) |\geq \sqrt{\frac{2\ln\frac{2SAH(I+1)K}{\delta'}}{n_{k}(s,a) \vee	1}}}.
%\end{align*}
%
\begin{align*}
    &F_k^p=\brc*{\exists s,a,s',h:\ \abs{ p_h(s' \mid s,a) - \wb p_h^{k-1}(s'\mid s,a)}\geq \beta^p_{h,k}(s,a,s')
    %2\sqrt{\frac{\wb p_h^k(s'\mid s,a))\ln\br*{\frac{2SAHK}{\delta'}}}{n_h^k(s,a)\vee 1}} + \frac{14 \ln\br*{\frac{2SAHK}{\delta'}}}{3(n_h^k(s,a)-1\vee 1)} 
    }\\
    &F^N_k = \brc*{\exists s,a,h: n_h^{k-1}(s,a) \le \frac{1}{2} \sum_{j<k} q_h^{\pi_k}(s,a\mid p)-H\ln\frac{SAH}{\delta'}}\\
    &F^{c}_{k} = \brc*{\exists s,a,h: |\wb{c}^k_h(s,a)- c_h(s,a) |\geq \beta_{h,k}^c(s,a)}\\
    %\sqrt{\frac{2\ln\frac{2SAH(I+1)K}{\delta'}}{n_{k}(s,a) \vee	1}}}\\
    &F^{d}_{k} = \brc*{\exists s,a,h,i\in [I]: |\wb{d}^k_{i,h}(s,a)- d_{i,h}(s,a) |\geq \beta_{i,h,k}^d(s,a) }
    % \sqrt{\frac{2\ln\frac{2SAH(I+1)K}{\delta'}}{n_{k}(s,a) \vee	1}}}.
\end{align*}

As in Appendix~\ref{supp: ucrl model optimistic good events} the union of these events hold with probability greater than $1-\delta$. 

\begin{lemma}[Good event of \dualcmdp]\label{lemma: dual cmdp failure events}
Setting $\delta'=\frac{\delta}{3}$ then $\Pr\brc{\wb G}\leq \delta$ where $${\wb G = F^c \bigcup F^d \bigcup F^p\bigcup F^N}.$$ When the failure events does not hold we say the algorithm is outside the failure event, or inside the good event $G$ which is the complement of $\wb G$.
\end{lemma}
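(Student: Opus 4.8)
The plan is to recognize that this lemma is identical in content to the good-event lemma already established for \cucrl (Lem.~\ref{lemma: ucrl failure events}), since \dualcmdp uses exactly the same empirical estimators~\eqref{eq:empirica_model} and the same confidence radii $\beta^c,\beta^d,\beta^p$. The failure events $F^c_k, F^d_k, F^p_k, F^N_k$ are defined verbatim as in App.~\ref{supp: ucrl model optimistic good events}, so the individual probability bounds derived there carry over unchanged. Concretely, I would reuse the three facts: $\Pr\{F^{cd}\}\leq\delta'$ (Hoeffding together with a union bound over $s,a,h$, all constraints $i\in[I]$, all episodes $k\in[K]$, and all realized values of the visit count $n^{k-1}_h(s,a)$), $\Pr\{F^p\}\leq\delta'$ (empirical Bernstein via Thm.~4 of~\citet{maurer2009empirical}, with the same choice $\delta''=\delta'/(SAHK)^2$), and $\Pr\{F^N\}\leq\delta'$ (Cor.~E.4 of~\citet{dann2017unifying}).

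The proof itself is then a single union bound:
\begin{align*}
    \Pr\brc*{\wb G} = \Pr\brc*{F^c\cup F^d\cup F^p\cup F^N} \leq \Pr\brc*{F^{cd}} + \Pr\brc*{F^p} + \Pr\brc*{F^N} \leq 3\delta',
\end{align*}
and substituting $\delta'=\delta/3$ yields $\Pr\{\wb G\}\leq\delta$, which is exactly the claim.

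The only point that genuinely requires checking — and the one I would emphasize — is that these concentration bounds remain valid under the \emph{adaptive} policy sequence produced by \dualcmdp, which differs from \cucrl in how $\pi_k$ is selected (solving an optimistic Lagrangian MDP rather than an extended LP). This causes no difficulty because each $\pi_k$, and hence the occupancy measure $q^{\pi_k}$, is $\mathcal{F}_{k-1}$-measurable, and none of the three underlying arguments exploits any particular rule for choosing $\pi_k$: the cost and constraint-cost bounds take a union over every possible value of $n^{k-1}_h(s,a)$, so they hold uniformly over the random realized count, while the visit-count bound of~\citet{dann2017unifying} is a martingale concentration stated for arbitrary $\mathcal{F}_{k-1}$-adapted policies. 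Consequently no step of the argument breaks when the trajectories are collected by \dualcmdp rather than \cucrl, and there is no substantive obstacle beyond this bookkeeping observation.
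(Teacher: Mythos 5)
Your proposal is correct and follows essentially the same route as the paper: the paper proves this lemma simply by noting that the failure events and confidence radii are identical to those of \cucrl, so the three bounds $\Pr\brc*{F^{cd}}\leq\delta'$, $\Pr\brc*{F^p}\leq\delta'$, $\Pr\brc*{F^N}\leq\delta'$ from App.~\ref{supp: ucrl model optimistic good events} carry over, and a union bound with $\delta'=\delta/3$ gives the claim. Your additional observation that the concentration arguments are insensitive to the policy-selection rule (requiring only $\mathcal{F}_{k-1}$-measurability of $\pi_k$) is a valid and worthwhile clarification of why the transfer is legitimate, but it does not change the substance of the argument.
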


\begin{lemma}\label{lemma: dual probability ball around p}
Conditioned on the basic good event, for all $k,h,s,a,s'$ there exists constants $C_1,C_2>0$ for which we have that 
$$ \abs*{\wb p_h^{k-1}(s' \mid s,a) - p_h(s' \mid s,a)} =  C_1\sqrt{\frac{ p_h(s' \mid s,a) L_{\delta,p}}{ n_h^{k}(s,a) \vee 1}} + \frac{C_2 L_{\delta,p}}{n_h^{k}(s,a) \vee 1}, $$
where $ L_{\delta,p} = \ln\br*{\frac{6SAHK}{\delta}}.$
\end{lemma}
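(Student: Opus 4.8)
The plan is to observe that this lemma is a verbatim restatement of Lemma~\ref{lemma: probability ball around p} (established for \cucrl) and Lemma~\ref{lemma: bonus based probability ball around p} (for \cucbvi), so the same argument applies; I will nonetheless outline the substantive steps. First I would condition on the good event of Lemma~\ref{lemma: dual cmdp failure events}, under which the failure event $F^p$ does not occur. By the definition of $F^p$ and the empirical Bernstein radius $\beta^p_{h,k}$ in~\eqref{eq:supp_betas}, this immediately yields
$$\abs{\wb p_h^{k-1}(s'\mid s,a) - p_h(s'\mid s,a)} \leq 2\sqrt{\frac{\wb p_h^{k-1}(s'\mid s,a)(1-\wb p_h^{k-1}(s'\mid s,a)) L_{\delta,p}}{n_h^{k-1}(s,a)\vee 1}} + \frac{14 L_{\delta,p}}{3(n_h^{k-1}(s,a)\vee 1)},$$
that is, a bound in terms of the \emph{empirical} variance. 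The goal is to replace the empirical probability $\wb p$ inside the square root by the \emph{true} probability $p$, at the cost of enlarging the additive lower-order term.

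The key step is a self-bounding argument. Writing $\epsilon \eqdef \abs{\wb p_h^{k-1}(s'\mid s,a) - p_h(s'\mid s,a)}$ and using $\wb p(1-\wb p)\leq \wb p \leq p_h(s'\mid s,a) + \epsilon$ together with subadditivity $\sqrt{x+y}\leq\sqrt{x}+\sqrt{y}$, the displayed bound becomes
$$\epsilon \leq 2\sqrt{\frac{p_h(s'\mid s,a) L_{\delta,p}}{n_h^{k-1}(s,a)\vee 1}} + 2\sqrt{\frac{\epsilon L_{\delta,p}}{n_h^{k-1}(s,a)\vee 1}} + \frac{14 L_{\delta,p}}{3(n_h^{k-1}(s,a)\vee 1)}.$$
This is a self-referential inequality: viewing it as $\epsilon \leq A\sqrt{\epsilon} + B$ with $A = 2\sqrt{L_{\delta,p}/(n_h^{k-1}(s,a)\vee 1)}$ and $B$ the remaining two terms, I would solve the quadratic in $\sqrt{\epsilon}$ to get $\sqrt{\epsilon}\leq A + \sqrt{B}$, hence $\epsilon \leq 2A^2 + 2B$. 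Collecting terms produces exactly a bound of the claimed form $C_1\sqrt{p_h(s'\mid s,a) L_{\delta,p}/(n_h^{k}(s,a)\vee 1)} + C_2 L_{\delta,p}/(n_h^{k}(s,a)\vee 1)$ for absolute constants $C_1,C_2$.

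The only remaining bookkeeping concerns the off-by-one in the visitation counts: the concentration radius is stated with $n_h^{k-1}$ whereas the lemma is phrased with $n_h^{k}$; since the two differ by at most the single visit to $(s,a)$ at step $h$ during episode $k$, this discrepancy is absorbed into the constants $C_1,C_2$ (and the statement is to be read with $\leq$ rather than $=$). I expect the quadratic self-bounding step to be the only nontrivial part; everything else is a direct consequence of the empirical Bernstein bound holding on the good event, exactly as in Lemma~\ref{lemma: probability ball around p} and Lemma~\ref{lemma: bonus based probability ball around p}.
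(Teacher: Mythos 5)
Your proposal is correct and takes essentially the same route as the paper: the paper disposes of this lemma (and its identical twins, Lemmas~\ref{lemma: probability ball around p} and~\ref{lemma: bonus based probability ball around p}) by noting that it follows from the good event $F^p$ and citing Lem.~8 of \citet{jin2019learning}, and your self-bounding argument---bounding $\wb p(1-\wb p)\leq p+\epsilon$ under the square root and solving the resulting quadratic in $\sqrt{\epsilon}$---is precisely the argument behind that citation. Your two bookkeeping observations (the ``$=$'' in the statement should be read as ``$\leq$'', and the $n_h^{k-1}$ versus $n_h^{k}$ mismatch costs at most a factor of $2$ absorbed into $C_1,C_2$) are also correct.
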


% \subsection{Optimism}

\subsection{Proof of Theorem~\ref{theorem: dual optimistic CMDP}}
 
In this section, we establish the following regret bound for \dualcmdp.
 
\theoremDualOptimisticModel*

We start by proving several useful lemmas on which the proof is based upon.
\begin{lemma}[Dual Optimism] \label{lemma: incremental constraint to optimality}
Conditioning on the good event, for any $k\in[K]$
\begin{align*}
    \wt f_k - f_{opt} \leq  -\lambda_k^T \wt g_k
\end{align*}
\end{lemma}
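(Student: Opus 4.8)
The plan is to prove the equivalent inequality $\wt f_k + \lambda_k^T \wt g_k \le f_{opt}$ and to recognize that its left-hand side is exactly the value of the Lagrangian objective minimized by \dualcmdp. Indeed, expanding the definitions,
\[
\wt f_k + \lambda_k^T \wt g_k = \wt c_k^T q^{\pi_k}(\wt p_k) + \lambda_k^T\br*{\wt D_k q^{\pi_k}(\wt p_k) - \alpha} = \br*{\wt c_k + \wt D_k^T \lambda_k}^T q^{\pi_k}(\wt p_k) - \lambda_k^T \alpha,
\]
which is precisely the objective of the policy-update step evaluated at its own minimizer $(\pi_k, \wt p_k)$. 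Hence the left-hand side equals $\min_{\pi\in\Pi^{\text{MR}},\, p'\in B^p_k} \br*{(\wt c_k + \wt D_k^T \lambda_k)^T q^\pi(p') - \lambda_k^T \alpha}$.

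Next I would bound this minimum from above by evaluating the objective at a single well-chosen admissible pair, namely the true optimal policy $\pi^\star$ together with the true model $p$. Conditioning on the good event, $p\in B^p_k$, so $(\pi^\star, p)$ is feasible in the minimization and
\[
\wt f_k + \lambda_k^T \wt g_k \le \wt c_k^T q^{\pi^\star}(p) + \lambda_k^T \wt D_k q^{\pi^\star}(p) - \lambda_k^T \alpha.
\]
I would then control the two groups of terms separately using the same componentwise optimism that underlies Lemma~\ref{lemma:ucrl_lp_optimism}: since $\wt c_k \le c$ and $\wt D_k \le D$ entrywise and $q^{\pi^\star}(p)\ge 0$, the first term satisfies $\wt c_k^T q^{\pi^\star}(p) \le c^T q^{\pi^\star}(p) = f_{opt}$, while $\lambda_k^T \wt D_k q^{\pi^\star}(p) \le \lambda_k^T D q^{\pi^\star}(p)$ because $\lambda_k\ge 0$. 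Finally, feasibility of $\pi^\star$ for the true CMDP gives $D q^{\pi^\star}(p)\le \alpha$, hence $\lambda_k^T D q^{\pi^\star}(p) - \lambda_k^T \alpha \le 0$, again by $\lambda_k\ge 0$. Combining these bounds yields $\wt f_k + \lambda_k^T \wt g_k \le f_{opt}$, which is the claim.

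There is essentially no deep obstacle here: the lemma is a structural consequence of optimism combined with the saddle-point form of the update, and every analytic ingredient (that $p\in B^p_k$, that $\wt c_k \le c$ and $\wt D_k \le D$ componentwise, and that $\pi^\star$ is feasible) is already available on the good event from the \cucrl analysis. The only points requiring care are bookkeeping: ensuring the comparison model is the \emph{true} $p$ rather than $\wt p_k$, and tracking the sign of $\lambda_k$ so that both the relaxation $\wt D_k \le D$ and the feasibility inequality $D q^{\pi^\star}(p)\le\alpha$ are used in the direction that lowers the objective.
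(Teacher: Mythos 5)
Your proof is correct and is essentially the paper's own argument run in the opposite direction: the paper starts from $f_{opt}$, inserts the nonpositive term $\lambda_k^T(Dq^{\pi^\star}(p)-\alpha)$, and lower-bounds by the optimistic Lagrangian minimum, while you start from the minimum, evaluate at the feasible pair $(\pi^\star,p)$, and bound upward to $f_{opt}$. All the ingredients (the minimizer interpretation of $(\pi_k,\wt p_k)$, $p\in B^p_k$ on the good event, componentwise optimism $\wt c_k\le c$, $\wt D_k\le D$, and feasibility of $\pi^\star$ with $\lambda_k\ge 0$) coincide with those in the paper's proof.
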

\begin{proof}
We have that
\begin{align*}
    f_{opt} = c^T q^{\pi^*}(p) &\geq c^T q^{\pi^*}(p) + \lambda_k^T (Dq^{\pi^*}(p)-\alpha) \\
    &\geq \min_{\pi\in \Delta_A^S, p'\in \mathcal{P}_{k} }  \wt c_{k}^T q^{\pi}(p') + \lambda_k^T (\wt D_{k}q^{\pi}(p')-\alpha)\\
    & = \wt c_{k}^T q^{\pi_k}(\wt p_k) + \lambda_k^T (\wt D_{k}q^{\pi_k}(\wt p_k)-\alpha)\\
    &= \wt f_k +  \lambda_k^T \wt g_k.
\end{align*}
The first relation holds since $\pi^*$ satisfies the constraint (Assumption~\ref{assum: feasability}) which implies that $(Dq^{\pi^*}(p)-\alpha)\leq 0$, and that $\lambda_k\geq0$ by the update rule. The second relation holds since conditioning on the good event the true model is contained in $B^p_{k}$ as well as $\wt c_{k} \leq c$. 
% \begin{align*}
%   \wt f_k - f_{opt} \leq -\lambda_k^T (LCB(D) \wt q^{\pi_k}-\alpha) = -\lambda_k^T \wt g_k.
% \end{align*}
\end{proof}

\begin{restatable}
[Update Rule Recursion Bound]{lemma}{LemmaDualUpdateRuleRecursion}\label{lemma: incremental alg recusrion relation}
For any $\lambda\in \mathbb{R}_+^I$ and $K'\in[K]$
\begin{align*}
    \sum_{k=1}^{K'}\br*{-\wt g_k^T\lambda_{k}}  + \sum_{k=1}^{N}\wt g_k^T \lambda \leq  \frac{t_\lambda}{2}\norm{\lambda_{1} - \lambda}^2_2 + \frac{1}{2t_\lambda}\sum_{k=1}^{K'}\norm{\wt g_k}^2
\end{align*}
\end{restatable}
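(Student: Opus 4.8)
This is the standard regret analysis of projected subgradient ascent on the dual variable, so the plan is to run the textbook one-step-expansion-plus-telescoping argument, specialized to the update rule of Algorithm~\ref{alg: dual optimistic model cmdp}. The key observation is that the dual update $\lambda_{k+1} = \brs*{\lambda_k + \frac{1}{t_\lambda}\wt g_k}_+$ is exactly Euclidean projection of $\lambda_k + \frac{1}{t_\lambda}\wt g_k$ onto the convex set $\mathbb{R}_+^I$, where $\wt g_k$ plays the role of the (super)gradient and $1/t_\lambda$ the constant step size. Since the comparator $\lambda$ in the statement is assumed to lie in $\mathbb{R}_+^I$, it belongs to the projection set, which is precisely what makes the nonexpansiveness argument applicable.

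First I would invoke the nonexpansiveness of the Euclidean projection onto $\mathbb{R}_+^I$: because $\lambda \in \mathbb{R}_+^I$ and $\lambda_{k+1}$ is the projection of $\lambda_k + \frac{1}{t_\lambda}\wt g_k$, we have $\norm{\lambda_{k+1} - \lambda}^2 \leq \norm{\lambda_k + \frac{1}{t_\lambda}\wt g_k - \lambda}^2$. Expanding the right-hand side gives $\norm{\lambda_k - \lambda}^2 + \frac{2}{t_\lambda}\wt g_k^T(\lambda_k - \lambda) + \frac{1}{t_\lambda^2}\norm{\wt g_k}^2$. Rearranging to isolate the inner-product term and multiplying through by $\frac{t_\lambda}{2}$ yields the per-episode inequality
\[
    \wt g_k^T(\lambda - \lambda_k) \leq \frac{t_\lambda}{2}\br*{\norm{\lambda_k - \lambda}^2 - \norm{\lambda_{k+1} - \lambda}^2} + \frac{1}{2t_\lambda}\norm{\wt g_k}^2,
\]
where I note that $\wt g_k^T(\lambda - \lambda_k) = -\wt g_k^T\lambda_k + \wt g_k^T\lambda$, matching the two sums in the claim.

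Next I would sum this inequality over $k = 1, \ldots, K'$. The difference of squared distances telescopes to $\norm{\lambda_1 - \lambda}^2 - \norm{\lambda_{K'+1} - \lambda}^2$, and since $\norm{\lambda_{K'+1} - \lambda}^2 \geq 0$ we may drop the final term to obtain the bound $\frac{t_\lambda}{2}\norm{\lambda_1 - \lambda}^2$ on the first group of terms, while the $\frac{1}{2t_\lambda}\norm{\wt g_k}^2$ terms simply accumulate into $\frac{1}{2t_\lambda}\sum_{k=1}^{K'}\norm{\wt g_k}^2$. This gives exactly the stated inequality (reading the $\sum_{k=1}^N$ in the claim as $\sum_{k=1}^{K'}$).

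There is essentially no deep obstacle here; the lemma is purely a convex-analytic fact about projected subgradient steps and does not use any CMDP structure, optimism, or the good event. The only points that require a moment of care are (i) verifying that the $[\,\cdot\,]_+$ operation genuinely coincides with Euclidean projection onto $\mathbb{R}_+^I$, so that nonexpansiveness applies, and (ii) checking that the constant step size $1/t_\lambda$ is what produces the clean telescoping with the single $\frac{t_\lambda}{2}$ prefactor. Both are immediate, so I would keep the write-up short and defer any discussion of how $\wt g_k$ and $\norm{\wt g_k}$ are ultimately bounded to the downstream application of this recursion.
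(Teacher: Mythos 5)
Your proposal is correct and is essentially identical to the paper's own proof: the paper likewise writes $\lambda = [\lambda]_+$ and uses nonexpansiveness of $[\cdot]_+$ (i.e., Euclidean projection onto $\mathbb{R}_+^I$), expands the square, sums so the squared distances telescope, and drops the nonnegative $\norm{\lambda_{K'+1}-\lambda}_2^2$ term. The only difference is cosmetic (rearranging before versus after summing), and you correctly read the statement's $\sum_{k=1}^{N}$ as $\sum_{k=1}^{K'}$.
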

\begin{proof}

For any $\lambda \in \mathbb{R}_+^I$ by the update rule we have that
\begin{align*}
    \norm{\lambda_{k+1} - \lambda}^2_2 &= \norm{[\lambda_{k} + \frac{1}{t_\lambda}\wt g_k]_+ - [\lambda]_+}^2_2\\
    &\leq \norm{\lambda_{k} + \frac{1}{t_\lambda}\wt g_k - \lambda}^2_2\\
    &=\norm{\lambda_{k} - \lambda}_2^2 + \frac{2}{t_\lambda}\wt g_k^T(\lambda_{k} - \lambda) + \frac{1}{t_\lambda^2}\norm{\wt g_k}^2.
\end{align*}
Summing this relation for $k\in [K']$ and multiplying both sides by $t_\lambda/2$ we get
\begin{align}
     - \frac{t_\lambda}{2}\norm{\lambda_{1} - \lambda}^2_2 &\leq \frac{t_\lambda}{2}\norm{\lambda_{K'+1} - \lambda}^2_2- \frac{t_\lambda}{2}\norm{\lambda_{1} - \lambda}^2_2 \nonumber \\
     &\leq \sum_{k=1}^{K'}\wt g_k^T(\lambda_{k} - \lambda) + \frac{1}{2t_\lambda}\sum_{k=1}^{K'}\norm{\wt g_k}^2.\nonumber
\end{align}

Rearranging we get,
\begin{align*}
    \sum_{k=1}^{N}\br*{-\wt g_k^T\lambda_{k}}  + \sum_{k=1}^{N}\wt g_k^T \lambda \leq  \frac{t_\lambda}{2}\norm{\lambda_{1} - \lambda}^2_2 + \frac{1}{2t_\lambda}\sum_{k=1}^{K'} \norm{\wt g_k}^2 
\end{align*}
for any $\lambda\in \mathbb{R}_+^I$.

\end{proof}

We are now ready to establish Theorem~\ref{theorem: dual optimistic CMDP}.
\begin{proof}
Plugging Lemma~\ref{lemma: incremental constraint to optimality} into Lemma~\ref{lemma: incremental alg recusrion relation}  we get
\begin{align*}
    &\sum_{k=1}^{K'}\br*{\wt f_k - f_{opt}}  + \sum_{k=1}^{K'}\wt g_k^T \lambda \leq \sum_{k=1}^{K'}\br*{-\wt g_k^T\lambda_{k}}  + \sum_{k=1}^{K'}\wt g_k^T \lambda \leq  \frac{t_\lambda}{2}\norm{\lambda_1-\lambda}_2^2 + \frac{1}{2t_\lambda}\sum_{k=1}^{K'}\norm{\wt g_k}^2.
\end{align*}

Adding, subtracting $\sum_{k=1}^{K'} g_k^T \lambda, \sum_{k=1}^{K'} f_k$ and rearranging we get
\begin{align}
    &\sum_{k=1}^{K'}\br*{f_k - f_{opt}}  + \sum_{k=1}^{K'} g_k^T \lambda \nonumber\\
    &\leq \frac{t_\lambda}{2}\norm{\lambda_1-\lambda}_2^2 +  \frac{1}{2t_\lambda}\sum_{k=1}^{K'}\norm{\wt g_k}^2 + \sum_{k=1}^{K'} (g_k-\wt g_k)^T \lambda + \sum_{k=1}^{K'} (f_k-\wt f_k) \nonumber \\
    &\leq \frac{t_\lambda}{2}\norm{\lambda_1-\lambda}_2^2 +  \frac{1}{2t_\lambda}\sum_{k=1}^{K'}\norm{\wt g_k}^2 + \sqrt{\sum_{i=1}^I\br*{\sum_{k=1}^{K'} (g_{k,i}-\wt g_{k,i})}^2} \norm{\lambda}_2 + \sum_{k=1}^{K'} (f_k-\wt f_k) \label{eq: central bound constraint and optimality}
\end{align}
for any $\lambda\in \mathbb{R}^I_+$, where the last relation holds by Cauchy Schwartz inequality.

We now bound each term in~\eqref{eq: central bound constraint and optimality}. Notice that $\wt{g}_{k,i} = V^{\pi_k}(s_1;\wt d_{k,i},\wt p_k) - \alpha_i\in[-L^c_\delta H,H]$ (where $L_\delta = 2\ln\br*{\frac{6SAH(I+1)K}{\delta}}$); it is a value function defined on an MDP with immediate cost in $[-L^c_\delta H,H]$ and $\alpha\in[0,H]$. Thus, we have that
\begin{align*}
    \frac{1}{2t_\lambda}\sum_{k=1}^{K'}\norm{\wt g_k}^2\lesssim \frac{H^2IK}{2t_\lambda}.
\end{align*}

Applying Lemma~\ref{lemma: on policy errors optimistic model} (see that assumptions \emph{(a)}, \emph{(b)} and \emph{(c)} hold conditioning on the good event), we get that
\begin{align*}
    &\abs*{\sum_{k=1}^{K'} (f_k-\wt f_k)} = \abs*{\sum_{k=1}^{K'} (V^{\pi_k}(s_1;c,p)-\wh V^{\pi_k}(s_1;\wt c_k,\wb p_k)}  \leq \Olog\br*{\sqrt{S\mathcal{N}H^4K} + (\sqrt{\mathcal{N}}+H)H^2SA }\\
    &\abs*{\sum_{k=1}^{K'} (g_{k,i}-\wt g_{k,i})} = \abs*{\sum_{k=1}^{K'} (V^{\pi_k}(s_1;d_i,p)-\wh V^{\pi_k}(s_1;\wt d_{k,i},\wb p_k)} \leq \Olog\br*{\sqrt{S\mathcal{N}H^4K} + (\sqrt{\mathcal{N}}+H)H^2SA },
\end{align*}
which implies that
\begin{align*}
    \sqrt{\sum_{i=1}^I\br*{\sum_{k=1}^{K'} (g_{k,i}-\wt g_{k,i})}^2}\leq \Olog\br*{\sqrt{IS\mathcal{N}H^4K} + (\sqrt{\mathcal{N}}+H)\sqrt{I}H^2SA}.
\end{align*}
Plugging these bounds back into~\eqref{eq: central bound constraint and optimality} and setting $t_\lambda = \sqrt{\frac{H^2IK}{\rho^2}}$ we get
\begin{align}
    &\sum_{k=1}^{K'}\br*{f_k - f_{opt}}  + \sum_{k=1}^{K'} g_k^T \lambda \nonumber\\
    &\quad\lesssim (\rho+\frac{\norm{\lambda}_2^2}{\rho})\sqrt{H^2IK} +\br*{\sqrt{IS\mathcal{N}H^4K} + (\sqrt{\mathcal{N}}+H)\sqrt{I}H^2SA}\norm{\lambda}_2 \nonumber\\
    &\quad\quad + \br*{\sqrt{S\mathcal{N}H^4K} + (\sqrt{\mathcal{N}}+H)H^2SA } \label{eq: dual central second bound constraint and optimality},
\end{align}
for any $\lambda\in \mathbb{R}_+^I$.

\paragraph{First claim of Theorem~\ref{theorem: dual optimistic CMDP}.} Setting $\wb \lambda=0$ (see that $\wb \lambda\in \mathbb{R}^I_+$)  in~\eqref{eq: dual central second bound constraint and optimality} we get
% \begin{align}
%     \sum_{k=1}^{K'}\br*{f_k - f_{opt}} \leq   \underbrace{\frac{1}{2t_\lambda}\sum_{k=1}^{K'}\norm{\wt g_k}^2}_{(i)} + \underbrace{\sum_{k=1}^{K'} (f_k-\wt f_k)}_{(ii)}. \label{eq: dual approach relation 1 opt}
% \end{align}
% To bound $(i)$ we bound $\wt g_k$ as follows
% \begin{align*}
%     \norm{\wt g_k}^2 = \sum_{i=1}^I (\wt d_{i,h}^k q^{\pi_k}(\wt p_k)-\alpha)^2 =\sum_{i=1}^I (V_1^{\pi_k}(s_1;\wt d_i^k, \wt p_k)-\alpha)^2 \lesssim IH^2, 
% \end{align*}
% since $\abs*{V_1^{\pi_k}(s_1;\wt d_i^k, \wt p_k)}\lesssim  H$, it is a value of an MDP with costs $\abs{d_{i,h}^k(s,a)}\lesssim 1$, and $\alpha\in[0,H]$ by the feasibility of $\pi^*$. Thus, using $K'\leq K$,
% \begin{align}
%     (i)\lesssim \frac{KIH^2}{2t_\lambda} \lesssim \sqrt{KIH^2}  \label{eq: dual approach relation 2 opt},
% \end{align}
% where in the last relation we set $t_\lambda = \sqrt{KIH^2}$.

% To bound (ii) we apply Lemma~\ref{lemma: on policy errors optimistic model} and get
% \begin{align}
%     (ii) = \sum_{k=1}^{K'} V^{\pi_k}(s_1;c,p)-  V^{\pi_k}(s_1;\wt c_k,\wt p_k) \leq \Olog\br*{\sqrt{S\mathcal{N}H^4K} + (\sqrt{\mathcal{N}}+H)H^2SA }.\label{eq: dual approach relation 3 opt}
% \end{align}
% See that assumptions 1,2,3 of Lemma~\ref{lemma: on policy errors optimistic model} holds conditioning on the good event.

% Plugging \eqref{eq: dual approach relation 2 opt}, \eqref{eq: dual approach relation 3 opt} into~\eqref{eq: dual approach relation 1 opt} we get
\begin{align*}
    \sum_{k=1}^{K'} V^{\pi_k}(s_1;c,p) - V^*(s_1) = \sum_{k=1}^{K'}f_k - f_{opt} \lesssim \Olog\br*{\sqrt{S\mathcal{N}H^4K} +\rho\sqrt{H^2IK}  + (\sqrt{\mathcal{N}}+H)H^2SA }.
\end{align*}

\paragraph{Second claim of Theorem~\ref{theorem: dual optimistic CMDP}.} Fix $i\in [I]$ and let 
\begin{align*}
    \wb \lambda_i = \begin{cases}
            \rho e_i & \brs{\sum_{k=1}^{K'} g_{i,k}}_+ \neq 0\\
            0 & \text{otherwise},
    \end{cases}
\end{align*}
where $e_i(i)= 1$ and $e_i(j)=0$ for $j\neq i$, and $\rho$ is given in Assumption~\ref{assum: slater point}. See that $\wb \lambda_i\in \mathbb{R}_+^I$ and that, by the definition,
\begin{align}
    \norm{ \wb \lambda_i}_2^2\leq \rho^2 \label{eq: bound on lambda i norm}
\end{align}
Setting $\lambda=\wb \lambda_i$ in~\eqref{eq: dual central second bound constraint and optimality} we get
\begin{align*}
    &\sum_{k=1}^{K'}\br*{f_k - f_{opt}}  + \rho \brs*{\sum_{k=1}^{K'} g_{i,k}}_+ \\
    &\leq \Olog\br*{(1+\rho)\br*{\sqrt{IS\mathcal{N}H^4K} + \sqrt{H^2IK}+ (\sqrt{\mathcal{N}}+H)\sqrt{I}H^2SA}} \eqdef \epsilon(K).
\end{align*}
Since the bound holds for any $i\in [I]$ we get that
\begin{align*}
    &\max_{i\in [I]} \sum_{k=1}^{K'}\br*{f_k - f_{opt}}  + \rho \brs*{\sum_{k=1}^{K'} g_{i,k}}_+=   \sum_{k=1}^{K'}\br*{f_k - f_{opt}}  + \rho \max_{i\in [I]}\brs*{\sum_{k=1}^{K'} g_{i,k}}_+ \\
    & =  \sum_{k=1}^{K'}\br*{f_k - f_{opt}}  + \rho \max_{i\in [I]}\abs*{\brs*{\sum_{k=1}^{K'} g_{i,k}}_+}\\
    & =  \sum_{k=1}^{K'}\br*{f_k - f_{opt}}  + \rho \norm*{\brs*{\sum_{k=1}^{K'} g_{k}}_+}_\infty \leq \epsilon(K).
\end{align*}
Now, by the convexity of the state-action frequency (see Proposition~\ref{prop: convexity of state action frequencey}) function there exists a policy $\pi_{K'}$ which satisfies $q^{\pi_{K'}}(p) =\frac{1}{K'}\sum_{k=1}^{K'}q^{\pi_k}(p)$ for any $K'$. Since both $f$ and $g$ are linear in $\frac{1}{K'}\sum_{k=1}^{K'}q^{\pi_k}(p)$ we have that
\begin{align*}
    \frac{1}{K'}\br*{\sum_{k=1}^{K'}\br*{f_k - f_{opt}}  + \rho \norm*{\brs*{\sum_{k=1}^{K'} g_k}_+}_2} = f_{\pi_{K'}} - f_{opt} + \rho \norm*{\brs*{g_{\pi_{K'}}}_+}_2 \leq \frac{1}{K'}\epsilon(K).
\end{align*}

Applying Corollary~\ref{corollary: bound on a dual variable} and Theorem~\ref{theorem: thoeorem constraint bound beck} we conclude that
\begin{align*}
    \max_{i\in[I]} \brs*{\sum_{k=1}^{K'} g_k}\leq \max_{i\in[I]} \brs*{\brs*{\sum_{k=1}^{K'} g_k}_+} = \norm*{\brs*{\sum_{k=1}^{K'} g_k}_+}_\infty \leq \frac{\epsilon(K)}{\rho},
\end{align*}
for any $K'\in [K]$.
\end{proof}

\begin{remark}[Convexity of the RL Objective Function]
Although it is common to refer to the objective function in RL as non-convex, in the state action visitation polytope the objective is linear and, hence, convex (however, the problem is constraint to the state action visitation polytope). Thus, we can use Theorem~\ref{theorem: thoeorem constraint bound beck} and Cor.~\ref{corollary: bound on a dual variable} which are valid for constraint convex problems.
\end{remark}

\section{Constraint MDPs Primal Dual Approach}\label{supp: primal dual bonus optimistic model full proof}

In this section we establish regret guarantees for \primaldualcmdp by proving Theorem~\ref{theorem: primal dual optimistic CMDP}. Unlike for \dualcmdp, \primaldualcmdp requires an access to a (truncated) policy estimation algorithm which returns $\wh Q^\pi_h(s,a;\wt c_k,\wb p_k),Q^\pi_h(s,a;\wt d_{k,i},\wb p_k)$, i.e., the $Q$-function w.r.t. to the empirical transition model and optimistic cost and constraint cost. This reduces the computational complexity of \primaldualcmdp. However, it results in worse performance guarantees relatively to \dualcmdp.

Before supplying the proof of Theorem~\ref{theorem: primal dual optimistic CMDP} we formally define the set of good events which we show holds with high probability. Conditioning on the good, we establish the optimism of \primaldualcmdp and then regret bounds for \primaldualcmdp. The regret bounds of \primaldualcmdp relies on results from constraint convex optimization with some minor adaptations which we establish in Appendix~\ref{supp: convex optimization review}.

\subsection{Failure Events} \label{supp: primal dual optimistic model good events}

We define the same set of good events as for UCRL-OptCMDP (Appendix~\ref{supp: ucrl model optimistic good events}). We restate this set here for convenience.
%\begin{align*}
%    &F_k^p=\brc*{\exists s,a,s',h:\ \abs{ p_h(s' \mid s,a) - \wb p_h^{k}(s'\mid s,a)}\geq 2\sqrt{\frac{\wb p_h^k(s'\mid s,a))\ln\br*{\frac{2SAHK}{\delta'}}}{n_h^k(s,a)\vee 1}} + \frac{14 \ln\br*{\frac{2SAHK}{\delta'}}}{3(n_h^k(s,a)-1\vee 1)} }\\
%    &F^N_k = \brc*{\exists s,a,h: n_h^{k-1}(s,a) \le \frac{1}{2} \sum_{j<k} q_h^{\pi_k}(s,a\mid p)-H\ln\frac{SAH}{\delta'}}\\
%    &F^{c}_{k} = \brc*{\exists s,a,h: |\wb{c}^k_h(s,a)- c_h(s,a) |\geq \sqrt{\frac{2\ln\frac{2SAH(I+1)K}{\delta'}}{n_{k}(s,a) \vee	1}}}\\
%    &F^{d}_{k} = \brc*{\exists s,a,h,i\in [I]: |\wb{d}^k_{i,h}(s,a)- d_{i,h}(s,a) |\geq \sqrt{\frac{2\ln\frac{2SAH(I+1)K}{\delta'}}{n_{k}(s,a) \vee	1}}}.
%\end{align*}
\begin{align*}
    &F_k^p=\brc*{\exists s,a,s',h:\ \abs{ p_h(s' \mid s,a) - \wb p_h^{k-1}(s'\mid s,a)}\geq \beta^p_{h,k}(s,a,s')
    %2\sqrt{\frac{\wb p_h^k(s'\mid s,a))\ln\br*{\frac{2SAHK}{\delta'}}}{n_h^k(s,a)\vee 1}} + \frac{14 \ln\br*{\frac{2SAHK}{\delta'}}}{3(n_h^k(s,a)-1\vee 1)} 
    }\\
    &F^N_k = \brc*{\exists s,a,h: n_h^{k-1}(s,a) \le \frac{1}{2} \sum_{j<k} q_h^{\pi_k}(s,a\mid p)-H\ln\frac{SAH}{\delta'}}\\
    &F^{c}_{k} = \brc*{\exists s,a,h: |\wb{c}^k_h(s,a)- c_h(s,a) |\geq \beta_{h,k}^c(s,a)}\\
    %\sqrt{\frac{2\ln\frac{2SAH(I+1)K}{\delta'}}{n_{k}(s,a) \vee	1}}}\\
    &F^{d}_{k} = \brc*{\exists s,a,h,i\in [I]: |\wb{d}^k_{i,h}(s,a)- d_{i,h}(s,a) |\geq \beta_{i,h,k}^d(s,a) }
    % \sqrt{\frac{2\ln\frac{2SAH(I+1)K}{\delta'}}{n_{k}(s,a) \vee	1}}}.
\end{align*}

As in Appendix~\ref{supp: ucrl model optimistic good events} the union of these events hold with probability greater than $1-\delta$. 

\begin{lemma}[Good event of \primaldualcmdp]\label{lemma: primal dual cmdp failure events}
Setting $\delta'=\frac{\delta}{3}$ then $\Pr\brc{\wb G}\leq \delta$ where $${\wb G = F^c \bigcup F^d \bigcup F^p\bigcup F^N}.$$ When the failure events does not hold we say the algorithm is outside the failure event, or inside the good event $G$ which is the complement of $\wb G$.
\end{lemma}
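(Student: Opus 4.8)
The plan is to observe that this lemma is \emph{verbatim} identical to the good-event lemma already established for \cucrl (Lem.~\ref{lemma: ucrl failure events}): the failure events $F^p, F^N, F^c, F^d$ are defined through the very same empirical estimates~\eqref{eq:empirica_model} and the same confidence radii $\beta^p_{h,k}, \beta^c_{h,k}, \beta^d_{i,h,k}$ from~\eqref{eq:supp_betas}. Since \primaldualcmdp uses exactly these counters and empirical averages (the exploration bonus only reshapes the costs fed to the optimizer, not the underlying concentration events), every bound derived in App.~\ref{supp: ucrl model optimistic good events} transfers without change. Thus the entire argument reduces to re-assembling three independent high-probability guarantees.

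Concretely, I would carry out the following steps. First, bound $\Pr\{F^{cd}\} = \Pr\{F^c \cup F^d\} \leq \delta'$ via Hoeffding's inequality applied to the bounded random variables $C_h(s,a), D_{i,h}(s,a) \in [0,1]$, followed by a union bound over all $(s,a,h)$, all admissible visit counts $n_h^{k-1}(s,a)$, all constraints $i \in [I]$, and all episodes $k \in [K]$; the case $n_h^{k-1}(s,a)=0$ is trivial since the costs lie in $[0,1]$. Second, bound $\Pr\{F^p\} \leq \delta'$ by invoking the empirical Bernstein inequality (Thm.~4 of~\citep{maurer2009empirical}) for each fixed $(s,a,s',h,k)$ and count value, choosing $\delta'' = \delta'/(SAHK)^2$ so that the union bound over all these indices absorbs into $\delta'$; the cases $n_h^{k-1}(s,a) \in \{0,1\}$ hold trivially. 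Third, bound $\Pr\{F^N\} \leq \delta'$ directly by citing Cor.~E.4 of~\citep{dann2017unifying}.

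Finally, I would combine the three pieces through a union bound, $\Pr\{\wb G\} \leq \Pr\{F^{cd}\} + \Pr\{F^p\} + \Pr\{F^N\} \leq 3\delta'$, and set $\delta' = \delta/3$ to conclude $\Pr\{\wb G\} \leq \delta$. There is no genuine obstacle here: the only point worth stressing is that the concentration events of \primaldualcmdp coincide with those of \cucrl, so no new analysis is required and the companion statement Lem.~\ref{lemma: dual probability ball around p} (the Bernstein-type pointwise bound on $|\wb p_h^{k-1} - p_h|$ conditioned on the good event) follows immediately as in~\citep[Lem.~8]{jin2019learning}.
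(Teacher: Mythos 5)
Your proposal is correct and takes essentially the same route as the paper: the paper likewise defines the failure events for \primaldualcmdp verbatim as those of \cucrl and simply invokes the analysis of App.~\ref{supp: ucrl model optimistic good events} (Hoeffding plus union bound for $F^c\cup F^d$, empirical Bernstein from \citet{maurer2009empirical} with $\delta''=\delta'/(SAHK)^2$ for $F^p$, and Cor.~E.4 of \citet{dann2017unifying} for $F^N$), concluding with a union bound and $\delta'=\delta/3$. Your additional remark that the exploration bonus only modifies the costs passed to the optimizer, leaving the concentration events untouched, is exactly the justification that makes this transfer valid.
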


\begin{lemma}\label{lemma: primal dual probability ball around p}
Conditioned on the basic good event, for all $k,h,s,a,s'$ there exists constants $C_1,C_2>0$ for which we have that 
$$ \abs*{\wb p_h^{k-1}(s' \mid s,a) - p_h(s' \mid s,a)} =  C_1\sqrt{\frac{ p_h(s' \mid s,a) L_{\delta,p}}{ n_h^{k}(s,a) \vee 1}} + \frac{C_2 L_{\delta,p}}{n_h^{k}(s,a) \vee 1}, $$
where $ L_{\delta,p} = \ln\br*{\frac{6SAHK}{\delta}}.$
\end{lemma}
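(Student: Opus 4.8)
This lemma is identical to Lemmas~\ref{lemma: probability ball around p},~\ref{lemma: bonus based probability ball around p} and~\ref{lemma: dual probability ball around p}: neither the statement nor the argument depends on the algorithm, only on the empirical-Bernstein form of the confidence radius $\beta^p_{h,k}$ in~\eqref{eq:supp_betas}. The plan is therefore to reproduce the standard self-bounding argument (as in~\citep{azar2017minimax,dann2017unifying,jin2019learning}). First I would condition on the good event, on which the failure event $F^p$ does not occur (Lemma~\ref{lemma: primal dual cmdp failure events}); hence for every $s,a,s',h,k$,
\begin{align*}
    \abs*{\wb p_h^{k-1}(s'\mid s,a) - p_h(s'\mid s,a)} \leq 2\sqrt{\frac{\text{Var}\br*{\wb p_h^{k-1}(s'\mid s,a)}\, L^p_\delta}{n_h^{k-1}(s,a)\vee 1}} + \frac{14 L^p_\delta}{3\br*{n_h^{k-1}(s,a)\vee 1}}.
\end{align*}
The goal is to replace the \emph{empirical} variance $\text{Var}\br*{\wb p_h^{k-1}(s'\mid s,a)} = \wb p_h^{k-1}(s'\mid s,a)\br*{1-\wb p_h^{k-1}(s'\mid s,a)}$ by the \emph{true} probability $p_h(s'\mid s,a)$, up to lower-order terms.

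Abbreviating $x := \abs*{\wb p_h^{k-1}(s'\mid s,a) - p_h(s'\mid s,a)}$, $p := p_h(s'\mid s,a)$, $n := n_h^{k-1}(s,a)\vee 1$, and $L := L^p_\delta$, the key step is to bound the empirical variance by the empirical mean, $\text{Var}(\wb p)\leq \wb p$, and then use $\wb p \leq p + x$ together with subadditivity of the square root, $\sqrt{\wb p}\leq \sqrt{p}+\sqrt{x}$. This turns the displayed inequality into the self-referential bound
\begin{align*}
    x \leq 2\sqrt{\frac{pL}{n}} + 2\sqrt{\frac{xL}{n}} + \frac{14L}{3n}.
\end{align*}
I would then absorb the middle term using AM--GM, $2\sqrt{xL/n}\leq \tfrac{1}{2}x + 2L/n$, and rearrange to move $\tfrac{1}{2}x$ to the left-hand side, yielding $x \leq 4\sqrt{pL/n} + \tfrac{40}{3}L/n$. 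This gives the claim with $C_1 = 4$ and $C_2 = 40/3$ (the discrepancy between $n_h^{k-1}$ and the $n_h^{k}$ written in the statement is immaterial and is absorbed into the constants).

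The only genuinely non-routine point is the self-bounding step: since the empirical-Bernstein radius depends on the empirical variance, which itself depends on $\wb p$, one cannot directly read off a bound in terms of $p$. Feeding the inequality back into itself (via $\sqrt{\wb p}\leq\sqrt p+\sqrt x$) and then closing it with AM--GM is precisely what removes this circularity; everything else is bookkeeping. I expect no further difficulty, and would simply note that the proof is verbatim that of Lemma~\ref{lemma: probability ball around p}.
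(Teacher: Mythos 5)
Your proof is correct and is essentially the paper's own argument: the paper establishes this lemma (and its identical copies, Lemmas~\ref{lemma: probability ball around p}, \ref{lemma: bonus based probability ball around p}, \ref{lemma: dual probability ball around p}) by appealing to the standard self-bounding conversion of the empirical-Bernstein radius~\citep[Lem.~8]{jin2019learning}, which is exactly what you reconstruct — bound $\mathrm{Var}(\wb p)$ by $\wb p$, feed $\wb p \le p + x$ back in via $\sqrt{\wb p}\le\sqrt{p}+\sqrt{x}$, and close the resulting self-referential inequality with AM--GM. Your constants $C_1=4$, $C_2=40/3$ and your handling of the $n_h^{k-1}$ versus $n_h^{k}$ indexing (a factor of $2$ absorbed into the constants) are both fine.
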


\subsection{Optimality and Optimism}

\begin{lemma}[On Policy Optimality.]\label{lemma: primal dual on policy optimality}
Conditioning on the good event, for any $k\in [K']$
\begin{align*}
    \sum_{k=1}^{K'}\wt f_k +  \lambda_k^T \wt g_k -f_{\pi^*} - \lambda_k^T g_{\pi^*}\leq \Olog( \sqrt{H^4(1+I\rho)^2 K})
\end{align*}
\end{lemma}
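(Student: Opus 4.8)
The plan is to read the left-hand side as the online (Mirror Descent / Hedge) regret of the policy player against the fixed comparator $\pi^*$, where at round $k$ the loss is the Lagrangian cost $\wt c_k + \sum_i \lambda_{k,i}\wt d_{k,i}$ evaluated under the empirical model $\wb p_{k-1}$. First I would rewrite both Lagrangian terms as weighted value functions. By the definition of $\wt f_k$ and $\wt g_k$ for \primaldualcmdp (truncated policy evaluation under $\wb p_{k-1}$), the sum $\wt f_k + \lambda_k^\top\wt g_k$ equals the value at $s_1$ of the weighted $Q$-function $Q^k_h$ driving the update, up to the constant $-\lambda_k^\top\alpha$; concretely $\wt f_k + \lambda_k^\top\wt g_k = \inner{Q^k_1(s_1,\cdot),\pi_{k,1}(\cdot\mid s_1)} - \lambda_k^\top\alpha$. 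For the comparator I would invoke optimism componentwise (the truncated-evaluation analogue of Lem.~\ref{lemma: bonus lp optimism}) together with $\lambda_k\geq 0$ to get $V^{\pi^*}_1(s_1;\wt c_k,\wb p_{k-1}) + \sum_i\lambda_{k,i}V^{\pi^*}_1(s_1;\wt d_{k,i},\wb p_{k-1}) \le f_{\pi^*} + \lambda_k^\top g_{\pi^*} + \lambda_k^\top\alpha$. Subtracting, each summand is bounded above by the difference, under the common model $\wb p_{k-1}$ and common loss, of the weighted values of $\pi_k$ and $\pi^*$. Keeping the same model for both policies is precisely what prevents any $\sqrt{S\mathcal{N}}$ estimation error from entering this lemma.

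Next I would apply the value difference lemma (Lem.~\ref{lemma: value difference}) componentwise in the cost and recombine with the weights $1,\lambda_{k,1},\dots,\lambda_{k,I}$; since Mirror Descent uses $Q^k_h$, which is exactly this weighted combination of the per-component $Q$-functions, this yields
\[ \wt f_k + \lambda_k^\top\wt g_k - f_{\pi^*} - \lambda_k^\top g_{\pi^*} \;\le\; \sum_h\sum_s q^{\pi^*}_h(s;\wb p_{k-1})\,\inner{Q^k_h(s,\cdot),\,\pi_{k,h}(\cdot\mid s)-\pi^*_h(\cdot\mid s)}, \]
where the truncation makes each componentwise identity an inequality in the favorable direction. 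For every fixed $(s,h)$ the iterates $\pi^k_h(\cdot\mid s)$ are the Hedge iterates with losses $Q^k_h(s,\cdot)$ and step $t_K$, so the standard per-state Mirror Descent inequality gives
\[ \inner{Q^k_h(s,\cdot),\,\pi_{k,h}(\cdot\mid s)-\pi^*_h(\cdot\mid s)} \le \frac{\dkl{\pi^*_h(\cdot\mid s)}{\pi^k_h(\cdot\mid s)}-\dkl{\pi^*_h(\cdot\mid s)}{\pi^{k+1}_h(\cdot\mid s)}}{t_K} + \frac{t_K}{2}\norm{Q^k_h(s,\cdot)}_\infty^2. \]

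The magnitude of $Q^k_h$ is controlled by the two design choices of the algorithm: truncation forces each evaluated $Q^{\pi_k}_h(\cdot;\wt c_k,\wb p_{k-1})$ and $Q^{\pi_k}_h(\cdot;\wt d_{k,i},\wb p_{k-1})$ to lie in $[0,H]$, and projecting the dual variable onto $\Lambda_\rho$ gives $\sum_i\lambda_{k,i}\le I\rho$; hence $\norm{Q^k_h(s,\cdot)}_\infty\le H(1+I\rho)$. Summing the per-state inequality over $k$ (the KL terms telescoping to at most $\dkl{\pi^*_h(\cdot\mid s)}{\pi^1_h(\cdot\mid s)}\le\log A$ under uniform initialization), over $h$, and over $s$ against the weights $q^{\pi^*}_h(\cdot;\wb p_{k-1})$ (which sum to one for each $h$), produces the two-term bound $\frac{H\log A}{t_K}+\frac{t_K}{2}KH^3(1+I\rho)^2$. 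Substituting the algorithm's step $t_K=\sqrt{2\log A/(H^2(1+I\rho)^2K)}$ balances the two terms and gives $\Olog(\sqrt{H^4(1+I\rho)^2K})$, as claimed.

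The hard part will be the telescoping in the previous step once the KL differences carry the \emph{$k$-dependent} occupancy weights $q^{\pi^*}_h(s;\wb p_{k-1})$: because evaluation uses the moving empirical model $\wb p_{k-1}$, the weight attached to $\dkl{\pi^*_h(\cdot\mid s)}{\pi^k_h(\cdot\mid s)}-\dkl{\pi^*_h(\cdot\mid s)}{\pi^{k+1}_h(\cdot\mid s)}$ varies with $k$ and the naive telescoping is not exact. Making this rigorous — either by exploiting that for each $h$ the weights form a probability distribution over $s$ and that the initial KL dominates, or by a summation-by-parts argument controlling how fast $q^{\pi^*}_h(\cdot;\wb p_{k-1})$ can change — is the crux of the lemma; the remaining ingredients (optimism, the value difference lemma, and the uniform bound $\norm{Q^k_h(s,\cdot)}_\infty\le H(1+I\rho)$) are routine.
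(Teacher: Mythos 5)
Your proposal shares the paper's key ingredients (the weighted $Q$-function, the per-state Hedge/KL analysis, the bound $\norm{Q_h^k(s,\cdot)}_\infty\leq H(1+I\rho)$ via truncation and the projection onto $\Lambda_\rho$, and the step-size balancing), but the structural choice you make at the start creates two gaps, one of which you do not flag. The paper does \emph{not} apply optimism to the comparator and then compare both policies under the common empirical model; it applies the extended value difference lemma (Lem.~\ref{lemma: extended value difference}) directly to $\sum_k \wt f_k+\lambda_k^T\wt g_k - f_{\pi^*}-\lambda_k^T g_{\pi^*}$, with the comparator's trajectory taken under the \emph{true} cost and \emph{true} model $p$. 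This yields two terms: the mirror-descent term $(i)$, whose state weights are $q^{\pi^*}_h(\cdot;p)$ and hence \emph{constant in $k$} (so one may swap $\sum_k$ with the expectation and apply the per-state OMD bound pointwise, as in Lem.~\ref{lemma: term 2 omd term}), and a Bellman-residual term $(ii)$ of the form $Q_h^k - c_h - \sum_i\lambda_{k,i}d_{i,h} - p_h V_{h+1}^k$, which Lem.~\ref{lemma: primal dual optimism} shows is non-positive \emph{pointwise}, so the distribution weighting it is irrelevant. Your route, by contrast, produces the MD term with weights $q^{\pi^*}_h(s;\wb p_{k-1})$ that drift with $k$ — you correctly identify that the KL telescoping then fails, but neither of your proposed fixes works: the issue is not whether the weights sum to one, and a summation-by-parts bound requires controlling $\max_k \dkl{\pi^*_h(\cdot\mid s)}{\pi^k_h(\cdot\mid s)}$, which for multiplicative-weights iterates grows like $K t_K \norm{Q}_\infty = \Theta(\sqrt{K\log A})$, so the Abel boundary/variation terms end up of order $K$ rather than $\sqrt{K}$.

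The second, unacknowledged gap is the sign of the truncation residual. After replacing the comparator by its optimistic truncated evaluation, you must bound $\wh V^{\pi_k}_1(\wt \ell_k,\wb p_{k-1}) - \wh V^{\pi^*}_1(\wt \ell_k,\wb p_{k-1})$, and the extended value difference lemma applied under the common model $(\wt\ell_k,\wb p_{k-1})$ gives the MD term \emph{plus} $\sum_h \E_{\pi^*,\wb p_{k-1}}\brs*{Q^k_h - \wt\ell_{k,h} - \wb p_{h}^{k-1} V^k_{h+1}}$. By the definition of truncation this residual equals $\brs*{-(\wt\ell_{k,h}+\wb p_h^{k-1}V^k_{h+1})}_+\geq 0$: it enters with the \emph{unfavorable} sign, contrary to your claim that truncation helps componentwise. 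Worse, it is weighted by $\pi^*$'s occupancy measure, i.e., by trajectories the algorithm need not visit, so it is bounded only by exploration bonuses in regions where the counts $n_h^k(s,a)$ may never grow; such a term can be linear in $K$ and no on-policy lemma (Lem.~\ref{lemma: on policy errors truncated policy estimation} controls errors only along $\pi_k$'s own trajectories) can rescue it. This is precisely why the paper anchors the residual to the true cost and true model, where per-state optimism makes it non-positive regardless of the weighting distribution: that single choice simultaneously fixes both the weight-drift problem and the residual-sign problem, and it is the missing idea in your argument.
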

\begin{proof}
By definition,
\begin{align*}
    &f_{\pi^*} + \lambda_k^T g_{\pi^*} = V^{\pi^*}_1(s_1;c,p) +\sum_{i=1}^I\lambda_{k,i} V^{\pi^*}_1(s_1;d_{i},p) - \sum_{i=1}^I\lambda_{k,i}\alpha_i\\
    &\wt f_k +  \lambda_k^T \wt g_k = \wh V^{\pi_k}_1(s_1; \wt c_k,\wb p_k) + \sum_{i=1}^I\lambda_{k,i} \wh V^{\pi_k}_1(s_1; \wt d_{k,i},\wb p_k)- \sum_{i=1}^I\lambda_{k,i}\alpha_i.
\end{align*}
Let
\begin{align*}
    &Q^{k}_h(s,a) \eqdef Q^{\pi_k}_h(s,a;\wt c_k, \wb{p}_{k-1}) + \sum_{i=1}^I\lambda_{k,i}Q^{\pi_k}_h(s,a;\wt d_{k,i}, \wb{p}_{k-1})\\
    &V^{k}_h(s_1) \eqdef \inner{Q^{k}_h(s,\cdot), \pi_h^k}.
\end{align*}
Applying the extended value difference lemma~\ref{lemma: extended value difference} we get that
\begin{align*}
    &\sum_{k=1}^{K'}\wt f_k +  \lambda_k^T \wt g_k -f_{\pi^*} - \lambda_k^T g_{\pi^*}\\
    &=\sum_{k=1}^{K'} V^{k}_1(s_1) - V^{\pi^*}_1(s_1; c+\lambda_k \wt d, p)\\
    & =\underbrace{\sum_{k=1}^K \sum_{h=1}^H \E \brs*{ \inner*{Q_h^k(s_h,\cdot), \pi_h^k (\cdot \mid s_h )-\pi^*_h(\cdot \mid s_h )} \mid s_1 = s_1, \pi^*,p}}_{(\romannumeral 1)} \nonumber\\
    & + \sum_{k=1}^K \sum_{h=1}^H \E   \brs*{\underbrace{Q_h^k(s_h,a_h) - c_h(s_h,a_h)- \sum_{i=1}^I\lambda_k d_{h,i}(s_h,a_h) - p_h(\cdot \mid s_h,a_h) V_{h+1}^{k}}_{(\romannumeral 2)} \mid s_1 = s_1,\pi^*, p}.
\end{align*}
To bound $(i)$, we apply Lemma~\ref{lemma: term 2 omd term} while setting $\pi=\pi^*$.
\begin{align}
    (i)= \sum_{k=1}^{K'}\sum_{h=1}^H \E \brs*{  \inner*{Q_h^k(s_h,\cdot), \pi_h^k (\cdot \mid s_h )-\pi^*_h(\cdot \mid s_h )} \mid s_1 = s_1, \pi^*,p}\lesssim \sqrt{H^4(1+I\rho)^2 K}, \label{eq: sum optimiality bound term 1}
\end{align}
To bound $(ii)$, observe that by Lemma~\ref{lemma: primal dual optimism} for all $s,a,h,k$ it holds that
\begin{align*}
    Q_h^k(s,a) - c_h(s,a)- \sum_{i=1}^I\lambda_k d_{h,i}(s,a) - p_h(\cdot \mid s,a) V_{h+1}^{k}\leq 0.
\end{align*}
This implies that 
\begin{align}
    (ii)\leq 0 \label{eq: sum optimiality bound term 2}
\end{align}
since $(ii)$ is an expectation over negative terms. Combining \eqref{eq: sum optimiality bound term 1} and \eqref{eq: sum optimiality bound term 2} we conclude that
\begin{align*}
    \sum_{k=1}^{K'}\wt f_k +  \lambda_k^T \wt g_k -f_{\pi^*} - \lambda_k^T g_{\pi^*} =\sum_{k=1}^{K'} V^{k}_1(s_1) - V^{\pi^*}_1(s_1; c+\lambda_k \wt d, p) \lesssim \sqrt{H^4(1+I\rho)^2 K}.
\end{align*}
\end{proof}

\begin{lemma}[Policy Estimation Optimism] \label{lemma: primal dual optimism}
Conditioning on the good event, for any $s,a,h,k$ the following bound holds
\begin{align*}
    Q_h^k(s,a) - c_h(s,a) -\sum_{i=1}^I\lambda_k d_{h,i}(s,a) - p_h(\cdot \mid s,a) V_{h+1}^{k} \leq 0,
\end{align*}
where 
\begin{align}
    &Q_h^k(s,a) = Q^{\pi_k}_h(s,a;\wt c_k, \wb{p}_{k-1}) + \sum_{i=1}^I\lambda_{k,i}Q^{\pi_k}_h(s,a;\wt d_{k,i}, \wb{p}_{k-1}), \label{eq: Qk primal dual}\\
    &V^k_h(s) = \inner{Q_h^k(s,\cdot),\pi_h^k(\cdot\mid s)}.\label{eq: Vk primal dual}
\end{align}
\end{lemma}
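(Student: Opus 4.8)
The plan is to exploit the linearity of $Q^k_h$ and $V^k_h$ in their cost and constraint-cost pieces, reducing the claim to a per-term optimism statement that I establish separately for the cost and for each constraint cost. Write $Q^c_h(s,a)\eqdef Q^{\pi_k}_h(s,a;\wt c_k,\wb p_{k-1})$ and $Q^{d_i}_h(s,a)\eqdef Q^{\pi_k}_h(s,a;\wt d_{k,i},\wb p_{k-1})$ for the outputs of Truncated Policy Evaluation (Alg.~\ref{alg: truncated policy evaluation}), with associated truncated values $V^c_h(s)=\inner{Q^c_h(s,\cdot),\pi^k_h(\cdot\mid s)}$ and $V^{d_i}_h$. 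By the definitions~\eqref{eq: Qk primal dual}--\eqref{eq: Vk primal dual}, $Q^k_h=Q^c_h+\sum_i\lambda_{k,i}Q^{d_i}_h$, and since the value is a linear (expectation) functional of the $Q$-function, $V^k_{h+1}=V^c_{h+1}+\sum_i\lambda_{k,i}V^{d_i}_{h+1}$. As the projection in Alg.~\ref{alg: primal dual optimistic model cmdp} guarantees $\lambda_{k,i}\geq 0$, it suffices to prove the two per-term bounds
\[
Q^c_h(s,a)\leq c_h(s,a)+p_h(\cdot\mid s,a)V^c_{h+1},\qquad Q^{d_i}_h(s,a)\leq d_{i,h}(s,a)+p_h(\cdot\mid s,a)V^{d_i}_{h+1},
\]
and then take the $\lambda_{k,i}$-weighted combination, using the linearity above to recognize the right-hand side as $c_h+\sum_i\lambda_{k,i}d_{i,h}+p_h(\cdot\mid s,a)V^k_{h+1}$.

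Before that I would record the boundedness $0\leq V^c_h,V^{d_i}_h\leq H$ by backward induction on $h$: the truncation $\max\{\cdot,0\}$ in Alg.~\ref{alg: truncated policy evaluation} gives the lower bound $\geq 0$, while $\wt c^k_h=\wb c^k_h-b^k_h\leq\wb c^k_h\leq 1$ (and likewise $\wt d^k_{i,h}\leq 1$) together with $V^c_{h+1}\leq H-h$ gives $\wh Q^c_h\leq H-h+1$, hence $V^c_h\leq H-h+1\leq H$. This range is exactly what the transition-bonus step will need.

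To prove the cost bound I would split on the truncation. If the max in $Q^c_h(s,a)=\max\{\wt c^k_h(s,a)+\wb p^{k-1}_h(\cdot\mid s,a)V^c_{h+1},0\}$ returns $0$, the bound is immediate since $c_h(s,a)\geq 0$ and $p_h(\cdot\mid s,a)V^c_{h+1}\geq 0$ (the latter because $V^c_{h+1}\geq 0$). Otherwise $Q^c_h(s,a)=\wt c^k_h(s,a)+\wb p^{k-1}_h(\cdot\mid s,a)V^c_{h+1}$, and recalling $\wt c^k_h=\wb c^k_h-b^c_{h,k}-b^p_{h,k}$,
\[
Q^c_h(s,a)-c_h(s,a)-p_h(\cdot\mid s,a)V^c_{h+1}=\big(\wb c^k_h-c_h-b^c_{h,k}\big)+\big((\wb p^{k-1}_h-p_h)(\cdot\mid s,a)V^c_{h+1}-b^p_{h,k}\big),
\]
where both brackets are $\leq 0$ on the good event: the first by $|\wb c^k_h-c_h|\leq\beta^c_{h,k}=b^c_{h,k}$, and the second by $|(\wb p^{k-1}_h-p_h)(s'\mid s,a)|\leq\beta^p_{h,k}$, $|V^c_{h+1}|\leq H$, and $b^p_{h,k}=H\sum_{s'}\beta^p_{h,k}$. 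This is precisely the computation of Lemma~\ref{lemma: per state optims bonus}, the only new point being that I apply it with the truncated $V^c_{h+1}$ in place of $V^\pi_{h+1}(\cdot;c,p)$; the argument is insensitive to this substitution because it uses the value only through the bound $|V^c_{h+1}|\leq H$. The bound for each $d_i$ is identical, using $\beta^d_{i,h,k}=\beta^c_{h,k}$, $d_{i,h}\geq 0$, and $0\le V^{d_i}_{h+1}\le H$.

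The main obstacle is conceptual rather than computational: I must ensure that the bonus, calibrated in Lemma~\ref{lemma: per state optims bonus} against the true value $V^\pi_{h+1}(\cdot;c,p)\in[0,H]$, is still large enough to dominate the transition error when measured against the surrogate values $V^c_{h+1},V^{d_i}_{h+1}$. This is exactly why the truncation is introduced here: it forces $V^c_{h+1},V^{d_i}_{h+1}\in[0,H]$ (avoiding the $[-\sqrt{S}H^2,H]$ range that plagues the un-truncated value in \cucbvi, cf.\ Remark~\ref{remark: worst performance of bonus curcl}), so the same bonus suffices and the truncation-active case is handled for free by non-negativity of the costs. Taking the $\lambda_{k,i}$-weighted sum of the per-term bounds then closes the proof.
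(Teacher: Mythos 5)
Your proposal is correct and follows essentially the same route as the paper's proof: decompose $Q^k_h$ into the cost term and the $\lambda_{k,i}$-weighted constraint-cost terms, show each term satisfies a per-term optimism inequality by combining the good-event bounds $|\wb c^k_h-c_h|\leq\beta^c_{h,k}$ and $|\wb p^{k-1}_h-p_h|\leq\beta^p_{h,k}$ with the bonus calibration $b^k_h=b^c_{h,k}+b^p_{h,k}$ and the truncation-induced bound $V\in[0,H]$, and conclude using $\lambda_k\geq 0$. The only cosmetic difference is bookkeeping: you case-split on whether the truncation $\max\{\cdot,0\}$ is active, whereas the paper splits the single max via $\max\{0,a+b\}\leq\max\{0,a\}+\max\{0,b\}$ into separate cost-error and transition-error terms (Eqs.~\ref{eq: primal dual optimsm 21}--\ref{eq: primal dual optimsm 24}); both devices yield exactly the same two elementary inequalities.
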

See that $ Q^{\pi_k}_h(s,a;\wt c_k, \wb{p}_{k-1}), Q^{\pi_k}_h(s,a;\wt d_{k,i}, \wb{p}_{k-1})$ are defined in the update rule of \primaldualcmdp (Algorithm~\ref{alg: primal dual optimistic model cmdp}).
\begin{proof}

For all $s,a,h,k$ the following relations hold.
\begin{align}
    &Q_h^k(s,a) - c_h(s,a) -\sum_{i=1}^I\lambda_k d_{h,i}(s,a) - p_h(\cdot \mid s,a) V_{h+1}^{k} \nonumber \\
    =& Q^{\pi_k}_h(s,a;\wt c_k, \wb{p}_{k-1}) + \sum_{i=1}^I\lambda_{k,i}Q^{\pi_k}_h(s,a;\wt d_{k,i}, \wb{p}_{k-1}) \nonumber\\
    &-c_h(s,a) - \sum_{i=1}^I\lambda_{k,i} d_{h,i}(s,a) - p_h(\cdot \mid s,a) \br*{V^{\pi_k}_{h+1}(\cdot;\wt c_k, \wb{p}_{k-1}) + \sum_{i=1}^I\lambda_{k,i}V^{\pi_k}_{h+1}(\cdot;\wt d_{k,i}, \wb{p}_{k-1})}, \label{eq: primal dual optimism bounds}
\end{align}
where $V^{\pi_k}_h(\cdot;\wt c_{k}, \wb{p}_{k-1}) \eqdef \inner{Q^{\pi_k}_h(s,\cdot;\wt c_{k}, \wb{p}_{k-1}),\pi_h^k(\cdot,s)},\ V^{\pi_k}_h(\cdot;\wt d_{k,i}, \wb{p}_{k-1}) \eqdef \inner{Q^{\pi_k}_h(s,\cdot;\wt d_{k,i}, \wb{p}_{k-1}),\pi_h^k(\cdot,s)}.$
Furthermore, see that
\begin{align}
    Q^{\pi_k}_h(s,a;\wt c_k, \wb{p}_{k-1}) =&\max\brc*{0,\wt c_h^{k}(s,a) +\wb p_h^{k-1}(\cdot |s,a)V^{\pi_k}_{h+1}(\cdot ;\wt c_k,\wb p_k)} \nonumber\\
    =& \max\brc*{0,\wb c_h^{k-1}(s,a) - b_{h,k-1}(s,a) - b_{h,k-1}^p(s,a)  +\wb p_h^{k-1}(\cdot |s,a)V^{\pi_k}_{h+1}(\cdot ;\wt c_k,\wb p_k)}\nonumber\\
    \leq& \max\brc*{0,\wb c_h^{k-1}(s,a) - b_{h,k-1}(s,a)} \nonumber\\
    &+ \max\brc*{0,- b_{h,k-1}^p(s,a)  +\wb p_h^{k-1}(\cdot |s,a)V^{\pi_k}_{h+1}(\cdot ;\wt c_k,\wb p_k)}\label{eq: primal dual q c decouples},
\end{align}
since $\max\brc*{0,a+b}\leq \max\brc*{0,a} + \max\brc*{0,b}$. Similarly, for any $i\in[I]$,
\begin{align}
     Q^{\pi_k}_h(s,a;\wt d_{i,k}, \wb{p}_{k-1})\leq &\max\brc*{0,\wb d^{k-1}_{i,h}(s,a) - b_{h,k-1}(s,a)} \nonumber \\
     &+ \max\brc*{0,- b_{h,k-1}^p(s,a)+\wb p_h^{k-1}(\cdot |s,a)V^{\pi_k}_{h+1}(\cdot ;\wt d_{i,k},\wb p_k)}\label{eq: primal dual q d decouples}.
\end{align}

Plugging~\eqref{eq: primal dual q c decouples} and~\eqref{eq: primal dual q d decouples} into~\eqref{eq: primal dual optimism bounds} we get
\begin{align}
    &Q_h^k(s,a) - c_h(s,a) - p_h(\cdot \mid s,a) V_{h+1}^{k} \nonumber\\
    \leq & \max\brc*{0,\wb c_h^{k-1}(s,a) - b_{h,k-1}(s,a)} - c_h(s,a)\label{eq: primal dual optimsm 21}\\
    &+  \max\brc*{0,- b_{h,k-1}^p(s,a)  +\wb p_h^{k-1}(\cdot |s,a)V^{\pi_k}_{h+1}(\cdot ;\wt c_k,\wb p_k)} - p_h(\cdot \mid s,a) V^{\pi_k}_h(\cdot;\wt c_k, \wb{p}_{k-1})\label{eq: primal dual optimsm 22}\\
    &+ \sum_{i=1}^I\lambda_{k,i}\br*{\max\brc*{0,\wb d^{k-1}_{i,h}(s,a) - b_{h,k-1}(s,a)} - d_{h,i}(s,a) } \label{eq: primal dual optimsm 23}\\
    &+ \sum_{i=1}^I\lambda_{k,i}\br*{\max\brc*{0,- b_{h,k-1}^p(s,a)  +\wb p_h^{k-1}(\cdot |s,a)V^{\pi_k}_{h+1}(\cdot ;\wt d_{i,k},\wb p_k)} - p_h(\cdot\mid s,a)V^{\pi_k}_h(\cdot;\wt d_{k,i}, \wb{p}_{k-1})}.\label{eq: primal dual optimsm 24}
\end{align}

We now show each of these terms is negative conditioning on the good event. 
\begin{align*}
    \eqref{eq: primal dual optimsm 21}=&\max\brc*{0,\wb c_h^{k-1}(s,a) - b_{h,k-1}(s,a)} - c_h(s,a) \\
    =& \max\brc*{- c_h(s,a),\wb c_h^{k-1}(s,a)- c_h(s,a) - b_{h,k-1}(s,a)}\\
    \leq& \max\brc*{- c_h(s,a),\sqrt{\frac{L_\delta}{n^{k-1}_h(s,a)}} - b_{h,k-1}(s,a)}\\
    =&\max\brc*{- c_h(s,a),0}\leq 0.
\end{align*}
Furthermore, observe that
\begin{align}
    &- b_{h,k-1}^p(s,a)  +\wb p_h^{k-1}(\cdot |s,a)V^{\pi_k}_{h+1}(\cdot ;\wt c_k,\wb p_k) - p_h(\cdot \mid s,a) V^{\pi_k}_h(\cdot;\wt c_k, \wb{p}_{k-1}) \nonumber \\
    & \leq - b_{h,k-1}^p(s,a)  +\sum_{s'}|(\wb p_h^{k-1} - p_h)(s' |s,a)||V^{\pi_k}_{h+1}(s' ;\wt c_k,\wb p_k)|  \nonumber\\
    & \leq - b_{h,k-1}^p(s,a)  +H\sum_{s'}|(\wb p_h^{k-1} - p_h)(s' |s,a)|  \nonumber\\
    &\leq - b_{h,k-1}^p(s,a)  +2H\sqrt{\frac{\wb p_h^k(s'\mid s,a)\ln\br*{\frac{2SAHK}{\delta'}}}{n_h^{k-1}(s,a)\vee 1}} + \frac{14H \ln\br*{\frac{2SAHK}{\delta'}}}{3(n_h^{k-1}(s,a)-1\vee 1)}\nonumber \\
    &= - b_{h,k-1}^p(s,a) + b_{h,k-1}^p(s,a)=0. \label{eq: primal dual optimsm 3}
\end{align}
The second relation holds since $V^{\pi_k}_{h+1}(s' ;\wt c_k,\wb p_k) \eqdef \inner{Q^{\pi_k}_{h+1}(s',\cdot;\wt c_{k}, \wb{p}_{k-1}),\pi_h^k(\cdot,s)}\in[0,H]$ by the update rule (\primaldualcmdp uses truncated policy evaluation, see Algorithm~\ref{alg: truncated policy evaluation}). The third relation holds conditioning on the good event. The forth relation holds by the choice of $b_{h,k-1}^p$. Applying~\eqref{eq: primal dual optimsm 3} we get that
\begin{align*}
    \eqref{eq: primal dual optimsm 22} =& \max\brc*{0,- b_{h,k-1}^p(s,a)  +\wb p_h^{k-1}(\cdot |s,a)V^{\pi_k}_{h+1}(\cdot ;\wt c_k,\wb p_k)} - p_h(\cdot \mid s,a) V^{\pi_k}_h(\cdot;\wt c_k, \wb{p}_{k-1})\\
    &\leq\max\brc*{- p_h(\cdot \mid s,a) V^{\pi_k}_h(\cdot;\wt c_k, \wb{p}_{k-1}),- b_{h,k-1}^p(s,a)  +(\wb p_h^{k-1}-p_h)(\cdot |s,a)V^{\pi_k}_{h+1}(\cdot ;\wt c_k,\wb p_k)} \leq  0.
\end{align*}

Similarly, we get that each term in the sums at~\eqref{eq: primal dual optimsm 23},\eqref{eq: primal dual optimsm 24} is non-positive. Since $\lambda_k \geq0$ we conclude that both $\eqref{eq: primal dual optimsm 23}\leq 0$ and $\eqref{eq: primal dual optimsm 24}\leq 0$. Thus, we establish that
\begin{align*}
    Q_h^k(s,a) - c_h(s,a) - p_h(\cdot \mid s,a) V_{h+1}^{k}\leq 0.
\end{align*}
\end{proof}

\begin{lemma}[OMD Term Bound]\label{lemma: term 2 omd term}
Conditioned on the good event, we have that for any $\pi$
\begin{align*}
     \sum_{k=1}^K \sum_{h=1}^H \E \brs*{ \inner*{Q_h^k(s_h,\cdot), \pi_h^k (\cdot \mid s_h )-\pi_h(\cdot \mid s_h )} \mid s_1 = s, \pi,p} \leq \sqrt{2 H^4(1+I\rho)^2 K \log A}.
\end{align*}
\end{lemma}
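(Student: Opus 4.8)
The plan is to recognize the policy update of \primaldualcmdp as a per-state instance of online mirror descent (equivalently, exponential-weights/Hedge) over the simplex $\Delta_A$ with the negative-entropy mirror map, run independently at each pair $(s,h)$ with loss vectors $Q_h^k(s,\cdot)$ and constant step size $t_K$. First I would fix an arbitrary pair $(s,h)$ and invoke the standard mirror-descent regret guarantee for the entropy regularizer, which is $1$-strongly convex \wrt $\norm{\cdot}_1$ so that the error term is measured in the dual norm $\norm{\cdot}_\infty$ (see~\citealt{beck2017first} and Appendix~\ref{supp: convex optimization review}):
\[
    \sum_{k=1}^K \inner{Q_h^k(s,\cdot), \pi_h^k(\cdot\mid s) - \pi_h(\cdot\mid s)} \leq \frac{\dkl{\pi_h(\cdot\mid s)}{\pi_h^1(\cdot\mid s)}}{t_K} + \frac{t_K}{2}\sum_{k=1}^K \norm{Q_h^k(s,\cdot)}_\infty^2 .
\]
With the uniform initialization $\pi^1$, the first term is bounded by $(\log A)/t_K$.

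The key quantitative step is to control $\norm{Q_h^k(s,\cdot)}_\infty$ uniformly. Here I would combine two structural facts specific to \primaldualcmdp: the \emph{truncated} policy evaluation (Alg.~\ref{alg: truncated policy evaluation}) forces $Q^{\pi_k}_h(s,a;\wt c_k,\wb p_{k-1})\in[0,H]$ and $Q^{\pi_k}_h(s,a;\wt d_{k,i},\wb p_{k-1})\in[0,H]$ (non-negativity from truncation, the $\leq H$ upper bound by backward induction since each optimistic immediate cost is at most $1$), while the projection of the dual variables onto $\Lambda_\rho$ ensures $0\leq\lambda_{k,i}\leq\rho$. Recalling the definition of $Q_h^k$ in~\eqref{eq: Qk primal dual}, these yield $\norm{Q_h^k(s,\cdot)}_\infty \leq H(1+I\rho)$. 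Substituting this bound and the prescribed step size $t_K=\sqrt{2\log A/(H^2(1+I\rho)^2K)}$ into the display balances the two terms and gives the per-state regret bound $\sqrt{2\log A\, H^2(1+I\rho)^2 K}$.

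Finally I would lift this to the claimed expectation. The essential observation is that the per-state regret bound holds \emph{uniformly} over all states and is deterministic once we condition on the good event (on which every $Q_h^k$ is well-defined). Hence, since mirror descent runs independently per $(s,h)$, for any realization of the comparator trajectory $(s_1,\dots,s_H)$ drawn from $\pi$ and $p$ I may apply the pointwise bound at each visited state $s_h$ and sum over $h\in[H]$, picking up a factor $H$; taking expectation over the trajectory then leaves the bound unchanged. Collecting factors gives $H\cdot\sqrt{2\log A\, H^2(1+I\rho)^2 K} = \sqrt{2H^4(1+I\rho)^2 K\log A}$, as claimed.

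I expect the main obstacle to be the bookkeeping in this last step rather than any analytic difficulty: one must justify that the updates decouple across $(s,h)$ so that a single fixed comparator $\pi_h(\cdot\mid s)$ can be chosen per state and the regret applied pointwise, and that the expectation is taken over the \emph{comparator's} state distribution (not the algorithm's), which is exactly what makes exchanging the episode-sum with the expectation legitimate. The strong-convexity and norm-duality details underlying the mirror-descent step are standard and can be deferred to the convex-analysis results in Appendix~\ref{supp: convex optimization review}.
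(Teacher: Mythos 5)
Your proof is correct and follows essentially the same route as the paper's: a per-state OMD/exponential-weights regret bound with KL divergence, the uniform bound $\norm*{Q_h^k(s,\cdot)}_\infty \leq H(1+I\rho)$ obtained from truncated policy evaluation and the projection of $\lambda_k$ onto $\Lambda_\rho$, then summing over $h$, exchanging expectation and summation, and balancing with $t_K=\sqrt{2\log A/(H^2(1+I\rho)^2K)}$. The only cosmetic difference is that you use the dual-norm ($\norm{\cdot}_\infty^2$) form of the mirror-descent inequality, whereas the paper uses the locally weighted form $\sum_a \pi_h^k(a\mid s)(Q_h^k(s,a))^2$ from \citet{orabona2019modern}; both yield the identical bound here.
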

\begin{proof}
This term accounts for the optimization error, bounded by the OMD analysis. 

By standard analysis of OMD~\citep{orabona2019modern} with the KL divergence used as the Bregman distance (see Lemma~\ref{lemma: fundamental inequality of OMD}) we have that for any $s,h$ and for policy any $\pi$,
\begin{align}
    \sum_{k=1}^K \inner*{ Q_h^k( \cdot \mid  s), \pi_h^k(\cdot \mid s) - \pi_h(\cdot \mid s) } \leq \frac{\log A}{t_K} + \frac{t_K}{2} \sum_{k=1}^K \sum_a \pi_h^k(a \mid s) (Q_h^k(s,a))^2 \label{eq: omds term analysis 1 relation}
\end{align}
where $t_K$ is a fixed step size.

By the form of $Q^k$~\eqref{eq: Qk primal dual} we get that $Q^k\geq 0$ since it is a sum of positive terms (policy evaluation is done with truncated policy evaluation, see Algorithm~\ref{alg: primal dual optimistic model cmdp}). Furthermore, we upper bound $Q^k$ for any $s,a,h,k$ as follows, 
\begin{align*}
    Q_h^k(s,a) &\eqdef Q^{\pi_k}_h(s,a;\wt c_k, \wb{p}_{k-1}) + \sum_{i=1}^I\lambda_{k,i}Q^{\pi_k}_h(s,a;\wt d_{k,i}, \wb{p}_{k-1})\\
    &\leq H+ H\sum_{i=1}^I\lambda_{k,i} \leq H+HI\rho.
\end{align*}
The second relation holds by the fact that $Q^{\pi_k}_h(s,a;\wt c_k, \wb{p}_{k-1}),Q^{\pi_k}_h(s,a;\wt d_{k,i}, \wb{p}_{k-1})\leq H$ by the update rule (both $\wt c_k,\wt d_{i,k}\leq 1$, thus, an expectation over an $H$ such terms is smaller than $H$) and the fact $\lambda_k\geq 0$ (by the update rule).

Plugging this bound into~\eqref{eq: omds term analysis 1 relation} we get that for any $s,a,h$
\begin{align}
\sum_{k=1}^{K'}  \inner*{ Q_h^{k}(s,\cdot),\pi_h^k(\cdot\mid s) - \pi_h(\cdot\mid s)}  \leq \frac{\log A}{t_K} +  \frac{t_K H^2(1+I\rho)^2 K }{2}. \label{eq: label term 2 MD}
\end{align}

Thus, the following relations hold.
\begin{align*}
    &\sum_{k=1}^K \sum_{h=1}^H \E \brs*{ \inner*{Q_h^k(s_h,\cdot), \pi_h^k (\cdot \mid s_h )-\pi_h(\cdot \mid s_h )} \mid s_1 = s, \pi,p}\\
    &=\sum_{h=1}^H  \E \brs*{\sum_{k=1}^K \inner*{Q_h^k(s_h,\cdot), \pi_h^k (\cdot \mid s_h )-\pi_h(\cdot \mid s_h )} \mid s_1 = s, \pi,p}\\
    &\leq \sum_{h=1}^H \E \brs*{\frac{\log A}{t_K} +  t_K H^2 K  \mid s_1 = s, \pi} = \frac{H\log A}{t_K} +  \frac{t_K H^3(1+I\rho)^2 K }{2}.
\end{align*}

See that the first relation holds as the expectation does not depend on $k$. Thus, by linearity of expectation, we can switch the order of summation and expectation. The second relation holds since~\eqref{eq: label term 2 MD} holds for any $s$.

Finally, by choosing $t_K=\sqrt{2\log A /(H^2(1+I\rho)^2 K)}$, we obtain

\begin{align}
    \sum_{k=1}^K \sum_{h=1}^H \E \brs*{ \inner*{Q_h^k(s_h,\cdot), \pi_h^k (\cdot \mid s_h )-\pi_h(\cdot \mid s_h )} \mid s_1 = s, \pi,p} \leq \sqrt{2 H^4(1+I\rho)^2 K \log A}.
\end{align}
\end{proof}

\subsection{Proof of Theorem~\ref{theorem: primal dual optimistic CMDP}}
 
In this section, we establish the following regret bound for \primaldualcmdp.
 
\theoremPrimalDualOptimisticModel*

We start by proving several useful lemmas on which the proof is based upon.
\begin{lemma}[Dual Optimism] \label{lemma: primal dual constraint to optimality}
Conditioning on the good event, for any $k\in[K']$
\begin{align*}
    \wt f_k - f_{opt} \leq  -\lambda_k^T \wt g_k + \br*{\wt f_k +  \lambda_k^T \wt g_k -f_{\pi^*} - \lambda_k^T g_{\pi^*}}
\end{align*}
\end{lemma}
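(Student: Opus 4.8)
The plan is to follow the same skeleton as the analogous dual-algorithm statement (Lemma~\ref{lemma: incremental constraint to optimality}), but to replace the exact Lagrangian-minimization step — which is unavailable here, since $\pi_k$ is produced by a mirror-descent update rather than by exactly solving an optimistic Lagrangian — with an explicit accounting of the on-policy optimization gap $\Delta_k \eqdef \wt f_k + \lambda_k^T \wt g_k - f_{\pi^*} - \lambda_k^T g_{\pi^*}$. This $\Delta_k$ is precisely the parenthesized quantity on the right-hand side, and it is bounded separately in Lemma~\ref{lemma: primal dual on policy optimality} via the extended value-difference lemma; so here I only need to produce the decomposition, not to estimate $\Delta_k$.

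First I would unpack the notation. Recall $f_{\pi^*} = V_1^{\pi^*}(s_1;c,p) = c^T q^{\pi^*}(p)$ and, from the identity used in Lemma~\ref{lemma: primal dual on policy optimality}, $g_{\pi^*} = D q^{\pi^*}(p) - \alpha$. In particular $f_{\pi^*} = f_{opt}$, since $\pi^*$ is the optimal policy and $f_{opt} = c^T q^{\pi^*}(p)$. The key structural observation is then that, after substituting the definition of $\Delta_k$ and cancelling the identical terms $\wt f_k$ and $\pm\lambda_k^T \wt g_k$, the claimed inequality collapses to the single sign assertion $\lambda_k^T g_{\pi^*} \le 0$.

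Second, I would verify that sign assertion. By Assumption~\ref{assum: feasability}, $\pi^*$ satisfies the constraints of the true CMDP, so $g_{\pi^*} = D q^{\pi^*}(p) - \alpha \le 0$ componentwise. The projection step of \primaldualcmdp keeps $\lambda_k \in \Lambda_\rho = \{\lambda : 0 \le \lambda \le \rho \mathbf{1}\}$, hence $\lambda_k \ge 0$ componentwise. Therefore $\lambda_k^T g_{\pi^*}$ is a nonnegative combination of nonpositive entries, so $\lambda_k^T g_{\pi^*} \le 0$.

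Finally, to present this cleanly I would run the chain forward exactly as in Lemma~\ref{lemma: incremental constraint to optimality}:
\begin{align*}
    f_{opt} = f_{\pi^*} \ge f_{\pi^*} + \lambda_k^T g_{\pi^*} = \wt f_k + \lambda_k^T \wt g_k - \Delta_k,
\end{align*}
where the inequality is the step just established and the last equality is merely the definition of $\Delta_k$. Rearranging gives $\wt f_k - f_{opt} \le -\lambda_k^T \wt g_k + \Delta_k$, which is the claim. I do not expect a genuine obstacle at this stage: in contrast with the pure dual proof, no optimism-over-the-model argument is invoked here — optimism is instead absorbed into $\Delta_k$ and handled downstream through Lemma~\ref{lemma: primal dual optimism} and Lemma~\ref{lemma: primal dual on policy optimality} — so the entire content is the feasibility sign argument together with a one-line rearrangement.
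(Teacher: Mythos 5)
Your proof is correct and is essentially identical to the paper's: both reduce the claim to the sign assertion $\lambda_k^T g_{\pi^\star}\le 0$, established from feasibility of $\pi^\star$ (Assumption~\ref{assum: feasability}) together with $\lambda_k\ge 0$ from the update rule, and then add and subtract $\wt f_k + \lambda_k^T \wt g_k$ to rearrange. Your explicit remark that the parenthesized term $\Delta_k$ cancels and the inequality collapses to that single sign condition is exactly the content of the paper's three-line chain.
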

\begin{proof}
We have that
\begin{align*}
    f_{opt} = c^T q^{\pi^*}(p) &\geq c^T q^{\pi^*}(p) + \lambda_k^T (Dq^{\pi^*}(p)-\alpha) \\
    &= f_{\pi^*} + \lambda_k^T g_{\pi^*}\\
    & = \wt f_k +  \lambda_k^T \wt g_k +f_{\pi^*} + \lambda_k^T g_{\pi^*} - \wt f_k -  \lambda_k^T \wt g_k.
\end{align*}
The first relation holds since $\pi^*$ satisfies the constraint (Assumption~\ref{assum: feasability}) which implies that ${(Dq^{\pi^*}(p)-\alpha)\leq 0}$, and that $\lambda_k\geq0$ by the update rule. 
\end{proof}

We now state a lemma which corresponds to Lemma~\ref{lemma: incremental alg recusrion relation} from previous section. 
% By using the update rule of the dual parameters $\lambda_k$ similar lemma to Lemma~\ref{lemma: incremental alg recusrion relation} holds. We restate it here for convenience.
% \begin{restatable}
% [Update Rule Recursion Bound]{lemma}{LemmaDualUpdateRuleRecursion}\label{lemma: incremental alg recusrion relation}
% For any $\lambda\in \mathbbm{R}_+^I$ and $K'\in[K]$
% \begin{align*}
%     \sum_{k=1}^{K'}\br*{-\wt g_k^T\lambda_{k}}  + \sum_{k=1}^{N}\wt g_k^T \lambda \leq  \frac{t_\lambda}{2}\norm{\lambda_{1} - \lambda}^2_2 + \frac{1}{2t_\lambda}\sum_{k=1}^{K'}\norm{\wt g_k}^2
% \end{align*}
% \end{restatable}
% \begin{proof}

\begin{restatable}
[Update Rule Recursion Bound Primal-Dual]{lemma}{LemmaPrimalDualUpdateRuleRecursion}\label{lemma: incremental primal dual alg recusrion relation}
For any $\lambda\in \brc*{\lambda\in\mathbb{R}^I: 0 \leq \lambda \leq \rho {\bf 1}}$ and $K'\in[K]$
\begin{align*}
    \sum_{k=1}^{K'}\br*{-\wt g_k^T\lambda_{k}}  + \sum_{k=1}^{N}\wt g_k^T \lambda \leq  \frac{t_\lambda}{2}\norm{\lambda_{1} - \lambda}^2_2 + \frac{1}{2t_\lambda}\sum_{k=1}^{K'}\norm{\wt g_k}^2
\end{align*}
\end{restatable}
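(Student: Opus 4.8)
The plan is to reproduce the proof of Lemma~\ref{lemma: incremental alg recusrion relation} almost verbatim, the only genuine change being that the dual update in \primaldualcmdp now projects onto the box $\Lambda_\rho = \brc*{\lambda \in \mathbb{R}^I : 0 \le \lambda \le \rho \mathbf{1}}$ rather than onto the nonnegative orthant $\mathbb{R}^I_+$. First I would observe that the two successive coordinatewise operations in the \primaldualcmdp update, namely $\lambda \mapsto \max\brc*{\lambda_k + \tfrac{1}{t_\lambda}\wt g_k, 0}$ followed by $\lambda \mapsto \min\brc*{\cdot, \rho\mathbf{1}}$, together compute exactly the Euclidean projection $\Pi_{\Lambda_\rho}(\lambda_k + \tfrac{1}{t_\lambda}\wt g_k)$. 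This is because $\Lambda_\rho$ is a product of the intervals $[0,\rho]$, so its projection decomposes coordinatewise into clipping each coordinate into $[0,\rho]$, which is precisely clip-below-at-$0$ then clip-above-at-$\rho$.

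Next I would invoke the standard non-expansiveness of the projection onto a closed convex set: for any $x,y$ one has $\norm{\Pi_{\Lambda_\rho}(x) - \Pi_{\Lambda_\rho}(y)}_2 \le \norm{x-y}_2$. Since the lemma fixes $\lambda \in \Lambda_\rho$, we have $\Pi_{\Lambda_\rho}(\lambda) = \lambda$, and therefore the per-step recursion reads
\begin{align*}
\norm{\lambda_{k+1} - \lambda}_2^2 &= \norm{\Pi_{\Lambda_\rho}(\lambda_k + \tfrac{1}{t_\lambda}\wt g_k) - \Pi_{\Lambda_\rho}(\lambda)}_2^2 \le \norm{\lambda_k + \tfrac{1}{t_\lambda}\wt g_k - \lambda}_2^2\\
&= \norm{\lambda_k - \lambda}_2^2 + \tfrac{2}{t_\lambda}\wt g_k^T(\lambda_k - \lambda) + \tfrac{1}{t_\lambda^2}\norm{\wt g_k}_2^2.
\end{align*}
This is identical to the recursion appearing in the proof of Lemma~\ref{lemma: incremental alg recusrion relation}; the restriction $\lambda \le \rho\mathbf{1}$ here plays exactly the role that $\lambda \ge 0$ played there, in that it is what makes $\lambda$ a fixed point of the projection.

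Finally I would sum the displayed inequality over $k \in [K']$, telescope the $\norm{\lambda_k - \lambda}_2^2$ terms, multiply through by $t_\lambda/2$, and drop the nonnegative term $\tfrac{t_\lambda}{2}\norm{\lambda_{K'+1} - \lambda}_2^2$ to obtain
\begin{align*}
-\frac{t_\lambda}{2}\norm{\lambda_1 - \lambda}_2^2 \le \sum_{k=1}^{K'}\wt g_k^T(\lambda_k - \lambda) + \frac{1}{2t_\lambda}\sum_{k=1}^{K'}\norm{\wt g_k}_2^2,
\end{align*}
which after rearranging is exactly the claim. I do not expect any real obstacle here: the whole argument is the textbook projected-subgradient recursion, and the only point requiring a word of care is verifying that the clip-clip composition equals $\Pi_{\Lambda_\rho}$ and that this projection is non-expansive, both of which are immediate from the box structure of $\Lambda_\rho$. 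The one thing I would double-check is the mild index discrepancy between $\wt D_{k-1}$ appearing in the algorithm's update and $\wt g_k = \wt D_k q^{\pi_k}(\wt p_k) - \alpha$ in the definitions; this is immaterial to the recursion, which only uses that $\lambda_{k+1}$ is the projection of $\lambda_k + \tfrac{1}{t_\lambda}\wt g_k$ onto $\Lambda_\rho$.
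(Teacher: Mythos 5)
Your proof is correct and follows exactly the route the paper takes: the paper's own proof is a one-line remark that the argument of Lemma~\ref{lemma: incremental alg recusrion relation} carries over because projection onto $\brc*{\lambda\in\mathbb{R}^I: 0 \leq \lambda \leq \rho {\bf 1}}$ is non-expansive, and your proposal simply fills in those details (clip-clip equals the box projection, $\lambda$ is a fixed point of it, then the standard telescoping recursion). No gaps.
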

\begin{proof}
Similar proof to Lemma~\ref{lemma: incremental alg recusrion relation} while using the fact that projection to the set $\brc*{\lambda\in\mathbb{R}^I: 0 \leq \lambda \leq \rho {\bf 1}}$ is non-expansive operator as the operator $\brs{x}_+$.
\end{proof}

% \end{lemma}
% \begin{proof}

% For any $\lambda \in \mathbb{R}_+^I$ by the update rule we have that
% \begin{align*}
%     \norm{\lambda_{k+1} - \lambda}^2_2 &= \norm{[\lambda_{k} + \frac{1}{t_\lambda}\wt g_k]_+ - [\lambda]_+}^2_2\\
%     &\leq \norm{\lambda_{k} + \frac{1}{t_\lambda}\wt g_k - \lambda}^2_2\\
%     &=\norm{\lambda_{k} - \lambda}_2^2 + \frac{2}{t_\lambda}\wt g_k^T(\lambda_{k} - \lambda) + \frac{1}{t_\lambda^2}\norm{\wt g_k}^2.
% \end{align*}

% Summing this relation for $k\in [K']$ and multiplying both sides by $t_\lambda/2$ we get
% \begin{align}
%      - \frac{t_\lambda}{2}\norm{\lambda_{1} - \lambda}^2_2 &\leq \frac{t_\lambda}{2}\norm{\lambda_{K'+1} - \lambda}^2_2- \frac{t_\lambda}{2}\norm{\lambda_{1} - \lambda}^2_2 \nonumber \\
%      &\leq \sum_{k=1}^{K'}\wt g_k^T(\lambda_{k} - \lambda) + \frac{1}{2t_\lambda}\sum_{k=1}^{K'}\norm{\wt g_k}^2.\nonumber
% \end{align}

% Rearranging we get,
% \begin{align*}
%     \sum_{k=1}^{N}\br*{-\wt g_k^T\lambda_{k}}  + \sum_{k=1}^{N}\wt g_k^T \lambda \leq  \frac{t_\lambda}{2}\norm{\lambda_{1} - \lambda}^2_2 + \frac{1}{2t_\lambda}\sum_{k=1}^{K'} \norm{\wt g_k}^2 
% \end{align*}
% for any $\lambda\in \mathbbm{R}_+^I$.

% \end{proof}

We are now ready to establish Theorem~\ref{theorem: primal dual optimistic CMDP}.
\begin{proof}
Applying Lemma~\ref{lemma: primal dual constraint to optimality} into Lemma~\ref{lemma: incremental primal dual alg recusrion relation}  we get
\begin{align*}
    &\sum_{k=1}^{K'}\br*{\wt f_k - f_{opt}}  + \sum_{k=1}^{K'}\wt g_k^T \lambda \\
    &\leq \sum_{k=1}^{K'}\br*{-\wt g_k^T\lambda_{k}}  + \sum_{k=1}^{K'}\wt g_k^T \lambda  + \sum_{k=1}^{K'}\wt f_k +  \lambda_k^T \wt g_k -f_{\pi^*} - \lambda_k^T g_{\pi^*}\\
    &\leq  \frac{t_\lambda}{2}\norm{\lambda_1-\lambda}_2^2 + \frac{1}{2t_\lambda}\sum_{k=1}^{K'}\norm{\wt g_k}^2+ \sum_{k=1}^{K'}\wt f_k +  \lambda_k^T \wt g_k -f_{\pi^*} - \lambda_k^T g_{\pi^*}.
\end{align*}

Adding, subtracting $\sum_{k=1}^{K'} g_k^T \lambda, \sum_{k=1}^{K'} f_k$ and rearranging we get
\begin{align}
    &\sum_{k=1}^{K'}\br*{f_k - f_{opt}}  + \sum_{k=1}^{K'} g_k^T \lambda \nonumber\\
    &\leq \frac{t_\lambda}{2}\norm{\lambda}_2^2 +  \frac{1}{2t_\lambda}\sum_{k=1}^{K'}\norm{\wt g_k}^2 + \sum_{k=1}^{K'} (g_k-\wt g_k)^T \lambda + \sum_{k=1}^{K'} (f_k-\wt f_k) \nonumber \\
    &\quad +\sum_{k=1}^{K'}\wt f_k +  \lambda_k^T \wt g_k -f_{\pi^*} - \lambda_k^T g_{\pi^*}\nonumber \\
    &\leq \frac{t_\lambda}{2}\norm{\lambda}_2^2 +  \frac{1}{2t_\lambda}\sum_{k=1}^{K'}\norm{\wt g_k}^2 + \sqrt{\sum_{i=1}^I\br*{\sum_{k=1}^{K'} (g_{k,i}-\wt g_{k,i})}^2} \norm{\lambda}_2 + \sum_{k=1}^{K'} (f_k-\wt f_k) \nonumber\\
    &\quad+\sum_{k=1}^{K'}\wt f_k +  \lambda_k^T \wt g_k -f_{\pi^*} - \lambda_k^T g_{\pi^*}\label{eq: primal dual central bound constraint and optimality}
\end{align}
for any $\lambda\in \mathbbm{R}^I_+$, where the last relation holds by Cauchy Schwartz inequality.

We now bound each term in~\eqref{eq: primal dual central bound constraint and optimality}. Since $\wt{g_k}\in[-H,H]$
\begin{align*}
    \frac{1}{2t_\lambda}\sum_{k=1}^{K'}\norm{\wt g_k}^2\leq \frac{H^2IK}{2t_\lambda}.
\end{align*}

Applying Lemma~\ref{lemma: on policy errors truncated policy estimation} (see that assumptions (1),(2),(3) hold conditioning on the good event), we get that
\begin{align*}
    &\abs*{\sum_{k=1}^{K'} (f_k-\wt f_k)} = \abs*{\sum_{k=1}^{K'} (V^{\pi_k}(s_1;c,p)-\wh V^{\pi_k}(s_1;\wt c_k,\wb p_k)}=  \leq \Olog\br*{\sqrt{S\mathcal{N}H^4K} + (\sqrt{\mathcal{N}}+H)H^2SA }\\
    &\abs*{\sum_{k=1}^{K'} (g_{k,i}-\wt g_{k,i})} = \abs*{\sum_{k=1}^{K'} (V^{\pi_k}(s_1;d_i,p)-\wh V^{\pi_k}(s_1;\wt d_{k,i},\wb p_k)} \leq \Olog\br*{\sqrt{S\mathcal{N}H^4K} + (\sqrt{\mathcal{N}}+H)H^2SA },
\end{align*}
which implies that
\begin{align*}
    \sqrt{\sum_{i=1}^I\br*{\sum_{k=1}^{K'} (g_{k,i}-\wt g_{k,i})}^2}\leq \Olog\br*{\sqrt{IS\mathcal{N}H^4K} + (\sqrt{\mathcal{N}}+H)\sqrt{I}H^2SA}.
\end{align*}
Lastly, by Lemma~\ref{lemma: primal dual on policy optimality},
\begin{align*}
    \sum_{k=1}^{K'}\wt f_k +  \lambda_k^T \wt g_k -f_{\pi^*} - \lambda_k^T g_{\pi^*}\lesssim \sqrt{H^4(1+I\rho)^2 K}.
\end{align*}
Plugging these bounds back into~\eqref{eq: primal dual central bound constraint and optimality} and setting $t_\lambda = \sqrt{\frac{H^2IK}{\rho^2}}$ we get
\begin{align}
    &\sum_{k=1}^{K'}\br*{f_k - f_{opt}}  + \sum_{k=1}^{K'} g_k^T \lambda \nonumber\\
    &\lesssim (\rho+\frac{\norm{\lambda}_2^2}{\rho})\sqrt{H^2IK} +\br*{\sqrt{IS\mathcal{N}H^4K} + (\sqrt{\mathcal{N}}+H)\sqrt{I}H^2SA}\norm{\lambda}_2 \nonumber\\
    &\quad + \br*{\sqrt{S\mathcal{N}H^4K} + (\sqrt{\mathcal{N}}+H)H^2SA }+ \sqrt{H^4(1+I\rho)^2 K} \label{eq: primal dual central second bound constraint and optimality},
\end{align}
for any $0\leq \lambda\leq \rho {\bf 1}$.

\paragraph{First claim of Theorem~\ref{theorem: primal dual optimistic CMDP}}. Fix $\wb \lambda=0$ which satisfies $0\leq \wb \lambda\leq \rho {\bf 1}$    in~\eqref{eq: primal dual central second bound constraint and optimality} we get
\begin{align*}
    &\sum_{k=1}^{K'} V^{\pi_k}(s_1;c,p) - V^*(s_1) = \sum_{k=1}^{K'}f_k - f_{opt} \\
    &\leq \Olog\br*{\sqrt{S\mathcal{N}H^4K} + \sqrt{H^4(1+I\rho)^2 K} + (\sqrt{\mathcal{N}}+H)H^2SA }.
\end{align*}

\paragraph{Second claim of Theorem~\ref{theorem: primal dual optimistic CMDP}.} Fix $i\in [I]$ and let 
\begin{align*}
    \wb \lambda_i = \begin{cases}
            \rho e_i & \brs{\sum_{k=1}^{K'} g_{i,k}}_+ \neq 0\\
            0 & \text{otherwise}
    \end{cases}
\end{align*}
where $e_i(i)= 1$ and $e_i(j)=0$ for $j\neq i$, and $\rho$ is given in Assumption~\ref{assum: slater point} See that $0\leq \wb\lambda_i\leq \rho{\bf 1}$. Furthermore, it holds that
\begin{align}
    \norm{ \wb \lambda_i}_2^2\leq \rho^2 \label{eq: primal dual bound on lambda i norm}
\end{align}
Set $\lambda = \wb \lambda_i$ in \eqref{eq: primal dual central second bound constraint and optimality}  we get
\begin{align}
    &\sum_{k=1}^{K'}\br*{f_k - f_{opt}}  + \rho \brs*{\sum_{k=1}^{K'} g_{i,k}}_+ \nonumber \\
    &\lesssim (1+\rho)\br*{\sqrt{IS\mathcal{N}H^4K} + (\sqrt{\mathcal{N}}+H)\sqrt{I}H^2SA}+ \sqrt{H^4(1+I\rho)^2 K} \eqdef \epsilon(K) \label{eq: primal dual central bound incremental recursion}
\end{align}
where we applied~\eqref{eq: primal dual bound on lambda i norm} in the second relation. Since the bound~\eqref{eq: primal dual central bound incremental recursion} holds for any $i$ we get that
\begin{align*}
    &\max_{i\in [I]} \sum_{k=1}^{K'}\br*{f_k - f_{opt}}  + \rho \brs*{\sum_{k=1}^{K'} g_{i,k}}_+=   \sum_{k=1}^{K'}\br*{f_k - f_{opt}}  + \rho \max_{i\in [I]}\brs*{\sum_{k=1}^{K'} g_{i,k}}_+ \\
    & =  \sum_{k=1}^{K'}\br*{f_k - f_{opt}}  + \rho \max_{i\in [I]}\abs*{\brs*{\sum_{k=1}^{K'} g_{i,k}}_+}\\
    & =  \sum_{k=1}^{K'}\br*{f_k - f_{opt}}  + \rho \norm*{\brs*{\sum_{k=1}^{K'} g_{k}}_+}_\infty \leq \epsilon(K).
\end{align*}

Now, by the convexity of the state-action frequency function (Proposition~\ref{prop: convexity of state action frequencey}) there exists a policy $\pi_{K'}$ which satisfies $q^{\pi_{K'}}(p) =\frac{1}{K'}\sum_{k=1}^{K'}q^{\pi_k}(p)$ for any $K'$. Since both $f$ and $g$ are linear in $\frac{1}{K'}\sum_{k=1}^{K'}q^{\pi_k}(p)$ we have that
\begin{align*}
    \frac{1}{K'}\br*{\sum_{k=1}^{K'}\br*{f_k - f_{opt}}  + \rho \norm*{\brs*{\sum_{k=1}^{K'} g_k}_+}_2} = f_{\pi_{K'}} - f_{opt} + \rho \norm*{\brs*{g_{\pi_{K'}}}_+}_2 \leq \frac{1}{K'}\epsilon(K).
\end{align*}

Applying Corollary~\ref{corollary: bound on a dual variable} and Theorem~\ref{theorem: thoeorem constraint bound beck} we conclude that
\begin{align*}
    \max_{i\in[I]} \brs*{\sum_{k=1}^{K'} g_k}\leq \max_{i\in[I]} \brs*{\brs*{\sum_{k=1}^{K'} g_k}_+} = \norm*{\brs*{\sum_{k=1}^{K'} g_k}_+}_\infty \leq \frac{\epsilon(K)}{\rho},
\end{align*}
for any $K'\in [K]$.
\end{proof}

\section{Bounds of On-Policy Errors}

\begin{lemma}[On Policy Errors for Optimistic Model]\label{lemma: on policy errors optimistic model}

Let $l_h(s,a),  \wt l^k_h(s,a)$ be a a cost function, and its optimistic cost. Let $p$ be the true transition dynamics of the MDP and $\wt p_k$ be an estimated transition dynamics. Let $V^{\pi}_h(s;l,p),V^{\pi}_h(s;\wt l_k,\wt p_k)$ be the value of a policy $\pi$ according to the cost and transition model $l,p$ and $\wt l_k,\wt p_k$, respectively.  Assume the following holds for all $s,a,h,k\in [K]$:
\begin{enumerate}[label={(\alph*)}]
    \item $|\wt l^k_h(s,a) - l_h(s,a)| \lesssim \frac{1}{\sqrt{n_h^{k-1}(s,a)}}.$ 
\item $|\wt p_h^k(s'\mid s,a) - p_h(s'\mid s,a)| \lesssim \sqrt{\frac{ p_h(s' \mid s,a)}{ n_h^{k-1}(s,a) \vee 1}} + \frac{1}{ n_h^{k-1}(s,a) \vee 1}.$
%\label{eq: optimistic model model assumption} 
\item $n_h^{k-1}(s,a) \le \frac{1}{2} \sum_{j<k} q_h^{\pi_k}(s,a\mid p)-H\ln\frac{SAH}{\delta'}.$
\end{enumerate}

Furthermore, let $\pi_k$ be the policy by which the agent acts at the $k^{th}$ episode. Then, for any $K'\in [K]$
\begin{align*}
    \sum_{k=1}^{K'} |V^{\pi_k}_1(s_1; l,p) - V^{\pi_k}_1(s_1; \wt l_k,\wt p_k)| \leq \Olog\br*{\sqrt{S\mathcal{N}H^4K} + (\sqrt{\mathcal{N}}+H)H^2SA }.
\end{align*}
\end{lemma}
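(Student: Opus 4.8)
The plan is to reduce the cumulative value gap to two ``canonical'' sums over empirical counts and then control these via a pigeonhole argument. \textbf{First}, I would apply the value difference (simulation) lemma (Lem.~\ref{lemma: value difference}) to each episode, expressing the gap between the value of $\pi_k$ under $(l,p)$ and under $(\wt l_k,\wt p_k)$ as a telescoping sum of per-step cost and transition mismatches, with expectation taken along trajectories of the \emph{true} model $p$. Using the occupancy-measure identity from the Notations section ($\E[X(s_h^k,a_h^k)\mid\F_{k-1}]=\sum_{s,a}q^{\pi_k}_h(s,a;p)X(s,a)$), this gives
\[
  \big|V^{\pi_k}_1(s_1;l,p)-V^{\pi_k}_1(s_1;\wt l_k,\wt p_k)\big|\le \sum_{h,s,a}q^{\pi_k}_h(s,a;p)\Big[|l_h-\wt l^k_h|(s,a)+\sum_{s'}|p_h-\wt p^k_h|(s'\mid s,a)\,\big|\wt V^{\pi_k}_{h+1}(s')\big|\Big].
\]
Here $|\wt V^{\pi_k}_{h+1}(s')|\le H$ (up to log factors), since the optimistic costs lie in $[-\Olog(1),1]$ so the values solved by the extended LP stay within $\pm\Olog(H)$; this is precisely the boundedness that fails for the bonus-based algorithm of Sec.~\ref{sec: paper curcl bonus}, whose values can reach $-\sqrt{S}H^2$, and it is what keeps the present constant term small.

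\textbf{Second}, I would bound the two mismatch terms using assumptions (a) and (b). The cost term is immediately controlled by $|l_h-\wt l^k_h|\lesssim 1/\sqrt{n_h^{k-1}(s,a)\vee1}$. For the transition term I would apply Cauchy--Schwarz over the at most $\mathcal{N}$ states in the support of $p_h(\cdot\mid s,a)$: the Bernstein leading part yields $\sum_{s'}\sqrt{p_h(s'\mid s,a)/(n_h^{k-1}(s,a)\vee1)}\,|\wt V_{h+1}(s')|\le \sqrt{\mathcal{N}}\,H/\sqrt{n_h^{k-1}(s,a)\vee1}$, while the lower-order part contributes $\mathcal{N}H/(n_h^{k-1}(s,a)\vee1)$. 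After this step the entire cumulative error reduces to the two sums
\[
  \text{(I)}\ \sum_{k,h,s,a}\frac{q^{\pi_k}_h(s,a;p)}{\sqrt{n_h^{k-1}(s,a)\vee 1}},\qquad \text{(II)}\ \sum_{k,h,s,a}\frac{q^{\pi_k}_h(s,a;p)}{n_h^{k-1}(s,a)\vee 1},
\]
with (I) carrying a prefactor $\sqrt{\mathcal{N}}\,H$ and (II) a prefactor $\mathcal{N}H$ (plus analogous cost contributions with smaller prefactors).

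\textbf{Third}, I would invoke the good event (assumption (c), whose intended form lower-bounds the empirical counts, $n_h^{k-1}(s,a)\ge \tfrac12\sum_{j<k}q^{\pi_j}_h(s,a;p)-H\ln\tfrac{SAH}{\delta'}$) to replace the random counts $n_h^{k-1}$ by the deterministic cumulative expected counts $\bar n_h^{k-1}(s,a):=\sum_{j<k}q^{\pi_j}_h(s,a;p)$, up to constants and the additive $H\log$ correction. A standard pigeonhole inequality then gives $\sum_k q^{\pi_k}_h(s,a)/\sqrt{\bar n_h^{k-1}\vee1}\lesssim \sqrt{\sum_k q^{\pi_k}_h(s,a)}$, and summing over $(s,a,h)$ with Cauchy--Schwarz over the $SAH$ triples yields $\text{(I)}\lesssim\sqrt{SAH^2K}$, because $\sum_{s,a}q^{\pi_k}_h(s,a)=1$ forces $\sum_{s,a,h,k}q^{\pi_k}_h=HK$. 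Multiplying by $\sqrt{\mathcal{N}}H$ produces the leading $\Olog(\sqrt{S\mathcal{N}H^4K})$ term, while sum (II) and the small-count regime of the pigeonhole bound generate the additive $\Olog((\sqrt{\mathcal{N}}+H)H^2SA)$ term.

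\textbf{Main obstacle.} The crux is the third step: converting sums over the \emph{random} empirical counts into bounds in terms of \emph{expected} visitations. This is where the failure-event control $F^N$ is essential, and the delicate part is the regime where $\bar n_h^{k-1}(s,a)$ is comparable to the additive $H\ln(SAH/\delta')$ correction, as this boundary regime is exactly what produces the $SA$-dependent constant term rather than a $\sqrt{K}$ term. Securing the $\sqrt{\mathcal{N}}$ (rather than $\sqrt{S}$) dependence also requires applying Cauchy--Schwarz over the transition support, not the whole state space.
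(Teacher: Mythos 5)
Your proposal follows essentially the same route as the paper's own proof: the value difference lemma to split the gap into cost and transition mismatch terms, the bound $|\wt V^{\pi_k}_{h+1}|\lesssim H$ from boundedness of the optimistic costs, Jensen/Cauchy--Schwarz over the $\mathcal{N}$-sized transition support to extract $\sqrt{\mathcal{N}}$, and conversion of the resulting sums $\sum_{k,h}\E[1/\sqrt{n_h^{k-1}\vee 1}]$ and $\sum_{k,h}\E[1/(n_h^{k-1}\vee 1)]$ into $\sqrt{K}$- and constant-order terms via the good-event count lower bound (which the paper delegates to Lem.~\ref{lemma: bounding on trajectory visitation} and its $1/n$ analogue). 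The only slip is that the lower-order transition term must sum $1/(n_h^{k-1}\vee 1)$ over all $S$ states, since $\wt p_k$ may place mass outside the support of $p$, giving a prefactor $SH$ rather than your $\mathcal{N}H$ --- but this affects only the additive constant term and is handled with the same looseness in the paper's own proof.
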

\begin{proof}
The following relations hold.
\begin{align*}
    &\sum_{k=1}^{K'} |V^{\pi_k}_1(s_1; l,p) - V^{\pi_k}_1(s_1; \wt l_k,\wt p_k)| \\
    &= \sum_{k=1}^{K'} \abs*{\E[\sum_{h=1}^H (l_h(s_h,a_h) - \wt l^k_h(s_h,a_h) ) + (p_h - \wt p_h^k)(\cdot \mid s_h,a_h)\wt V^{\pi_k}_{h+1}\mid s_1, p,\pi_k]}\\
    &\leq \underbrace{\sum_{k=1}^{K'} \E[\sum_{h=1}^H |l_h(s_h,a_h) - \wt l^k_h(s_h,a_h)| \mid s_1, p,\pi_k]}_{(i)}\\
    &\quad +\underbrace{\sum_{k=1}^{K'} \E[\sum_{h=1}^H \sum_{s'}|(p_h - \wt p_h^k)(s' \mid s_h,a_h)||\wt V^{\pi_k}_{h+1}(s' ; \wt l_k,\wt p_k)|\mid s_1, p,\pi_k]}_{(ii)},
\end{align*}
where the first relation holds by the value difference Lem.~\ref{lemma: value difference}. We now bound the terms $(i)$ and $(ii)$.

\paragraph{Bound on $(i)$.} To bound $(i)$ we use the assumption (1) and get,
\begin{align*}
    (i) & \lesssim \sum_{k=1}^{K'}\sum_{h=1}^H \E[ \frac{1}{\sqrt{n_h^{k-1}(s_h,a_h)}} \mid s_1, p,\pi_k]\\
    & = \sum_{k=1}^{K'}\sum_{h=1}^H \E[ \frac{1}{\sqrt{n_h^{k-1}(s^k_h,a^k_h)}} \mid \filt] \leq \Olog\br*{\sqrt{SAH^2K} +SAH}.
\end{align*}

The first relation holds by assumption \emph{(a)}. The second relation holds since $\pi_k$ is the policy by which the agent acts at episode $k$ in the true MDP. The third relation holds by Lem.~\ref{lemma: bounding on trajectory visitation}.

\paragraph{Bound on $(ii)$.} To bound $(ii)$ use the fact that
\begin{align}
    | V^{\pi_k}_{h+1}(s;\wt l_k,\wt p_k)|\lesssim H \label{eq: optimistic model ucrl bound on wt V}
\end{align}
for every $s$ since the immediate cost is bounded in $\abs{\wt{l}_h^k(s,a)} \lesssim l_h(s,a) +\frac{1}{\sqrt{n_h^{k-1}(s,a)}}\lesssim l_h(s,a)$ component-wise up to constants, since the second term is bounded by $\Olog(1)$. Thus,

\begin{align*}
    (ii)&\lesssim H\sum_{k=1}^{K'}\sum_{h=1}^H \E[\sqrt{\frac{1}{ n_h^{k}(s_h,a_h) \vee 1}}\sum_{s'}\sqrt{p_h(s' \mid s_h,a_h)} + \frac{S}{n_h^{k}(s_h,a_h) \vee 1} \mid s_1, p,\pi_k]\\
    &\leq  H\sum_{k=1}^{K'}\sum_{h=1}^H \E[\sqrt{\frac{1}{ n_h^{k}(s_h,a_h) \vee 1}}\sqrt{\mathcal{N}}\sqrt{\sum_{s'} p_h(s' \mid s_h,a_h)} + \frac{S}{n_h^{k}(s_h,a_h) \vee 1} \mid s_1, p,\pi_k]\\
    &=  H\sum_{k=1}^{K'}\sum_{h=1}^H \E[\sqrt{\frac{1}{ n_h^{k}(s_h,a_h) \vee 1}}\sqrt{\mathcal{N}} + \frac{S}{n_h^{k}(s_h,a_h) \vee 1} \mid s_1, p,\pi_k]\\
    &=  H\sum_{k=1}^{K'}\sum_{h=1}^H \E[\sqrt{\frac{1}{ n_h^{k}(s^k_h,a^k_h) \vee 1}}\sqrt{\mathcal{N}} + \frac{S}{n_h^{k}(s^k_h,a^k_h) \vee 1} \mid \filt ]\\
    &\lesssim \sqrt{S\mathcal{N}H^4K} + \sqrt{\mathcal{N}}H^2SA + SH^3A  \leq \Olog\br*{\sqrt{S\mathcal{N}H^4K} + (\sqrt{\mathcal{N}}+H)H^2SA }.
\end{align*}
The first relation holds by plugging the bound~\eqref{eq: optimistic model ucrl bound on wt V} and assumption \emph{(b)} into $(ii)$. The second relation holds by Jensen's inequality. The third relation holds since $p$ is a probability distribution. The forth relation holds since $\pi_k$ is the policy with which the agent interacts with the true CMDP. The fifth relation holds by Lem.~\ref{lemma: bounding on trajectory visitation} (its assumption holds by assumption \emph{(c)}).

Combining the bounds on $(i)$ and $(ii)$ we conclude the proof.
\end{proof}

\begin{lemma}[On Policy Errors for Truncated Policy Estimation]\label{lemma: on policy errors truncated policy estimation}

Let $l_h(s,a),  \wt l^k_h(s,a)$ be a a cost function, and its optimistic cost. Let $p$ be the true transition dynamics of the MDP and $\wb p_k$ be an estimated transition dynamics. Let $V^{\pi}_h(s;l,p)$ be the value of a policy $\pi$ according to the cost and transition model $l,p$. Furthermore, let $\wh V^{\pi}_h(s;\wt l_k,\wb p_k)$ be a value function calculated by a truncated value estimation (see Algorithm~\ref{alg: truncated policy evaluation}) by the cost and transition model $\wt l_k,\wb p_k$.  Assume the following holds for all $s,a,h,k\in [K]$:
\begin{enumerate}
    \item $|\wt l^k_h(s,a) - l_h(s,a)| \lesssim \frac{1}{\sqrt{n_h^{k-1}(s,a)}}.$ 
\item $|\wb p_h^k(s'\mid s,a) - p_h(s'\mid s,a)| \lesssim \sqrt{\frac{ p_h(s' \mid s,a)}{ n_h^{k-1}(s,a) \vee 1}} + \frac{1}{ n_h^{k-1}(s,a) \vee 1}.$
\item $n_h^{k-1}(s,a) \le \frac{1}{2} \sum_{j<k} q_h^{\pi_k}(s,a\mid p)-H\ln\frac{SAH}{\delta'}.$
\end{enumerate}

Furthermore, let $\pi_k$ be the policy by which the agent acts at the $k^{th}$ episode. Then, for any $K'\in [K]$
\begin{align*}
    \sum_{k=1}^{K'} |V^{\pi_k}_1(s_1; l,p) - \wh V^{\pi_k}_1(s_1; \wt l_k,\wb p_k)| \leq \Olog\br*{\sqrt{S\mathcal{N}H^4K} + (\sqrt{\mathcal{N}}+H)H^2SA }.
\end{align*}
\end{lemma}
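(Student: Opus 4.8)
The plan is to follow exactly the structure of Lemma~\ref{lemma: on policy errors optimistic model}, since the two statements are nearly identical — the only substantive change is that the estimated value $\wh V^{\pi_k}$ is now produced by the \emph{truncated} policy evaluation of Algorithm~\ref{alg: truncated policy evaluation} rather than a standard Bellman recursion. First I would invoke the value difference lemma (Lem.~\ref{lemma: value difference}) to write the per-episode error $V^{\pi_k}_1(s_1;l,p) - \wh V^{\pi_k}_1(s_1;\wt l_k,\wb p_k)$ as an expectation, over trajectories drawn from the \emph{true} model $p$ and policy $\pi_k$, of the sum over $h$ of the one-step discrepancies $(l_h - \wt l^k_h)(s_h,a_h)$ plus the model-mismatch term $(p_h - \wb p^k_h)(\cdot\mid s_h,a_h)\,\wh V^{\pi_k}_{h+1}$. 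Taking absolute values and splitting gives the same two terms $(i)$ and $(ii)$ as in Lem.~\ref{lemma: on policy errors optimistic model}.

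\textbf{The main obstacle} — and the only place where truncation matters — is justifying the application of the value difference lemma and the uniform bound $|\wh V^{\pi_k}_{h+1}(s)| \lesssim H$ when $\wh V$ is defined by the truncated recursion $\wh Q^\pi_h = \max\{\wh l_h + \wh p_h \wh V^\pi_{h+1},\,0\}$. The clean part is the \emph{upper} side: since the truncation clips negative $Q$-values to $0$ and the optimistic cost satisfies $|\wt l^k_h| \lesssim l_h + n^{-1/2} \lesssim 1$ (up to polylog), the truncated value stays in $[0,H]$, so the crude bound $|\wh V^{\pi_k}_{h+1}| \lesssim H$ holds deterministically — this is in fact \emph{better} than in \cucbvi, where the lower bound was $-\sqrt S H^2$. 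I would note explicitly that this is precisely the reason the \primaldualcmdp analysis recovers the good constant term: truncation restores the $[0,H]$ range. The subtlety is that the value difference lemma must be applied in a form valid for the truncated operator; I would either cite the version of Lem.~\ref{lemma: value difference} that accommodates the $\max\{\cdot,0\}$ clip, or argue that on the relevant trajectories the clip is inactive for the true value so that the telescoping identity still produces the stated one-step residuals with $\wh V$ substituted in.

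\textbf{Then I would} bound $(i)$ and $(ii)$ verbatim as in Lem.~\ref{lemma: on policy errors optimistic model}. For $(i)$, assumption~(1) gives $(i) \lesssim \sum_k \sum_h \E[\,1/\sqrt{n^{k-1}_h(s^k_h,a^k_h)}\mid\filt\,]$, which by Lem.~\ref{lemma: bounding on trajectory visitation} is $\Olog(\sqrt{SAH^2K}+SAH)$. For $(ii)$, I would plug in the $|\wh V|\lesssim H$ bound together with the Bernstein-type transition bound of assumption~(2), apply Jensen's inequality to turn $\sum_{s'}\sqrt{p_h(s'\mid s,a)}$ into $\sqrt{\mathcal N}\sqrt{\sum_{s'}p_h(s'\mid s,a)} = \sqrt{\mathcal N}$, and then use Lem.~\ref{lemma: bounding on trajectory visitation} (whose hypothesis is supplied by assumption~(3)) to get $(ii)\lesssim \sqrt{S\mathcal N H^4 K} + (\sqrt{\mathcal N}+H)H^2 SA$. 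Summing the two bounds yields the claimed $\Olog(\sqrt{S\mathcal N H^4 K} + (\sqrt{\mathcal N}+H)H^2 SA)$ and completes the proof. In short, the whole argument is a transcription of the optimistic-model proof, with the single new ingredient being the observation that truncation keeps $\wh V$ inside $[0,H]$ so that the step-$(ii)$ bound on $|\wh V_{h+1}|$ goes through unchanged.
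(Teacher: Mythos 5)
Your proposal follows the paper's proof almost exactly: the split into terms $(i)$ and $(ii)$, the key observation that truncation keeps $\wh V^{\pi_k}$ in $[0,H]$ (which is indeed why this lemma recovers the \cucrl-type constant rather than the \cucbvi one), and the bounds on $(i)$, $(ii)$ via assumptions (1)--(3), Jensen's inequality, and Lem.~\ref{lemma: bounding on trajectory visitation} are all exactly what the paper does. The only place you hedge is the crucial one, and there the paper takes your \emph{first} alternative, not your second. Concretely, the paper invokes the \emph{extended} value difference lemma (Lem.~\ref{lemma: extended value difference}, not a variant of Lem.~\ref{lemma: value difference}), which holds for an arbitrary function $\wh Q$ with $\wh V_h = \inner*{\wh Q_h(s,\cdot),\pi_h(\cdot\mid s)}$ and requires no Bellman consistency. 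Applied with $\pi'=\pi=\pi_k$, the policy-difference term vanishes and the per-episode error becomes $\E\brs*{\sum_{h} l_h(s_h,a_h) + p_h(\cdot\mid s_h,a_h)\wh V^{\pi_k}_{h+1} - \wh Q^{\pi_k}_h(s_h,a_h)\mid s_1,\pi_k,p}$. Then, using the truncated recursion $\wh Q_h = \max\brc*{0,\ \wt l^k_h + \wb p^k_h(\cdot\mid s,a)\wh V^{\pi_k}_{h+1}}$, i.e.\ $-\wh Q_h = \min\brc*{0,\ -\wt l^k_h - \wb p^k_h(\cdot\mid s,a)\wh V^{\pi_k}_{h+1}}$, each one-step residual is the minimum of $l_h + p_h\wh V_{h+1}\geq 0$ and $(l_h-\wt l^k_h) + (p_h-\wb p^k_h)\wh V_{h+1}$, so its absolute value is at most $|l_h-\wt l^k_h| + \sum_{s'}|(p_h-\wb p^k_h)(s'\mid s_h,a_h)|\,|\wh V_{h+1}(s')|$ --- exactly your terms $(i)$ and $(ii)$, with both the upper and lower bounds needed because the absolute value sits outside the expectation.

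Your second fallback (``the clip is inactive on the relevant trajectories'') would not work and should be dropped: the clip is genuinely active in general. For instance at step $H$ one has $\wh Q^{\pi_k}_H = \max\brc*{0,\wt l^k_H}$, and the optimistic cost $\wt l^k_H = \wb l^{k-1}_H - b^k_H$ is negative at any insufficiently visited state-action pair (in the \primaldualcmdp setting the bonus $b^k_h$ even contains the term $H\sum_{s'}\beta^p_{h,k}$, so $\wt l^k_h$ can be of order $-\sqrt{S}H$). The truncation therefore has to be absorbed into the inequality via the $\min\brc*{0,\cdot}$ identity above rather than assumed away.
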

\begin{proof}
The following relations hold.
\begin{align}
    &\sum_{k=1}^{K'} |V^{\pi_k}_1(s_1; l,p) - \wh V^{\pi_k}_1(s_1; \wt l_k,\wt p_k)| \\
    &= \sum_{k=1}^{K'} \abs*{\E[\sum_{h=1}^H (l_h(s_h,a_h) - p_h(\cdot \mid s_h,a_h)\wt V^{\pi_k}_{h+1} - \wh Q^{\pi_k}(s_h,a_h; \wt l^k,\wb p_k)\mid s_1, p,\pi_k]} \label{eq: truncated value difference bound relation 1}
\end{align}
Observe that
\begin{align*}
    -Q^{\pi_k}(s_h,a_h; \wt l^k,\wb p_k) = \min\brc*{0, -\l_h^k(s_h,a_h) - \wb p_h^k(\cdot\mid s_h,a_h) \wh V^{\pi_k}},
\end{align*}
where the first relation holds by the extended value difference lemma~\ref{lemma: extended value difference}. Plugging back to~\eqref{eq: truncated value difference bound relation 1} we get
\begin{align*}
    &\eqref{eq: truncated value difference bound relation 1}\leq \underbrace{\sum_{k=1}^{K'} \E[\sum_{h=1}^H |l_h(s_h,a_h) - \wt l^k_h(s_h,a_h)| \mid s_1, p,\pi_k]}_{(i)}\\
    &\quad +\underbrace{\sum_{k=1}^{K'} \E[\sum_{h=1}^H \sum_{s}|(p_h - \wt p_h^k)(s' \mid s_h,a_h)||\wh V^{\pi_k}_{h+1}(s' ; \wt l_k,\wt p_k)|\mid s_1, p,\pi_k]}_{(ii)},
\end{align*}
We now bound the terms $(i)$ and $(ii)$.

\paragraph{Bound on $(i)$.} To bound $(i)$ we use the assumption (1) and get,
\begin{align*}
    (i) & \lesssim \sum_{k=1}^{K'}\sum_{h=1}^H \E[ \frac{1}{\sqrt{n_h^{k-1}(s_h,a_h)}} \mid s_1, p,\pi_k]\\
    & = \sum_{k=1}^{K'}\sum_{h=1}^H \E[ \frac{1}{\sqrt{n_h^{k-1}(s^k_h,a^k_h)}} \mid \filt] \leq \Olog\br*{\sqrt{SAH^2K} +SAH}.
\end{align*}

The first relation holds by assumption (1). The second relation holds since $\pi_k$ is the policy by which the agent acts at the $k^{th}$ episode at the true MDP. The third relation holds by Lemma~\ref{lemma: bounding on trajectory visitation}.

\paragraph{Bound on $(ii)$.} To bound $(ii)$ use the fact that
\begin{align}
    | \wh V^{\pi_k}_{h+1}(s;\wt l_k,\wt p_k)|\lesssim H \label{eq: truncated value primal dual bound on wt V}
\end{align}
for every $s$ since the immediate cost is bounded in $\abs{\wt{l}_h^k(s,a)} \lesssim l_h(s,a)\leq 1 +\frac{1}{\sqrt{n_h^{k-1}(s,a)}}\lesssim l_h(s,a)$ component-wise up to constants, since the second term is bounded by $\Olog(1)$. Thus,

\begin{align*}
    (ii)&\lesssim H\sum_{k=1}^{K'}\sum_{h=1}^H \E[\sqrt{\frac{1}{ n_h^{k}(s_h,a_h) \vee 1}}\sum_{s'}\sqrt{p_h(s' \mid s_h,a_h)} + \frac{S}{n_h^{k}(s_h,a_h) \vee 1} \mid s_1, p,\pi_k]\\
    &\leq  H\sum_{k=1}^{K'}\sum_{h=1}^H \E[\sqrt{\frac{1}{ n_h^{k}(s_h,a_h) \vee 1}}\sqrt{\mathcal{N}}\sqrt{\sum_{s'} p_h(s' \mid s_h,a_h)} + \frac{S}{n_h^{k}(s_h,a_h) \vee 1} \mid s_1, p,\pi_k]\\
    &=  H\sum_{k=1}^{K'}\sum_{h=1}^H \E[\sqrt{\frac{1}{ n_h^{k}(s_h,a_h) \vee 1}}\sqrt{\mathcal{N}} + \frac{S}{n_h^{k}(s_h,a_h) \vee 1} \mid s_1, p,\pi_k]\\
    &=  H\sum_{k=1}^{K'}\sum_{h=1}^H \E[\sqrt{\frac{1}{ n_h^{k}(s^k_h,a^k_h) \vee 1}}\sqrt{\mathcal{N}} + \frac{S}{n_h^{k}(s^k_h,a^k_h) \vee 1} \mid \filt ]\\
    &\lesssim \sqrt{S\mathcal{N}H^4K} + \sqrt{\mathcal{N}}H^2SA + SH^3A  \leq \Olog\br*{\sqrt{S\mathcal{N}H^4K} + (\sqrt{\mathcal{N}}+H)H^2SA }.
\end{align*}
The first relation holds by plugging the bound~\eqref{eq: truncated value primal dual bound on wt V} and assumption (2) into $(ii)$. The second relation holds by Jensen's inequality. The third relation holds since $p$ is a probability distribution. The third relation holds since $\pi_k$ is the policy with which the agent interacts with the true MDP~$p$. The fifth relation holds by Lemma~\ref{lemma: bounding on trajectory visitation} (its assumption holds by assumption (3)).

Combining the bounds on $(i)$ and $(ii)$ we conclude the proof.
\end{proof}

\begin{lemma}[On Policy Errors for Bonus Based Optimism]\label{lemma: on policy errors bonus based optimistic}
        Let $l_h(s,a),  \wt l^k_h(s,a)$ be a cost function, and its optimistic cost. Let $p$ be the true transition dynamics of the MDP and $\wb p_{k-1}$ be an estimated transition dynamics. Let $V^{\pi}_h(s;l,p),V^{\pi}_h(s;\wt l_k,\wb p_{k-1})$ be the value of a policy $\pi$ according to the cost and transition model $l,p$ and $\wt l_k,\wb p_{k-1}$, respectively.  Assume the following holds for all $s,a,s',h,k\in [K]$:
\begin{enumerate}
        \item $|\wt l^k_h(s,a) - l_h(s,a)| \lesssim \sqrt{\frac{1}{n_h^{k-1}(s,a)  \vee 1}} + \sum_{s'}H\sqrt{\frac{\wb p_h^{k-1}(s' \mid s,a)}{n_h^{k-1}(s,a) \vee 1}} + \frac{HS}{ \br*{(n_h^{k-1}(s,a) - 1) \vee 1}}.$ 
    \item $ \abs*{\wb p_h^{k-1}(s' \mid s,a) - p_h(s'\mid s,a)} \lesssim \sqrt{\frac{ \wb p_h(s' \mid s,a)}{(n_h^{k-1}(s,a)-1) \vee 1}} + \frac{1}{ (n_h^{k-1}(s,a) - 1) \vee 1}.$
%\label{eq: optimistic model model assumption} 
\item $n_h^{k-1}(s,a) \le \frac{1}{2} \sum_{j<k} q_h^{\pi_k}(s,a; p)-H\ln\frac{SAH}{\delta'}.$
\item $V^{\pi_k}_h(s;\wt l_k,\wb p_{k-1}) \leq V^{\pi_k}_h(s; l, p).$ 
\item $l_h(s,a)-\wt l_h^k(s,a) + (p_h(\cdot\mid s,a)-\wb p_h^{k-1}(\cdot \mid s,a))V^{\pi}_{h+1}(\cdot|l,p) \geq 0.$
\end{enumerate}

% Let $l_h(s,a),  \wt l^k_h(s,a)$ be a a cost function, and its optimistic cost such that for all $s,a,h$ and all episodes $k\in [K]$
% \begin{align}
%     |\wt l^k_h(s,a) - l_h(s,a)| \lesssim \sqrt{\frac{1}{n_h^k(s,a)  \vee 1}} + \sum_{s'}H\sqrt{\frac{\wb p_h^{k}(s' \mid s,a)}{n_h^{k}(s,a) \vee 1}} + \frac{HS}{ \br*{(n_h^{k}(s,a) - 1) \vee 1}}. \label{eq: bonus optimism cost assumption}
% \end{align}

% Let $p$ be the true transition dynamics of the MDP and $\wb p_k$ be an estimated transition dynamics such that for all $s,a,h$ and every $k\in [K]$
% \begin{align}
%     \abs*{\wb p_h^k(s' \mid s,a) - p_h(s'\mid s,a)}_1 \lesssim \sqrt{\frac{ p_h(s' \mid s,a)}{(n_h^{k}(s,a)-1) \vee 1}} + \frac{1}{ \br*{(n_h^{k}(s,a) - 1) \vee 1}}.  \label{eq: bonus optimistm model assumption}
% \end{align}

% Furthermore, assume the value is optimistic, for any $s,h$ and any episode $k\in [K]$ it holds that 
% \begin{align}
%     V^{\pi_k}_h(s;\wt l_k,\wb p_k) \leq V^{\pi_k}_h(s;\wt l_k,\wb p_k). \label{eq: bonus value optimism assumption}
% \end{align}

Let $\pi_k$ be the policy by which the agent acts at episode $k$. Then, for any $K'\in [K]$
\begin{align*}
        \sum_{k=1}^{K'} V^{\pi_k}_1(s_1; l,p) - V^{\pi_k}_1(s_1; \wt l_k,\wb p_{k-1}) \leq\Olog\br*{\sqrt{S\mathcal{N}H^4K} + S^2 H^{4}A(\mathcal{N}H+S)}.
\end{align*}
\end{lemma}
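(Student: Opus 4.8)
The plan is to fix an episode $k$, control the non-negative per-episode gap $\Delta^k_h(s) := V^{\pi_k}_h(s;l,p) - V^{\pi_k}_h(s;\wt l_k,\wb p_{k-1})$ (non-negativity is exactly assumption (4)), and then sum over $k$. Writing the Bellman equations for $\pi_k$ under $(l,p)$ and under $(\wt l_k,\wb p_{k-1})$ — note that, unlike in Lem.~\ref{lemma: on policy errors truncated policy estimation}, here no truncation is present — and subtracting, I would add and subtract $\wb p_h^{k-1}(\cdot|s,a)^\top V^{\pi_k}_{h+1}(\cdot;l,p)$ to obtain the one-step identity $\Delta^k_h(s) = \sum_a \pi_k(a|s)\big[e_h(s,a) + \wb p_h^{k-1}(\cdot|s,a)^\top \Delta^k_{h+1}\big]$, where $e_h(s,a) := (l_h-\wt l^k_h)(s,a) + (p_h-\wb p_h^{k-1})(\cdot|s,a)^\top V^{\pi_k}_{h+1}(\cdot;l,p)$. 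By assumption (5) this per-step error is $e_h\ge 0$, and since $V^{\pi_k}(\cdot;l,p)\in[0,H]$, assumptions (1)--(2) bound it by the bonus, $e_h(s,a)\lesssim \sqrt{1/(n^{k-1}_h\vee1)} + H\sum_{s'}\sqrt{\wb p_h^{k-1}(s'|s,a)/(n^{k-1}_h\vee1)} + HS/(n^{k-1}_h\vee1)$.

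The delicate point is the propagation term $\wb p_h^{k-1}(\cdot|s,a)^\top\Delta^k_{h+1}$, which I would split as $p_h^\top\Delta^k_{h+1} + (\wb p_h^{k-1}-p_h)^\top\Delta^k_{h+1}$. Because $\Delta^k_{h+1}$ is only bounded in $[0,O(\sqrt S H^2)]$ — the estimated value is \emph{not} confined to $[0,H]$; cf.\ Remark~\ref{remark: worst performance of bonus curcl} — bounding the cross term crudely by $\|\wb p_h^{k-1}-p_h\|_1\cdot O(\sqrt SH^2)$ and using $\sum_{s'}\sqrt{\wb p/n}\le\sqrt{\mathcal N/n}$ (Cauchy--Schwarz over the size-$\mathcal N$ support) would produce a term of order $O(\sqrt S H^2\sqrt{\mathcal N})\sum_{k,h}\E\sqrt{1/n}$, which grows like $\sqrt K$ and swamps the target leading term. \textbf{This is the main obstacle.} To avoid it, after replacing $\wb p$ by $p$ in assumption (2) via Lem.~\ref{lemma: bonus based probability ball around p}, I would apply AM--GM with weight $1/H$ to the square-root part: $\sqrt{p_h(s'|s,a)/n}\,\Delta^k_{h+1}(s') = \sqrt{p_h(s'|s,a)\Delta^k_{h+1}(s')/H}\cdot\sqrt{H\Delta^k_{h+1}(s')/n} \le \tfrac1{2H}p_h(s'|s,a)\Delta^k_{h+1}(s') + \tfrac{H}{2n}\Delta^k_{h+1}(s')$. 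Summing over $s'$ reabsorbs the first piece into the recursion and leaves the benign residual $\xi_h(s,a)\lesssim \tfrac{H}{n^{k-1}_h\vee1}\sum_{s'}\Delta^k_{h+1}(s')\lesssim S^{3/2}H^3/(n^{k-1}_h\vee1)$, so that $\Delta^k_h(s)\le \sum_a\pi_k(a|s)\big[e_h(s,a)+\xi_h(s,a)+(1+\tfrac1H)p_h^\top\Delta^k_{h+1}\big]$.

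Finally I would unroll this contraction-type recursion over $h$: the accumulated factor $(1+1/H)^{h-1}\le e$ is an absolute constant, so taking expectations along the \emph{true} trajectory gives $\Delta^k_1(s_1)\lesssim \E_{\pi_k,p}\big[\sum_{h}(e_h+\xi_h)(s_h,a_h)\big]$. Summing over $k\le K'$, the $e_h$ contribution is exactly the bonus sum already controlled in the optimistic-model case: via the trajectory-count bound Lem.~\ref{lemma: bounding on trajectory visitation} (applicable thanks to assumption (3)) it yields the leading $\Olog(\sqrt{S\mathcal N H^4 K})$ together with lower-order $SA$-terms, precisely as in Lem.~\ref{lemma: on policy errors optimistic model}. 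The residual contributes $\sum_{k,h}\E[S^{3/2}H^3/(n^{k-1}_h\vee1)]\lesssim S^{3/2}H^3\cdot SAH\,\mathrm{polylog}$, a purely $K$-independent quantity that is absorbed into $\Olog(S^2H^4A(\mathcal N H+S))$. Collecting the two bounds gives the claim. The only nontrivial ingredient is the AM--GM reabsorption of the transition-error cross term; everything else is the value-difference expansion (Lem.~\ref{lemma: value difference}) together with the harmonic counting already used for Lem.~\ref{lemma: on policy errors optimistic model}.
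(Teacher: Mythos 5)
Your route is genuinely different from the paper's, and its core mechanism is sound. The paper expands $\sum_k V^{\pi_k}_1-\wt V^{\pi_k}_1$ via the value-difference lemma keeping the \emph{estimated} value in the propagation term, splits $\wt V^{\pi_k}_{h+1}=V^{\pi_k}_{h+1}+(\wt V^{\pi_k}_{h+1}-V^{\pi_k}_{h+1})$, and controls the cross term $(p-\wb p^{k-1})^\top(\wt V^{\pi_k}_{h+1}-V^{\pi_k}_{h+1})$ in Lem.~\ref{lemma: lower order bonus lp} through Cauchy--Schwarz, the inequality $(\wt V^{\pi_k}_{h+1}(s)-V^{\pi_k}_{h+1}(s))^2\lesssim SH^2\,(V^{\pi_k}_{h+1}(s)-\wt V^{\pi_k}_{h+1}(s))$, Lem.~\ref{lemma: future differences to initial difference} (this is where assumption (5) is really used), and two applications of the self-bounding Lem.~\ref{lemma: self bounding property}. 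Your direct recursion on the non-negative gap $\Delta^k_h$ with the weight-$1/H$ AM--GM absorption and the $(1+1/H)^H\le e$ unrolling replaces all of that machinery, recovers the same leading $\Olog(\sqrt{S\mathcal{N}H^4K})$ term, and is shorter; in your argument assumption (5) is not even needed (only assumption (4) is, to keep $\Delta^k_h\ge 0$ for the AM--GM step and the sign of the cross term).

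There is, however, a concrete shortfall in your residual accounting, and as written the proposal proves a weaker constant term than the lemma claims. Two slips combine. First, the crude bound is $\Delta^k_{h+1}(s)\lesssim SH^2$, not $\sqrt{S}H^2$: assumption (1) permits per-step cost deviations of order $HS/((n^{k-1}_h(s,a)-1)\vee 1)$, which is $HS$ at small counts, and the paper itself uses $SH^2$ in~\eqref{eq: bound on differnce of V with optmistic bonus}. Second, you bound the residual by summing $\Delta^k_{h+1}(s')$ over all $S$ successor states, giving (with the corrected crude bound) $\xi_h\lesssim S^2H^3/(n^{k-1}_h\vee 1)$ per step and $\Olog(S^3AH^5)$ in total after Lem.~\ref{lemma: supp 1 1/N  factor and lograthimic factors}; this exceeds the claimed $S^2H^4A(\mathcal{N}H+S)$ by a factor of order $SH/(\mathcal{N}H+S)\le\min\brc*{S/\mathcal{N},H}$, which is polynomially large when $\mathcal{N}\ll S$. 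The fix stays entirely inside your framework: after replacing $\wb p^{k-1}$ by $p$ via Lem.~\ref{lemma: bonus based probability ball around p}, apply the AM--GM only to the $\sqrt{p_h(s'|s,a)/n}$ part of the transition error, whose sum over $s'$ runs only over $\brc*{s':p_h(s'|s,a)>0}$, a set of size at most $\mathcal{N}$; the resulting residual is $\frac{H}{n}\sum_{s'}\ind\brc*{p_h(s'|s,a)>0}\Delta^k_{h+1}(s')\lesssim \mathcal{N}SH^3/n$, totaling $\mathcal{N}S^2AH^5$, while the remaining $1/n$ part of the transition error, summed over all $S$ states against $\Delta^k_{h+1}\lesssim SH^2$, contributes $S^2H^2/n$ per step, totaling $S^3AH^4$. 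Together these give exactly $\Olog\br*{S^2H^4A(\mathcal{N}H+S)}$, matching the claim. Note that you already invoke this support restriction when writing $\sum_{s'}\sqrt{\wb p/n}\le\sqrt{\mathcal{N}/n}$ in your discussion of the obstacle; it only needs to be applied consistently in the residual.
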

\begin{proof}
        Denote for any $s,h$ $\wt V^{\pi_k}_{h}(s) = V^{\pi_k}_{h}(s;\wt l_k,\wb p_{k-1})$ and $V^{\pi_k}_{h}(s) = V^{\pi_k}_{h}(s;l,p)$. The following relations hold:
\begin{align}
    &\sum_{k=1}^{K'} V^{\pi_k}_1(s_1) - \wt V^{\pi_k}_1(s_1) \nonumber \\
    &= \sum_{k=1}^{K'}\sum_{h=1}^H \E\Big[\ (l_h(s_h,a_h) - \wt l^k_h(s_h,a_h) ) + (p_h - \wb p_h^{k-1})(\cdot \mid s_h,a_h)\wt V^{\pi_k}_{h+1} \Big| s_1, p,\pi_k \Big] \nonumber\\
    &\leq \underbrace{\sum_{k=1}^{K'}\sum_{h=1}^H \E\Big[ |l_h(s_h,a_h) - \wt l^k_h(s_h,a_h)| \; \Big|\; s_1, p,\pi_k \Big]}_{(i)} \nonumber\\
    &\quad +\underbrace{\sum_{k=1}^{K'}\sum_{h=1}^H \E\Big[ \sum_{s'}\abs*{(p_h - \wb p_h^{k-1})(s' \mid s_h,a_h)}\abs{V^{\pi_k}_{h+1}(\cdot;l,p)(s')}\; \Big|\; s_1, p,\pi_k \Big]}_{(ii)} \nonumber\\
    &\quad +\underbrace{\sum_{k=1}^{K'}\sum_{h=1}^H \E\Big[ \abs*{(p_h - \wb p_h^{k-1})(\cdot \mid s_h,a_h)( V^{\pi_k}_{h+1}(\cdot;\wt l_k,\wb p_{k-1}) - V^{\pi_k}_{h+1}(\cdot;l,p))} \; \Big|\; s_1, p,\pi_k \Big]}_{(iii)}, \label{eq: central bound lemma bonus based optimsim}
\end{align}
where the first relation holds by the value difference lemma (see Lem.~\ref{lemma: value difference}).

\paragraph{Bound on $(i)$ and $(ii)$.} Since $0\leq V^{\pi_k}_{h+1}(\cdot;l,p)(s)\leq H$ (the value of the true MDP is bounded in $[0,H]$), we can bound both $(i)$ and $(ii)$ by the same analysis as in Lem.~\ref{lemma: on policy errors optimistic model}. Thus,
\begin{align*}
    (i)+(ii) \leq \sqrt{S\mathcal{N}H^4K} + (\sqrt{\mathcal{N}}+H)H^2SA.
\end{align*}

\paragraph{Bound on $(iii)$.} Applying Lem.~\ref{lemma: lower order bonus lp} we obtain the following bound
\begin{align*}
    (iii)\lesssim S^2 H^{4}A(\mathcal{N}H+S) + \sqrt{\mathcal{N}}S H^{5/2}\sqrt{A}\sqrt{\sum_{k}(V^{\pi_k}_{1}(s_1) - \wt{V}^{\pi_k}_{1}(s_1))}.
\end{align*}

Plugging the bounds on terms $(i),\ (ii),$ and $(iii)$ into~\eqref{eq: central bound lemma bonus based optimsim} we get
\begin{align*}
    &\sum_{k=1}^{K'} V^{\pi_k}_1(s_1) - \wt V^{\pi_k}_1(s_1)\\
    &\lesssim \sqrt{S\mathcal{N}H^4K} + S^2 H^{4}A(\mathcal{N}H+S) + \sqrt{\mathcal{N}}S H^{5/2}\sqrt{A}\sqrt{\sum_{k}(V^{\pi_k}_{1}(s_1) - \wt{V}^{\pi_k}_{1}(s_1))}.
\end{align*}

Denoting $X = \sum_{k=1}^{K'} V^{\pi_k}_1(s_1) - \wt V^{\pi_k}_1(s_1) $ this bound has the form $0\leq X\leq a + b\sqrt{X}$, where
\begin{align*}
    &a = \sqrt{S\mathcal{N}H^4K} + S^2 H^{4}A(\mathcal{N}H+S)\\
    &b =  \sqrt{\mathcal{N}}S H^{5/2}\sqrt{A}.
\end{align*}
Applying Lem.~\ref{lemma: self bounding property}, by which $X\leq a + b^2$, we get
\begin{align*}
    \sum_{k=1}^{K'} V^{\pi_k}_1(s_1) - \wt V^{\pi_k}_1(s_1) \lesssim \sqrt{S\mathcal{N}H^4K} + S^2 H^{4}A(\mathcal{N}H+S).
\end{align*}

\end{proof}

\begin{lemma}\label{lemma: lower order bonus lp}
Let the assumptions of Lem.~\ref{lemma: on policy errors bonus based optimistic} hold. Then, for any $K'\in [K]$
\begin{align*}
    &\sum_{k=1}^{K'}\sum_{h=1}^H \E\brs*{ \abs*{(p_h - \wb p_h^{k-1})(\cdot \mid s^k_h,a^k_h)( V^{\pi_k}_{h+1}(\cdot;\wt l_k,\wb p_{k-1}) - V^{\pi_k}_{h+1}(\cdot;l,p))} \mid \filt}\\
    &\hspace{2cm}\lesssim   S^2 H^{4}A(\mathcal{N}H+S) + \sqrt{\mathcal{N}}S H^{5/2}\sqrt{A}\sqrt{\sum_{k}(V^{\pi_k}_{1}(s_1) - \wt{V}^{\pi_k}_{1}(s_1))}.
\end{align*}
\end{lemma}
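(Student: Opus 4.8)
The plan is to reduce everything to the single per-step quantity $\sum_{s'}\lvert(p_h-\wb p_h^{k-1})(s'\mid s_h^k,a_h^k)\rvert\cdot\lvert\Delta_{h+1}^k(s')\rvert$, where I abbreviate $\Delta_{h+1}^k(s')=V^{\pi_k}_{h+1}(s';\wt l_k,\wb p_{k-1})-V^{\pi_k}_{h+1}(s';l,p)$, and then sum over $(k,h)$. Conditioning on $\filt$ fixes $\pi_k$ and the estimates, so $\E[\,\cdot\mid\filt]$ is simply the expectation of running $\pi_k$ in the true model $p$; hence the trajectory-to-pseudocount conversion of Lemma~\ref{lemma: bounding on trajectory visitation} together with the harmonic bound $\sum_k 1/(n_h^{k-1}(s,a)\vee1)\lesssim\log K$ are available exactly as in the other on-policy lemmas. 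First I would invoke assumption (2) to replace $\lvert p_h-\wb p_h^{k-1}\rvert(s')$ by $\sqrt{\wb p_h(s'\mid s,a)/((n-1)\vee1)}+1/((n-1)\vee1)$, splitting the target into a ``$\sqrt{\wb p/n}$'' part $L$ and a ``$1/n$'' part.

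For the $1/n$ part I would use that, because \cucbvi does not truncate, the estimated value lies only in $[-\sqrt SH^2,H]$, so $\lVert\Delta_{h+1}^k\rVert_\infty\lesssim\sqrt SH^2$. Then $\frac{1}{(n-1)\vee1}\sum_{s'}\lvert\Delta_{h+1}^k(s')\rvert\lesssim S^{3/2}H^2/((n-1)\vee1)$, and summing over the trajectory and over $(k,h)$ with the visitation/harmonic bound yields a purely additive term of order $S^{5/2}H^3A$, absorbed into $S^2H^4A(\mathcal N H+S)$.

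The heart of the argument is the $\sqrt{\wb p/n}$ part $L$. Here I would apply Cauchy--Schwarz over the at most $\mathcal N$ next-states in the support of $\wb p_h$, giving $\sum_{s'}\sqrt{\wb p_h(s'\mid s,a)}\,\lvert\Delta_{h+1}^k(s')\rvert\le\sqrt{\mathcal N}\,\bigl(\sum_{s'}\wb p_h(s'\mid s,a)\Delta_{h+1}^k(s')^2\bigr)^{1/2}$, then a global Cauchy--Schwarz over $(k,h)$ separating the harmonic factor $\bigl(\sum_{k,h}\E[1/((n-1)\vee1)]\bigr)^{1/2}\lesssim\sqrt{SAH}$ from the weighted-gap factor $\bigl(\sum_{k,h}\E[\sum_{s'}\wb p_h\Delta_{h+1}^2]\bigr)^{1/2}$. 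To control the latter I would bound $\Delta^2\le\lVert\Delta\rVert_\infty\lvert\Delta\rvert\lesssim\sqrt SH^2\lvert\Delta\rvert$ and then relate $\sum_{k,h}\E[\sum_{s'}\wb p_h\lvert\Delta_{h+1}\rvert]$ to $X:=\sum_k\bigl(V^{\pi_k}_1(s_1;l,p)-V^{\pi_k}_1(s_1;\wt l_k,\wb p_{k-1})\bigr)$. This relation is the main obstacle: replacing $\wb p_h$ by $p_h$ turns the inner sum into $\E_p[\lvert\Delta_{h+1}(s_{h+1})\rvert]$, and telescoping the per-step identity for $g_h:=\E_{p,\pi_k}[(V_h-\wt V_h)(s_h)]$ shows $g_h$ is decreasing up to an error of exactly the same form as the quantity being bounded, so $\sum_h g_h\lesssim HX$ modulo lower-order terms. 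This step is where the pointwise monotonicity $\wt V\le V$ (assumption (4), which forces $\lvert\Delta_h\rvert=V_h-\wt V_h$ with no sign cancellation) and the per-state optimism (assumption (5)) are indispensable.

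Combining the pieces gives $L\lesssim\sqrt{\mathcal N}\sqrt{SAH}\cdot S^{1/2}H^2\sqrt X$ up to the additive remainders already collected, i.e.\ the claimed $\sqrt{\mathcal N}SH^{5/2}\sqrt A\sqrt X$. The replacement and monotonicity errors re-introduce a copy of the original quantity at lower order, so the final bound on the target has the self-referential shape $Y\le a+b\sqrt Y$; I would close it with the self-bounding inequality $Y\le a+b^2$ (the same device Lemma~\ref{lemma: on policy errors bonus based optimistic} applies through Lemma~\ref{lemma: self bounding property}), which also supplies the $\mathcal N$-dependent piece of the additive constant $S^2H^4A(\mathcal N H+S)$. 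The two things to get exactly right are the bookkeeping of the $\lVert\Delta\rVert_\infty\lesssim\sqrt SH^2$ range (the price of not truncating, cf.\ Remark~\ref{remark: worst performance of bonus curcl}) and the telescoping $\sum_{k,h}\E[\sum_{s'}\wb p_h\lvert\Delta_{h+1}\rvert]\lesssim HX+(\text{lower order})$.
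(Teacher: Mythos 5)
Your proposal is correct and follows essentially the same route as the paper's own proof: the same split via the concentration bound into a $\sqrt{p/n}$ part and a $1/n$ part, the $1/n$ part absorbed via the value-range and harmonic sums, the main part handled by Cauchy--Schwarz with the $\mathcal{N}$-sparsity, the bound $\Delta^2 \lesssim (\text{range})\cdot(V^{\pi_k}-\wt V^{\pi_k})$ using optimism (assumption (4)), the telescoping step that is exactly Lem.~\ref{lemma: future differences to initial difference} (whose hypothesis is assumption (5)), and closure by the self-bounding inequality of Lem.~\ref{lemma: self bounding property}. The only discrepancies are harmless bookkeeping slips (the harmonic factor is $\sqrt{SA}\,H$ rather than $\sqrt{SAH}$, and the $1/n$ term is of order $S^{5/2}H^{4}A$ rather than $S^{5/2}H^{3}A$), which are absorbed into the additive constant $S^2H^4A(\mathcal{N}H+S)$ and do not affect the stated bound.
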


\begin{proof}

        Denote for any $s,h$ $\wt V^{\pi_k}_{h}(s) = V^{\pi_k}_{h}(s;\wt l_k,\wb p_{k-1})$ and $V^{\pi_k}_{h}(s) = V^{\pi_k}_{h}(s;l,p)$. The following relations hold:
\begin{align}
    &\sum_k \E\brs*{\sum_{t=1}^H \abs*{(p_h- \wb p_h^{k-1})(\cdot \mid s_h,a_h) (\wt{V}^{\pi_k}_{h+1} - V^{\pi_k}_{h+1})}\mid s_1,\pi_k,p} \nonumber \\
    &=\sum_{k,h,s,a} q_h^{\pi_k}(s,a;p)\abs*{(p_h-\wb p_h^{k-1})(\cdot \mid s,a) (\wt{V}^{\pi_k}_{h+1} - V^{\pi_k}_{h+1})}  \nonumber\\
    &\leq  \sum_{k,h,s,a} q_h^{\pi_k}(s,a;p)\sum_{s'}\abs*{(p_h-\wb p_h^{k-1})(s' \mid s,a)} \abs*{\wt{V}^{\pi_k}_{h+1}(s') - V^{\pi_k}_{h+1}(s')}  \nonumber\\
    &\lesssim  \underbrace{\sum_{k,h,s,a} q_h^{\pi_k}(s,a;p)\sum_{s'}\frac{\sqrt{p_h(s'\mid s,a)}}{\sqrt{n_h^k(s,a)}} \abs*{\wt{V}^{\pi_k}_{h+1}(s') - V^{\pi_k}_{h+1}(s')}}_{(i)} + \underbrace{\sum_{k,h,s,a} q_h^{\pi_k}(s,a;p)\frac{H^2S^2}{n_h^k(s,a)}}_{(ii)}. \label{eq: bonus lower order relation 1}
\end{align}

In the third relation we used  assumption (2) of Lem.~\ref{lemma: on policy errors bonus based optimistic} as well as bounding 
\begin{align} \label{eq: bound on differnce of V with optmistic bonus}
    \abs*{\wt{V}^{\pi_k}_{h+1}(s) - V^{\pi_k}_{h+1}(s)}\lesssim SH^2
\end{align}
since $\wt{V}^{\pi_k}_{h+1}(s) \in [-SH^2,H]$ by the assumption on its instantaneous cost (assumption (1) of Lem.~\ref{lemma: on policy errors bonus based optimistic}). Note that ${V^{\pi_k}_{h+1}(s)\in[0,H]}$ as usual.

Term $(ii)$ is bounded as follows
\begin{align}
    (ii)  = H^2S^2 \sum_{k,h} \E\brs*{ \frac{1}{n_h^k(s_h^k,a_h^k)}\mid s_1,\pi_k,p}  = H^2S^2 \sum_{k,h} \E\brs*{ \frac{1}{n_h^k(s_h^k,a_h^k)}\mid \filt} \lesssim H^4S^3A, \label{eq: bonus lower order relation 2}
\end{align}
by Lem.~\ref{lemma: supp 1 1/N  factor and lograthimic factors}.

We now bound term $(i)$ as follows.
\begin{align}
     &(i)\leq  \sum_{k}\sum_{s,a,h} q_h^{\pi_k}(s,a;p)  \frac{ \sqrt{ \mathcal{N}\sum_{s'} p_h(s'\mid s,a)(\wt{V}^{\pi_k}_{h+1}(s') - V^{\pi_k}_{h+1}(s'))^2}}{\sqrt{n_h^k(s,a)}} \nonumber\\
    & \leq \sqrt{\mathcal{N}}\sqrt{\sum_{k}\sum_{s,a,h} q_h^{\pi_k}(s,a;p)  \frac{1}{n_h^k(s,a)}}\sqrt{\sum_{k}\sum_{s,a,h} \sum_{s'} q_h^{\pi_k}(s,a;p)   p_h(s'\mid s,a)(\wt{V}^{\pi_k}_{h+1}(s') - V^{\pi_k}_{h+1}(s'))^2 } \nonumber\\
    & = \sqrt{\mathcal{N}}\sqrt{\sum_{k}\sum_{s,a,h} q_h^{\pi_k}(s,a;p)  \frac{1}{n_h^k(s,a)}}\sqrt{\sum_{k}\sum_{s',a,h} q_{h+1}^{\pi_k}(s',a;p) (\wt{V}^{\pi_k}_{h+1}(s') - V^{\pi_k}_{h+1}(s'))^2 } \nonumber\\
    &  \lesssim \sqrt{\mathcal{N}}S  H^{2}\sqrt{A} \sqrt{\sum_{k}\sum_{s,a,h} q_{h+1}^{\pi_k}(s,a;p) (V^{\pi_k}_{h+1}(s) - \wt{V}^{\pi_k}_{h+1}(s)) } \nonumber\\
    & \leq \sqrt{\mathcal{N}}S H^{5/2}\sqrt{A}\sqrt{\sum_{k}(V^{\pi_k}_{1}(s_1) - \wt{V}^{\pi_k}_{1}(s_1)) + \sum_{k,h,s,a} q_{h}^{\pi_k}(s,a;p)\abs*{(p_h-\wb p_h)(\cdot \mid s,a) (\wt{V}^{\pi_k}_{h+1} - V^{\pi_k}_{h+1})}} \nonumber\\
    & \leq \sqrt{\mathcal{N}}S  H^{5/2}\sqrt{A}\sqrt{\sum_{k}(V^{\pi_k}_{1}(s_1) - \wt{V}^{\pi_k}_{1}(s_1))} \nonumber\\
    &\quad  + \sqrt{\mathcal{N}}S H^{5/2}\sqrt{A}\sqrt{ \sum_{k,h,s,a} q_{h}^{\pi_k}(s,a;p)\abs*{(p_h-\wb p_h)(\cdot \mid s,a) (\wt{V}^{\pi_k}_{h+1} - V^{\pi_k}_{h+1})}}. \label{eq: bonus lower order relation 3}
\end{align}

The first relation holds by Jensen's inequality while using the fact that $p_h(\cdot\mid s,a)$ has at most $\mathcal{N}$ non-zero terms. The second relation holds by Cauchy-Schwartz inequality.
The third relation follows from properties of the occupancy measure (see Eq.~\ref{eq:occupancy_space}). In particular, $\sum_{s,a} p_h(s'|s,a) q_h(s,a;p) = \sum_a q_{h+1}(s',a;p)$.
The forth relation holds by applying Lem.~\ref{lemma: supp 1 1/N  factor and lograthimic factors} and bounding $(\wt{V}^{\pi_k}_{h+1}(s) - V^{\pi_k}_{h+1}(s))^2 \lesssim SH^2 (V^{\pi_k}_{h+1}(s)-\wt{V}^{\pi_k}_{h+1}(s))$ due to~\eqref{eq: bound on differnce of V with optmistic bonus} and $V^{\pi_k}_{h+1}(s)-\wt{V}^{\pi_k}_{h+1}(s)\geq 0$ due to optimism (assumption (4) of Lem.~\ref{lemma: on policy errors bonus based optimistic}). The fifth relation holds by Lemma~\ref{lemma: future differences to initial difference} (see that its assumption holds by assumption $(5)$). The sixth relation holds by $\sqrt{a+b}\leq \sqrt{a}+\sqrt{b}$.

% TODO: Continue from here.. Relation before the last one holds by Corolary~\ref{corollary: future differences to initial difference}.

Plugging the bounds on term $(i)$,~\eqref{eq: bonus lower order relation 2}, and term $(ii)$,~\eqref{eq: bonus lower order relation 3}, into \eqref{eq: bonus lower order relation 1} we get
\begin{align*}
    &\sum_{k,h,s,a}  q_h^{\pi_k}(s,a;p)\abs*{(p_h-\wb p_h)(\cdot \mid s,a) (\wt{V}^{\pi_k}_{h+1} - V^{\pi_k}_{h+1})}\\
    &\leq H^4S^3A + \sqrt{\mathcal{N}}S H^{5/2}\sqrt{A}\sqrt{\sum_{k}(V^{\pi_k}_{1}(s_1) - \wt{V}^{\pi_k}_{1}(s_1))} \nonumber\\
    &\quad  + \sqrt{\mathcal{N}}S H^{5/2} \sqrt{A}\sqrt{ \sum_{k,h,s,a}  q_h^{\pi_k}(s,a;p)\abs*{(p_h-\wb p_h)(\cdot \mid s,a) (\wt{V}^{\pi_k}_{h+1} - V^{\pi_k}_{h+1})}}.
\end{align*}

Denoting $X = \sum_{k,h,s,a}  q_h^{\pi_k}(s,a;p)\abs*{(p_h-\wb p_h)(\cdot \mid s,a) (\wt{V}^{\pi_k}_{h+1} - V^{\pi_k}_{h+1})}$ this bound has the form $0\leq X\leq a + b\sqrt{X}$, where
\begin{align*}
    &a = H^4S^3A + \sqrt{\mathcal{N}}S H^{5/2}\sqrt{A}\sqrt{\sum_{k}(V^{\pi_k}_{1}(s_1) - \wt{V}^{\pi_k}_{1}(s_1))}\\
    &b =  \sqrt{\mathcal{N}}S H^{5/2} \sqrt{A}.
\end{align*}
Applying Lem.~\ref{lemma: self bounding property}, by which $X\leq a + b^2$, we get
\begin{align*}
    &\sum_{k,h,s,a} q_h^{\pi_k}(s,a;p)\abs*{(p_h-\wb p_h)(\cdot \mid s,a) (\wt{V}^{\pi_k}_{h+1} - V^{\pi_k}_{h+1})}\\
    &\leq H^4S^3A + \sqrt{\mathcal{N}}S H^{5/2}\sqrt{A}\sqrt{\sum_{k}(V^{\pi_k}_{1}(s) - \wt{V}^{\pi_k}_{1}(s))} + \mathcal{N}S^2 H^{5}A\\
    &\leq S^2 H^{4}A(\mathcal{N}H+S) + \sqrt{\mathcal{N}}S H^{5/2}\sqrt{A}\sqrt{\sum_{k}(V^{\pi_k}_{1}(s) - \wt{V}^{\pi_k}_{1}(s))}
\end{align*}

\end{proof}

\begin{lemma}\label{lemma: future differences to initial difference}
Let $l_h(s,a),  \wt l_h(s,a)$ be a cost function and its optimistic cost. Let $p,\wb p$ be two transition probabilities. Let $V^{\pi}_h(s)\eqdef V^{\pi}_h(s;l,p)$ and $\wt V^\pi_h(s)\eqdef V^{\pi}_h(s;\wt l_k,\wb p)$ be the value of a policy $\pi$ according to the cost and transition model $l,p$ and $\wt l,\wb p_k$, respectively. Assume that
\begin{align}
     l_h(s,a)-\wt l_h(s,a) + (p_h(\cdot\mid s,a)-\wb p_h(\cdot \mid s,a))V^{\pi}_{h+1} \geq 0, \label{eq: assumption recursion sum h to initial state}
\end{align}
for any $s,a,h$. Then, for any $\pi$ and $s$
\begin{align*}
    &\sum_{h=2}^H \E\brs*{ V^{\pi}_h(s_h) - \wt V^{\pi}_h(s_h) \mid s_1=s,\pi,p}\\
    &\leq  H\br*{V^{\pi}_1(s) -  \wt V^{\pi}_1(s)} +H\sum_{h=1}^{H}\E\brs*{ \abs*{(p_{h}(\cdot\mid s_{h},a_{h})-\wb p_{h}(\cdot \mid s_{h},a_{h'}))(\wt V^{\pi}_{h+1}-V^{\pi}_{h+1})}\mid s_1=s,\pi,p}
\end{align*}
\end{lemma}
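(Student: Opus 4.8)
The plan is to reduce the claim to a telescoping argument for the per-step value gap, carried out under the \emph{true} dynamics $p$. Write $\Delta_h(s) \eqdef V^\pi_h(s;l,p) - V^\pi_h(s;\wt l_k,\wb p)$ for the gap at step $h$ and state $s$. Both quantities are ordinary value functions and hence obey their respective Bellman recursions, so expanding $V^\pi_h$ with $(l,p)$ and $\wt V^\pi_h$ with $(\wt l_k,\wb p)$ and regrouping the transition term via $p_h V^\pi_{h+1} - \wb p_h \wt V^\pi_{h+1} = (p_h-\wb p_h)V^\pi_{h+1} + \wb p_h \Delta_{h+1}$ gives, for every $s$,
\begin{align*}
    \Delta_h(s) = \sum_a \pi_h(a\mid s)\Big[\underbrace{l_h(s,a)-\wt l_h(s,a)+(p_h-\wb p_h)(\cdot\mid s,a)V^\pi_{h+1}}_{=:\,b_h(s,a)\,\geq\,0} + \wb p_h(\cdot\mid s,a)\,\Delta_{h+1}\Big],
\end{align*}
where nonnegativity of $b_h$ is exactly hypothesis~\eqref{eq: assumption recursion sum h to initial state}.

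Next I would convert this recursion, which is driven by the estimated kernel $\wb p$, into one driven by the true kernel $p$ so that the tower property applies. Writing $\wb p_h \Delta_{h+1} = p_h\Delta_{h+1} - (p_h-\wb p_h)\Delta_{h+1}$ and taking $\E[\,\cdot \mid s_1=s,\pi,p]$ of both sides, set $g_h \eqdef \E[\Delta_h(s_h)\mid s_1=s,\pi,p]$. The term $\E[p_h(\cdot\mid s_h,a_h)\Delta_{h+1}\mid s_1=s,\pi,p]$ collapses to $g_{h+1}$ by the tower rule under $p$, yielding the one-step identity $g_{h+1} = g_h - B_h + E_h$ with $B_h \eqdef \E[b_h(s_h,a_h)\mid s_1=s,\pi,p]\geq 0$ and $E_h \eqdef \E[(p_h-\wb p_h)(\cdot\mid s_h,a_h)\Delta_{h+1}\mid s_1=s,\pi,p]$. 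Dropping the nonnegative $B_h$ and telescoping from step $1$ gives $g_h \leq g_1 + \sum_{j=1}^{h-1}E_j$.

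Finally I would bound the cross terms and sum. Since $\Delta_{j+1} = -(\wt V^\pi_{j+1}-V^\pi_{j+1})$, passing to absolute values gives $E_j \leq \E[\abs*{(p_j-\wb p_j)(\cdot\mid s_j,a_j)(\wt V^\pi_{j+1}-V^\pi_{j+1})}\mid s_1=s,\pi,p] \eqdef \eta_j \geq 0$, hence $g_h \leq g_1 + \sum_{j=1}^{H}\eta_j$ for every $h$, and summing over $h=2,\dots,H$ yields $\sum_{h=2}^H g_h \leq (H-1)\br*{g_1 + \sum_{j=1}^H \eta_j}$. To upgrade the factor $H-1$ to $H$ I observe that the $\wb p$-recursion $\Delta_h(s) = \sum_a \pi_h(a\mid s)[b_h(s,a) + \wb p_h(\cdot\mid s,a)\Delta_{h+1}]$, with $b_h\geq 0$, $\Delta_{H+1}\equiv 0$, and $\wb p_h$ a genuine probability kernel, forces $\Delta_h\geq 0$ for all $h$ by backward induction; in particular $g_1=V^\pi_1(s)-\wt V^\pi_1(s)\geq 0$, so $g_1+\sum_j\eta_j\geq 0$ and $(H-1)(\cdots)\leq H(\cdots)$, giving exactly the claimed bound. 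The step requiring the most care is the change of measure from $\wb p$ to $p$: it is precisely what exposes the error terms $E_h$ and lets the recursion telescope cleanly, whereas everything else is the Bellman expansion, the sign hypothesis, and a final absolute-value and counting step.
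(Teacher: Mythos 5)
Your proof is correct and takes essentially the same route as the paper's: both expand the gap one step at a time under the true dynamics $p$, use the sign hypothesis to drop the nonnegative per-step term, arrive at the same telescoped relation (your $g_h \le g_1 + \sum_{j<h}E_j$ is exactly the paper's iterated inequality), and finish by passing to absolute values and summing over $h$. The one genuine addition on your side is the backward-induction argument that $\Delta_h \ge 0$ under the $\wb p$-driven recursion: summing over $h\in\{2,\dots,H\}$ really produces a factor $H-1$, and your nonnegativity observation is what licenses upgrading it to the factor $H$ appearing in the statement — a step the paper performs silently.
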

\begin{proof}
By definition
\begin{align}
    &V^{\pi}_1(s) -  \wt V^{\pi}_1(s) \nonumber \\
    &=\E\brs*{ V^{\pi}_1(s_1) - l_1(s_1,a_1) - p_1(\cdot\mid s_1,a_1)\wt V^{\pi}_2\mid s_1=s,\pi,P} \nonumber \\
    &\quad + \E\brs*{ l_1(s_1,a_1) + p_1(\cdot\mid s_1,a_1)\wt V^{\pi}_2 - \wt V^{\pi}_1(s)\mid s_1=s,\pi,P}  \nonumber\\
    &=\E\brs*{ V^{\pi}_2(s_2) - \wt V^{\pi}_2(s_2) \mid s_1=s,\pi,P} \nonumber\\
    &\quad + \E\brs*{l_1(s_1,a_1)-\wt l_1(s_1,a_1) + (p_1(\cdot\mid s_1,a_1)-\wb p_1(\cdot \mid s_1,a_1))\wt V^{\pi}_2\mid s_1=s,\pi,P}\nonumber \\
    &=\E\brs*{ V^{\pi}_2(s_2) - \wt V^{\pi}_2(s_2) \mid s_1=s,\pi,P} \nonumber\\
    &\quad + \E\brs*{(p_1(\cdot\mid s_1,a_1)-\wb p_1(\cdot \mid s_1,a_1))(\wt V^{\pi}_2 - V^{\pi}_2)\mid s_1=s,\pi,P} \nonumber \\
    &\quad + \E\brs*{l_1(s_1,a_1)-\wt l_1(s_1,a_1) + (p_1(\cdot\mid s_1,a_1)-\wb p_1(\cdot \mid s_1,a_1))V^{\pi}_2\mid s_1=s,\pi,P}\nonumber\\
    % &\geq\E\brs*{ V^{\pi}_2(s_2) - \wt V^{\pi}_2(s_2) \mid s_1=s,\pi,P} \nonumber\\
    % &\quad + \E\brs*{l_1(s_1,a_1)-\wt l_1(s_1,a_1) - (p_1(\cdot\mid s_1,a_1)-\wb p_1(\cdot \mid s_1,a_1))\wt V^{\pi}_2\mid s_1=s,\pi,P}\nonumber\\
    &\geq\E\brs*{ V^{\pi}_2(s_2) - \wt V^{\pi}_2(s_2) \mid s_1=s,\pi,P} \nonumber\\
    &\quad + \E\brs*{(p_1(\cdot\mid s_1,a_1)-\wb p_1(\cdot \mid s_1,a_1))(\wt V^{\pi}_2 - V^{\pi}_2)\mid s_1=s,\pi,P},
\end{align}
where the first relation holds by the value difference lemma~\ref{lemma: value difference} and the last relation holds due to the assumption~\ref{eq: assumption recursion sum h to initial state}.
% \begin{align*}
%     &c_h(s,a)-\wt c_h(s,a) - (p_h(\cdot\mid s,a)-\wb p_h(\cdot \mid s,a))V^{\pi}_2\\
%     &=\underbrace{ c_h(s,a)- \wb c_h(s,a) + b^c_{k,h}(s,a)}_{\geq 0}\\
%     &\quad + b^{pc}_{k,h}(s,a) - (p_h(\cdot\mid s,a)-\wb p_h(\cdot \mid s,a))V^{\pi}_2\\
%     &\geq b^{pc}_{k,h}(s,a) - \sum_{s'}|p_h(s'\mid s,a)-\wb p_h(s' \mid s,a)||V^{\pi}_2(s')|\\
%     &\geq b^{pc}_{k,h}(s,a) - H\sum_{s'}|p_h(s'\mid s,a)-\wb p_h(s' \mid s,a)|\\
%     &\geq b^{pc}_{k,h}(s,a) - b^{pc}_{k,h}(s,a) = 0.
% \end{align*}

% Plugging~\eqref{eq: diff to initial contion relation 2} into~\eqref{eq: diff to initial contion relation 1} we get
% \begin{align*}
%      &V^{\pi}_1(s) -  \wt V^{\pi}_1(s) \\
%      &\geq \E\brs*{ V^{\pi}_2(s_2) - \wt V^{\pi}_2(s_2) \mid s_1=s,\pi,P}\\
%      &\quad-\E\brs*{ (p_1(\cdot\mid s_1,a_1)-\wb p_1(\cdot \mid s_1,a_1))(\wt V^{\pi}_2-V^{\pi}_2)\mid s_1=s,\pi,P}.
% \end{align*}

Iterating on this relation we get that for any $h\in\brc*{2,..H}$
\begin{align*}
     &V^{\pi}_1(s) -  \wt V^{\pi}_1(s) \\
     &\geq \E\brs*{ V^{\pi}_h(s_h) - \wt V^{\pi}_h(s_h) \mid s_1=s,\pi,P}\\
     &\quad+\sum_{h'=1}^{h-1}\E\brs*{ (p_{h'}(\cdot\mid s_{h'},a_{h'})-\wb p_{h'}(\cdot \mid s_{h'},a_{h'}))(\wt V^{\pi}_{h'+1}-V^{\pi}_{h'+1})\mid s_1=s,\pi,P}.
\end{align*}

By summing this relation for $h\in\brc*{2,..H}$ and rearranging we get
\begin{align*}
    &H\br*{V^{\pi}_1(s) -  \wt V^{\pi}_1(s)} -\sum_{h=2}^H\sum_{h'=1}^{h-1}\E\brs*{ (p_{h'}(\cdot\mid s_{h'},a_{h'})-\wb p_{h'}(\cdot \mid s_{h'},a_{h'}))(\wt V^{\pi}_{h'+1}-V^{\pi}_{h'+1})\mid s_1=s,\pi,P}\\
    &\geq \sum_{h=2}^H \E\brs*{ V^{\pi}_h(s_h) - \wt V^{\pi}_h(s_h) \mid s_1=s,\pi,P}.
\end{align*}

Thus,
\begin{align*}
    &\sum_{h=2}^H \E\brs*{ V^{\pi}_h(s_h) - \wt V^{\pi}_h(s_h) \mid s_1=s,\pi,P}\\
    &\leq H\br*{V^{\pi}_1(s) -  \wt V^{\pi}_1(s)} +\sum_{h=2}^H\sum_{h'=1}^{h-1}\E\brs*{\br*{- (p_{h'}(\cdot\mid s_{h'},a_{h'})-\wb p_{h'}(\cdot \mid s_{h'},a_{h'}))(\wt V^{\pi}_{h'+1}-V^{\pi}_{h'+1})}\mid s_1=s,\pi,P}\\
    &\leq H\br*{V^{\pi}_1(s) -  \wt V^{\pi}_1(s)} +\sum_{h=2}^H\sum_{h'=1}^{H}\E\brs*{ \abs*{(p_{h'}(\cdot\mid s_{h'},a_{h'})-\wb p_{h'}(\cdot \mid s_{h'},a_{h'}))(\wt V^{\pi}_{h'+1}-V^{\pi}_{h'+1})}\mid s_1=s,\pi,P}\\
    &\leq H\br*{V^{\pi}_1(s) -  \wt V^{\pi}_1(s)} +H\sum_{h=1}^{H}\E\brs*{ \abs*{(p_{h}(\cdot\mid s_{h},a_{h})-\wb p_{h}(\cdot \mid s_{h},a_{h'}))(\wt V^{\pi}_{h+1}-V^{\pi}_{h+1})}\mid s_1=s,\pi,P}.
\end{align*}
\end{proof}

% By plugging $\pi=\pi_k$ in Lemma~\ref{lemma: future differences to initial difference} and summing on $k$ we arrive to the following corollary.
% \begin{corollary}\label{corollary: future differences to initial difference}
% Conditioning on the good event, 
% \begin{align*}
%     &\sum_{k,s,a,h} q_h^{\pi_k}(s,a;p)\br*{V^{\pi_k}_{h+1}(s) - \wb V^{\pi_k}_{h+1}(s)}\\
%     &\leq  H\sum_k V^{\pi_k}_1(s) -  \wt V^{\pi}_1(s) +H\sum_{k,s,a,h} q_h^{\pi_k}(s,a;p) \abs*{(p_{h}(\cdot\mid s,a)-\wb p_{h}(\cdot \mid s,a))(\wt V^{\pi_k}_{h+1}-V^{\pi_k}_{h+1})}
% \end{align*}
% \end{corollary}

\section{Useful Lemmas}
We start stating the value difference lemma (a.k.a. simulation lemma). This lemma has been used in several papers~\citep[\eg][]{cai2019provably,efroni2020optimistic}.
The following lemma is central for the analysis of \primaldualcmdp.
% The following lemma appears in~\citealt{cai2019provably}, \citealt{efroni2020optimistic}, Lemma 1, and is central to the analysis of \primaldualcmdp.
\begin{restatable}[Extended Value Difference]{lemma}{lemmaExtendedValueDiff}\label{lemma: extended value difference}
Let $\pi,\pi'$ be two policies, and $\mathcal{M} = (\sset, \aset, \brc*{p_h}_{h=1}^H, \brc*{c_h}_{h=1}^H)$ and $\mathcal{M}' = (\sset, \aset, \brc*{p'_h}_{h=1}^H, \brc*{c'_h}_{h=1}^H)$ be two MDPs.
Let $\wh Q_h^{\pi}(s,a;c,p)$ be an approximation of the $Q$-function of policy $\pi$ on the MDP $\mathcal{M}$ for all $h,s,a$, and let  ${\wh V_h^{\pi}(s;c,p) = \inner*{\wh Q_h^{\pi}(s,\cdot;c,p) ,\pi_h(\cdot\mid s)}}$.
Then,
\begin{align*}
    & \wh V_1^{\pi}(s_1;c,p) - V_1^{\pi'}(s_1;c',p')=
    \\
    &
    \sum_{h=1}^H  \E \brs*{ \inner*{\wh Q_h^{\pi}(s_h,\cdot;c,p), \pi'_h(\cdot \mid s_h) - \pi_h(\cdot 
    \mid s_h)} \mid s_1,\pi',p'}+
    \\
    &   
    \sum_{h=1}^H \E \brs*{\wh Q_h^{\pi}(s_h,a_h;c,p)  - c_h' (s_h,a_h) - p'_h(\cdot | s_h,a_h) \wh V_{h+1}^{\pi}(\cdot ; c,p)\mid  s_1,\pi', p'}
\end{align*}
where $V_1^{\pi'}(s;c',p')$ is the value function of $\pi'$ in the MDP $\mathcal{M}'$.
\end{restatable}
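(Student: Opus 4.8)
The plan is to prove the identity by telescoping the approximate value function $\wh V^\pi$ along trajectories generated by the \emph{second} policy-model pair $(\pi', p')$, rather than along $(\pi, p)$. The essential point is that $\wh Q^\pi$ is an arbitrary approximation and need not satisfy any Bellman equation for $\mathcal{M}$; consequently the argument must use only the defining relation $\wh V_h^\pi(s;c,p) = \inner*{\wh Q_h^\pi(s,\cdot;c,p), \pi_h(\cdot\mid s)}$ together with the law of the $(\pi', p')$ process.

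First I would fix the initial state $s_1$ and set $\wh V_{H+1}^\pi \equiv 0$. Since $\wh V_1^\pi(s_1;c,p)$ is a constant given $s_1$, taking the expectation of the telescoping sum $\sum_{h=1}^H \br*{\wh V_h^\pi(s_h;c,p) - \wh V_{h+1}^\pi(s_{h+1};c,p)}$ under $(\pi', p')$ yields $\wh V_1^\pi(s_1;c,p)$. Writing $V_1^{\pi'}(s_1;c',p') = \E\brs*{\sum_{h=1}^H c'_h(s_h,a_h) \mid s_1, \pi', p'}$, the difference becomes a single sum over $h$ of the per-step expectation of $\wh V_h^\pi(s_h) - \wh V_{h+1}^\pi(s_{h+1}) - c'_h(s_h,a_h)$.

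The central manipulation is then to decompose each $\wh V_h^\pi(s_h)$ by adding and subtracting the $\pi'$-averaged action value: I would write $\wh V_h^\pi(s_h) = \inner*{\wh Q_h^\pi(s_h,\cdot), \pi_h(\cdot\mid s_h) - \pi'_h(\cdot\mid s_h)} + \inner*{\wh Q_h^\pi(s_h,\cdot), \pi'_h(\cdot\mid s_h)}$. The first bracket is precisely the integrand of the first sum in the statement. For the second bracket, taking the action expectation under $a_h \sim \pi'_h(\cdot\mid s_h)$ gives $\E\brs*{\wh Q_h^\pi(s_h,a_h)\mid \cdot}$, while the next-state term satisfies $\E\brs*{\wh V_{h+1}^\pi(s_{h+1})\mid s_h,a_h, p'} = p'_h(\cdot\mid s_h,a_h)\wh V_{h+1}^\pi(\cdot)$ by the tower property. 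Collecting these, the second bracket contributes exactly $\E\brs*{\wh Q_h^\pi(s_h,a_h) - c'_h(s_h,a_h) - p'_h(\cdot\mid s_h,a_h)\wh V_{h+1}^\pi(\cdot)\mid \cdot}$, which is the integrand of the second sum.

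The only real care needed is bookkeeping of the conditioning: every expectation is taken along the $(\pi', p')$ process, so the tower property must be applied to pull out first the action draw $a_h\sim\pi'_h$ and then the transition $s_{h+1}\sim p'_h(\cdot\mid s_h,a_h)$ at each step. I expect this conditioning-and-telescoping bookkeeping to be the main (though routine) obstacle; there is no analytic difficulty, since no Bellman or contraction property is invoked. Summing the two contributions over $h\in[H]$ gives the claimed identity.
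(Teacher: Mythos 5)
Your telescoping argument is sound, and it is essentially the standard proof of this lemma: the paper itself supplies no proof (it defers to the cited references), and those references prove it exactly as you do, by telescoping $\wh V^\pi$ along the $(\pi',p')$ process with the terminal convention $\wh V^\pi_{H+1}\equiv 0$ and tower-property bookkeeping. All of those steps in your write-up are handled correctly.

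There is, however, one concrete discrepancy you passed over. Your decomposition produces the first bracket $\inner*{\wh Q_h^{\pi}(s_h,\cdot;c,p),\, \pi_h(\cdot\mid s_h) - \pi'_h(\cdot\mid s_h)}$, whereas the statement's first sum has $\inner*{\wh Q_h^{\pi}(s_h,\cdot;c,p),\, \pi'_h(\cdot\mid s_h) - \pi_h(\cdot\mid s_h)}$ --- the negative of yours --- yet you assert the two are ``precisely'' equal. They are not, and in fact your sign is the correct one: with $H=1$, $\wh V^\pi_{2}\equiv 0$, a single state, $\wh Q_1^\pi(s,a_1)=1$, $\wh Q_1^\pi(s,a_2)=0$, $c'\equiv 0$, $\pi$ playing $a_1$ and $\pi'$ playing $a_2$, the left-hand side equals $1$ while the right-hand side as printed equals $-1$; in general the printed right-hand side exceeds the left-hand side by $2\sum_{h}\E\brs*{\inner*{\wh Q_h^{\pi}(s_h,\cdot;c,p), \pi'_h(\cdot\mid s_h)-\pi_h(\cdot\mid s_h)}\mid s_1,\pi',p'}$. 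So the lemma as stated contains a sign typo, which is corroborated by how the paper actually invokes it: in the proof of Lemma~\ref{lemma: primal dual on policy optimality} the first term appears as $\inner*{Q_h^k(s_h,\cdot), \pi_h^k(\cdot\mid s_h)-\pi^*_h(\cdot\mid s_h)}$ under the $(\pi^*,p)$ process, i.e.\ $\pi-\pi'$, matching your derivation. In short: your argument proves the correct form of the lemma; the only flaw in your write-up is that you claimed agreement with the printed statement instead of flagging the sign mismatch.
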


The following lemma is standard~\citep[see \eg][Lem. E.15]{dann2017unifying}, and can be seen as a corollary of the extended value difference lemma.
\begin{lemma}[Value difference lemma] \label{lemma: value difference}
Consider two MDPs $\mathcal{M} = (\sset, \aset, \brc*{p_h}_{h=1}^H, \brc*{c_h}_{h=1}^H)$ and $\mathcal{M}' = (\sset, \aset, \brc*{p'_h}_{h=1}^H, \brc*{c'_h}_{h=1}^H)$. For any policy $\pi$ and any  $s,h$ the following relation holds.
\begin{align*}
    &V^\pi_h(s; c,p) - V^\pi_h(s;c',p') \\
    &= \E[\sum_{h'=h}^H (c_h(s_h,a_h) - c'_h(s_h,a_h)) + (p_h - p'_h)(\cdot \mid s_h,a_h)V^{\pi}_{h+1}(\cdot; c,p)\mid s_h=s,\pi,p'] \\
    &= \E[\sum_{h'=h}^H (c'_h(s_h,a_h) - c_h(s_h,a_h)) + (p'_h - p_h)(\cdot \mid s_h,a_h)V^{\pi}_{h+1}(\cdot; c',p')\mid s_h=s,\pi,p].
\end{align*}
% \todom{Maybe write also the version with expectation \wrt $p$ and $V^\pi_{h+1}(\cdot;c',p')$}
% and, thus,
% \todom{I do not understand this second equation, is it necessary?}
% \begin{align*}
%     &V^\pi_h(s;c',p') - V^\pi_h(s; c,p)  \\
%     &= \E[\sum_{h'=h}^H (c'_h(s_h,a_h) - c_h(s_h,a_h)) + ( p'_h - p_h)(\cdot \mid s_h,a_h)V^{\pi}_{h+1}(\cdot; c,p)\mid s_h=s,\pi,p'],
% \end{align*}
\end{lemma}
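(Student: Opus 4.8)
The plan is to derive this directly from the Extended Value Difference Lemma (Lem.~\ref{lemma: extended value difference}), which is the cleanest route and the one the surrounding text suggests. I would instantiate that lemma with the \emph{same} policy on both sides, $\pi' = \pi$, and take the approximate action-value function to be the \emph{exact} one of $\pi$ in the first MDP, $\wh Q_h^\pi(\cdot;c,p) = Q_h^\pi(\cdot;c,p)$, so that $\wh V_h^\pi(\cdot;c,p) = V_h^\pi(\cdot;c,p)$. With $\pi' = \pi$ the first sum in the extended lemma, which measures the mismatch $\pi'_h - \pi_h$, vanishes term by term, leaving only the second sum evaluated along trajectories generated by $\pi$ under the dynamics $p'$.

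The next step is to collapse the remaining summand using the exact Bellman equation. Since $\wh Q$ is now the true $Q^\pi$ in the MDP $(c,p)$, it satisfies $Q_h^\pi(s,a;c,p) = c_h(s,a) + p_h(\cdot|s,a) V_{h+1}^\pi(\cdot;c,p)$. Substituting this into $\wh Q_h^\pi(s_h,a_h;c,p) - c'_h(s_h,a_h) - p'_h(\cdot|s_h,a_h) V_{h+1}^\pi(\cdot;c,p)$ turns each term into exactly $(c_h - c'_h)(s_h,a_h) + (p_h - p'_h)(\cdot|s_h,a_h) V_{h+1}^\pi(\cdot;c,p)$, which is the first claimed identity for the initial step. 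To state it for a general starting pair $(s,h)$ rather than $(s_1,1)$, I would apply the same argument to the tail of the process from step $h$ onward, since the extended lemma's proof is insensitive to the starting index, summing $h' = h,\dots,H$ and conditioning on $s_h = s$. The second identity then follows by swapping the roles of the two models: apply the first identity to $V_h^\pi(s;c',p') - V_h^\pi(s;c,p)$ and negate, which replaces the reference value by $V_{h+1}^\pi(\cdot;c',p')$ and switches the driving dynamics from $p'$ to $p$.

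Alternatively, for a self-contained argument I would establish the recursion by hand. Writing $\Delta_h(s) = V_h^\pi(s;c,p) - V_h^\pi(s;c',p')$ and expanding both values one step through their Bellman equations, the difference splits—after adding and subtracting $p'_h(\cdot|s,a) V_{h+1}^\pi(\cdot;c,p)$—into the immediate term $(c_h - c'_h) + (p_h - p'_h) V_{h+1}^\pi(\cdot;c,p)$ plus $\E_{a\sim\pi,\,s'\sim p'_h}[\Delta_{h+1}(s')]$; unrolling down to $\Delta_{H+1}\equiv 0$ reproduces the first identity, and the complementary add/subtract choice $p_h V_{h+1}^\pi(\cdot;c',p')$ gives the second.

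The argument is routine, so the only genuine obstacle is the bookkeeping: keeping straight which model's transition kernel generates the trajectory in each expectation ($p'$ for the first form, $p$ for the second) and pairing it with the matching reference value function ($V^\pi(\cdot;c,p)$ versus $V^\pi(\cdot;c',p')$). This is precisely the place where a sign or measure can be flipped, so I would verify the two forms against each other by checking that negating one and relabeling $(c,p)\leftrightarrow(c',p')$ recovers the other.
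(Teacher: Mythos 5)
Your proposal takes essentially the same route as the paper: the paper offers no explicit proof, only the remark that the lemma is standard and "can be seen as a corollary of the extended value difference lemma," and your instantiation of Lemma~\ref{lemma: extended value difference} with $\pi'=\pi$ and $\wh Q_h^\pi=Q_h^\pi(\cdot;c,p)$ exact (so the policy-mismatch sum vanishes and the Bellman identity collapses the remaining summand) is precisely that corollary, carried out correctly; the self-contained recursion you sketch as an alternative is also sound.

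One point you should flag rather than gloss over: your swap-and-negate step does \emph{not} reproduce the second display line as printed. Exchanging the roles of $(c,p)$ and $(c',p')$ in the first identity and then negating gives
\begin{align*}
    V^\pi_h(s;c,p) - V^\pi_h(s;c',p')
    = \E\brs*{\sum_{h'=h}^H (c_{h'}-c'_{h'})(s_{h'},a_{h'}) + (p_{h'}-p'_{h'})(\cdot\mid s_{h'},a_{h'})V^\pi_{h'+1}(\cdot;c',p') \;\middle|\; s_h=s,\pi,p},
\end{align*}
that is, the cost and transition differences keep the order $(c-c')$ and $(p-p')$ because the two negations cancel; only the reference value function and the driving dynamics are exchanged. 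The printed second identity instead has $(c'-c)$ and $(p'-p)$, and is therefore the \emph{negative} of the left-hand side: take $H=1$, where the printed right-hand side reduces to $\sum_a\pi_1(a\mid s)\br*{c'_1(s,a)-c_1(s,a)}$ while the left-hand side is $\sum_a\pi_1(a\mid s)\br*{c_1(s,a)-c'_1(s,a)}$. So the statement as given contains a sign typo, and your derivation proves the corrected version. This is a defect of the statement, not of your argument (the paper's later applications, e.g.\ Lemma~\ref{lemma: on policy errors optimistic model}, pair the trajectory law $p$ with the value $V^{\pi_k}(\cdot;\wt l_k,\wt p_k)$ inside an absolute value, so the typo is harmless downstream), but as written your claim that "the second identity then follows" is not literally accurate; you should state explicitly that it follows with the signs of the cost and transition differences corrected.
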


The following lemmas are standard. There proof can be found in~\citep{dann2017unifying,zanette2019tighter,efroni2019tight} (\eg \citealt{efroni2019tight}, Lem. 38).

% \todo[inline]{replace outside of failure event with the specific event that we need.}
\begin{lemma}\label{lemma: bounding on trajectory visitation}
Assume that for all $s,a,h,k\in[K]$
\begin{align*}
     n_h^{k-1}(s,a) > \frac{1}{2} \sum_{j<k} q_h^{\pi_k}(s,a; p)-H\ln\frac{SAH}{\delta'},
\end{align*}
then
$$\sum_{k=1}^K\sum_{h=1}^H \E\brs*{ \sqrt{\frac{1}{n_{h}^{k-1}(s_h^k,a_h^k)\vee 1}} \mid \F_{k-1} }\leq \Olog(\sqrt{SAH^2K} +SAH)$$
\end{lemma}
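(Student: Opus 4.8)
The plan is to first remove the expectation, then pass from empirical to expected visitation counts using the hypothesis, and finally apply a telescoping (pigeonhole) argument together with Cauchy--Schwarz. Since $n_h^{k-1}(s,a)$ is $\filt$-measurable and $\pi_k$ is $\filt$-measurable, the identity recorded in the Notations subsection gives
\begin{align*}
  \E\brs*{\sqrt{\frac{1}{n_h^{k-1}(s_h^k,a_h^k)\vee 1}}\,\Big|\, \filt} = \sum_{s,a} q_h^{\pi_k}(s,a;p)\,\sqrt{\frac{1}{n_h^{k-1}(s,a)\vee 1}}.
\end{align*}
Writing $\wb n_h^{k-1}(s,a) \eqdef \sum_{j<k} q_h^{\pi_j}(s,a;p)$ for the expected number of visits and $L \eqdef \ln\tfrac{SAH}{\delta'}$, the quantity to bound becomes the deterministic sum $\sum_{k,h,s,a} q_h^{\pi_k}(s,a;p)\,(n_h^{k-1}(s,a)\vee 1)^{-1/2}$, where the hypothesis is treated as a fixed assumption.

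Next I would invoke $n_h^{k-1}(s,a) > \tfrac12 \wb n_h^{k-1}(s,a) - HL$ and split each $(s,a,h)$-summand into two regimes. In the \emph{large-count regime}, where $\wb n_h^{k-1}(s,a)$ exceeds a logarithmic threshold (of the order dictated by the additive slack in the good event), the hypothesis yields $n_h^{k-1}(s,a) \geq \tfrac14 \wb n_h^{k-1}(s,a) \geq 1$, so $(n_h^{k-1}\vee 1)^{-1/2}\leq 2(\wb n_h^{k-1})^{-1/2}$. In the \emph{small-count regime} I simply use the crude bound $(n_h^{k-1}\vee 1)^{-1/2}\leq 1$.

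For the large-count regime, fix $(s,a,h)$ and set $w_k = q_h^{\pi_k}(s,a;p)\in[0,1]$, $W_k=\sum_{j\le k} w_j$, so that $\wb n_h^{k-1}=W_{k-1}$. Whenever $W_{k-1}\geq 1$ and $w_k\le 1$ one has $W_k\le 2W_{k-1}$, hence $\tfrac{w_k}{\sqrt{W_{k-1}}}\le 2\sqrt2\br*{\sqrt{W_k}-\sqrt{W_{k-1}}}$; summing over $k$ telescopes to $O(\sqrt{W_K})=O(\sqrt{\wb n_h^K(s,a)})$. Summing over $(s,a,h)$, applying Cauchy--Schwarz, and using $\sum_{s,a}\wb n_h^K(s,a)=\sum_k\sum_{s,a}q_h^{\pi_k}(s,a;p)=K$ gives
\begin{align*}
  \sum_{s,a,h}\sqrt{\wb n_h^K(s,a)} \le \sqrt{SAH}\sqrt{\sum_{s,a,h}\wb n_h^K(s,a)} = \sqrt{SAH\cdot HK} = \sqrt{SAH^2K},
\end{align*}
which is the leading term $\Olog(\sqrt{SAH^2K})$.

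For the small-count regime, $\wb n_h^{k-1}$ is non-decreasing in $k$, so the contributing episodes form an initial segment; the mass $\sum_k q_h^{\pi_k}(s,a;p)$ accrued before $\wb n_h^{k-1}$ crosses the threshold is $\Olog(1)$ per $(s,a,h)$ (using $q\le 1$), and pairing the trivial bound $1$ against this mass over all $SAH$ triples yields the additive $\Olog(SAH)$ term. Combining the two regimes proves the claim. I expect the main obstacle to be the large-count regime: establishing the telescoping inequality $\tfrac{w_k}{\sqrt{W_{k-1}\vee 1}}\lesssim \sqrt{W_k}-\sqrt{W_{k-1}}$ and arranging the Cauchy--Schwarz step so the per-$(s,a,h)$ square roots aggregate to \emph{exactly} $\sqrt{SAH^2K}$. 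The secondary delicate point is the bookkeeping that converts the hypothesis' additive slack into a genuinely lower-order $\Olog(SAH)$ correction, where I would lean on the precise logarithmic form of the concentration result of Dann et al.\ (Cor.\ E.4).
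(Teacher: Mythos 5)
Your reduction and your treatment of the large-count regime are sound, and they follow the standard route (the paper itself gives no proof of this lemma, deferring to \citet{dann2017unifying}, \citet{zanette2019tighter} and \citet{efroni2019tight}): the conditional-expectation identity turns the quantity into the deterministic sum $\sum_{k,h,s,a}q_h^{\pi_k}(s,a;p)\,(n_h^{k-1}(s,a)\vee 1)^{-1/2}$; your reading of the hypothesis with $\wb n_h^{k-1}(s,a)=\sum_{j<k}q_h^{\pi_j}(s,a;p)$ (silently fixing the paper's $\pi_k$-versus-$\pi_j$ typo) is the intended one; the telescoping inequality $w_k/\sqrt{W_{k-1}}\leq(1+\sqrt{2})\br*{\sqrt{W_k}-\sqrt{W_{k-1}}}$ is valid for $W_{k-1}\geq 1$, $w_k\leq 1$; since for a fixed triple the large-count episodes form a terminal segment, the telescoping is legitimate and gives $O(\sqrt{\wb n_h^K(s,a)})$ per triple; and Cauchy--Schwarz yields the leading $\Olog\br*{\sqrt{SAH^2K}}$ term exactly as you compute.

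The gap is in the small-count regime. Your large-count step needs $\tfrac12\wb n_h^{k-1}-H\ln\tfrac{SAH}{\delta'}\geq\tfrac14\wb n_h^{k-1}$, so the threshold must be $\wb n_h^{k-1}(s,a)\geq 4H\ln\tfrac{SAH}{\delta'}$. This threshold is \emph{not} logarithmic: the slack in the hypothesis (inherited from Cor.~E.4 of \citet{dann2017unifying}) carries an explicit factor of $H$, which $\Olog$ does not absorb. Hence the mass a triple $(s,a,h)$ can accrue before crossing the threshold is $4H\ln\tfrac{SAH}{\delta'}+1=\Olog(H)$, not $\Olog(1)$, and summing over the $SAH$ triples your argument produces an additive term $\Olog(SAH^2)$, not the claimed $\Olog(SAH)$. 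This loss is not repairable within your scheme: the hypothesis is only a one-sided lower bound on the counts, and it is consistent with, say, uniform occupancies $q_h^{\pi_k}(s,a;p)=1/(SA)$ together with $n_h^{k-1}(s,a)=0$ for every triple with $\wb n_h^{k-1}(s,a)\leq 2H\ln\tfrac{SAH}{\delta'}$, in which case the sum genuinely is of order $SAH^2\ln\tfrac{SAH}{\delta'}$ once $K\gtrsim SAH\ln\tfrac{SAH}{\delta'}$. Obtaining the stated $SAH$ term requires different machinery, e.g., pigeonhole on the \emph{realized} visit counts, $\sum_{k,h}(n_h^{k-1}(s_h^k,a_h^k)\vee 1)^{-1/2}\leq SAH+2\sqrt{SAH^2K}$, followed by an Azuma/Freedman argument to pass back to the sum of conditional expectations --- which needs an additional high-probability event beyond the stated hypothesis. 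For the paper's purposes the discrepancy is immaterial (in Lemma~\ref{lemma: on policy errors optimistic model} an additive $\Olog(SAH^2)$ is still dominated by the $(\sqrt{\mathcal{N}}+H)H^2SA$ constant term), but as a proof of the lemma as stated, your argument falls short of the claimed bound by a factor of $H$ in the additive term.
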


\begin{lemma}[\eg \cite{zanette2019tighter}, Lem. 13]
\label{lemma: supp 1 1/N  factor and lograthimic factors}
Assume that for all $s,a,h,k\in[K]$
\begin{align*}
     n_h^{k-1}(s,a) > \frac{1}{2} \sum_{j<k} q_h^{\pi_k}(s,a; p)-H\ln\frac{SAH}{\delta'},
\end{align*}
then
$$\sum_{k=1}^K\sum_{t=1}^H \E\brs*{ {\frac{1}{n_{k-1}(s_t^k,a_t^k)\vee 1}} \mid \F_{k-1} }\leq \Olog\br*{SAH^2}.$$
\end{lemma}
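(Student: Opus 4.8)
The plan is to remove the randomness inside the conditional expectation, turning the claim into a deterministic statement about occupancy measures, and then bound that deterministic sum by a pigeonhole/log-sum argument, splitting the episodes according to whether a state-action pair has already accumulated enough expected visits.

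First I would use that $n_h^{k-1}(s,a)$, and hence $1/(n_h^{k-1}(s,a)\vee 1)$, is $\filt$-measurable, so the identity recorded in the preliminaries (Section~\ref{sec:setup}) gives
\begin{align*}
  \E\brs*{\frac{1}{n_h^{k-1}(s_h^k,a_h^k)\vee 1}\mid \filt}
  = \sum_{s,a} q_h^{\pi_k}(s,a;p)\,\frac{1}{n_h^{k-1}(s,a)\vee 1}.
\end{align*}
Writing $w_{hk}(s,a) := q_h^{\pi_k}(s,a;p)$ and $\wb n_h^{k-1}(s,a) := \sum_{j<k} w_{hj}(s,a)$ for the expected number of prior visits, the target becomes the deterministic quantity $\sum_{k,h,s,a} w_{hk}(s,a)/(n_h^{k-1}(s,a)\vee 1)$, and note that each increment obeys $w_{hk}(s,a)\in[0,1]$.

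Next I would invoke the hypothesis, which reads $n_h^{k-1}(s,a) > \tfrac12 \wb n_h^{k-1}(s,a) - L$ with $L := H\ln\frac{SAH}{\delta'}$, and split each $(s,a,h)$ contribution into two regimes. When $\wb n_h^{k-1}(s,a)\ge 4L$ the additive slack is absorbed, so $n_h^{k-1}(s,a) > \tfrac14\wb n_h^{k-1}(s,a)\ge 1$ and thus $1/(n_h^{k-1}\vee 1) < 4/\wb n_h^{k-1}$; in the first regime I would combine this with the telescoping inequality $\frac{w_{hk}}{\wb n_h^{k-1}} \le 2\br*{\ln \wb n_h^{k} - \ln \wb n_h^{k-1}}$, valid once $\wb n_h^{k-1}\ge 1$ because $\ln(1+x)\ge x/2$ for $x\in[0,1]$. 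Summing telescopes and uses $\wb n_h^{K}\le K$, giving $\Olog(1)$ per triple and $\Olog(SAH)$ overall. When $\wb n_h^{k-1}(s,a) < 4L$ I simply bound $1/(n_h^{k-1}\vee 1)\le 1$; since each increment is at most $1$, the running total can exceed $4L$ by at most one step, so $\sum_{k:\wb n_h^{k-1}<4L} w_{hk}\le 4L+1$, contributing $SAH(4L+1)=\Olog(SAH^2)$.

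Combining the two regimes yields $\Olog(SAH)+\Olog(SAH^2)=\Olog(SAH^2)$, the dominant term coming from the burn-in regime and carrying the extra $H$ through $L$. The main obstacle is the careful handling of this concentration threshold: the additive slack $-L$ in the hypothesis is precisely what forces the two-regime split, and one must check that in the large-count regime the empirical count is genuinely bounded away from zero (so that $n\vee 1 = n$) before the log-sum telescoping applies; everything else is routine manipulation.
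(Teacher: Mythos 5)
Your proof is correct. The paper does not actually prove this lemma --- it defers to the cited references (\eg Lem.~13 of \citealt{zanette2019tighter}, Lem.~38 of \citealt{efroni2019tight}) --- and your argument is essentially the standard one used there: rewrite the conditional expectation as an occupancy-weighted sum, use the count-concentration hypothesis (with the additive $H\ln\frac{SAH}{\delta'}$ slack forcing a burn-in/large-count split, and silently reading the hypothesis with $q_h^{\pi_j}$ rather than the paper's typo $q_h^{\pi_k}$), bound the burn-in regime by $\Olog(H)$ per $(s,a,h)$ triple, and telescope with $\ln(1+x)\geq x/2$ in the large-count regime, yielding the claimed $\Olog(SAH^2)$.
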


\begin{lemma}[Consequences of Self Bounding Property]\label{lemma: self bounding property}
Let $0\leq X\leq a + b\sqrt{X}$ where $X,a,b\in \mathbb{R}$. Then,
\begin{align*}
    X\lesssim a + b^2.
\end{align*}
\end{lemma}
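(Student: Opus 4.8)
The plan is to treat the hypothesis as a quadratic inequality in $\sqrt{X}$ and solve it explicitly. Since the interesting regime has $a,b\ge 0$ (this is how the lemma is invoked in Lemmas~\ref{lemma: on policy errors bonus based optimistic} and~\ref{lemma: lower order bonus lp}), I would dispose of the degenerate cases first: if $b\le 0$ then $b\sqrt{X}\le 0$ and the hypothesis already gives $X\le a$, so $X\lesssim a+b^2$ trivially; hence assume $b>0$, and similarly we may take $a\ge 0$ since otherwise the bound only improves.

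First I would substitute $y=\sqrt{X}\ge 0$, which is well-defined because $X\ge 0$. The hypothesis $X\le a+b\sqrt{X}$ becomes $y^2-by-a\le 0$. This is a convex parabola in $y$ with roots $\frac{b\pm\sqrt{b^2+4a}}{2}$, so the inequality forces $y$ to lie between the two roots; in particular
\[
    \sqrt{X}=y\le \frac{b+\sqrt{b^2+4a}}{2}.
\]
Squaring both sides and expanding gives $X\le \tfrac14\big(2b^2+4a+2b\sqrt{b^2+4a}\big)$.

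The remaining step is routine constant bookkeeping: applying the elementary inequality $2uv\le u^2+v^2$ with $u=b$ and $v=\sqrt{b^2+4a}$ yields $2b\sqrt{b^2+4a}\le 2b^2+4a$, so that
\[
    X\le \tfrac14\big(4b^2+8a\big)=b^2+2a\lesssim a+b^2,
\]
which is exactly the claim (the factor $2$ on $a$ and the factor $1$ on $b^2$ being absorbed into the $\lesssim$). There is no genuine obstacle here; the only points requiring a little care are confirming that taking the positive square root is justified (guaranteed by $X\ge 0$) and keeping track of the numerical constants so that the final bound is manifestly of the form constant$\,\cdot(a+b^2)$, matching the definition of $\lesssim$ given in Section~\ref{sec:setup}.
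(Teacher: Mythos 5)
Your proof is correct and follows essentially the same route as the paper's: both treat the hypothesis as a quadratic inequality in $\sqrt{X}$, bound $\sqrt{X}$ by the larger root, and then square, differing only in bookkeeping (you use $2uv\le u^2+v^2$ on the cross term where the paper splits the root via $\sqrt{u+v}\le\sqrt{u}+\sqrt{v}$ before squaring). Your explicit handling of the degenerate cases $b\le 0$ and $a<0$ is a small point of extra care that the paper's proof implicitly assumes away, but it does not change the substance of the argument.
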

\begin{proof}
We have that
\begin{align*}
    X - b\sqrt{X} - a \leq 0.
\end{align*}
Since $X\geq 0$ this implies that
\begin{align*}
    \sqrt{X} &\leq \frac{b}{2} +\sqrt{\frac{1}{4}b^2 + 4a}\\
    & \leq \frac{b}{2} + \sqrt{\frac{b^2}{4}} + \sqrt{4a} \leq b + 2\sqrt{a},
\end{align*}
where we used the relation $\sqrt{a+b}\leq \sqrt{a}+\sqrt{b}$.

Since $\sqrt{X}\geq 0$ by squaring the two sides of the later inequality we get 
\begin{align*}
    X \leq (b + 2\sqrt{a})^2 \leq 2b^2 + 4a\lesssim b^2+a,
\end{align*}
where in the second relation we used the relation $(a+b)^2\leq 2a^2+2b^2$.

\end{proof}

\subsection{Online Mirror Descent}

In each iteration of Online Mirror Descent (OMD) the following problem is solved:

\begin{align}\label{eq: OMD iterates}
x_{k+1} \in \arg\min_{x\in C} t_K \inner*{g_k, x - x_k } + \bregman{x}{x_k},
\end{align}
where $t_K$ is a stepsize, and $\bregman{x}{x_k}$ is the bregman distance.

When choosing $\bregman{x}{x_k}$ as the KL-divergence, and the set $C$ is the unit simplex OMD has the following closed form,
\begin{align*}
 &x_{k+1}\in \arg\min_{x \in C} \{ t_K\inner*{\nabla f_k(x_k), x - x_k} +  \dkl{x}{x_k} \},
\end{align*}

The following lemma~\citep[Theorem 10.4]{orabona2019modern}  provides a fundamental inequality which will be used in our analysis.
\begin{lemma}[Fundamental inequality of Online Mirror Descent]\label{lemma: OMD orabona}
Assume $g_{k,i} \geq 0$ for $k=1,...,K$ and $i=1,...,d$. Let $C = \Delta_d$. Using OMD with the KL-divergence, learning rate $t_K$, and with uniform initialization, $x_1=[1/d,...,1/d]$, the following holds for any $u\in \Delta_d$,
$$ \sum_{k=1}^K \inner*{ g_t, x_k - u } \leq \frac{\log d}{t_K} + \frac{t_K}{2} \sum_{k=1}^K \sum_{i=1}^d x_{k,i} g_{k,i}^2 $$
\end{lemma}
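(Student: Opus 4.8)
The plan is to sidestep the generic Bregman-divergence machinery and instead exploit the explicit form that the update~\eqref{eq: OMD iterates} takes on the simplex with the KL/negative-entropy mirror map, namely the exponentiated-gradient (multiplicative-weights) rule
\begin{align*}
x_{k+1,i} = \frac{x_{k,i}\exp\br*{-t_K g_{k,i}}}{Z_k}, \qquad Z_k = \sum_{i=1}^d x_{k,i}\exp\br*{-t_K g_{k,i}}.
\end{align*}
I would then use $\dkl{u}{x_k}$ as a potential function and track its one-step change exactly, which is what makes the refined quadratic term $\sum_i x_{k,i} g_{k,i}^2$ appear rather than a cruder $\norm{g_k}_\infty^2$.

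First I would substitute the closed form to obtain, for any fixed $u \in \Delta_d$,
\begin{align*}
\dkl{u}{x_{k+1}} - \dkl{u}{x_k} = \sum_{i=1}^d u_i \log\frac{x_{k,i}}{x_{k+1,i}} = \log Z_k + t_K \inner*{g_k, u},
\end{align*}
using $\sum_i u_i = 1$. The crux is then to bound $\log Z_k$, and this is exactly where the hypothesis $g_{k,i}\geq 0$ is used: since $t_K g_{k,i}\geq 0$, the second-order estimate $e^{-x}\leq 1 - x + x^2/2$ (valid only for $x\geq 0$) gives $Z_k \leq 1 - t_K\inner*{g_k, x_k} + \tfrac{t_K^2}{2}\sum_i x_{k,i} g_{k,i}^2$, and $\log(1+y)\leq y$ yields
\begin{align*}
\log Z_k \leq -t_K\inner*{g_k, x_k} + \frac{t_K^2}{2}\sum_{i=1}^d x_{k,i} g_{k,i}^2.
\end{align*}
Combining the two displays and rearranging isolates the per-step regret $t_K\inner*{g_k, x_k - u} \leq \dkl{u}{x_k} - \dkl{u}{x_{k+1}} + \tfrac{t_K^2}{2}\sum_i x_{k,i} g_{k,i}^2$.

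Finally I would sum over $k=1,\dots,K$; the potential terms telescope, the nonnegative remainder $\dkl{u}{x_{K+1}}$ is dropped, and the uniform start gives $\dkl{u}{x_1} = \sum_i u_i \log(d\, u_i) = \log d + \sum_i u_i \log u_i \leq \log d$ because $\sum_i u_i\log u_i \leq 0$. Dividing by $t_K$ yields the claim. The main obstacle is conceptual rather than computational: one must identify precisely that the nonnegativity assumption $g_{k,i}\geq 0$ is what licenses the one-sided expansion of $e^{-x}$, and that this is exactly the ingredient producing the sharper $\sum_i x_{k,i} g_{k,i}^2$ bound that the downstream analysis of \primaldualcmdp (Lemma~\ref{lemma: term 2 omd term}) relies on; a naive $\ell_1/\ell_\infty$ strong-convexity argument would not deliver it.
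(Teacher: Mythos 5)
Your proof is correct. Note that the paper does not actually prove this lemma at all — it imports it verbatim from \citet{orabona2019modern} (Theorem 10.4) — and your potential-function argument (closed-form exponentiated-gradient update, the bound $e^{-x}\leq 1-x+x^2/2$ valid for $x\geq 0$, which is precisely where $g_{k,i}\geq 0$ enters, telescoping of $\dkl{u}{x_k}$, and $\dkl{u}{x_1}\leq \log d$ from the uniform start) is exactly the standard derivation behind that cited result, so you have in effect supplied the proof the paper delegates to its reference.
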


In our analysis, we will be solving the OMD problem for each time-step $h$ and state $s$ separately,
\begin{align}\label{eq: RL OMD iterates}
     \pi_h^{k+1}(\cdot \mid s) \in \arg\min_{\pi \in \Delta_A} t_K \inner*{Q_h^k(s,\cdot), \pi - x_h^k(\cdot \mid s) } + \dkl{\pi}{\pi_h^k(\cdot\mid s)}.
\end{align}

Therefore, by adapting the above lemma to our notation, we get the following lemma,

\begin{lemma}[Fundamental inequality of Online Mirror Descent for RL]\label{lemma: fundamental inequality of OMD}
Let $t_K>0$. Let $\pi_h^1(\cdot \mid s)$ be the uniform distribution for any $h\in[H]$ and $s\in \sset$. Assume that $Q^k_h(s,a)\in[0,M]$ for all $s,a,h,k$. Then, by solving \eqref{eq: RL OMD iterates} separately for any $k\in[K], h\in[H]$ and $s\in \sset$, the following holds for any stationary policy $\pi$,
$$\sum_{k=1}^K \inner*{ Q_h^k( \cdot \mid  s), \pi_h^k(\cdot \mid s) - \pi_h(\cdot \mid s) } \leq \frac{\log A}{t_K} + \frac{t_K M^2K}{2}$$
\end{lemma}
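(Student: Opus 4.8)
The plan is to obtain the claim as an immediate specialization of the generic OMD inequality, Lemma~\ref{lemma: OMD orabona}, applied separately at each fixed time-step $h$ and state $s$. First I would fix an arbitrary pair $(h,s) \in [H] \times \sset$ and identify the per-state update~\eqref{eq: RL OMD iterates} with the abstract update~\eqref{eq: OMD iterates} under the dictionary $C = \Delta_A$ (so $d = A$), decision iterate $x_k = \pi_h^k(\cdot\mid s)$, gradient vector $g_k = Q_h^k(s,\cdot)$, comparator $u = \pi_h(\cdot\mid s)$, and Bregman distance equal to the KL-divergence.

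Next I would verify the three hypotheses of Lemma~\ref{lemma: OMD orabona}. Nonnegativity $g_{k,i} = Q_h^k(s,a_i) \geq 0$ holds by the assumption $Q_h^k(s,a) \in [0,M]$; the feasible set is the simplex $\Delta_A$ by construction of the policy update; and the uniform initialization $x_1 = [1/A, \ldots, 1/A]$ holds because $\pi_h^1(\cdot\mid s)$ is assumed uniform. Invoking the lemma with learning rate $t_K$ then yields
\begin{align*}
    \sum_{k=1}^K \inner*{Q_h^k(s,\cdot), \pi_h^k(\cdot\mid s) - \pi_h(\cdot\mid s)} \leq \frac{\log A}{t_K} + \frac{t_K}{2}\sum_{k=1}^K \sum_{a} \pi_h^k(a\mid s)\, \big(Q_h^k(s,a)\big)^2.
\end{align*}

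The only remaining work is to control the quadratic (variance-like) term. Here I would use the uniform bound $Q_h^k(s,a) \leq M$ together with the fact that $\pi_h^k(\cdot\mid s)$ is a probability vector, so that for each $k$,
\begin{align*}
    \sum_a \pi_h^k(a\mid s)\, \big(Q_h^k(s,a)\big)^2 \leq M^2 \sum_a \pi_h^k(a\mid s) = M^2.
\end{align*}
Summing over $k \in [K]$ bounds the second term by $\tfrac{t_K M^2 K}{2}$, which gives the stated inequality; since $(h,s)$ was arbitrary, the bound holds for every $h$ and $s$. There is no genuine obstacle here: the content is entirely in correctly matching the RL per-state update to the abstract OMD template and in the elementary estimate $\sum_a \pi_h^k(a\mid s)(Q_h^k(s,a))^2 \leq M^2$. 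The one point requiring a line of care is confirming that the hypotheses of Lemma~\ref{lemma: OMD orabona} (nonnegative losses, simplex domain, uniform start) are genuinely met by the quantities appearing in~\eqref{eq: RL OMD iterates}.
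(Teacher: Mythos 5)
Your proof is correct and follows exactly the paper's argument: apply Lemma~\ref{lemma: OMD orabona} separately for each fixed $(h,s)$ with $g_k = Q_h^k(s,\cdot)$, $x_k = \pi_h^k(\cdot\mid s)$, and the KL-divergence as the Bregman distance, then bound the variance term via $(Q_h^k(s,a))^2 \leq M^2$ and $\sum_a \pi_h^k(a\mid s) = 1$. Your write-up is in fact slightly more explicit than the paper's in verifying the hypotheses (nonnegativity, simplex domain, uniform initialization), but the route is identical.
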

\begin{proof}
First, observe that for any $k,h,s$, we solve the optimization problem defined in \eqref{eq: RL OMD iterates} which is the same as \eqref{eq: OMD iterates}. By the fact that the estimators used in our analysis are non-negative, we can apply Lemma~\ref{lemma: OMD orabona} separately for each $h,s$ with $g_k = Q_h^k(s,\cdot)$ and $x_k = \pi_h^k(s,\cdot) $. Lastly, bounding $(Q_h^k(s,a))^2\leq M^2$ and $\sum_{a}\pi^k_h(a\mid s)=1$ for all $s$ concludes the result.
\end{proof}

% \section{Multiple Constraints}

% Consider the case of multiple constraints.
% \begin{align*}
%     \pi^* =& \arg\max_{\pi\in \Delta_A^S} \E\brs*{\sum_{t=1}^H r(s_t,a_t)\mid \pi,\mu}\\
%     &s.t.\forall i\in [M]\quad \E\brs*{\sum_{t=1}^H d_i(s_t,a_t)\mid \pi,\mu} \leq \alpha_i,
% \end{align*}
% i.e., we have a set of $M$ constraints each with different $d_i$. We define the following regrets
% \begin{align*}
%     &Regret_r(K) = \sum_{k=1}^K V^*(s_0) -V^{\pi_k}(s_0)\\
%     &Regret_d(K) = \sum_{k=1}^K \sum_{i=1}^M \E[\sum_{h=1}^H d_i(s_h,a_h)\mid \pi_k,P,s_0] - \alpha_i.
% \end{align*}

% \subsection{Algorithm}
% \begin{align*}
%     \wb q^{\pi_k} \in& \arg\max_{q\in \Delta(M,\wh{P})} q^T UCB(r)\\
%     &s.t.\ \forall i \in [M]\quad q^T \wt d_i \leq \alpha, \wt d_i\in \Delta(\wh d_i),
% \end{align*}
% and the policy is given by~\eqref{eq: occupancy to policy}.

% \subsection{Analysis}
% It seems the analysis is essentially the same. Unlike the case of a single constraint, in the multiple-constraint case the regre of the constraint term is multiplied by the number of constraint (we need to take union bound over all $d_i$s, which results in an addition $\log M$ factor. We omit logarithmic terms here for start.). Meaning,
% \begin{align*}
%     &Regret_r(K) = \sum_{k=1}^K V^*(s_0) -V^{\pi_k}(s_0) \leq O(S\sqrt{H^3AT} + H^2\sqrt{S}SA)\\
%     &Regret_d(K) = \sum_{k=1}^K \sum_{i=1}^M \E[\sum_{h=1}^H d_i(s_h,a_h)\mid \pi_k,P,s_0] - \alpha_i \leq O(MS\sqrt{H^3AT} + MH^2\sqrt{S}SA).
% \end{align*}

\section{Useful Results from Constraint Convex Optimization}\label{supp: convex optimization review}

In this section we enumerate several results from constraint convex optimization which are central to establish the bounds for the dual algorithms. To keep the generality of discussion, we follow results from~\cite{beck2017first}, Chapter 3, and consider a general constraint convex optimization problem
\begin{align}
    \fopt = \min_{\xv\in X} \brc*{f(\xv) : \gv(\xv)\leq {\bf 0}, \A\xv + \bv = {\bf 0}},\label{supp: convex cos problem}
\end{align}
where $\gv(\xv) \eqdef \br*{g_1(\xv),..,g_I(\xv)}^T,$ and $f,g_1,..,g_{m}: \mathbb{E}\rightarrow (-\infty,\infty)$ are convex real valued functions, $\A\in \mathbb{R}^{p\times n},\bv\in \mathbb{R}^p$. By defining the vector of constraints 

We define a value function associated with~\eqref{supp: convex cos problem} 
\begin{align*}
    v(\uv,\tv) =\min_{\xv\in X} \brc*{f(\xv) : \gv(\xv)\leq \uv, \A\xv + \bv = \tv},
\end{align*}
% and the feasible set of this minimization problem 
% \begin{align*}
%     C(\uv,\tv) = \brc*{x\in X: \gv(\xv)\leq \uv, \A\xv+\bv=\tv}.
% \end{align*}
% Thus, we can equivalently have $ v(\uv,\tv) =\min_{\xv\in X} \brc*{f(\xv) : \xv\in C(\uv,\tv)}.$
Furthermore, we define the dual problem to~\eqref{supp: convex cos problem}. The dual function is
\begin{align*}
    q(\lambda,\mu) = \min_{x\in X} \brc*{L(\xv,\lambda,\mu) = f(\xv) +\lambda^T\gv(\xv) +\mu^T(\A\xv+\bv)},
\end{align*}
where $\lambda\in \mathbb{R}_+^m,\mu\in \mathbb{R}^p$ and the dual problem is 
\begin{align}
    \qopt = \max_{\lambda\in \mathbb{R}_+^m,\mu\in \mathbb{R}^p}\brc*{q(\lambda,\mu):(\lambda,\mu)\in \mathrm{dom}(-q)}. \label{supp: convex cos problem dual}
\end{align}
Where $\mathrm{dom}(-q) = \brc*{(\lambda,\mu)\in \mathbb{R}_+^m,\mu\in \mathbb{R}^p: q(\lambda,\mu)>-\infty}$. Furthermore, denote an optimal solution of~\eqref{supp: convex cos problem dual} by $\lambda^*,\mu^*$.

We make the following assumption which will be verified to hold. The assumption implies strong duality, i.e., $\qopt=\fopt$.
\begin{assumption}\label{supp: convex con optimization assumption non empty}
The optimal value of~\eqref{supp: convex cos problem} is finite and exists a slater point $\wb \xv$ such that $g(\wb \xv)<0$ and exists a point $\wh \xv \in \mathrm{ri}(X)$ satifying $\A\wh\xv +\bv = 0$, where $\mathrm{ri}(X)$ is the relative interior of $X$.
\end{assumption}

The following theorem is proved in~\citealt{beck2017first}.
\begin{theorem}[\citealt{beck2017first}, Theorem 3.59.]\label{theorem: beck subdifferential}
$(\lambda^*,\mu^*)$ is an optimal solution of~\eqref{supp: convex cos problem dual} iff $$-(\lambda^*,\mu^*)\in \partial v({\bf 0},{\bf 0}).$$ Where $\partial f(\xv)$ denotes the set of all sub-gradients of $f$ at $\xv$.
\end{theorem}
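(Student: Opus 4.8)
The plan is to prove the statement through perturbation (conjugate) duality, identifying the dual optimal solutions with the subgradients of the value function $v$ at the origin. First I would establish that $v$ is a proper convex function: convexity follows because $v(\uv,\tv)$ is the partial minimization over $\xv$ of the jointly convex function $(\xv,\uv,\tv)\mapsto f(\xv)+\delta_X(\xv)+\delta_{\{\gv(\xv)\le\uv\}}+\delta_{\{\A\xv+\bv=\tv\}}$ (with $\delta_X$ the $\{0,\infty\}$-indicator of $X$), and partial minimization of a jointly convex function is convex. Properness and finiteness at the origin, $v(\mathbf{0},\mathbf{0})=\fopt$, are guaranteed by Assumption~\ref{supp: convex con optimization assumption non empty} (finite optimal value).

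The key computation is the Fenchel conjugate $v^*$. I would evaluate, for any multipliers $(\lambda',\mu')$,
\begin{align*}
v^*(\lambda',\mu') = \sup_{\uv,\tv}\brc*{\inner*{\lambda',\uv}+\inner*{\mu',\tv}-v(\uv,\tv)}.
\end{align*}
Fixing $\xv\in X$ and maximizing first over the slack $\uv\ge\gv(\xv)$ forces $\lambda'\le\mathbf{0}$ (otherwise the supremum is $+\infty$), with the maximizer at $\uv=\gv(\xv)$, while the equality constraint pins $\tv=\A\xv+\bv$. This collapses the expression, after the substitution $\lambda=-\lambda'\ge\mathbf{0}$ and $\mu=-\mu'$, to
\begin{align*}
v^*(\lambda',\mu') = -\inf_{\xv\in X}\brc*{f(\xv)+\lambda^T\gv(\xv)+\mu^T(\A\xv+\bv)} = -q(-\lambda',-\mu')
\end{align*}
for $\lambda'\le\mathbf{0}$, and $+\infty$ otherwise.

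Finally I would invoke the Fenchel--Young equality: for the convex function $v$, $(\lambda',\mu')\in\partial v(\mathbf{0},\mathbf{0})$ iff $v(\mathbf{0},\mathbf{0})+v^*(\lambda',\mu')=0$. Using $v(\mathbf{0},\mathbf{0})=\fopt$ together with the conjugate formula, this is equivalent to $\lambda'\le\mathbf{0}$ and $q(-\lambda',-\mu')=\fopt$. Under Assumption~\ref{supp: convex con optimization assumption non empty} strong duality holds, so $\qopt=\fopt$; combined with weak duality $q(\lambda,\mu)\le\qopt$ over the dual-feasible set $\lambda\ge\mathbf{0}$, the condition $q(\lambda,\mu)=\fopt$ with $\lambda\ge\mathbf{0}$ is precisely dual optimality of $(\lambda,\mu)=(-\lambda',-\mu')$. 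Reading this back yields $-(\lambda^*,\mu^*)\in\partial v(\mathbf{0},\mathbf{0})$ iff $(\lambda^*,\mu^*)$ solves~\eqref{supp: convex cos problem dual}, as claimed.

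The main obstacle is the conjugate computation together with careful bookkeeping of signs and of the nonnegativity constraint on the inequality multipliers: one must verify that the inner supremum over the slack $\uv$ is finite exactly when $\lambda'\le\mathbf{0}$, which is what produces the sign flip between $\partial v$ and the dual-feasible cone. A secondary subtlety is confirming that the Slater and relative-interior hypotheses of Assumption~\ref{supp: convex con optimization assumption non empty} are exactly what delivers both the strong-duality equality $\qopt=\fopt$ and the nonemptiness of $\partial v(\mathbf{0},\mathbf{0})$, so that the Fenchel--Young characterization is not vacuous.
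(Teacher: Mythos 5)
Your proof is correct. Note, however, that the paper contains no proof of this statement to compare against: it is imported verbatim as Theorem 3.59 of \citet{beck2017first} (the preceding sentence in the paper reads ``The following theorem is proved in \citealt{beck2017first}''). What you have written is essentially the standard perturbation-duality argument that underlies Beck's own development: convexity of the value function $v$ via partial minimization of a jointly convex function, the conjugate identity $v^*(\lambda',\mu') = -q(-\lambda',-\mu')$ for $\lambda'\le {\bf 0}$ (and $+\infty$ otherwise), the Fenchel--Young equality characterization of $\partial v({\bf 0},{\bf 0})$, and duality to identify the set $\brc*{(\lambda,\mu):\lambda\ge{\bf 0},\ q(\lambda,\mu)=\fopt}$ with the dual optimal set. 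All steps check out, including the sign bookkeeping that produces $-(\lambda^*,\mu^*)$ rather than $(\lambda^*,\mu^*)$. Two refinements are worth recording. First, the two directions of the equivalence have different requirements: the direction ``subgradient $\Rightarrow$ dual optimal'' needs only weak duality, since $q(\lambda^*,\mu^*)=\fopt$ together with $q\le\fopt$ on the feasible set already forces optimality; it is the direction ``dual optimal $\Rightarrow$ subgradient'' that requires strong duality $\qopt=\fopt$, which is where Assumption~\ref{supp: convex con optimization assumption non empty} enters. Second, your appeal to properness of $v$ deserves one more sentence: finiteness of $v({\bf 0},{\bf 0})=\fopt$ does not by itself exclude $v=-\infty$ at other perturbations, but in that degenerate case $v^*\equiv+\infty$, hence $\mathrm{dom}(-q)=\emptyset$ and $\partial v({\bf 0},{\bf 0})=\emptyset$, so both sides of the equivalence are vacuously empty and the statement still holds; under the Slater assumption this case is excluded anyway, which is what makes the characterization non-vacuous, as you note.
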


Using this result we arrive to the following theorem, which is a variant of \citealt{beck2017first},~Theorem~3.60.

\begin{theorem}\label{theorem: thoeorem constraint bound beck}
Let $\lambda^*$ be an optimal solution of the dual problem~\eqref{supp: convex cos problem dual} and assume that $ 2\norm{\lambda^*}_1\leq \rho$. Let $\wt \xv$ satisfy $\A\wt\xv+\bv=0$ and
\begin{align*}
    f(\wt \xv) - f_{opt} + \rho \norm{\brs{g(\wt \xv)}_+}_\infty \leq \delta,
\end{align*}
then
\begin{align*}
   \norm{\brs{g(\wt \xv)}_+}_\infty \leq \frac{\delta}{\rho}.
\end{align*}
\end{theorem}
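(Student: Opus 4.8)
The plan is to read the inequality off the subgradient characterization of the dual optimum (Theorem~\ref{theorem: beck subdifferential}), evaluated at a carefully chosen perturbation of the primal program. Assumption~\ref{supp: convex con optimization assumption non empty} guarantees strong duality and the existence of a dual optimum, so Theorem~\ref{theorem: beck subdifferential} gives $-(\lambda^*,\mu^*)\in\partial v(\mathbf{0},\mathbf{0})$, where $v$ is the value (perturbation) function and $v(\mathbf{0},\mathbf{0})=\fopt$. Writing out the subgradient inequality at an arbitrary perturbation $(\uv,\tv)$ yields
\[
    v(\uv,\tv)\ \ge\ \fopt-\inner{\lambda^*,\uv}-\inner{\mu^*,\tv}.
\]

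First I would specialize this to $\tv=\mathbf{0}$ and $\uv=\brs*{\gv(\wt\xv)}_+$. Since $\wt\xv$ satisfies $\A\wt\xv+\bv=\mathbf{0}$ and $\gv(\wt\xv)\le\brs*{\gv(\wt\xv)}_+$ componentwise, the point $\wt\xv$ is feasible for the perturbed program defining $v(\uv,\mathbf{0})$, whence $v(\uv,\mathbf{0})\le f(\wt\xv)$. Combining this with the subgradient inequality (and $\inner{\mu^*,\mathbf{0}}=0$) gives
\[
    \fopt-f(\wt\xv)\ \le\ \inner{\lambda^*,\brs*{\gv(\wt\xv)}_+}\ \le\ \norm{\lambda^*}_1\,\norm{\brs*{\gv(\wt\xv)}_+}_\infty,
\]
where the last step is H\"older's inequality, using $\lambda^*\ge\mathbf{0}$ and $\brs*{\gv(\wt\xv)}_+\ge\mathbf{0}$.

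The remaining step is algebraic. Abbreviating $w=\norm{\brs*{\gv(\wt\xv)}_+}_\infty$, the hypothesis reads $f(\wt\xv)-\fopt+\rho w\le\delta$, i.e. $\fopt-f(\wt\xv)\ge\rho w-\delta$. Chaining with the displayed bound gives $\rho w-\delta\le\norm{\lambda^*}_1 w$, hence $(\rho-\norm{\lambda^*}_1)\,w\le\delta$. Invoking the assumption $2\norm{\lambda^*}_1\le\rho$ makes $\rho-\norm{\lambda^*}_1\ge\rho/2>0$, so $w$ is bounded by $\delta/(\rho-\norm{\lambda^*}_1)$, which is of the claimed order $\delta/\rho$. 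I expect the only genuine subtlety to be the choice of perturbation $\uv=\brs*{\gv(\wt\xv)}_+$ in place of $\gv(\wt\xv)$: this is exactly what keeps $\wt\xv$ feasible for the perturbed problem while turning the dual penalty into the $\ell_\infty$ quantity we wish to control. Beyond that, the main care is in tracking the constant produced by $2\norm{\lambda^*}_1\le\rho$, which governs how the denominator $\rho-\norm{\lambda^*}_1$ is lower bounded.
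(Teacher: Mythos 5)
Your proof is correct and takes essentially the same route as the paper: the subgradient characterization of Theorem~\ref{theorem: beck subdifferential} applied at the perturbation $\uv=\brs*{\gv(\wt\xv)}_+$, followed by H\"older's inequality and the same closing algebra. If anything, your justification of $v\br*{\brs*{\gv(\wt\xv)}_+,{\bf 0}}\leq f(\wt\xv)$ via feasibility of $\wt\xv$ for the perturbed program is cleaner than the paper's chain (which passes through $\fopt\leq f(\wt\xv)$, an inequality that need not hold since $\wt\xv$ may be infeasible), and like the paper you actually obtain the bound $\delta/(\rho-\norm{\lambda^*}_1)\leq 2\delta/\rho$, i.e.\ the stated $\delta/\rho$ only up to the factor $2$ governed by the assumption $2\norm{\lambda^*}_1\leq\rho$.
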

\begin{proof}
Let
\begin{align*}
    v(\uv,\tv) = \min_{\xv\in X}\brc*{f(\xv): g(\xv)\leq \uv, \A\xv+\bv =\tv}.
\end{align*}

Since $(-\lambda^*,\mu^*)$ is an optimal solution of the dual problem it follows by Theorem~\ref{theorem: beck subdifferential} that $(-\lambda^*,\mu^*)\in \partial v({\bf 0},{\bf 0})$. Therefore, for any $(\uv,{\bf 0})\in \mathrm{dom}(v)$
\begin{align}
    v(\uv,{\bf 0}) - v({\bf 0},{\bf 0}) \geq \inner{-\lambda^*,\uv}. \label{eq: beck proof rel 1}
\end{align}
Set $\uv= \wt \uv \eqdef \brs*{g(\wt \xv)}_+$. See that $\wb \uv\geq 0$ which implies that
\begin{align*}
    v(\wt \uv ,{\bf 0})\leq  v({\bf 0} ,{\bf 0}) = \fopt \leq f(\wt \xv). 
\end{align*}
Thus, ~\eqref{eq: beck proof rel 1} implies that
\begin{align}
    f(\wt \xv) - \fopt \geq \inner{-\lambda^*,\wt \uv}. \label{eq: beck proof rel 2}
\end{align}
We obtain the following relations.
\begin{align*}
     (\rho - \norm{\lambda^*}_1)\norm{\wt\uv}_\infty &=-\norm{\lambda^*}_1\norm{\wt\uv}_\infty + \rho \norm{\wt\uv}_\infty\\
     &\leq \inner{-\lambda^*,\wb\uv}+ \rho \norm{\wt\uv}_\infty\\
     & = f(\wt \xv) - \fopt +\rho \norm{\wb\uv}_\infty\leq \delta,
\end{align*}
where the last relation holds by~\eqref{eq: beck proof rel 2}. Rearranging, we get
\begin{align*}
    \norm{\brs*{\gv(\wt\xv)}_+}_\infty = \norm{\wb\uv}_\infty\leq \frac{\delta}{\rho - \norm{\lambda^*}_1}\leq \frac{2}{\rho}\delta,
\end{align*}
by using the assumption $2\norm{\lambda^*}_1 \leq \rho$.
\end{proof}

Lastly, we have the following useful result by which we can bound the optimal dual parameter by the properties of a slater point. This result is an adjustment of \citealt{beck2017first}, Theorem 8.42.

\begin{theorem}\label{theorem: thoeorem slater point beck}
Let $\wb \xv \in X$ be a point satisfying $\gv(\wb x)<{\bf 0}$ and $\A\wb\xv+\bv ={\bf 0}$. Then, for any $\lambda,\mu\in\brc*{\lambda\in \mathbb{R}^m_+, \mu\in \mathbb{R}^p_+: q(\lambda,\mu)\geq M}$
\begin{align*}
    \norm{\lambda}_1 \leq \frac{f(\wb x)-M}{\min_{j=1,..,m} -g_j(\wb x)}.
\end{align*}
\end{theorem}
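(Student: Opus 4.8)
The plan is to upper bound the dual function $q(\lambda,\mu)$ by evaluating the Lagrangian at the strictly feasible point $\wb\xv$, and then to convert the resulting inequality into a bound on $\norm{\lambda}_1$ using the positivity of the Slater margin together with $\lambda\geq {\bf 0}$.

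First I would exploit that $q(\lambda,\mu)=\min_{\xv\in X}L(\xv,\lambda,\mu)$ is an infimum over $X$: since $\wb\xv\in X$, evaluating the Lagrangian at $\wb\xv$ gives the \emph{upper} bound
\begin{align*}
q(\lambda,\mu)\leq L(\wb\xv,\lambda,\mu)=f(\wb\xv)+\lambda^T\gv(\wb\xv)+\mu^T(\A\wb\xv+\bv).
\end{align*}
By hypothesis $\A\wb\xv+\bv={\bf 0}$, so the equality-constraint term vanishes and only $f(\wb\xv)+\lambda^T\gv(\wb\xv)$ survives; note this is also where the multiplier $\mu$ drops out entirely, so the final bound will be uniform in $\mu$.

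Next I would combine this with the assumption $q(\lambda,\mu)\geq M$ to get $M\leq f(\wb\xv)+\lambda^T\gv(\wb\xv)$, equivalently $-\lambda^T\gv(\wb\xv)\leq f(\wb\xv)-M$. Writing $\gamma:=\min_{j}\br*{-g_j(\wb\xv)}$, which is strictly positive thanks to $\gv(\wb\xv)<{\bf 0}$, and using $\lambda_j\geq 0$ together with $-g_j(\wb\xv)\geq\gamma$, I would lower bound
\begin{align*}
-\lambda^T\gv(\wb\xv)=\sum_{j=1}^m\lambda_j\br*{-g_j(\wb\xv)}\geq\gamma\sum_{j=1}^m\lambda_j=\gamma\norm{\lambda}_1.
\end{align*}
Chaining the two displays gives $\gamma\norm{\lambda}_1\leq f(\wb\xv)-M$, and dividing by $\gamma>0$ yields exactly the claimed bound $\norm{\lambda}_1\leq \frac{f(\wb\xv)-M}{\min_{j}\br*{-g_j(\wb\xv)}}$.

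Because each step reduces to a one-line inequality, I do not anticipate a genuine obstacle. The single point that requires care is the direction of the estimate in the first step: one must read $q$ as a minimization and evaluate it at $\wb\xv$ to obtain an \emph{upper} bound (a lower bound would be useless here). After that, the vanishing of the equality term and the strict Slater margin $\gamma>0$ carry the argument through immediately.
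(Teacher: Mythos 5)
Your proposal is correct and follows exactly the paper's own argument: evaluate the Lagrangian at the Slater point $\wb\xv$ to upper bound $q(\lambda,\mu)$, use $\A\wb\xv+\bv={\bf 0}$ to eliminate the $\mu$-term, combine with $q(\lambda,\mu)\geq M$, and divide by the strictly positive Slater margin $\min_j(-g_j(\wb\xv))$ to bound $\norm{\lambda}_1$. No differences worth noting.
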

\begin{proof}
Let 
\begin{align*}
    S_M = \brc*{\lambda\in \mathbb{R}^m_+, \mu\in \mathbb{R}^p_+: q(\lambda,\mu)\geq M}.
\end{align*}
By definition, for any $\lambda,\mu\in S_M$ we have that
\begin{align*}
    &M \leq q(\lambda,\mu) \\
    & = \min_{x\in X}\brc*{f(\xv) + \lambda^T \gv(\xv) + \mu^T(\A\xv +\bv)}\\
    &\leq f(\wb \xv) + \lambda^T \gv(\wb\xv) + \mu^T(\A\wb\xv +\bv)\\
    & =  f(\wb \xv) + \sum_{j=1}^m\lambda_j g_j(\wb\xv).
\end{align*}
Therefore,
\begin{align*}
      -\sum_{j=1}^m\lambda_j g_j(\wb\xv)\leq f(\wb \xv) - M,
\end{align*}
which implies that for any $(\lambda,\mu)\in S_M$
\begin{align*}
    \sum_{j=1}^m \lambda_j = \norm{\lambda}_1 \leq \frac{f(\wb \xv) - M}{\min_{j=1,..,m} (-g_j(\wb \xv))}.
\end{align*}
\end{proof}

From this theorem we get the following corollary.
\begin{corollary}\label{corollary: bound on a dual variable}
Let $\wb \xv \in X$ be a point satisfying $\gv(\wb x)<{\bf 0}$ and $\A\wb\xv+\bv ={\bf 0}$, and$\lambda^*$ be an optimal dual solution. Then,
\begin{align*}
    \norm{\lambda^*}_1 \leq \frac{f(\wb x)-M}{\min_{j=1,..,m} -g_j(\wb x)}
\end{align*}
\end{corollary}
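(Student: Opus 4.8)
The plan is to obtain the corollary as an immediate specialization of Theorem~\ref{theorem: thoeorem slater point beck} to the optimal dual pair $(\lambda^*,\mu^*)$. That theorem already furnishes a uniform bound on $\norm{\lambda}_1$ over the entire superlevel set $S_M = \brc*{(\lambda,\mu)\in\mathbb{R}^m_+\times\mathbb{R}^p_+ : q(\lambda,\mu)\geq M}$, so all that remains is to check that the optimal dual solution is a member of $S_M$ and then read off the inequality.

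First I would invoke strong duality. Under Assumption~\ref{supp: convex con optimization assumption non empty} (the Slater point $\wb\xv$ together with the relative-interior condition), we have $\qopt=\fopt$ finite, and the maximum in the dual problem~\eqref{supp: convex cos problem dual} is attained at $(\lambda^*,\mu^*)$, so $q(\lambda^*,\mu^*)=\qopt$. The relevant regime for the bound is $M\leq\qopt$ (the intended and tightest choice being $M=\qopt=\fopt$, which recovers exactly the Slater ratio $\rho$ of Assumption~\ref{assum: slater point}); in this regime $q(\lambda^*,\mu^*)=\qopt\geq M$, so $(\lambda^*,\mu^*)\in S_M$.

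Applying Theorem~\ref{theorem: thoeorem slater point beck} to this particular element of $S_M$ then yields directly
\[
\norm{\lambda^*}_1 \leq \frac{f(\wb\xv)-M}{\min_{j=1,\dots,m}\br*{-g_j(\wb\xv)}},
\]
which is precisely the claimed bound. No step is genuinely difficult here; the whole content is the observation that the optimal dual pair lands in the superlevel set. The only point requiring care is the bookkeeping around $M$: one must ensure $M\leq\qopt$ so that $(\lambda^*,\mu^*)\in S_M$, and this is exactly what strong duality guarantees, so the denominator $\min_{j}(-g_j(\wb\xv))$ is strictly positive by the strict feasibility $\gv(\wb\xv)<{\bf 0}$ of the Slater point and the bound is well defined.
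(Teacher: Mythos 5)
Your proof is correct and takes essentially the same route as the paper: the paper's one-line argument likewise observes that the optimal dual pair $(\lambda^*,\mu^*)$ lies in the superlevel set $S_{\fopt}$ (which requires strong duality, guaranteed by the Slater condition in Assumption~\ref{supp: convex con optimization assumption non empty}) and then reads off the bound from Theorem~\ref{theorem: thoeorem slater point beck}. Your write-up is simply more explicit about the role of $M$ and the strong-duality step, which the paper leaves implicit.
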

\begin{proof}
Since $(\lambda^*,\mu^*)\in S_{\fopt}$ be an optimal solution of the dual problem~\eqref{supp: convex cos problem dual}.
\end{proof}

\end{document}